\documentclass{article}

\usepackage[preprint]{neurips_2026}

\usepackage[utf8]{inputenc} 
\usepackage[T1]{fontenc}    
\usepackage{hyperref}       
\usepackage{url}            
\usepackage{booktabs}       
\usepackage{amsfonts}       
\usepackage{nicefrac}       
\usepackage{microtype}      
\usepackage{xcolor}         
\usepackage{graphicx}
\usepackage{wrapfig}
\usepackage{bbm}
\usepackage{multirow}
\usepackage[table]{xcolor}
\usepackage{makecell}
\usepackage{caption}
\usepackage{subcaption}
\usepackage{amsmath,amssymb}
 \usepackage{hyperref}
 \usepackage{placeins}
\usepackage{amsthm}
\usepackage{float}
\newtheorem{lemma}{Lemma}

\title{E4GEN: Event-level Explainable Extreme-Enhanced Time-series Generation}

\author{%
Lin Jiang, Dahai Yu, Ximiao Li, Guang Wang \\
Florida State University \\
\texttt{\{lin.jiang, dahai.yu, xl24g\}@fsu.edu, guang@cs.fsu.edu}
}

\begin{document}

\maketitle
\vspace{-10pt}
\begin{abstract}

Generating realistic time series is essential for scientific research and real-world applications. However, existing methods often emphasize overall distributional fidelity while failing to faithfully capture extreme events. To advance existing research, we propose E4GEN, an explainable diffusion framework for extreme event-aware time-series generation, which provides systematic insights into \textbf{when}, \textbf{what}, and \textbf{how} to control extreme-event generation through three key components. (i) \textbf{E-Activator} to learn the dataset-adaptive \emph{extreme-control signal activation step} during the denoising process without interfering with regular temporal components (trend and seasonality); (ii) \textbf{E-Predictor}, which determines what control signal to enforce through \emph{Self-Driven Semantic Prediction},
where each sample derives its own control signal by inferring the latent extreme-event information during the generation process. It also includes a novel \emph{Data-Conditioned Training, Noise-Initiated Sampling} mechanism to address the issue of unavailable training labels. and (iii) \textbf{E-Control}, which specifies how to control extreme-event generation through a trainable \emph{Extreme Control Network}, which transforms the semantic control signal into layer-wise signals and injects it into the denoising process. We evaluate our E4GEN with six datasets and 17 metrics, and extensive experiments show that E4GEN outperforms state-of-the-art models from different dimensions, including overall fidelity, extreme-event fidelity, and downstream utility.
\end{abstract}

\section{Introduction}\label{sec:introduction}

Time-series generation has emerged as an important research direction~\cite{yoon2019time,jarrett2021time,galib2024fide,yuan2024diffusionts,rousseau2025forging}, because it can support a wide range of applications such as simulations~\cite{lou2023pcf}, data augmentation~\cite{nikitin2024tsgm}, counterfactual analysis~\cite{yan2023self}, and hypothesis testing~\cite{schreiber2000surrogate}. In recent years, many methods have been proposed for this task, covering diverse paradigms, including GANs~\cite{yoon2019time,jeon2022gt}, VAEs~\cite{desai2021timevae,naimangenerative}, Fourier Flows~\cite{alaa2021generative}, LLMs~\cite{rousseau2025forging,tan2024language}, and diffusion models~\cite{yuan2024diffusionts,kongdiffwave,coletta2023constrained}. 
Although some works~\cite{yuan2024diffusionts, rousseau2025forging} have achieved high overall distributional fidelity, they usually attenuate sparse extreme-event patterns~\cite{galib2024fide,brophy2023generative}, which are also meaningful to capture because they often correspond to high-impact phenomena in the data, such as heatwaves, heavy rainfall, and sudden demand surges~\cite{gu2025beyond}.

To achieve extreme-aware generation, some recent studies have explored long-tailed generation~\cite{zhang2024long,rodriguez2025improving} and heavy-tailed generation~\cite{galib2024fide,jaini2020tails,kim2024t3vae,pandey2025heavy}.
Long-tailed generation improves the fidelity and diversity of tail samples by identifying \emph{which samples are rare} without capturing \emph{which values within each sample are extreme}, so it cannot explicitly characterize within-sample extreme patterns. Although heavy-tailed generation considers within-sample extremes, it mainly focuses on the \textbf{value level} by treating extreme values as isolated observations without modeling their correlations or temporal dependencies. Hence, neither type of method can explicitly represent event-level features, such as location, duration, and temporal shape, which are essential for characterizing time-series data and supporting diverse downstream applications.


To advance existing work, we explore extreme-aware time-series generation from the \textbf{event-level} perspective. 
Instead of treating extreme values as isolated points, we consider extreme events as coherent temporal structures driven by underlying physical mechanisms. For example, as illustrated 
\begin{wrapfigure}{r}{0.5\textwidth}
    \centering
    \begin{minipage}[t]{0.49\linewidth}
        \centering
        \includegraphics[width=\linewidth]{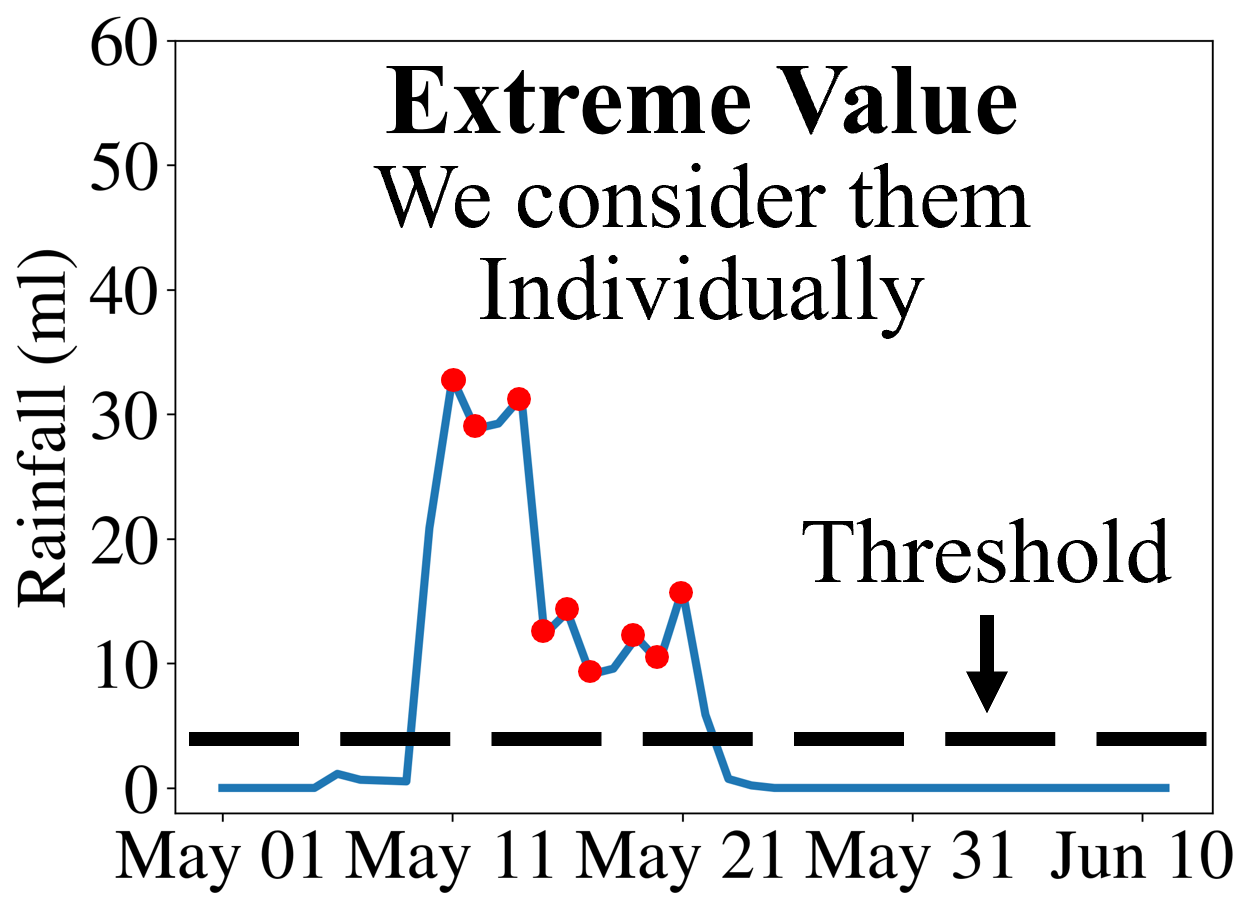}
        \vspace{0.3em}
        \small (a) Value-level view
    \end{minipage}
    \hfill
    \begin{minipage}[t]{0.49\linewidth}
        \centering
        \includegraphics[width=\linewidth]{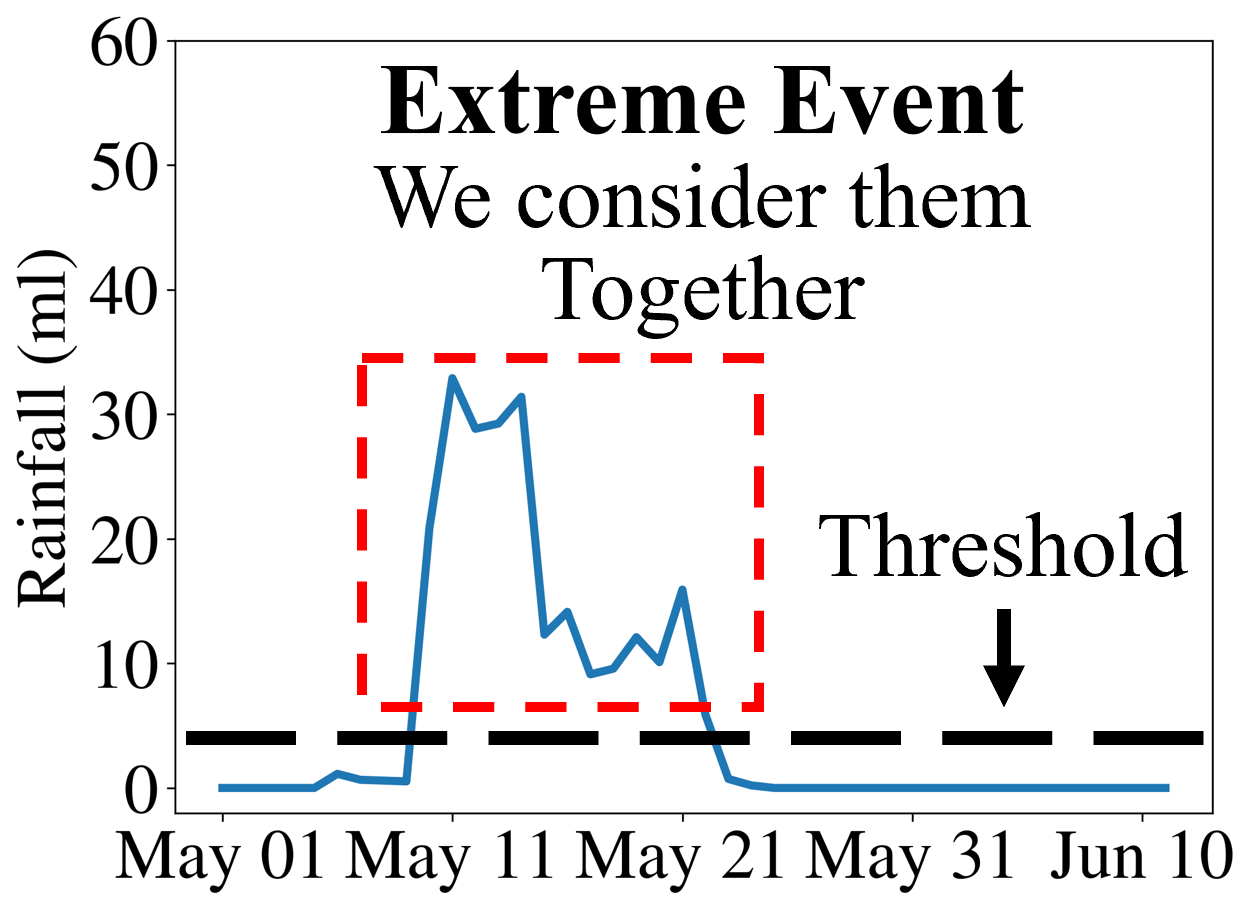}
        \vspace{0.3em}
        \small (b) Event-level view
    \end{minipage}
    \vspace{0.3em}
    \captionof{figure}{Value-Level vs. Event-Level Extremes}
    \label{fig:value_event_extreme}
\end{wrapfigure}
in Fig.~\ref{fig:value_event_extreme}, an extreme rainstorm typically spans a continuous temporal interval, during which rainfall intensities at neighboring timestamps are highly temporally dependent, so the event should be modeled as a coherent temporal process rather than a collection of isolated extreme points. However, there are two major challenges for event-level extreme-aware time-series generation. First, extreme events are localized and sparse in the data, with tightly constrained timing and shape, making them particularly vulnerable to distortion during generation. 
Second, the generation process of extreme events remains poorly understood, as existing methods provide little insight into when and how extreme events emerge during generation, making it difficult to precisely control and preserve them.

To address these challenges, we propose \textbf{E4GEN}, an \underline{E}vent-level \underline{E}xplainable \underline{E}xtreme-\underline{E}nhanced diffusion framework for time-series \underline{GEN}eration. E4GEN provides systematic insights into when, what, and how to control extreme-event generation through three key designs. (i) \textbf{E-Activator} is designed to decide when extreme-control signals should be activated during the generation process. Unlike existing methods that generate general patterns (e.g., trend and seasonality) and extreme events in a coupled manner, E-Activator exploits the coarse-to-fine generation dynamics of diffusion models to learn a \emph{dataset-adaptive control activation step}, where the extreme-control signal is activated only after general temporal structures have been sufficiently established. This design effectively decouples extreme-event guidance from general temporal structure formation. (ii) \textbf{E-Predictor} is designed to determine what control signals should be activated. Motivated by the observation that extreme events arise from the underlying temporal context, E-Predictor introduces a \emph{self-driven semantic prediction} (SDSP) module, enabling each sample to derive its own semantic-based extreme-control signal from the general temporal patterns already established during the denoising process. It also introduces a novel \emph{Data-Conditioned Training, Noise-Initiated Sampling} (DCT-NIS) mechanism to address the issue of unavailable training
labels.
(iii) \textbf{E-Control} is designed to answer how semantic-based control signals can be leveraged for event-level extreme-aware generation. Specifically, we design a trainable \emph{Extreme Control Network} that transforms predicted semantics into layer-aligned residual offsets, which are used to guide extreme-event generation while preserving general temporal structures. The contributions of this paper are summarized as follows:

(i) \textbf{Conceptually}, we shift extreme-aware time-series generation from sample-level or value-level to an event-level perspective,
which are essential for characterizing temporal dynamics in time-series data and supporting diverse downstream applications.

(ii) \textbf{Technically}, we propose E4GEN, an explainable diffusion framework for extreme-aware time-series generation, which provides systematic insights into the controllability of extreme-event generation through three core designs: E-Activator identifies when to activate the extreme-event control signal via a dataset-adaptive control activation step; E-Predictor defines what to control through self-driven semantic prediction combined with a DCT-NIS mechanism that addresses the training-sampling mismatch issue; and E-Control determines how to control generation by injecting semantic control signals into the denoising process through a trainable Extreme Control Network.

(iii) \textbf{Experimentally}, we evaluate E4GEN on six datasets against 9 baselines in terms of 17 metrics. Extensive experiments show that E4GEN achieves superior performance across different dimensions, such as overall generation fidelity, extreme-aware generation fidelity, and downstream utility.


\section{Data-Driven Insights} \label{sec:analysis}

In this section, we present a data-driven analysis to motivate our problem formulation and design. We summarize three key observations below.

\textbf{Observation I: Value-level generation often fails to recover the true event-level structure of extremes.}
In Fig.~\ref{fig:value_level_distribution}, we compare the generated results of two representative methods: (i) a standard light-tailed diffusion model implemented with Diffusion-TS~\cite{yuan2024diffusionts}, and (ii) a value-level extreme-enhanced heavy-tailed diffusion model implemented with HeavyDiff~\cite{pandey2025heavy} with the Original Data. 
From Fig.~\ref{fig:value_level_distribution}(a), we found that the generated data generally match the overall distribution, but fail to match the tail. 
\begin{wrapfigure}{r}{0.46\textwidth}
    \centering
    \begin{minipage}[t]{0.46\linewidth}
        \centering
        \includegraphics[width=\linewidth]{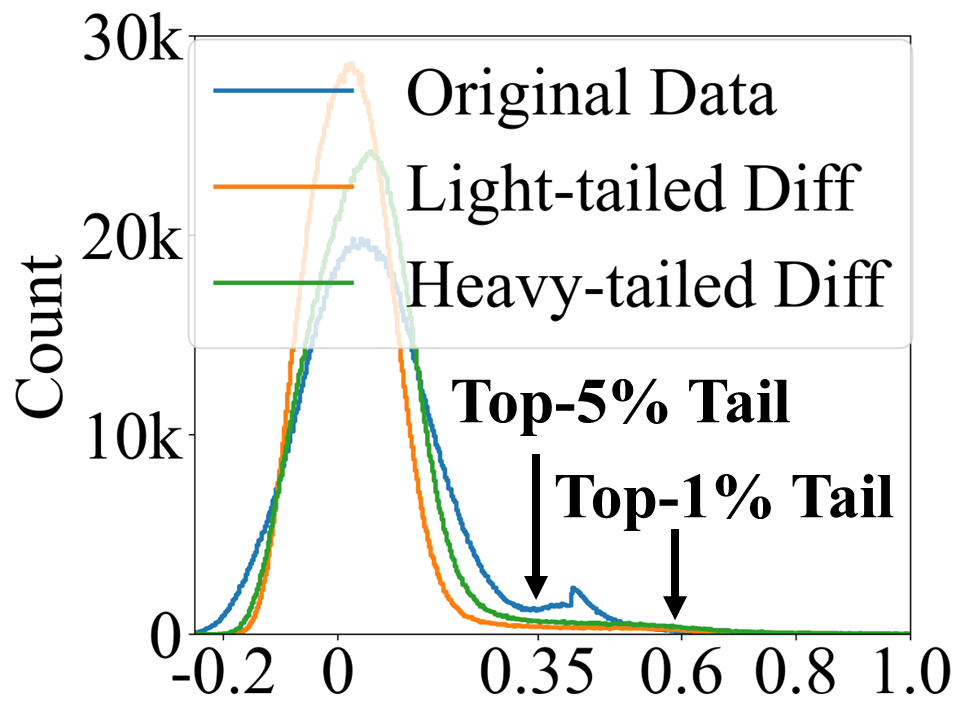}
        \vspace{0.2em}
        \small (a) Overall Distribution
    \end{minipage}
    \hfill
    \begin{minipage}[t]{0.48\linewidth}
        \centering
        \includegraphics[width=\linewidth]{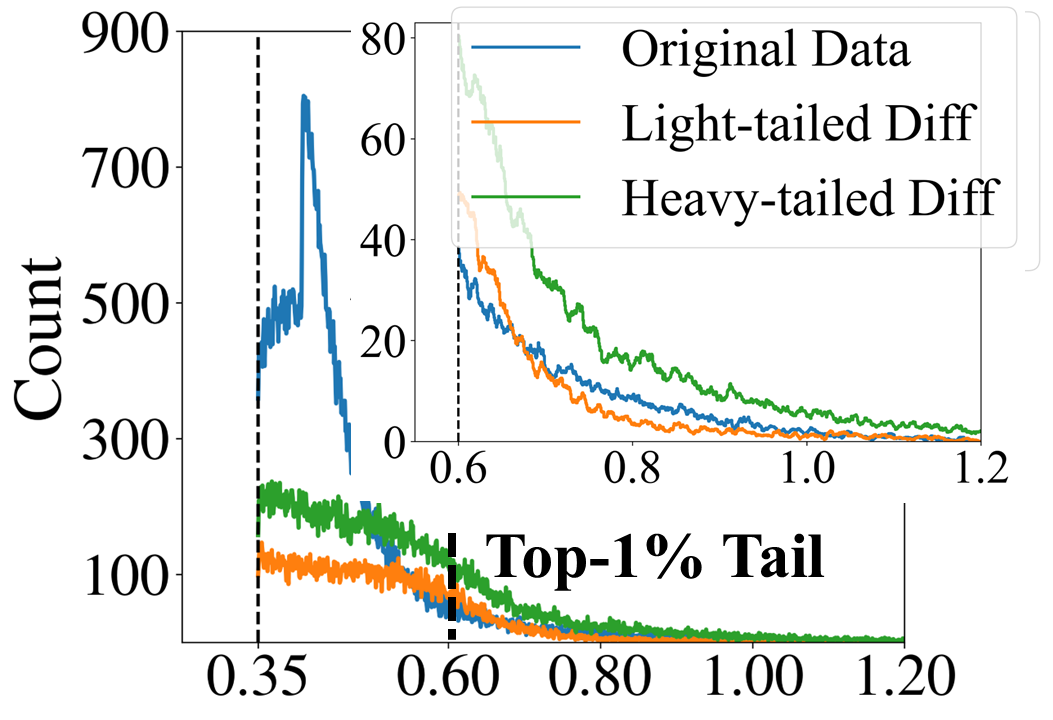}
        \vspace{0.2em}
        \small (b) Tail Distribution
    \end{minipage}
    \captionof{figure}{Value-level Extreme Enhancement.}
    \label{fig:value_level_distribution}
\end{wrapfigure}
Fig.~\ref{fig:value_level_distribution}(b) illustrates the top-5\% tail distribution, where the original data exhibits a pronounced density surge within the range of 0.35--0.6, indicating that extreme values tend to occur within a specific and non-random value range with clear structural regularity. In contrast, both methods fail to recover this characteristic pattern. Although the heavy-tailed diffusion model generates a heavier tail distribution, it still fails to faithfully capture the underlying event-level extreme structure. Further discussion of value-level extreme-aware generation is provided in Appendix~\ref{Appendix:Discussion about value-level enhencement}.


\vspace{-6pt}
\begin{figure*}[h]
    \centering
    \begin{minipage}[t]{0.72\textwidth}
        \centering
        \includegraphics[width=\linewidth]{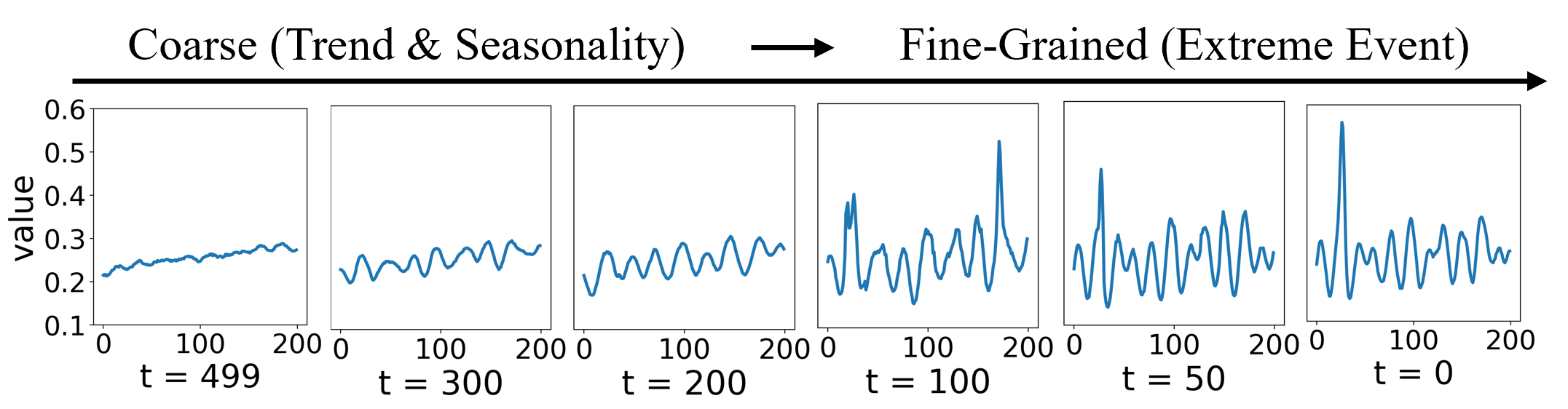}
        \captionof{figure}{A Sample Denoising Process}
        \label{fig:trajectory_sample}
    \end{minipage}
    \hfill
    \begin{minipage}[t]{0.27\textwidth}
        \centering
        \includegraphics[width=\linewidth]{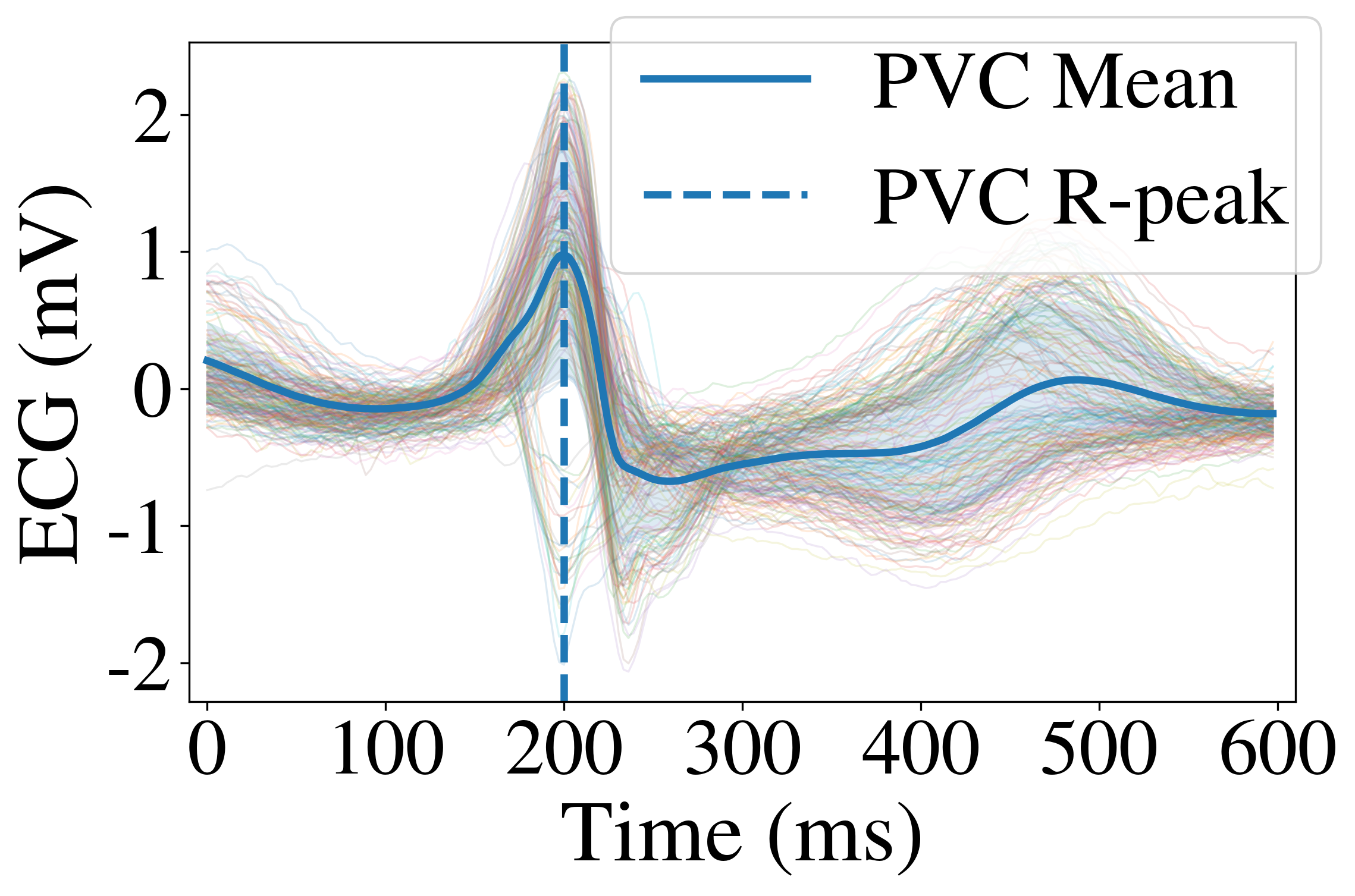}
        \captionof{figure}{Stability}
        \label{fig:stability}
    \end{minipage}
    \vspace{-10pt}
\end{figure*}
\textbf{Observation II: Diffusion models exhibit coarse-to-fine generation dynamics, progressively evolving from global temporal structures to localized extreme-event patterns.}
Fig.~\ref{fig:trajectory_sample} visualizes the generation process of a time-series sample along the 500-step sampling trajectory of a diffusion model~\cite{li2025back}.
The trajectory reveals a staged generation process in which the model first establishes coarse global structures, primarily trend and seasonality, and then progressively refines localized extreme events during later sampling stages, especially within the final 100 steps. This observation motivates the design of E4GEN, which dynamically activates extreme-control signals during denoising rather than applying them uniformly throughout the entire generation trajectory. Additional visualizations are provided in Appendix~\ref{Appendix:Additional Visualizations for Coarse-to-fine Generation Process}.

\textbf{Observation III: Extreme events are typically induced by underlying physical processes, and therefore exhibit \emph{intrinsic patterns} rather than random \emph{extreme noise}.} To validate this observation,
we perform three complementary analyses including cross-sample stability, predictability, and context dependency across six datasets. Fig.~\ref{fig:stability} shows a cross-sample stability analysis, which indicates that premature ventricular contractions (PVCs) in the ECG dataset preserve clear event-level structures despite variations in amplitude and peak timing. Results for predictability and context dependency are provided in Appendix~\ref{Appendix:Supporting Analysis for Intrinsic Patterns in Extreme Events}.
This observation can also be theoretically supported by the limited degrees of freedom (DOF) that characterize extreme-event variability~\cite{lorenz2017deterministic}: although extreme events may appear irregular, their variations are often governed by a small set of latent factors, making recurring structures learnable by neural networks~\cite{altmann2005recurrence,bunde2003effect}.

\section{PRELIMINARIES} \label{sec:preliminary}
\subsection{Basic Definitions} \label{sec:basic definition}

In this paper, we use Greek letters to indicate time-series indices and Latin letters for diffusion steps. 
A multivariate time series is denoted as $\mathbf{x}\in\mathbb{R}^{L\times C}$, where $L$ is the sequence length and $C$ is the number of feature channels. 
Throughout this paper, the \textbf{diffusion process} refers to the forward process that transforms a clean sample $\mathbf{x}_d(t_0)$ into pure noise $\mathbf{x}_d(t_T)$, while the \textbf{denoising process} refers to the reverse sampling process from $\mathbf{x}_n(t_T)$ back toward $\mathbf{x}_n(t_0)$.
We denote the noise-initiated denoising trajectory as
$\{\mathbf{x}_n(t)\}_{t=t_T}^{t_0}$ and each intermediate state denotes
$\mathbf{x}_n(t)=\smash[b]{[x_{n,\tau,\nu}(t)]_{\tau=1,\nu=1}^{L,C}}\in\mathbb{R}^{L\times C}$, where $\tau$ indexes positions within the time series, $\nu$ is the feature channel, and $t$ is the diffusion step. 



\subsection{$\mathbf{x}$-Prediction Network as Backbone Denoiser}
\label{sec:backbone denoiser}
In E4GEN, the \textbf{Backbone Denoiser} refers to the base denoising network that serves as the foundation of the time-series generation process, upon which the extreme-event generation modules are further built. Its primary role is to capture the global temporal patterns of the sequence, particularly long-range trend and seasonality patterns. In this work, we instantiate the Backbone Denoiser using the $\mathbf{x}$-prediction network 
$\hat{\mathbf{x}}^\theta_{0}(\cdot)$, a widely adopted and effective paradigm for time-series generation~\cite{li2025back,nichol2021improved,yuan2024diffusionts,park2026timebridge}, which predicts the clean sample $\mathbf{x}_n(t_0)$ from an intermediate noisy state $\mathbf{x}_n(t)$ at diffusion step $t$:
\begin{equation} \label{eq:x0-prediction}
\hat{\mathbf{x}}_{0}(t) \triangleq 
\hat{\mathbf{x}}^\theta_{0}\big(\mathbf{x}_n(t),t\big) 
\;\equiv\; 
\hat{\mathbf{x}}^\theta_{0}\big(\mathbf{x}_n(t)\big)\ \text{for short,}
\end{equation}
where $\hat{\mathbf{x}}_{0}(t)$ denotes the predicted clean sample at step $t$. 
$\mathbf{x}$-prediction network can provide an interpretable generation trajectory with theoretical support from manifold analysis~\cite{li2025back}. 

Following classical seasonal-trend decomposition~\cite{cleveland1990stl}, we parameterize the $\mathbf{x}$-prediction network as $\smash{\hat{\mathbf{x}}^\theta_{0}}(\mathbf{x}(t),t)
=\mathrm{Trend}(\mathbf{x}_n(t),\theta_{tr})+
\mathrm{Seasonality}(\mathbf{x}_n(t),\theta_{se})$. The trend and seasonality components are instantiated as~\cite{yuan2024diffusionts}:
\begin{align}
   \mathrm{trend}(\mathbf{x}_n(t),\theta_{tr})
   &= \phi(\tau)^\top \big( W_{tr}\, h(\mathbf{x}_n(t);a_{tr}) + b_{tr} \big), 
   \label{eq:3} \\
   \mathrm{seasonality}\!\left(\mathbf{x}_n(t),\theta_{se}\right)
   &= \mathcal{IDFT}\!\left(
      \mathcal{M}_K\!\left(\mathcal{DFT}(\mathbf{x}_n(t)),\theta_{se}\right)
   \right).
   \label{eq:4}
\end{align}
where $\phi(\tau)=[1,\tau,\tau^2,\ldots,\tau^p]^\top$ is a polynomial basis, and $(W_{tr},a_{tr},b_{tr})$ are the learnable trend parameters in $\theta_{tr}$. $\mathcal{DFT}$ and $\mathcal{IDFT}$ denote the discrete Fourier transform and its inverse, respectively, while $\mathcal{M}_K(\cdot,\theta_{se})$ selects and refines the Top-$K$ frequency components. More details of the Backbone Denoiser are provided in Appendix~\ref{Appendix:Backbone Denoiser}.

\subsection{Extreme-Event Definition and Representation} \label{sec:extreme definition}

Extreme events are inherently domain-dependent, and their definitions often vary across application scenarios~\cite{mcphillips2018defining}. In E4GEN, we adopt a threshold-based definition following the threshold-exceedance perspective commonly used in extreme value analysis~\cite{davison1990models}, which provides a general and flexible mechanism for identifying extreme events across diverse time-series domains. Specifically, an extreme event is defined as a maximal contiguous segment whose values exceed an extreme threshold $\theta$ for upper-tail extremes, or fall below $\theta$ for lower-tail extremes. The threshold $\theta$ is domain-specific and determined according to prior knowledge or task-specific requirements, rather than being fixed universally across datasets. Appendix~\ref{Appendix:fluctuate_extreme_event_definition} further discusses how E4GEN handles cases where short non-extreme gaps appear within an extreme event.

Since extreme events cannot be directly incorporated into diffusion generation, we propose a novel extreme-event semantic representation that characterizes each event with structured attributes. This design transforms extreme events into a model-accessible and controllable semantic form, enabling explicit event-level control during generation. Specifically, for each time-series sample, we define its \textbf{extreme-event semantics} as $EES=\{\mathcal{L},\mathcal{I},\mathcal{S}\}$, where $\mathcal{L}$, $\mathcal{I}$, and $\mathcal{S}$ characterize the location, intensity, and temporal shape of all extreme events within it, respectively.

\textbf{Location $\mathcal{L}$:}
The location component $\mathcal{L}$ characterizes the temporal occurrence of extreme events, including their count and spans.
We define $\mathcal{L}=\{n_{\mathrm{ev}}, \{(\tau_s^m,\tau_e^m)\}_{m=1}^{n_{\mathrm{ev}}}\}$, where $n_{\mathrm{ev}}$ denotes the number of events, and $(\tau_s^m,\tau_e^m)$ are the start and end time indices of the $m$-th event. Thus, $\mathcal{L}$ specifies when each event occurs and how long it persists.

\textbf{Intensity $\mathcal{I}$:}
The intensity component $\mathcal{I}$ quantifies the strength of each event relative to its local background.
We define
$\mathcal{I}=\{(a_{\mathrm{peak}}^m,a_{\mathrm{prom}}^m,a_{\mathrm{eng}}^m)\}_{m=1}^{n_{\mathrm{ev}}}$.
Here, $a_{\mathrm{peak}}^m=\max_{\tau\in[\tau_s^m,\tau_e^m]}|x[\tau]|$ denotes the peak magnitude within the event window;
$a_{\mathrm{prom}}^m=a_{\mathrm{peak}}^m-b^m$ denotes the peak prominence above the background level $b^m$, estimated from nearby non-event points;
and $\smash{a_{\mathrm{eng}}^m=\sum_{\tau=\tau_s^m}^{\tau_e^m}(x[\tau]-b^m)^2}$ denotes the cumulative deviation energy relative to this background level.

\textbf{Temporal Shape $\mathcal{S}$:}
The temporal-shape component $\mathcal{S}$ provides a sparse keypoint-based representation of each extreme event, summarizing its local evolution with informative points rather than all values in the event segment.
For the $m$-th event over $[\tau_s^m,\tau_e^m]$, keypoints are selected from $x[\tau_s^m:\tau_e^m]$ to capture its temporal profile, including onset and termination, rising or falling trends, and major local variations.
Formally, we define $\mathcal{S}=\smash\{{\{(p_{\mathrm{len}}^m,\mathbf{p}_\tau^m,\mathbf{p}_x^m)\}_{m=1}^{n_{\mathrm{ev}}}}\}$, where each tuple encodes the keypoint-based shape of the $m$-th event.
Here, $p_{\mathrm{len}}^m$ denotes the number of keypoints, $\smash{\mathbf{p}_\tau^m=(p_{\tau,1}^m,\ldots,p_{\tau,p_{\mathrm{len}}^m}^m)}$ stores their time indices, and $\mathbf{p}_x^m=(p_{x,1}^m,\ldots,p_{x,p_{\mathrm{len}}^m}^m)$ stores the corresponding values, with $p_{x,i}^m=x[p_{\tau,i}^m]$.
In practice, we select structural points that characterize the event trajectory, such as endpoints, turning points, and local extrema.
This design is motivated by the intuition that a set of informative points can effectively summarize complex temporal patterns.~\cite{ye2009time}.

\section{METHODOLOGY} \label{sec:method}
Figure~\ref{fig:framework} shows the overall framework of our proposed \textbf{E4GEN}, which integrates three key components that systematically address \emph{when}, \emph{what}, and \emph{how} to control extreme-event generation during the denoising process. 
Section~\ref{sec:E-Activate} introduces \textbf{E-Activator}, which learns to decide the dataset-adaptive control activation step $t_{\mathrm{CA}}$. Section~\ref{sec:E-Predictor} presents \textbf{E-Predictor}, which estimates the extreme-control signal at $t_{\mathrm{CA}}$ with two coupled designs: Self-Driven Semantic Prediction (SDSP) and Data-Conditioned Training with Noise-Initiated Sampling (DCT-NIS) mechanism. 
Section~\ref{sec:E-control} presents \textbf{E-Control}, which includes a trainable \emph{Extreme Control Network} to inject the predicted semantic-based control signal into the Backbone Denoiser for guided extreme-event generation.


\subsection{E-Activator: Learning the Control Activation Step}
\label{sec:E-Activate}

Motivated by the coarse-to-fine generation dynamics identified in Observation~II (Section~\ref{sec:analysis}), E-Activator learns the optimal control activation step, denoted by $t_{\mathrm{CA}}$, for extreme-event control. This activation step is inherently dataset-dependent, since extreme patterns may emerge at different steps of the denoising process in various datasets. For instance, in a fixed 1000-step generation process, heavy-rainfall extremes may begin to appear around $t=100$ in precipitation data, whereas traffic congestion extremes may emerge around $t=50$ in traffic data. To avoid disrupting the generation of regular temporal patterns, we only apply the extreme-control signals starting from $t_{\mathrm{CA}}$ during the denoising process, which is defined based on two criteria: 
(1) regular patterns such as trend and seasonality should already be sufficiently stabilized, so that activating the control signal will not interfere with their formation; and (2) extreme-event patterns should still be at an early stage of generation, leaving enough remaining denoising steps for the control signal to effectively guide their generation.

\begin{figure*}[htbp]
  \centering
   \vspace{-5pt}
  \includegraphics[width=\linewidth]{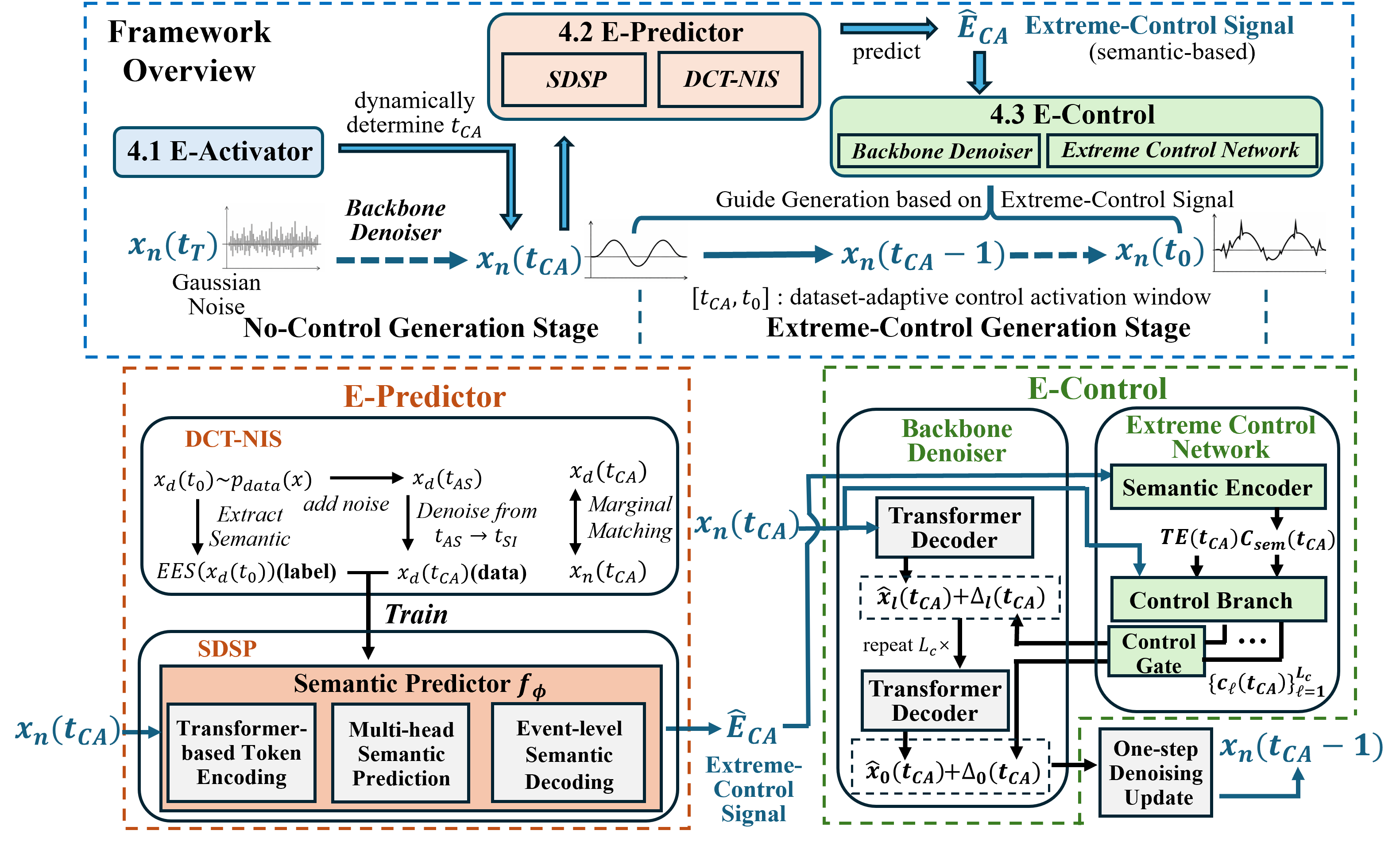}  
  \caption{The Overall Framework of E4GEN}
  \label{fig:framework}
  \vspace{-5pt}
\end{figure*}

We quantify these criteria using two complementary metrics: Backbone Drift (BD) and Outlier Emergence Score (OES), which are formulated as follows:
\begin{equation}
\mathrm{BD}(t)=\mathbb{E}\left[\frac{1}{k}\sum_{j=0}^{k-1}
\frac{\sqrt{\frac{1}{LC}\sum_{\tau=1}^{L}\sum_{\nu=1}^{C}\left(bb_{t-j,\tau,\nu}-bb_{t-j-1,\tau,\nu}\right)^2}}
{\sqrt{\frac{1}{LC}\sum_{\tau=1}^{L}\sum_{\nu=1}^{C}bb_{t-j-1,\tau,\nu}^2}}\right]
\end{equation}
\begin{equation}
\mathrm{OES}(t)=\mathbb{E}\left[\max\left(\left|\frac{\hat{x}_{0,\tau,\nu}(t)-\mathrm{med}_{\tau}(\hat{x}_{0,\tau,\nu}(t))}{1.4826\,\mathrm{mad}_{\tau}(\hat{x}_{0,\tau,\nu}(t))+\varepsilon}\right|-\kappa,\,0\right)\right]
\end{equation}

\begin{wrapfigure}{r}{0.59\textwidth}
    \centering
    \includegraphics[width=0.58\textwidth]{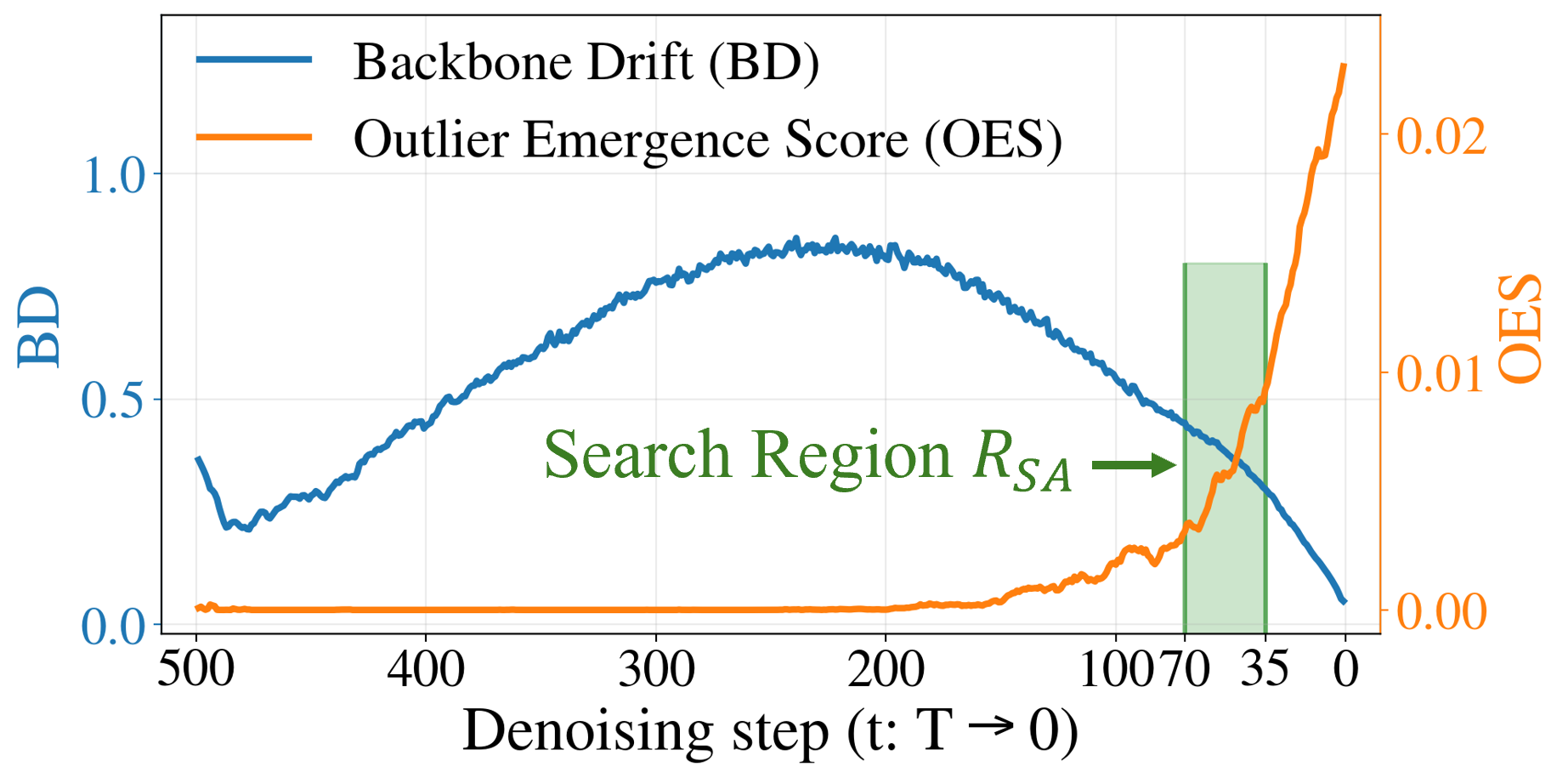}
    \caption{Control Activation Window via BD and OES}
    \label{fig:EActivation_interval}
    \vspace{-1.0em}
\end{wrapfigure}

BD quantifies the stability of regular temporal patterns (e.g., trend and seasonality), while OES characterizes the emergence of extreme-event patterns. Smaller values of BD and OES indicate stronger satisfaction of Criteria~1 and~2, respectively. We select $t_{\mathrm{CA}}$ through a two-stage procedure. First, BD and OES narrow the search space to a compact interval in which both metrics remain sufficiently low. Then, a lightweight validation grid search is performed within this interval to identify the optimal control activation step. More details are provided in Appendix~\ref{Appendix:E-Activation}.

Using Figure~\ref{fig:EActivation_interval} as an illustrative example, we visualize the evolution of BD and OES on a dataset to identify a compact interval for $t_{\mathrm{CA}}$. Specifically, we define the search space as
$\mathcal{R}_{\mathrm{CA}}=\{t \mid \mathrm{BD}(t)\le \tau_{\mathrm{BD}} \ \text{and}\ \mathrm{OES}(t)\le \tau_{\mathrm{OES}}\}$,
where the regular temporal patterns are relatively stable while extreme-event patterns remain underdeveloped. This interval narrows the search space for $t_{\mathrm{CA}}$, allowing a lightweight validation grid search to select the optimal control activation step. Once $t_{\mathrm{CA}}$ is selected, as shown in Figure~\ref{fig:framework}, we activate the Extreme-Control Generation Stage and apply the predicted control signals throughout the control activation window $[t_{\mathrm{CA}}, t_0]$.

\subsection{E-Predictor: Predicting the Extreme-Control Signal}
\label{sec:E-Predictor}
E-Predictor predicts the extreme-control signal at step $t_{\mathrm{CA}}$ for the subsequent Extreme-Control Generation Stage. It consists of two key designs: Self-Driven Signal Prediction (SDSP) and Data-Conditioned Training with Noise-Initiated Sampling (DCT-NIS). Specifically, SDSP incorporates a Semantic Predictor $f_\phi$ to enable a self-driven prediction process, deriving sample-specific control signals from the current intermediate denoising state. DCT-NIS further addresses the label-unavailability issue involved in training $f_\phi$.

\subsubsection{ Self-Driven Signal Prediction (SDSP)}
\label{sec:SDSP}


Motivated by Observation~III in Section~\ref{sec:analysis}, which shows that extreme events are learnable with context-dependent structures rather than random outliers, we design SDSP to predict extreme-control signals from the intermediate denoising state $\mathbf{x}_n(t_{\mathrm{CA}})$.
As discussed in Section~\ref{sec:E-Activate}, the intermediate state $\mathbf{x}_n(t_{\mathrm{CA}})$ already contains stabilized trend and seasonality patterns, providing informative temporal context for predicting potential extreme events. Since extreme events are strongly associated with such context, we design a Semantic Predictor $f_\phi$ in SDSP, which consists of \textit{Transformer-based token encoding}, \textit{multi-head semantic prediction}, and \textit{event-level semantic decoding}, to infer the most likely extreme-event semantics (Section~\ref{sec:extreme definition}) that will emerge in the final denoised sample $\mathbf{x}_n(t_0)$ based on $\mathbf{x}_n(t_{\mathrm{CA}})$.
The predicted semantics $\smash{\hat{E}_{\mathrm{CA}}}$ are then used as extreme-control signals to guide the subsequent denoising process toward realistic extreme-event patterns. In this way, each sample will receive a distinct control signal derived from its own intermediate denoising state.
Detailed design will be provided in Appendix~\ref{Appendix:E-Predictor design}.

\subsubsection{Data-Conditioned Training with Noise-Initiated Sampling (DCT-NIS) Mechanism} \label{sec:DCT-NIS}
We also design the DCT-NIS mechanism to address the label-unavailability issue in training the Semantic Predictor $f_\phi$. Specifically, during sampling, the intermediate state $\mathbf{x}_n(t_{\mathrm{CA}})$ is generated from pure noise, while its corresponding final denoised sample $\mathbf{x}_n(t_0)$ has not yet been produced. Therefore, the ground-truth extreme-event semantics required to supervise $f_\phi$ are unavailable at this stage. DCT-NIS resolves this challenge by training $f_\phi$ on data-conditioned intermediate states paired with accessible real-data semantic labels, and then transferring the learned predictor to guide noise-initiated states during sampling. It consists of two processes: data-conditioned training and noise-initiated sampling.

\textbf{Data-conditioned training process}. Given a clean sample 
$\mathbf{x}_d(t_0)\sim p_{\mathrm{data}}$, we introduce an earlier alignment start step $t_{\mathrm{AS}}$ positioned before and close to $t_{\mathrm{CA}}$. The clean sample is first diffused to $t_{\mathrm{AS}}$ and then denoised for a few steps from $t_{\mathrm{AS}}$ to $t_{\mathrm{CA}}$. This allows the resulting data-conditioned state $\mathbf{x}_d(t_{\mathrm{CA}})$ to experience the same denoising path as the noise-initiated state used during sampling:
\begin{equation}
    \mathbf{x}_d(t-1) \sim p_\theta\!\left(\mathbf{x}_d(t-1)\mid \mathbf{x}_d(t)\right),
    \quad t=t_{\mathrm{AS}},\ldots,t_{\mathrm{CA}}+1 .
\end{equation}
Here, $p_\theta(\mathbf{x}(t-1)|\mathbf{x}(t))$ denotes the one-step denoising transition introduced in Eq.~\ref{eq:x_update}. The Semantic Predictor $f_\phi$ takes the $\mathbf{x}_d(t_{\mathrm{CA}})$ as input, with the corresponding extreme-event semantics $EES(\mathbf{x}_d(t_0))$ in final denoised sample $\mathbf{x}_d(t_0)$ serving as the supervision label:
\begin{equation}
\text{Semantic Predictor} \ f_\phi:\ 
\hat{\mathbf{x}}^\theta_{0}\big(\mathbf{x}_d(t_{\mathrm{CA}})\big)
\mapsto 
EES(\mathbf{x}_d(t_0)).
\end{equation}
\textbf{Noise-initiated sampling process}. This stage is initialized with pure Gaussian noise 
$\mathbf{x}_n(t_T)$, which is denoised to obtain $\mathbf{x}_n(t_{\mathrm{CA}})$. The trained predictor $f_\phi$ is then applied to $\mathbf{x}_n(t_{\mathrm{CA}})$:
\begin{equation}\label{eq:ees_pre}
\hat{E}_{\mathrm{CA}}
\triangleq f_\phi\big(\hat{\mathbf{x}}^\theta_{0}\big(\mathbf{x}_n(t_{\mathrm{CA}})\big)\big)
= EES(\mathbf{x}_n(t_0)),
\end{equation}
where $\hat{E}_{\mathrm{CA}}$ denotes the predicted extreme-event semantics and serves as the extreme-control signal. Although DCT-NIS induces a training-sampling distribution shift between $\mathbf{x}_d(t_{\mathrm{CA}})$ and $\mathbf{x}_n(t_{\mathrm{CA}})$, we prove that the shift remains bounded under the DDPM setting~\cite{ho2020denoising}, with two lemmas shown below and proofs and empirical validations are provided in Appendix~\ref{Appendix:Proof}.

\begin{lemma}[Semantic Consistency]\label{lem:semantic_consistency}
Let $\mathbf{x}_d(t_0)\sim p_{\mathrm{data}}$ and let 
$\mathbf{x}_d(t_{\mathrm{AS}})\sim q(\mathbf{x}_d(t_{\mathrm{AS}})\mid \mathbf{x}_d(t_0))$, 
where $q$ denotes the forward diffusion process. 
Suppose that the denoising trajectory satisfies bounded reconstruction drift and that the event-level semantics is locally stable. 
If the reverse span from the alignment start step $t_{\mathrm{AS}}$ to the control activation step $t_{\mathrm{CA}}$ is sufficiently short, 
then the anchored semantics $EES(\mathbf{x}_d(t_0))$ remains a valid supervision signal for the predictor at 
$\mathbf{x}_d(t_{\mathrm{CA}})$.
\end{lemma}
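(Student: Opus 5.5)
The plan is to make the informal hypotheses quantitative and then combine two short estimates. The conclusion ``remains a valid supervision signal'' should be read as follows: the semantics recoverable from the predictor's input at $t_{\mathrm{CA}}$ agree with the anchored label $EES(\mathbf{x}_d(t_0))$, up to a tolerance that vanishes as the span $t_{\mathrm{AS}}-t_{\mathrm{CA}}$ shrinks. The input here is the $\mathbf{x}$-prediction $\hat{\mathbf{x}}^\theta_{0}(\mathbf{x}_d(t_{\mathrm{CA}}))$.

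I would encode the two assumptions as follows.
\begin{itemize}
\item \emph{Bounded reconstruction drift.} For every step $t$ in the window,
\[
\mathbb{E}\,\big\|\hat{\mathbf{x}}^\theta_{0}(\mathbf{x}(t-1))-\hat{\mathbf{x}}^\theta_{0}(\mathbf{x}(t))\big\|\le \delta .
\]
In addition, at the start step the prediction error obeys
\[
\mathbb{E}\,\big\|\hat{\mathbf{x}}^\theta_{0}(\mathbf{x}_d(t_{\mathrm{AS}}))-\mathbf{x}_d(t_0)\big\|\le \epsilon(t_{\mathrm{AS}}).
\]
This second bound is small because $t_{\mathrm{AS}}$ lies in the late, low-noise regime: under DDPM, $\mathbf{x}_d(t_{\mathrm{AS}})=\sqrt{\bar\alpha_{t_{\mathrm{AS}}}}\,\mathbf{x}_d(t_0)+\sqrt{1-\bar\alpha_{t_{\mathrm{AS}}}}\,\boldsymbol\epsilon$ with $\bar\alpha_{t_{\mathrm{AS}}}$ close to $1$.
\item \emph{Local semantic stability.} There is a radius $r>0$ such that $\|\mathbf{y}-\mathbf{x}\|\le r$ implies $EES(\mathbf{y})=EES(\mathbf{x})$. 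Equivalently, the event locations $\mathcal{L}$ are unchanged, and the intensity $\mathcal{I}$ and shape $\mathcal{S}$ vary Lipschitzly with constant $K$. This holds when no value lies within $r$ of the threshold $\theta$, so that no event boundary flips.
\end{itemize}

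The first main step is a telescoping bound. Writing $\Delta=t_{\mathrm{AS}}-t_{\mathrm{CA}}$, the triangle inequality along the reverse chain gives
\[
\mathbb{E}\,\big\|\hat{\mathbf{x}}^\theta_{0}(\mathbf{x}_d(t_{\mathrm{CA}}))-\mathbf{x}_d(t_0)\big\|\le \epsilon(t_{\mathrm{AS}})+\Delta\,\delta .
\]
By Markov's inequality, the event $\{\|\hat{\mathbf{x}}^\theta_{0}(\mathbf{x}_d(t_{\mathrm{CA}}))-\mathbf{x}_d(t_0)\|>r\}$ then has probability at most $(\epsilon(t_{\mathrm{AS}})+\Delta\delta)/r$.

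The second step applies local stability. Off that event, $EES\big(\hat{\mathbf{x}}^\theta_{0}(\mathbf{x}_d(t_{\mathrm{CA}}))\big)=EES(\mathbf{x}_d(t_0))$, or in the Lipschitz version the two differ by at most $K(\epsilon(t_{\mathrm{AS}})+\Delta\delta)$. So the label used in training is, with high probability, exactly the semantics carried by the input the predictor sees. This makes the target a near-deterministic function of the input up to a vanishing error, so the map $f_\phi$ is well posed. Choosing $\Delta$ small, specifically $\Delta\delta+\epsilon(t_{\mathrm{AS}})\ll r$, delivers the lemma.

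The main obstacle is justifying uniform stability near the threshold $\theta$. Points close to $\theta$ can change event counts or boundaries under arbitrarily small perturbations, which would break $\mathcal{L}$. I would first handle this with a margin assumption: $p_{\mathrm{data}}$ puts probability at most $c\,r$ on samples having some coordinate within $r$ of $\theta$. This margin term is then added to the failure probability, giving a total bound of order $(\epsilon(t_{\mathrm{AS}})+\Delta\delta)/r+c\,r$. If needed, the bound can be optimized over $r$.
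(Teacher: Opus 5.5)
Your argument has the same skeleton as the paper's proof. Both use a start-step reconstruction bound $\epsilon_{\mathrm{AS}}$ (your $\epsilon(t_{\mathrm{AS}})$) and per-step drift bounds on the clean estimate; the paper allows step-dependent $\eta_t$, and your uniform $\delta$ is the case $\delta=\max_t\eta_t$. Both telescope to $\mathbb{E}\|\hat{\mathbf{x}}^\theta_0(\mathbf{x}_d(t_{\mathrm{CA}}))-\mathbf{x}_d(t_0)\|\le\epsilon_{\mathrm{AS}}+\sum_t\eta_t$ and then invoke local stability of $EES$.

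The difference is in that last step. The paper assumes $d_E(EES(\mathbf{x}),EES(\mathbf{x}'))\le L_E\|\mathbf{x}-\mathbf{x}'\|$ only in a neighborhood of $\mathbf{x}_d(t_0)$. It then applies this inside the expectation to get $\mathbb{E}[d_E]\le L_E(\epsilon_{\mathrm{AS}}+\sum_t\eta_t)$, without checking that the clean estimate actually lies in that neighborhood. Your Markov step, together with the margin assumption on $p_{\mathrm{data}}$ near $\theta$, is what makes the locality legitimate. You pay with a failure probability of order $(\epsilon_{\mathrm{AS}}+\Delta\delta)/r+cr$, which becomes $2\sqrt{c(\epsilon_{\mathrm{AS}}+\Delta\delta)}$ after optimizing $r$, instead of the paper's linear bound. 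However, the paper's linear bound silently needs the Lipschitz property outside the neighborhood. That fails across threshold crossings, where an arbitrarily small perturbation changes $n_{\mathrm{ev}}$. The paper's version is shorter and gives a single expected-discrepancy number; yours is more rigorous and correctly identifies the threshold as the delicate point.

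A few slips to fix:
\begin{itemize}
\item Exact invariance $EES(\mathbf{y})=EES(\mathbf{x})$ on an $r$-ball is not equivalent to your second formulation. It is in fact false for $\mathcal{I}$, since $a^m_{\mathrm{peak}}$ and $a^m_{\mathrm{eng}}$ change under any perturbation. Only the version with $\mathcal{L}$ invariant and $\mathcal{I},\mathcal{S}$ Lipschitz is usable.
\item On the good event $\{\|\hat{\mathbf{x}}^\theta_0(\mathbf{x}_d(t_{\mathrm{CA}}))-\mathbf{x}_d(t_0)\|\le r\}$, the pointwise Lipschitz bound is $Kr$, not $K(\epsilon(t_{\mathrm{AS}})+\Delta\delta)$. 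The latter bounds $\mathbb{E}[d_E\,\mathbf{1}_{\mathrm{good}}]$.
\item The margin condition protects $\mathcal{L}$ but not the keypoint selection in $\mathcal{S}$. Turning points and local extrema, and hence $p^m_{\mathrm{len}}$, can appear or disappear under arbitrarily small perturbations. So Lipschitz continuity of $\mathcal{S}$ remains a bare assumption in your proof, just as it is hidden in the paper's ``suitable discrepancy'' $d_E$.
\end{itemize}
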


\begin{lemma}[Marginal Alignment]\label{lem:marginal_alignment}
Under the current DDPM setting, the marginal distribution of the data-conditioned state $\mathbf{x}_d(t_{\mathrm{CA}})$ remains close to that of the noise-initiated state $\mathbf{x}_n(t_{\mathrm{CA}})$.
\end{lemma}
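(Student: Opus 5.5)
The plan is to compare the two marginals through a common intermediate object: the true forward marginal $q_{t_{\mathrm{CA}}}(\mathbf{x})=\int q(\mathbf{x}(t_{\mathrm{CA}})\mid \mathbf{x}(t_0))\,p_{\mathrm{data}}(\mathbf{x}(t_0))\,d\mathbf{x}(t_0)$. We bound the distance from each of $p^{d}_{t_{\mathrm{CA}}}$ (law of $\mathbf{x}_d(t_{\mathrm{CA}})$) and $p^{n}_{t_{\mathrm{CA}}}$ (law of $\mathbf{x}_n(t_{\mathrm{CA}})$) to $q_{t_{\mathrm{CA}}}$ in total variation (or KL), and then close with the triangle inequality. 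Both processes apply the same learned reverse kernel $p_\theta(\mathbf{x}(t-1)\mid\mathbf{x}(t))$ from Eq.~\ref{eq:x_update}. The only differences are the starting law and the number of steps: $\mathbf{x}_d$ starts from the exact forward marginal $q_{t_{\mathrm{AS}}}$ and takes $t_{\mathrm{AS}}-t_{\mathrm{CA}}$ steps, while $\mathbf{x}_n$ starts from $\mathcal{N}(0,I)$ at $t_T$ and takes $t_T-t_{\mathrm{CA}}$ steps.

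\textbf{Step 1 (per-step error).} Make the standard DDPM assumption that the learned kernel approximates the true posterior reverse kernel $q(\mathbf{x}(t-1)\mid\mathbf{x}(t))$ with a per-step KL error $\epsilon_t$. For Gaussian kernels with fixed variance $\sigma_t^2$, this error is $\epsilon_t=\frac{1}{2\sigma_t^2}\mathbb{E}\|\mu_\theta-\tilde\mu_t\|^2$, which is controlled by the $\mathbf{x}$-prediction error $\mathbb{E}\|\hat{\mathbf{x}}^\theta_0(\mathbf{x}(t))-\mathbf{x}(t_0)\|^2$ times $\big(\sqrt{\bar\alpha_{t-1}}\beta_t/(1-\bar\alpha_t)\big)^2/(2\sigma_t^2)$. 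The true reverse chain started from $q_{t}$ reproduces $q_{t-1}$ exactly.

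\textbf{Step 2 (chain rule).} Apply the KL chain rule (data processing for joint path laws). This gives
\begin{equation}
\mathrm{KL}\big(q_{t_{\mathrm{CA}}}\,\|\,p^{d}_{t_{\mathrm{CA}}}\big)\le \sum_{t=t_{\mathrm{CA}}+1}^{t_{\mathrm{AS}}}\epsilon_t ,
\end{equation}
with no initial term, because the data-conditioned chain is initialized exactly at $q_{t_{\mathrm{AS}}}$. The same argument for the noise-initiated chain gives
\begin{equation}
\mathrm{KL}\big(q_{t_{\mathrm{CA}}}\,\|\,p^{n}_{t_{\mathrm{CA}}}\big)\le \mathrm{KL}\big(q_{t_T}\,\|\,\mathcal{N}(0,I)\big)+\sum_{t=t_{\mathrm{CA}}+1}^{t_T}\epsilon_t .
\end{equation}
The prior term is bounded by $\tfrac12\bar\alpha_{T}\,\mathbb{E}\|\mathbf{x}(t_0)\|^2$ up to dimension-dependent corrections, and this is negligible for the DDPM schedule.

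\textbf{Step 3 (combine).} Pinsker's inequality and the triangle inequality give
\begin{equation}
\mathrm{TV}\big(p^{d}_{t_{\mathrm{CA}}},p^{n}_{t_{\mathrm{CA}}}\big)\le \sqrt{\tfrac12\textstyle\sum_{t=t_{\mathrm{CA}}+1}^{t_{\mathrm{AS}}}\epsilon_t}+\sqrt{\tfrac12\big(\mathrm{KL}(q_{t_T}\|\mathcal{N}(0,I))+\textstyle\sum_{t=t_{\mathrm{CA}}+1}^{t_T}\epsilon_t\big)} .
\end{equation}
This is "close" in the sense of the statement: it is small whenever the denoiser is well trained. The first term is further reduced by the short span $t_{\mathrm{AS}}-t_{\mathrm{CA}}$, consistent with the assumption used in Lemma~\ref{lem:semantic_consistency}.

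\textbf{Main obstacle.} The hard part is the noise-initiated term. It accumulates errors over the long span $t_T\to t_{\mathrm{CA}}$, so the bound is only meaningful if $\sum_t\epsilon_t$ is small, i.e., if the training loss is small uniformly over steps. I would first present this as an explicit assumption: bounded score/$\mathbf{x}$-prediction error, matching the "current DDPM setting". I would then note that Lemma~\ref{lem:marginal_alignment} is qualitative and rests on this assumption. If a sharper statement is wanted, I would try replacing the noise-initiated path with a Girsanov-type bound on the continuous-time limit, where the total error is $\int\mathbb{E}\|s_\theta-\nabla\log q_t\|^2dt$.
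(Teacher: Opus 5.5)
Your decomposition is the paper's. The paper also introduces the forward marginal $q_{t_{\mathrm{CA}}}$, splits $\mathrm{TV}(P_d^{\mathrm{CA}},P_n^{\mathrm{CA}})$ with the triangle inequality, and uses that $\mathbf{x}_d(t_{\mathrm{AS}})$ has law exactly $q_{t_{\mathrm{AS}}}$. The paper then stops. It \emph{defines} $\varepsilon_{\mathrm{dct}}$ and $\varepsilon_{\mathrm{nis}}$ to be the two total-variation legs and asserts they are small. For the first it cites the short span and the reconstruction-drift bound of the Semantic Consistency lemma, although that bound concerns clean-sample estimates along a trajectory, not the law of $\mathbf{x}_d(t_{\mathrm{CA}})$. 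For the second it invokes a bounded reverse-approximation assumption. Your path-space KL chain rule plus Pinsker actually bounds both legs. It shows concretely why the data-conditioned leg has no initialization term and only $t_{\mathrm{AS}}-t_{\mathrm{CA}}$ error terms. So your argument is a more informative version of the same proof.

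The one thing to fix is Step~1. Your chain rule in Step~2 is valid only with $\epsilon_t=\mathbb{E}_{\mathbf{x}(t)\sim q_t}\mathrm{KL}\big(q(\mathbf{x}(t-1)\mid\mathbf{x}(t))\,\|\,p_\theta(\mathbf{x}(t-1)\mid\mathbf{x}(t))\big)$, using the reverse kernel of the forward chain. That kernel is the mixture $\int q(\mathbf{x}(t-1)\mid\mathbf{x}(t),\mathbf{x}(t_0))\,q(\mathbf{x}(t_0)\mid\mathbf{x}(t))\,d\mathbf{x}(t_0)$, not a Gaussian with mean $\tilde\mu_t$. The expression $\frac{1}{2\sigma_t^2}\mathbb{E}\|\mu_\theta-\tilde\mu_t(\mathbf{x}(t),\mathbf{x}(t_0))\|^2$ you wrote is the DDPM term $L_{t-1}$, i.e. the KL to the posterior $q(\mathbf{x}(t-1)\mid\mathbf{x}(t),\mathbf{x}(t_0))$. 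The compensation identity gives
\[
L_{t-1}=\epsilon_t+I\big(\mathbf{x}(t_0);\mathbf{x}(t-1)\mid\mathbf{x}(t)\big).
\]
Substituting $L_{t-1}$ for $\epsilon_t$ keeps your inequalities true but breaks the conclusion. The $\mathbf{x}$-prediction error $\mathbb{E}\|\hat{\mathbf{x}}^\theta_0(\mathbf{x}(t))-\mathbf{x}(t_0)\|^2$ never drops below $\mathbb{E}\,\mathrm{tr}\,\mathrm{Cov}(\mathbf{x}(t_0)\mid\mathbf{x}(t))$. On the noise-initiated leg the extra terms telescope to $I(\mathbf{x}(t_0);\mathbf{x}(t_{\mathrm{CA}}))-I(\mathbf{x}(t_0);\mathbf{x}(t_T))$. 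This quantity diverges as the noise level at $t_{\mathrm{CA}}$ goes to zero, so the Pinsker bound is vacuous even for a Bayes-optimal denoiser. Instead, place the assumption directly on $\epsilon_t$. Alternatively, note that with $\sigma_t^2$ equal to the posterior variance, $\epsilon_t$ equals the weighted \emph{excess} error $\mathbb{E}\|\hat{\mathbf{x}}^\theta_0(\mathbf{x}(t))-\mathbb{E}[\mathbf{x}(t_0)\mid\mathbf{x}(t)]\|^2$ plus a nonnegative Gaussian-approximation term that is small for small $\beta_t$. With that change your proof goes through as a quantitative sharpening of the paper's.
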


\subsection{E-Control: Semantics-Conditioned Control for Extreme-Event Generation}\label{sec:E-control} 
We then propose \textbf{E-Control}, a semantics-conditioned control module for extreme-event time-series generation. E-Control consists of two parts: a \textbf{Backbone Denoiser}, which serves as the base denoising network 
(Section~\ref{sec:backbone denoiser}), and a trainable \textbf{Extreme Control Network}, 
which is a plugin module to inject semantic control signals $\smash{\hat{E}_{\mathrm{CA}}}$ for extreme-event generation. The Extreme Control Network contains three key components: \textit{Semantic Encoder}, \textit{Control Branch}, and \textit{Control Gate}.


\textbf{Semantic Encoder.}
The Semantic Encoder maps the extreme-control signal $\hat{E}_{\mathrm{CA}}$ into semantic features for the Control Branch. It embeds each event's location, intensity, and temporal shape into an event-aware token, and aggregates all tokens with a lightweight set encoder:
\begin{equation}\label{eq:SE_highlevel}
\mathbf{e}_{m}=\mathrm{Emb}(\mathcal{L}^{m},\mathcal{I}^{m},\mathcal{S}^{m}),\quad
\mathbf{C}_{\mathrm{sem}}(t_{\mathrm{CA}})
=\mathrm{SetEnc}(\{\{\mathbf{e}_{m}\}_{m=1}^{n_{\mathrm{ev}}}\}).
\end{equation} 
The resulting representation $\mathbf{C}_{\mathrm{sem}}(t_{\mathrm{CA}})$ provides event-level conditioning for layer-aligned control.

\textbf{Control Branch.}
The Control Branch projects $\mathbf{C}_{\mathrm{sem}}(t_{\mathrm{CA}})$ into layer-wise control features aligned with the Backbone Denoiser. It matches the injection depth of the Backbone Denoiser, producing one control feature per injection layer. Conditioned on the denoising state $\mathbf{x}_n(t_{\mathrm{CA}})$, the time-step embedding $\mathrm{TE}(t_{\mathrm{CA}})$, and the semantic $\mathbf{C}_{\mathrm{sem}}(t_{\mathrm{CA}})$, the Control Branch computes:
\begin{equation}\label{eq:cb_block}
\mathbf{c}_\ell(t_{\mathrm{CA}})
=
\mathrm{CB}_\ell(
\mathbf{c}_{\ell-1}(t_{\mathrm{CA}}),
\mathrm{TE}(t_{\mathrm{CA}}),
\mathbf{C}_{\mathrm{sem}}(t_{\mathrm{CA}})
),
\quad \ell=1,\ldots,L_c ,
\end{equation}
where $\mathbf{c}_0(t_{\mathrm{CA}})$ is initialized from $\mathbf{x}_n(t_{\mathrm{CA}})$, and $L_c$ equals the number of injection layers in the Backbone Denoiser. Here, $\mathbf{c}_\ell(t_{\mathrm{CA}})$ denotes the control feature for the $\ell$-th injection layer, $\mathrm{CB}_\ell$ the corresponding control block, and $\mathrm{TE}(\cdot)$ the time-step embedding function.

\textbf{Control Gate.}
The Control Gate injects layer-aligned control features into the Backbone Denoiser as gated residual offsets. For each injection layer, it computes
\begin{equation}\label{eq:control_gate}
\Delta_\ell(t_{\mathrm{CA}})
=
\sigma\!\left(\smash{\mathbf{W}^{(g)}_\ell}\mathbf{c}_\ell(t_{\mathrm{CA}})\right)
\odot
\left(\smash{\mathbf{W}^{(\Delta)}_\ell}\mathbf{c}_\ell(t_{\mathrm{CA}})\right),
\quad \ell=1,\ldots,L_c .
\end{equation}
Here, $\sigma(\cdot)$ is the sigmoid function, $\odot$ denotes element-wise multiplication, and $\smash{\mathbf{W}^{(g)}_\ell}$ and $\smash{\mathbf{W}^{(\Delta)}_\ell}$ map the control feature to the gate value and residual offset. The offset $\Delta_\ell(t_{\mathrm{CA}})$ is added to the hidden representation $\hat{\mathbf{x}}_\ell(t_{\mathrm{CA}})$ or output prediction $\hat{\mathbf{x}}_0(t_{\mathrm{CA}})$ of the Backbone Denoiser, enabling adaptive extreme-event enhancement while preserving background temporal patterns.

Together, these three components transform $\smash{\hat{E}_{\mathrm{CA}}}$ into layer-specific residual offsets, with each offset injected into the output representation of its corresponding layer in the Backbone Denoiser.
More design details will be provided in Appendix~\ref{Appendix:E-Control design}.

\section{EVALUATION} \label{sec:evaluation}

\textbf{Research Questions (RQs).} 
Our evaluation is organized around nine key research questions.
RQ1: Does E4GEN faithfully generate extreme events?
RQ2: Does E4GEN preserve overall generation fidelity?
RQ3: Are the data generated by E4GEN useful for downstream tasks?
RQ4: Can E4GEN provide interpretable insights into the denoising process?
RQ5: How accurately can E4GEN predict extreme-event semantics?
RQ6: How does each key module contribute to the performance of E4GEN?
RQ7: How sensitive is E4GEN to key hyperparameters?
RQ8: Is E4GEN computationally efficient?
and RQ9: Can E4GEN controllably generate specified extreme events?

\textbf{Datasets.} We evaluate E4GEN on six datasets (five real and one synthetic) across four domains: climate, healthcare, energy, and transportation. The real-world datasets include \href{https://docs.deweydata.io/docs/custom-weather-weather-data}{\textsc{Wea-Temp}} and \href{https://docs.deweydata.io/docs/custom-weather-weather-data}{\textsc{Wea-Prec}}, \href{https://www.physionet.org/physiobank/database/ltstdb/}{\textsc{LTST-ECG}}, \href{https://archive.ics.uci.edu/dataset/235/individual+household+electric+power+consumption}{\textsc{HH-Power}}, and \href{https://archive.ics.uci.edu/ml/datasets/pems-sf}{\textsc{PEMS-SF}}. These datasets cover representative extreme events, including cold temperature, heavy rainfall, ST-segment elevation, high electricity consumption, and traffic congestion. Detailed descriptions are provided in Appendix~\ref{Appendix:Dataset}.



\textbf{Metrics.} We evaluate generation performance using 17 metrics from \textit{overall generation quality} and \textit{extreme-event generation quality} perspectives. For overall generation, we report five fidelity metrics (i.e., Wass.~\cite{villani2009optimal}, KS~\cite{smirnov1939estimation}, JS~\cite{lin2002divergence}, MMD~\cite{gretton2012kernel}, and ACD~\cite{mirylenka2017data}) following common practice~\cite{ang2023tsgbench}, and three general downstream utility metrics (i.e., Context-FID~\cite{jeha2022psa} for representation learning, Pred~\cite{yoon2019time} for prediction, and Recon~\cite{malhotra2016lstm} for reconstruction). For extreme-event generation, we introduce seven event-level fidelity metrics, including EMean-W1, ECount-Diff, EDur-W1, EPeak-W1, EWass, EJS, and EMMD, along with two extreme-oriented downstream utility metrics, i.e., EPred for extreme-event prediction and ERecon for anomaly detection. More details are in Appendix~\ref{Appendix:Metric}.

\textbf{Baselines.} We compare E4GEN with nine representative generative baselines across six paradigms: TimeGAN~\cite{yoon2019time} (GAN), TimeVAE~\cite{desai2021timevae} and koVAE~\cite{naimangenerative} (VAE), F-Flow~\cite{alaa2021generative} (flow), DiffWave~\cite{kongdiffwave} and Diffusion-TS~\cite{yuan2024diffusionts} (diffusion), SDForger~\cite{rousseau2025forging} (LLM), and two extreme-aware methods, FIDE~\cite{galib2024fide} and HeavyDiff~\cite{pandey2025heavy}. Detailed baseline descriptions are provided in Appendix~\ref{Appendix:Baseline}.

\subsection{Generation Fidelity and Downstream Utility}
\begin{table*}[!h]
\centering
\footnotesize
\renewcommand{\arraystretch}{0.92}
\setlength{\tabcolsep}{2pt}

\caption{Overall Generation Fidelity and General Downstream Utility}
\label{tab:overall_wea_temp}
\begin{tabular}{cclcccccccc}
\toprule
\textbf{Dataset} & \textbf{Type} & \textbf{Method} & \textbf{Wass.} & \textbf{KS} & \textbf{JS} & \textbf{MMD} & \textbf{ACD} & \textbf{Context\_FID} & \textbf{Pred} & \textbf{Recon} \\
\midrule
\multirow{11}{*}{\makecell{\textbf{WEA-}\\\textbf{TEMP}}}
& \textbf{GAN}
& TimeGAN (NeurIPS'19)~\cite{yoon2019time}
& 1.1950 & \underline{0.0947} & 0.1576 & 0.3650 & 0.8405 & 5.1615 & 0.1662 & 0.1165 \\

\arrayrulecolor{gray!60}\cmidrule(lr){2-11}\arrayrulecolor{black}

& \multirow{2}{*}{\textbf{VAE}}
& TimeVAE (ArXiv'21)~\cite{desai2021timevae}
& 1.8238 & 0.1265 & 0.1385 & 0.1781 & 1.0735 & 1.2312 & 0.1202 & 0.0326 \\

&
& koVAE (ICLR'24)~\cite{naimangenerative}
& 3.5138 & 0.2662 & 0.3984 & 0.3388 & 1.8261 & 2.9273 & 0.1851 & 0.0417 \\

\arrayrulecolor{gray!60}\cmidrule(lr){2-11}\arrayrulecolor{black}

& \textbf{Flow}
& F-Flow (ICLR'21)~\cite{alaa2021generative}
& 21.002 & 0.9955 & 0.8256 & 1.2082 & 10.519 & 32.013 & 1.5189 & 1.4458 \\

\arrayrulecolor{gray!60}\cmidrule(lr){2-11}\arrayrulecolor{black}

& \multirow{2}{*}{\textbf{Diffusion}}
& DiffWave (ICLR'21)~\cite{kongdiffwave}
& 3.9978 & 0.3056 & 0.4726 & 0.3808 & 2.0716 & 3.4627 & 0.1850 & 0.0303 \\

&
& Diffusion-TS (ICLR'24)~\cite{yuan2024diffusionts}
& 1.2845 & 0.1033 & 0.0916 & 0.1116 & 0.6605 & 0.2271 & \underline{0.1116} & 0.0283 \\

\arrayrulecolor{gray!60}\cmidrule(lr){2-11}\arrayrulecolor{black}

& \textbf{LLM}
& SDForger (NeurIPS'25)~\cite{rousseau2025forging}
& 1.5069 & 0.1076 & 0.1551 & 0.1317 & 0.8370 & 1.5304 & 0.1308 & 0.0539 \\

\arrayrulecolor{gray!60}\cmidrule(lr){2-11}\arrayrulecolor{black}

& \multirow{2}{*}{\makecell{\textbf{Extreme-}\\\textbf{Aware}}}
& FIDE (NeurIPS'24)~\cite{galib2024fide}
& 22.225 & 0.7448 & 0.6314 & 0.7962 & 11.290 & 24.775 & 0.3035 & 0.1573 \\

&
& HeavyDiff (ICLR'25)~\cite{pandey2025heavy}
& \underline{1.0998} & 0.0950 & \underline{0.0862} & \underline{0.0995} & \underline{0.5793}
& \underline{0.1986} & 0.1185 & \textbf{0.0257} \\

\arrayrulecolor{gray!60}\cmidrule(lr){2-11}\arrayrulecolor{black}

& \textbf{Ours}
& \textbf{E4GEN}
& \textbf{0.9346} & \textbf{0.0846} & \textbf{0.0851} & \textbf{0.0858} & \textbf{0.4938}
& \textbf{0.1584} & \textbf{0.1084} & \underline{0.0276} \\
\bottomrule
\end{tabular}

\vspace{2.0em}

\caption{Extreme-Event Generation Fidelity and Extreme-Oriented Downstream Utility}
\label{tab:extreme_wea_temp}
\begin{tabular}{cclccccccccc}
\toprule
\textbf{Dataset} & \textbf{Type} & \textbf{Method} & \textbf{EM-W1} & \textbf{EC-Diff} & \textbf{ED-W1} & \textbf{EP-W1} & \textbf{EWass.} & \textbf{EJS} & \textbf{EMMD} & \textbf{EPred} & \textbf{ERecon} \\
\midrule
\multirow{11}{*}{\makecell{\textbf{WEA-}\\\textbf{TEMP}}}
& \textbf{GAN}
& TimeGAN ~\cite{yoon2019time}
& 3.7958 & 5583 & 2.0428 & 3.8054 & 10.961 & 0.6318 & 10.893 & 0.3093 & 0.1461 \\

\arrayrulecolor{gray!60}\cmidrule(lr){2-12}\arrayrulecolor{black}

& \multirow{2}{*}{\textbf{VAE}}
& TimeVAE ~\cite{desai2021timevae}
& 1.3089 & 4864 & 1.6862 & 1.1608 & 10.600 & 0.5942 & 17.539 & 0.1629 & 0.0615 \\

&
& koVAE ~\cite{naimangenerative}
& 1.8624 & 11425 & 2.3364 & 1.5521 & $\infty$ & $\infty$ & $\infty$ & $\infty$ & $\infty$ \\

\arrayrulecolor{gray!60}\cmidrule(lr){2-12}\arrayrulecolor{black}

& \textbf{Flow}
& F-Flow ~\cite{alaa2021generative}
& 1.8883 & 20822 & 14.176 & 1.6071 & 22.109 & 0.8305 & \underline{5.0478} & 0.6524 & 0.5295 \\

\arrayrulecolor{gray!60}\cmidrule(lr){2-12}\arrayrulecolor{black}

& \multirow{2}{*}{\textbf{Diffusion}}
& DiffWave ~\cite{kongdiffwave}
& 1.7314 & 10654 & 2.2313 & 1.5311 & $\infty$ & $\infty$ & $\infty$ & $\infty$ & $\infty$ \\

&
& Diffusion-TS ~\cite{yuan2024diffusionts}
& 0.4373 & \underline{2587} & \underline{0.4235} & 0.4267 & \underline{4.7470} & 0.4708 & 8.4300 & \underline{0.1163} & 0.0593 \\

\arrayrulecolor{gray!60}\cmidrule(lr){2-12}\arrayrulecolor{black}

& \textbf{LLM}
& SDForger ~\cite{rousseau2025forging}
& \underline{0.3644} & 3242 & 1.0477 & 0.3736 & 9.1781 & \underline{0.4411} & \textbf{4.1654} & 0.1282 & 0.0763 \\

\arrayrulecolor{gray!60}\cmidrule(lr){2-12}\arrayrulecolor{black}

& \multirow{2}{*}{\makecell{\textbf{Extreme-}\\\textbf{Aware}}}
& FIDE ~\cite{galib2024fide}
& 5.1486 & 15767 & 5.8725 & 12.335 & 64.284 & 0.7782 & 23.947 & 0.2452 & 0.6922 \\

&
& HeavyDiff ~\cite{pandey2025heavy}
& 0.4115 & 3115 & 0.6909 & \underline{0.3675} & 5.8522 & 0.4909 & 6.9037 & 0.1233 & \underline{0.0579} \\

\arrayrulecolor{gray!60}\cmidrule(lr){2-12}\arrayrulecolor{black}

& \textbf{Ours}
& \textbf{E4GEN}
& \textbf{0.3574} & \textbf{1090} & \textbf{0.2916} & \textbf{0.3594} & \textbf{2.2081} & \textbf{0.4216} & 6.1463 & \textbf{0.1158} & \textbf{0.0522} \\
\bottomrule
\end{tabular}

\renewcommand{\arraystretch}{1.0}
\normalsize
\end{table*}

To answer RQs~1--3, we evaluate E4GEN from three perspectives: extreme-event generation fidelity, overall generation fidelity, and downstream utility. Specifically, downstream utility is assessed under the train-on-generated, test-on-original protocol, where models are trained on generated data and evaluated on real data. Across 102 dataset--metric comparisons, E4GEN achieves state-of-the-art performance in 79.4\% of cases and ranks among the top two methods in 95.1\%, demonstrating strong and consistent advantages. Below, we present the results on the WEA-TEMP dataset as a representative case study, while additional experimental results are provided in Appendix~\ref{Appendix:more_experiment_results}.

As shown in Tables~\ref{tab:overall_wea_temp} and~\ref{tab:extreme_wea_temp}, E4GEN achieves the best performance on most metrics across overall fidelity, extreme-event fidelity, and downstream utility. In the tables, $\infty$ indicates that the corresponding metric cannot be computed because no extreme-event time points are available.
For overall fidelity, E4GEN achieves the strongest performance on the distributional discrepancy metrics (Wass., KS, JS, and MMD) as well as the temporal-dependency metric (ACD). For extreme-event fidelity, E4GEN also outperforms competing methods on most metrics, with particularly notable gains on ED-W1 and EWass., demonstrating superior modeling of extreme-event duration distributions and temporal-pattern distributions. These improvements mainly stem from the proposed extreme-event semantic design, which explicitly captures event location, intensity, and temporal shape. For downstream utility, E4GEN achieves strong Context-FID performance for representation learning and the best Pred score for prediction tasks. Notably, its leading EPred and ERecon scores further demonstrate the effectiveness of E4GEN-generated data for extreme-oriented downstream applications, such as extreme-event prediction and anomaly detection. Among the baselines, diffusion-based methods (e.g., Diffusion-TS~\cite{yuan2024diffusionts} and HeavyDiff~\cite{pandey2025heavy}) and the LLM-based SDForger~\cite{rousseau2025forging} remain competitive, showing their capability in global temporal generation and fine-grained detail preservation.

\begin{figure}[H]
    \centering
    \begin{minipage}{0.49\linewidth}
        \centering
        \includegraphics[width=\linewidth]{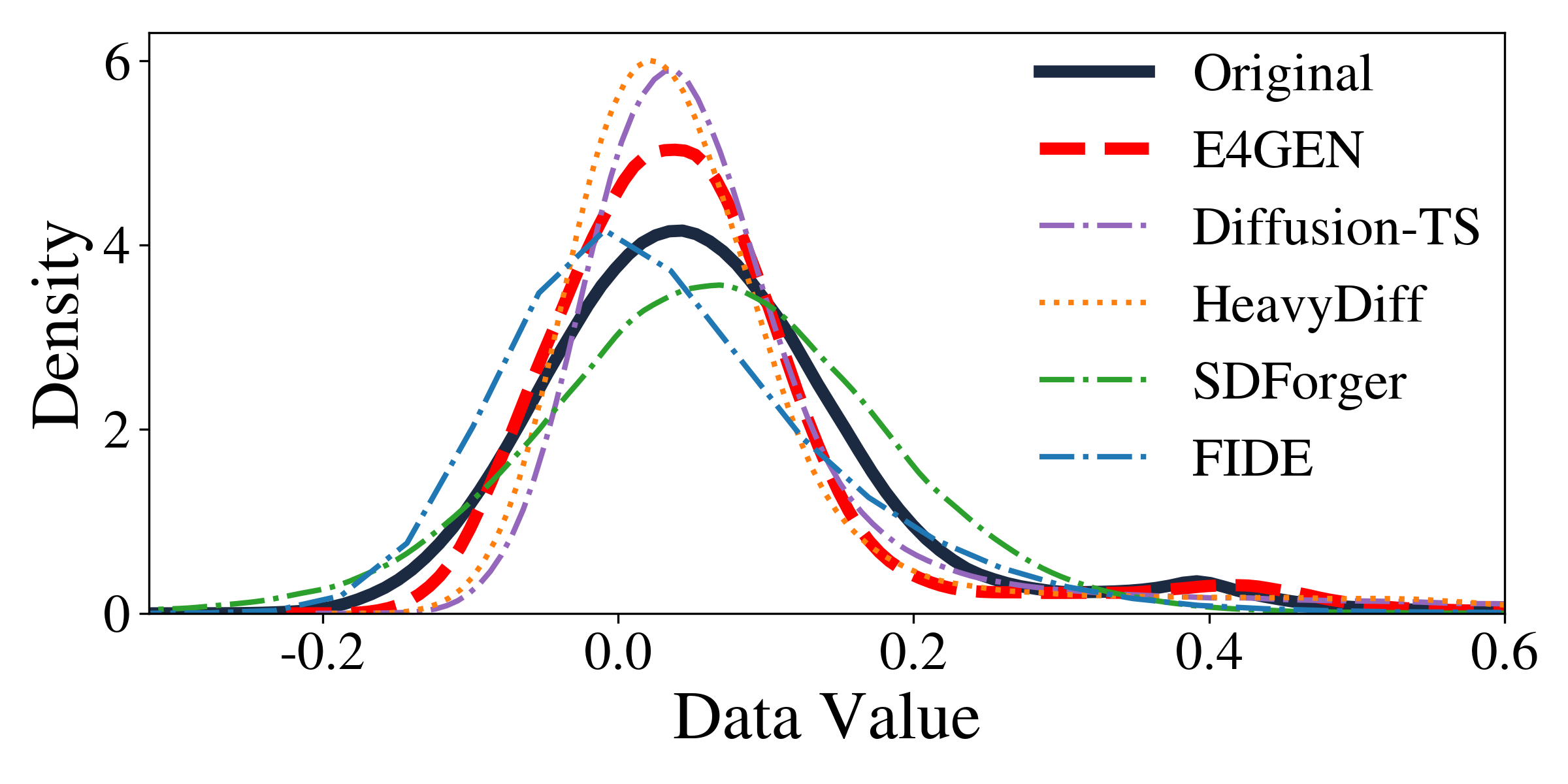} 
        \caption{Overall distribution comparison.}
        \label{fig:overall_distribution}
    \end{minipage}
    \hfill
    \begin{minipage}{0.49\linewidth}
        \centering
        \includegraphics[width=\linewidth]{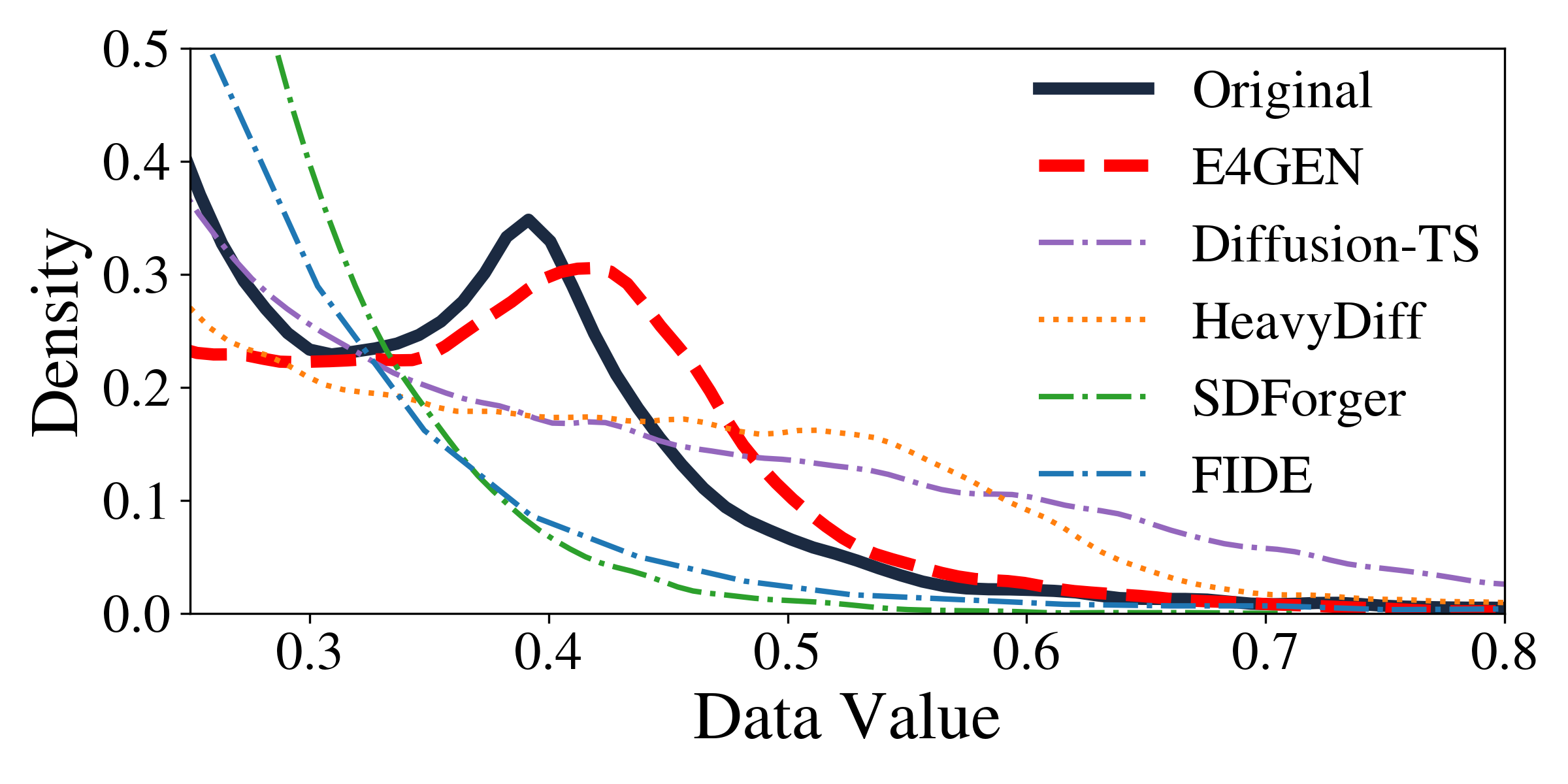} 
        \caption{Tail distribution comparison.}
        \label{fig:extreme_distribution}
    \end{minipage}
\end{figure}

As shown in Fig.~\ref{fig:overall_distribution}, all methods generally capture the overall distribution, although with varying levels of deviation. However, in Fig.~\ref{fig:extreme_distribution}, only E4GEN (red) closely aligns with the original data distribution (black) in the extreme-value region, while the other methods largely fail to reproduce this characteristic pattern. These results suggest that existing methods struggle to capture localized extreme-event structures, whereas E4GEN more faithfully recovers their distributional characteristics through explicit extreme-event semantic modeling.



\subsection{Further Analysis} \vspace{-5pt}
To answer RQ4, we provide detailed results of \textbf{Interpretable Generation} in Appendix~\ref{Appendix:Interpretable_Generation_Dynamics}.
To answer RQs~5--6, we show detailed results on \textbf{Effectiveness Analysis and Ablation Study} in Appendices~\ref{Appendix:E_Predictor_Intermediate_Output_Analysis} and~\ref{Appendix:ablation}
To answer RQ7, we provide detailed \textbf{Sensitivity Analysis} in Appendix~\ref{Appendix:Sensitivity Analysis}.
To answer RQ8, we show the \textbf{Computational Efficiency} in Appendix~\ref{Appendix:efficiency}.
To answer RQ9, we show the \textbf{Controllability} of E4GEN for Extreme-Event Generation in Appendix~\ref{Appendix:Controllable Extreme-Event Generation}.

\vspace{-10pt}
\section{CONCLUSION} \label{sec:conclusion}
\vspace{-10pt}
In this paper, we propose \textbf{E4GEN}, an explainable diffusion framework for extreme-aware time-series generation. To address the limitation that existing methods often fail to faithfully capture extreme events, we formulate extreme-event generation as an interpretable control problem during denoising, centered on when, what, and how to guide the generation process. E4GEN introduces three key components: (i) \emph{E-Activator}, which identifies when to activate extreme-control signals; (ii) \emph{E-Predictor}, which determines what semantic-based extreme-control signals to enforce in the denoising process; and (iii) \emph{E-Control}, which specifies how to inject these signals to enhance extreme-event generation while preserving global temporal patterns. Extensive experiments on six datasets demonstrate that E4GEN outperforms baselines across 17 metrics spanning overall generation fidelity, extreme-event generation fidelity, and downstream utility.


\bibliographystyle{unsrt}
\bibliography{references}

\appendix


\newpage

\newpage

\section{Reproducibility Statement}

To improve reproducibility, we release an anonymous GitHub repository at
\href{https://anonymous.4open.science/r/E4GEN}{https://anonymous.4open.science/r/E4GEN}
as part of the supplementary materials during the review process. The repository includes all
datasets in the \texttt{Data} folder, the implementations of all baseline methods in the
\texttt{Baseline} folder, the parameter configuration files for all experiments in the \texttt{Config}
folder, and the source files for generating all figures in the \texttt{Analysis} folder.
Together with the scripts for data preparation and model training, these resources enable
independent researchers to reproduce all reported results and figures.

\section{Related Work}
\label{Appendix:related work}
\subsection{Time Series Generation}

\subsubsection{GAN-, VAE-, and Flow-based Time-Series Generation}
Time-series generation aims to synthesize realistic sequential data that preserve both marginal distributions and temporal dependencies. Early studies adapted general-purpose generative paradigms to sequential settings, including generative adversarial networks (GANs), variational autoencoding (VAE), and likelihood-based flow modeling. Among them, GANs learn realistic sequences through a competition between the generator and discriminator. For instance, C-RNN-GAN~\cite{Mogren2016} and RCGAN~\cite{Esteban2017} pioneer the combination of recurrent neural networks with adversarial training to generate continuous temporal data and condition on auxiliary information, respectively. TimeGAN~\cite{yoon2019time} further introduces a supervised temporal objective in a learned embedding space, encouraging generated sequences to follow the stepwise dynamics of real data. 
In contrast, VAE-based methods provide a more stable probabilistic framework by learning compact latent representations for sequence reconstruction and sampling. Early approaches like VRNN~\cite{Chung2015} integrate latent random variables into RNN hidden states to model structured sequential variability. TimeVAE~\cite{desai2021timevae} extends VAEs to multivariate time series with sequence-aware components such as level, trend, and residual structures, while koVAE~\cite{naimangenerative} replaces the conventional static latent prior with a Koopman-inspired dynamical prior to capture latent temporal evolution. 
Flow-based models provide another probabilistic approach to time-series generation through explicit likelihood estimation. Fourier Flows~\cite{alaa2021generative} transform time series into spectral representations and learn an invertible flow over Fourier-domain coefficients, enabling the model to capture global temporal and periodic patterns. While these methods provide strong general-purpose generators, their objectives are usually dominated by common temporal patterns and are not explicitly designed to preserve sparse extreme patterns.

\subsubsection{Diffusion-based Time-series Generation}
Diffusion models have recently become a strong paradigm for sequential generation because they synthesize samples through iterative denoising, allowing complex temporal distributions to be refined progressively~\cite{xu2026synhat,xu2026geogen}. Their general formulation, established by DDPMs and subsequent improvements, provides a flexible probabilistic framework for transforming noise into structured data distributions~\cite{ho2020denoising,nichol2021improved}. In one-dimensional sequential domains, DiffWave~\cite{Kong2020} demonstrates that diffusion backbones can synthesize high-fidelity audio waveforms from white noise, suggesting their suitability for continuous temporal signals.

Building on this denoising perspective, recent studies have adapted diffusion models~\cite{xu2025autostdiff} to time-series tasks under different settings. TimeGrad~\cite{Rasul2021} combines autoregressive networks with score-based diffusion for multivariate probabilistic forecasting, while CSDI~\cite{Tashiro2021} extends score-based diffusion to probabilistic time-series imputation through self-supervised masking and conditional score matching. Other works further explore controlled or structured diffusion for time-series generation. For example, constrained time-series generation~\cite{coletta2023constrained} incorporates user-specified temporal constraints into the sampling process, while TimeBridge~\cite{park2026timebridge} improves diffusion prior design through bridge-based modeling. More recently, Diffusion-TS~\cite{Yuan2024} combines a Transformer denoiser with trend--seasonality decomposition and directly reconstructs the clean sample during diffusion, improving both fidelity and interpretability.

Overall, these methods establish diffusion models as effective backbones for time-series generation. However, their conditioning mechanisms are primarily designed for sample-level or value-level guidance rather than event-level control, making it difficult to explicitly preserve rare and localized extreme-event structures.

\subsubsection{LLM-based Time-series Generation}
Recent studies have also explored whether the sequence modeling capacity of large language models and foundation models can be transferred to time-series forecasting and generation. Since raw time series are continuous, these methods usually convert numerical sequences into textual tokens, discrete value tokens, or patch-level embeddings before applying autoregressive modeling. Early prompt-based studies such as PromptCast~\cite{xue2023promptcast} formulate forecasting as a text-to-text problem, while LLMTime~\cite{gruver2023large} serializes numerical values as text and samples future continuations from pretrained LLMs in a zero-shot manner.
Another line of work reuses pretrained language backbones through representation alignment. GPT4TS~\cite{zhou2023one} segments time series into patches and projects them into the input space of a pretrained GPT-style model, enabling the language backbone to process temporal sequences with lightweight adaptation. Time-LLM~\cite{jin2024time} further introduces a reprogramming strategy that maps time-series patch representations into text-prototype-aligned token embeddings, together with prompt-based temporal context for frozen LLM adaptation. In contrast, Chronos~\cite{ansari2024chronos} trains language-model architectures directly on discretized time-series values, turning forecasting into next-token prediction over value bins. Recent analyses also suggest that LLM-based temporal models can benefit from pretraining and sequence modeling, but their gains depend heavily on representation design, prompting, and evaluation settings~\cite{tan2024language}. However, most of these works focus on general forecasting problems instead of generation tasks, which are more challenging, especially when considering the extreme events.

For time-series generation, SDForger~\cite{rousseau2025forging} follows a more generation-oriented paradigm by mapping time series into functional or tabular embeddings, serializing them into text, fine-tuning an autoregressive language model, and decoding the generated embeddings back into time-series samples. This representation-space generation avoids direct token-by-token synthesis over long continuous sequences while leveraging the generative capacity of LLMs. Nevertheless, compact tokens or embeddings are usually optimized to preserve global sequence information, so sparse localized extremes may still be compressed or underemphasized.

\subsection{Extreme-aware Generation}

Extreme-aware generation aims to improve the generation fidelity in rare, high-impact parts that are often underrepresented by average-case objectives. Existing approaches usually address this issue through long-tail sample modeling, extreme-value theory, or heavy-tailed generation dynamics. Long-tailed diffusion methods improve tail coverage by calibrating or transferring knowledge from frequent samples to rare ones~\cite{zhang2024long}, while heavy-tailed generative models modify distributional assumptions to better represent large-magnitude outcomes~\cite{kim2024t3vae,pandey2025heavy,shariatian2025dlpm}. These designs improve tail fidelity from sample-level or value-level perspectives. However, for time-series data, extremes are not only distributional tails but also temporally organized patterns, which makes their preservation different from conventional long-tail generation.

Several studies have incorporated extreme-value modeling into time-series generators. Early explorations like ExGAN~\cite{bhatia2021exgan} combine Extreme Value Theory (EVT) with adversarial learning to generate realistic samples under user-specified tail probabilities. 
FIDE~\cite{galib2024fide} enhances high-frequency components and conditions the generation process on block maxima sampled from a fitted GEV distribution. HeavyDiff~\cite{pandey2025heavy} takes a complementary direction by replacing Gaussian perturbations with Student-\(t\)-based heavy-tailed noise, allowing diffusion models to better capture rare large-magnitude outcomes.

These efforts show that rare and extreme regions can be emphasized by reshaping the sampling objective, conditioning on tail statistics, or modifying the generative noise distribution. Yet most of them still define extremes through sample rarity, peak magnitude, or tail behavior, leaving event-level properties such as location, duration, and temporal shape less directly modeled.

\section{Discussion}
\label{Appendix:limitations}
Although E4GEN shows consistent improvements across diverse time-series datasets, several limitations remain. First, our experiments focus on six representative datasets from climate, healthcare, energy, and transportation domains. While these datasets cover different types of extreme events, broader validation on additional domains and longer time-series settings would further strengthen the generality of the conclusions. Second, the definition of extreme events relies on domain-specific thresholds or task-specific criteria. This design provides flexibility, but the generated extreme-event semantics may depend on the quality of the chosen threshold. Third, E4GEN introduces additional components for semantic prediction and control injection, which may increase training and sampling cost compared with a plain diffusion backbone. Future work can explore more efficient control modules and adaptive threshold selection strategies.

\section{Broader Societal Impacts}
\label{sec:broader_impacts}
Our E4GEN advances the generation of realistic time-series data with a focus on capturing extreme events, offering positive societal impacts in domains like climate disaster preparedness, healthcare (e.g., synthesizing rare ECG anomalies without compromising patient privacy), and energy grid stress-testing. However, we also acknowledge potential negative impacts, including the malicious synthesis of fake sensor data to spoof systems or trigger false alarms, and the risk of over-reliance on synthetic extremes that may not perfectly reflect true underlying physical mechanisms, potentially leading to miscalibrated decisions in safety-critical scenarios. To mitigate these risks, E4GEN-generated data should complement rather than replace real-world observations, and future deployments could incorporate statistical watermarking to prevent the misuse of synthetic datasets.

\section{Data-Driven Analysis} 

\subsection{Discussion About Value-level Extreme-aware Generation} \label{Appendix:Discussion about value-level enhencement}

\begin{figure*}[htbp]
    \centering
    \begin{subfigure}[htbp]{0.42\textwidth}
        \centering
        \includegraphics[width=\linewidth]{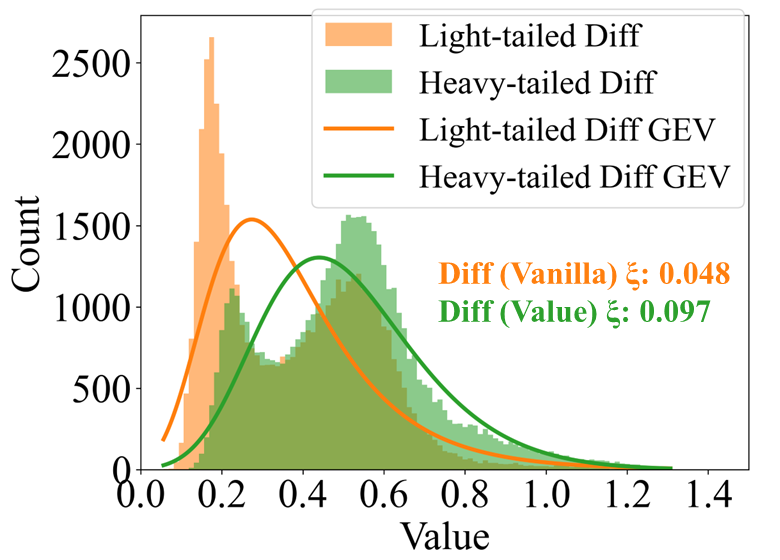}
        \label{fig:synthetic_data_pdf}
    \end{subfigure}
    \hfill
    \begin{subfigure}[htbp]{0.42\textwidth}
        \centering
        \includegraphics[width=\linewidth]{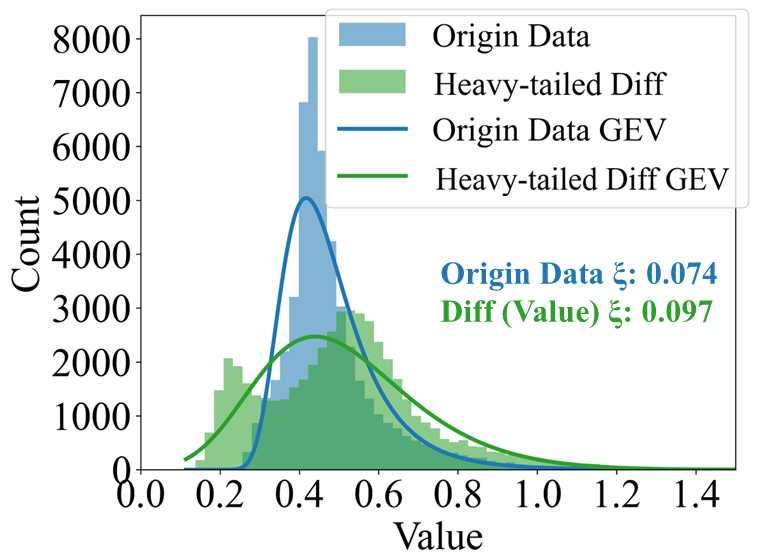}
        \label{fig:synthetic_data_sample}
    \end{subfigure}
    \vspace{-10pt}
    \caption{GEV Distribution of Block Maxima}
    \label{fig:GEV}
\end{figure*}

\vspace{-10pt}

\begin{figure*}[htbp]
    \centering

    \begin{subfigure}[t]{0.32\textwidth}
        \centering
        \includegraphics[width=\linewidth]{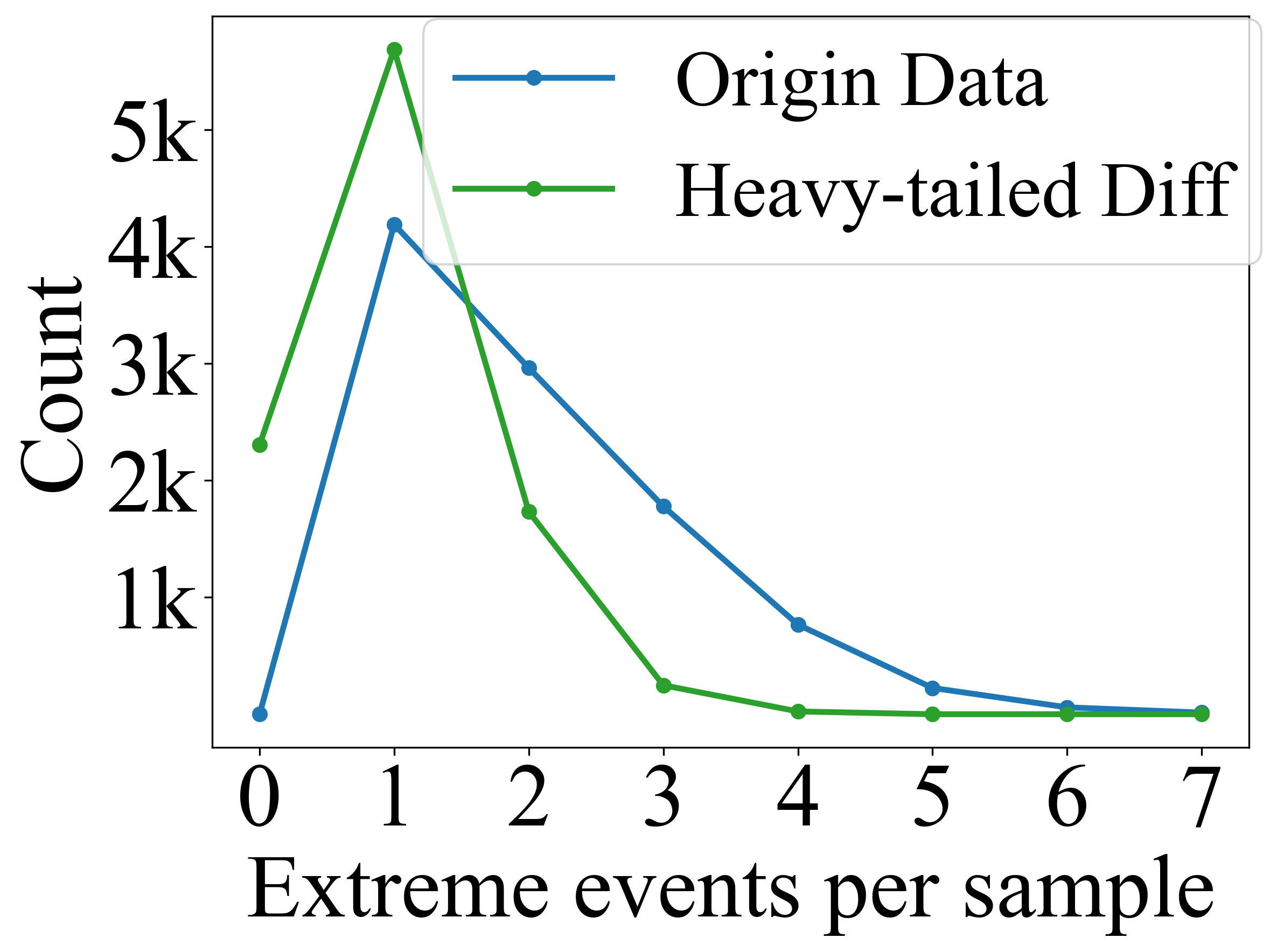}
        \caption{Count}
        \label{fig:extreme_count}
    \end{subfigure}
    \qquad
    \begin{subfigure}[t]{0.32\textwidth}
        \centering
        \includegraphics[width=\linewidth]{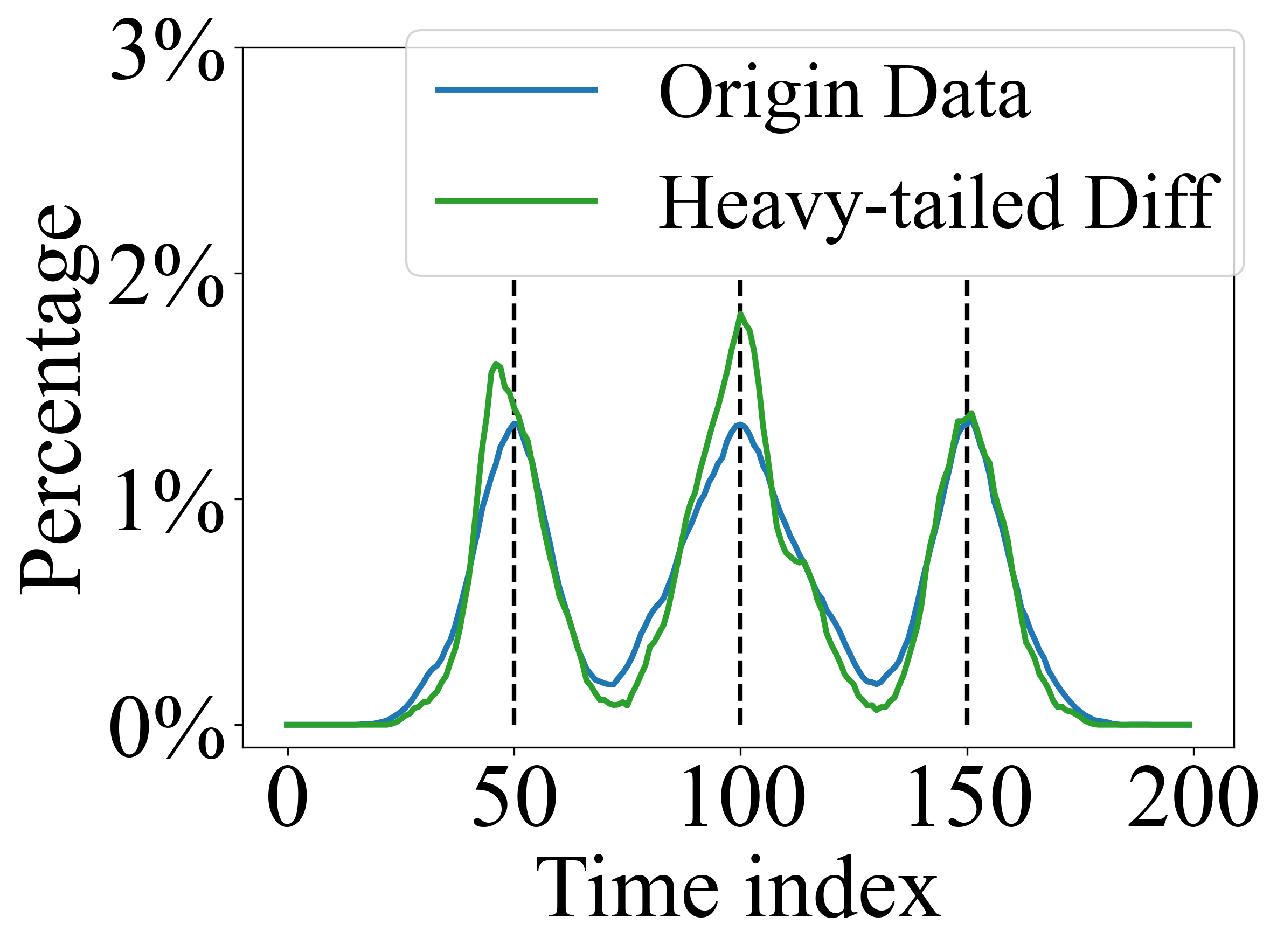}
        \caption{Location}
        \label{fig:extreme_location}
    \end{subfigure}

    \vspace{0.4em}

    \begin{subfigure}[t]{0.32\textwidth}
        \centering
        \includegraphics[width=\linewidth]{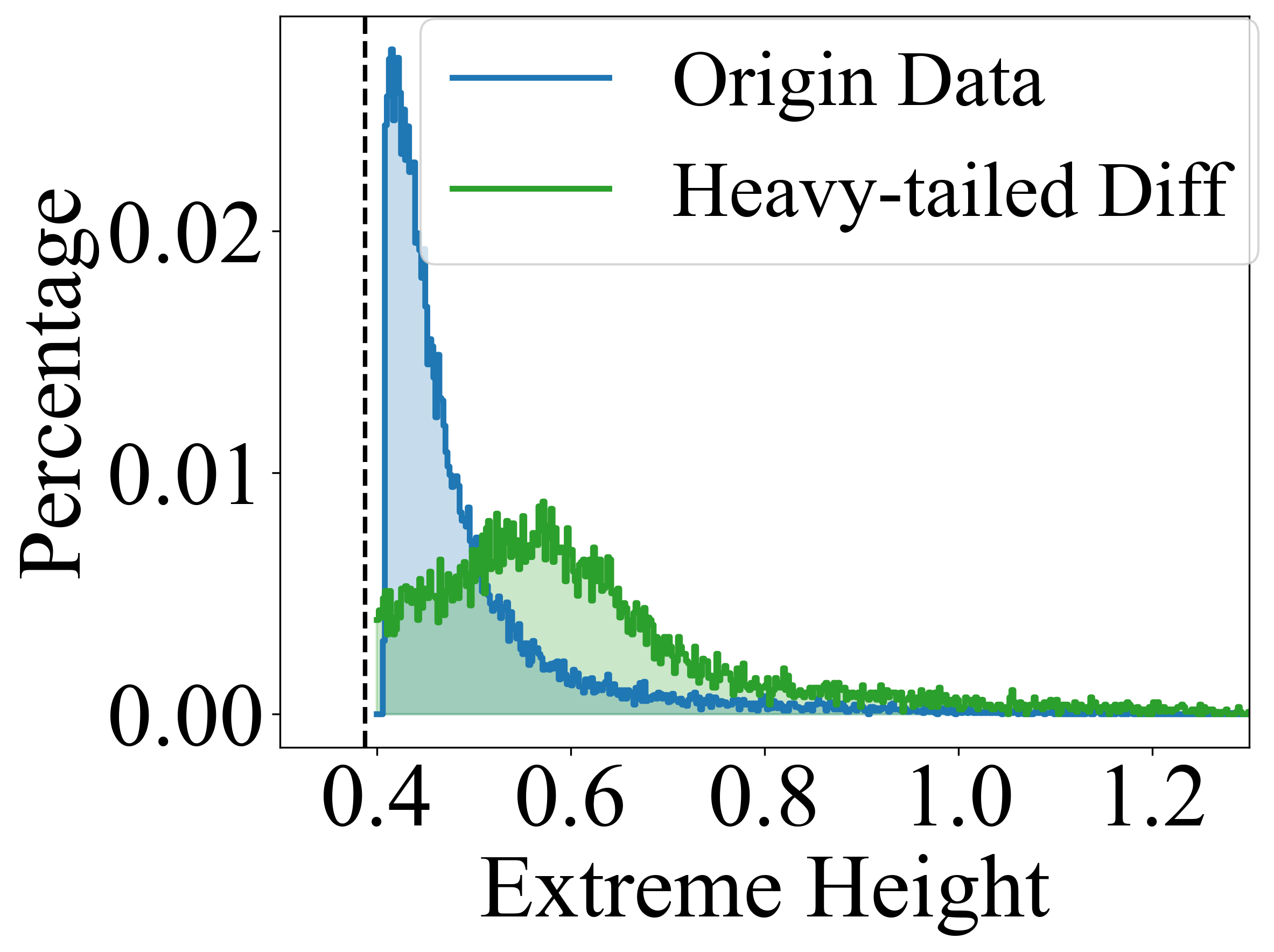}
        \caption{Height}
        \label{fig:extreme_height}
    \end{subfigure}
    \hfill
    \begin{subfigure}[t]{0.32\textwidth}
        \centering
        \includegraphics[width=\linewidth]{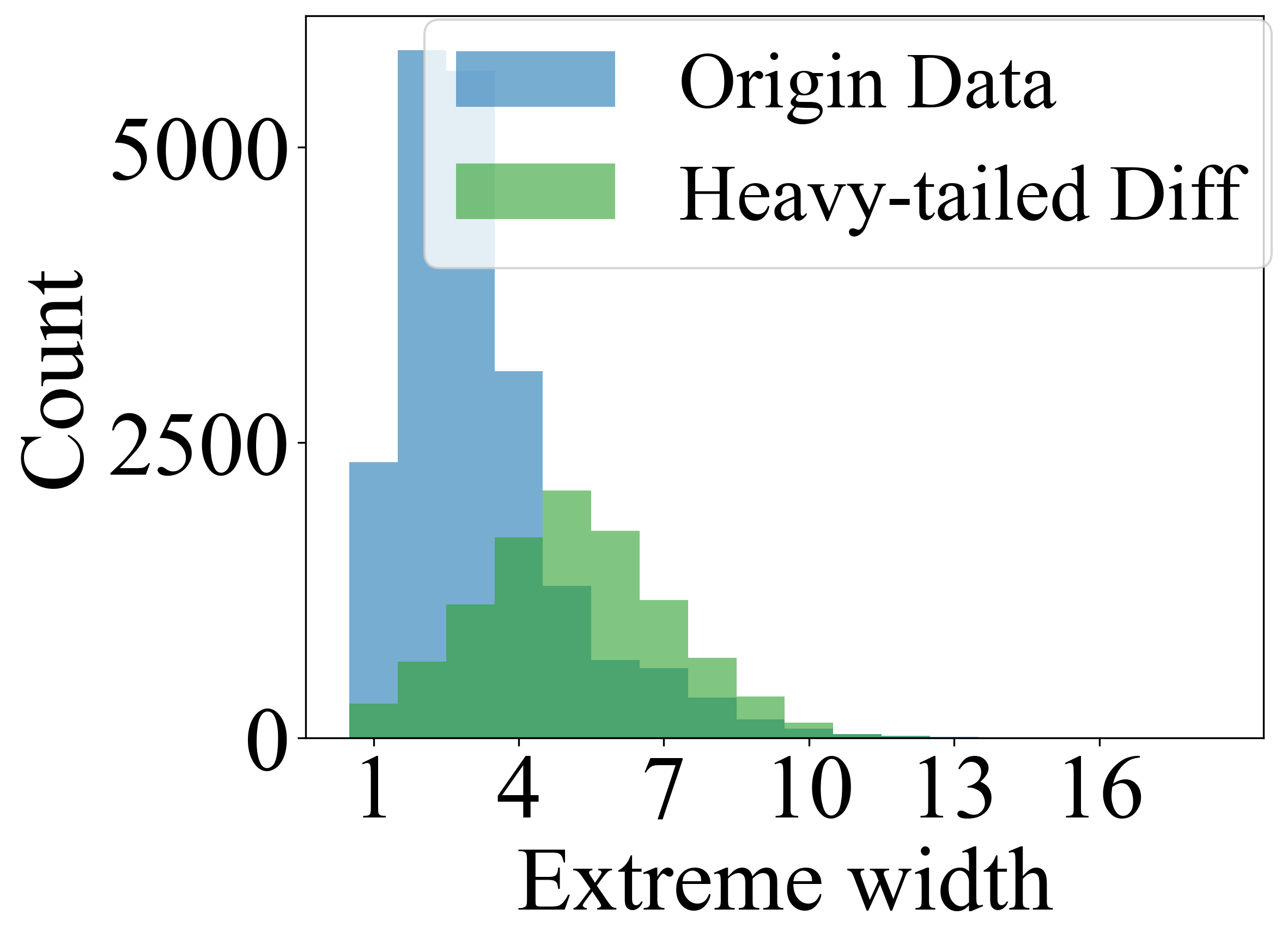}
        \caption{Width}
        \label{fig:extreme_width}
    \end{subfigure}
    \hfill
    \begin{subfigure}[t]{0.32\textwidth}
        \centering
        \includegraphics[width=\linewidth]{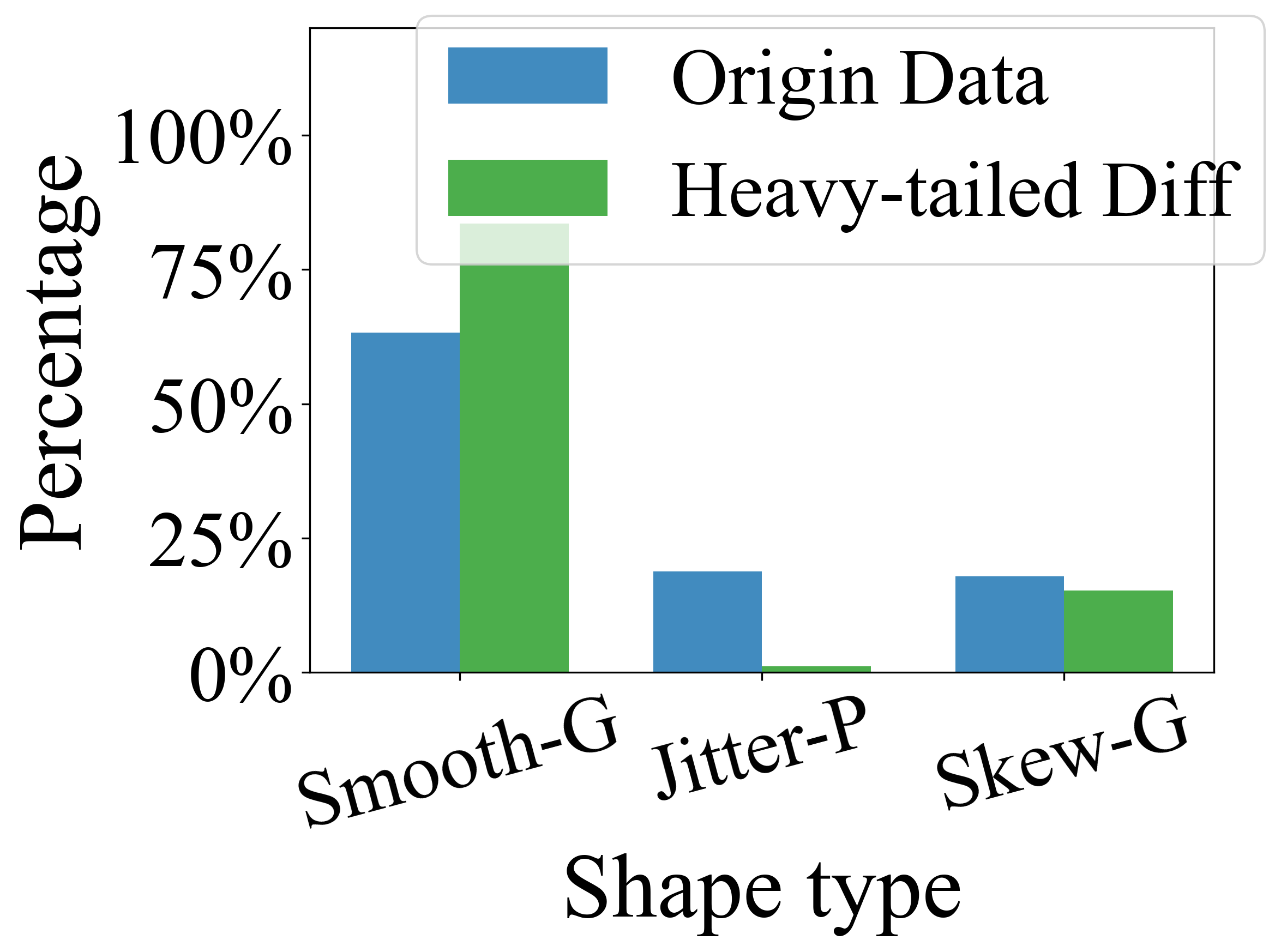}
        \caption{Shape}
        \label{fig:extreme_shape}
    \end{subfigure}
    \caption{Evaluation of extreme-event fidelity across five dimensions.}
    \label{fig:extreme_five_dimension}
    \vspace{-10pt}
\end{figure*}

\textbf{Value-level extreme-aware generation} is a common line of research in existing studies on extreme enhancement~\cite{galib2024fide,huster2021pareto,jaini2020tails,kim2024t3vae,pandey2025heavy}, and it is often framed as \textbf{heavy-tailed generation}. Its core idea is to directly strengthen tail behavior at the value-distribution level, e.g., by replacing light-tailed Gaussian noise with heavy-tailed alternatives such as Student-$t$ noise~\cite{pandey2025heavy}. 

We investigate the advantages and limitations of value-level extreme-aware generation from the perspective of extreme value theory~\cite{de2006extreme}. We use the classical block maxima framework as an example and analyze its corresponding generalized extreme value (GEV) distribution, formulated as:
\begin{equation}
G(z;\mu,\sigma,\xi)
=
\exp\left\{
-\left[1+\xi\left(\frac{z-\mu}{\sigma}\right)\right]^{-1/\xi}
\right\},
\end{equation}
where $\mu \in \mathbb{R}$ is the location parameter, $\sigma > 0$ is the scale parameter, and $\xi \in \mathbb{R}$ is the shape parameter. In general, a larger $\xi$ indicates a heavier tail. Similar to Observation I in Section~\ref{sec:analysis}, we visualize two representative methods, a standard light-tailed Diffusion model~\cite{} implemented with Diffusion-TS and a value-level extreme-enhanced Heavy-tailed Diffusion model implemented with HeavyDiff. Figure~\ref{fig:GEV} presents the overall distribution as the shaded area and the fitted GEV distribution as curves. The left panel compares the generated data of Heavy-tailed Diffusion with Light-tailed Diffusion, while the right panel compares the results of Heavy-tailed Diffusion with the original data. In terms of the shape parameter $\xi$, the generated data of Heavy-tailed Diffusion shows a heavier tail than the original data, whereas the generated data of Light-tailed Diffusion shows a lighter tail. This result indicates that value-level extreme-aware generation can strengthen tail heaviness at the statistical level. However, this raises a further question: does a heavier tail necessarily lead to higher extreme-event fidelity?

\begin{figure*}[htbp]
    \centering
    \includegraphics[width=\textwidth]{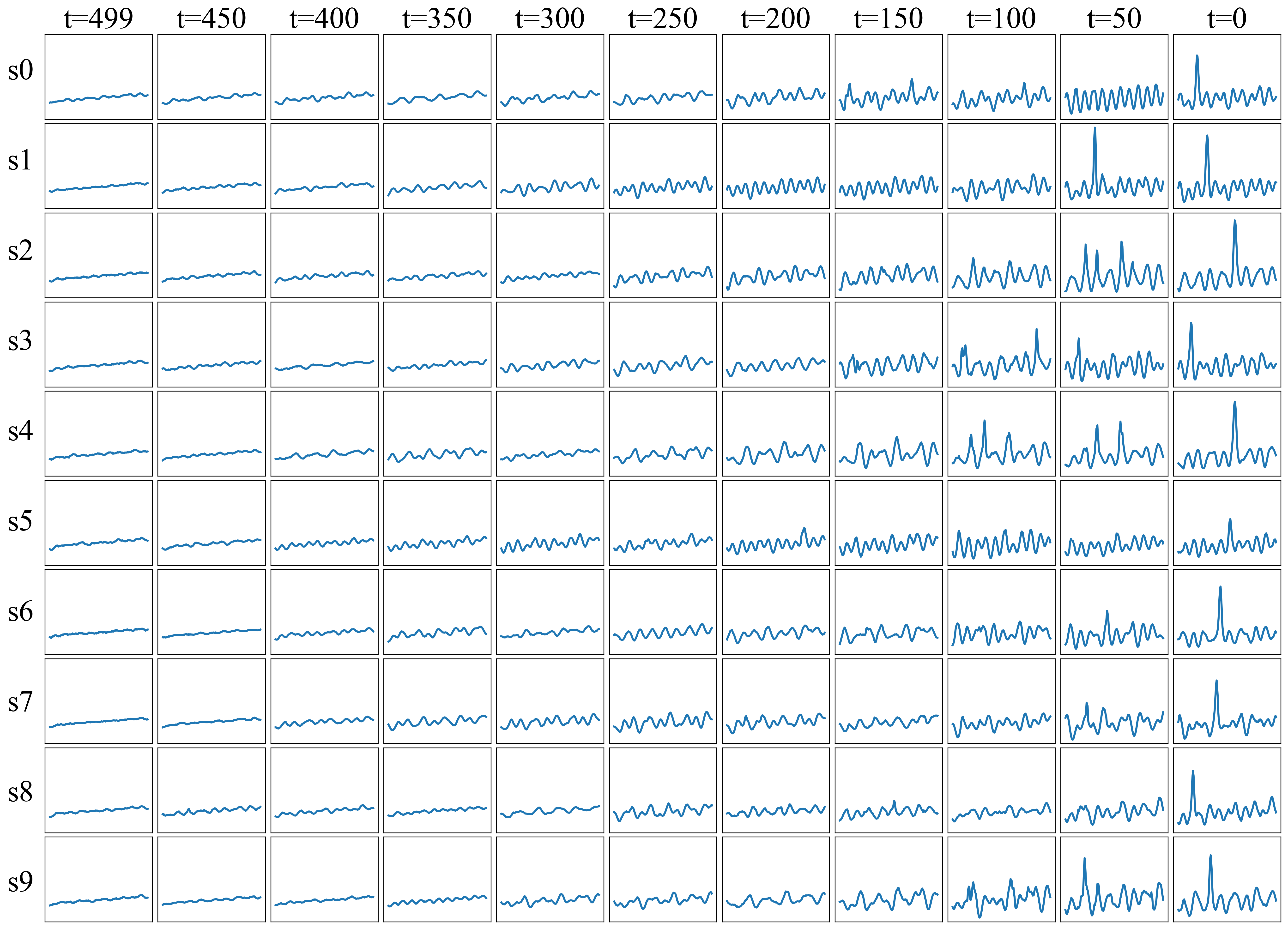}
    \caption{Coarse-to-fine generation trajectories of 10 randomly selected samples from a 500-step diffusion model.}
    \label{fig:full_trajectory_sample}
\end{figure*}

To answer this question, we design a toy example containing extreme events and evaluate it from five dimensions, including count, location, height, width, and shape. These results are shown in Figure~\ref{fig:extreme_five_dimension}. Heavy-tail methods exhibit aggregation effects across all five dimensions, namely a Matthew effect, which increases the proportion of dominant groups while weakening minor ones. This phenomenon is especially clear in count and shape. For example, in Fig.~\ref{fig:extreme_five_dimension}(e) for extreme event shape, the heavy-tailed method generates more Smooth-G events, increasing their proportion from the original 60\% to 80\%, but produces fewer events for the minor types, Jitter-P and Skew-G.

Therefore, the \textbf{advantage} of value-level extreme-aware generation is its ability to increase tail heaviness and produce more extreme values. However, a heavier tail does not necessarily imply higher extreme-event fidelity. Its key \textbf{limitation} is that it enhances generation mainly at the value-distribution level, without modeling the semantic structure of extreme events. As a result, it may amplify dominant patterns while weakening minor ones, thereby limiting its improvement in extreme-event generation fidelity.

\subsection{Additional Analysis of Coarse-to-Fine Generation Dynamics}
\label{Appendix:Additional Visualizations for Coarse-to-fine Generation Process}

In Observation III of Section~\ref{sec:analysis}, we discuss the coarse-to-fine generation process exhibited by diffusion models in time-series generation. The model first focuses on the trend and dominant seasonality during the early sampling stage, and then gradually reconstructs fine-grained details, including extreme events, in the later stage. While Fig.~\ref{fig:trajectory_sample} illustrates this process with one sample from a 500-step diffusion model, Fig.~\ref{fig:full_trajectory_sample} provides a broader visualization using 10 randomly selected samples across different sampling steps.

\subsection{Intrinsic Patterns in Extreme Events}
\label{Appendix:Supporting Analysis for Intrinsic Patterns in Extreme Events}

In Observation II of Section~\ref{sec:analysis}, we argue that extreme events are shaped by underlying physical mechanisms rather than random noise. Therefore, understanding their intrinsic patterns is essential. To support this argument, we perform empirical analyses on real-world datasets from three perspectives: predictability, context dependency, and cross-sample stability.

\begin{figure*}[htbp]
    \centering
    \begin{subfigure}[htbp]{0.45\textwidth}
        \centering
        \includegraphics[width=\linewidth]{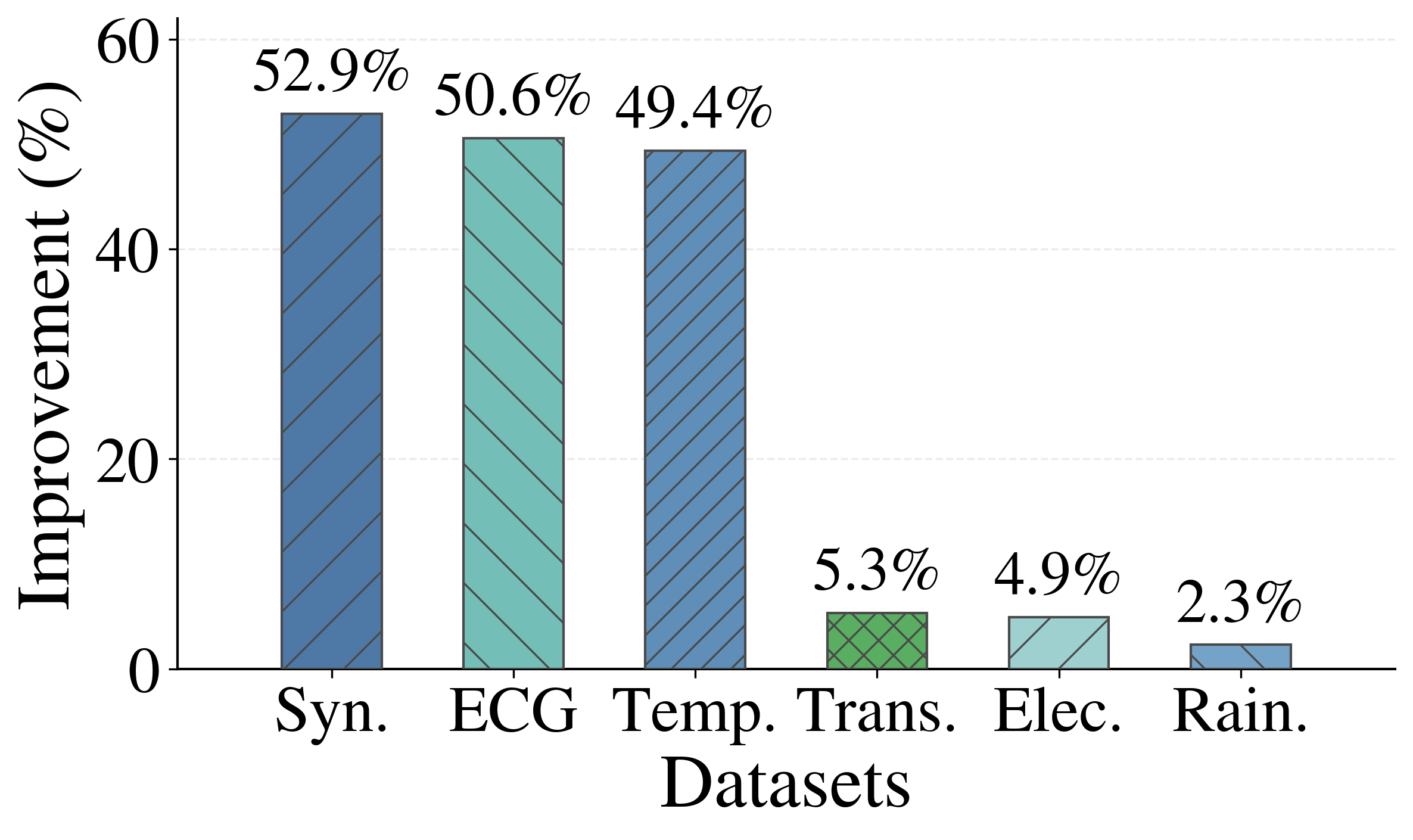}
        \caption{Predictability}
        \label{fig:predictability}
    \end{subfigure}
    \hfill
    \begin{subfigure}[htbp]{0.45\textwidth}
        \centering
        \includegraphics[width=\linewidth]{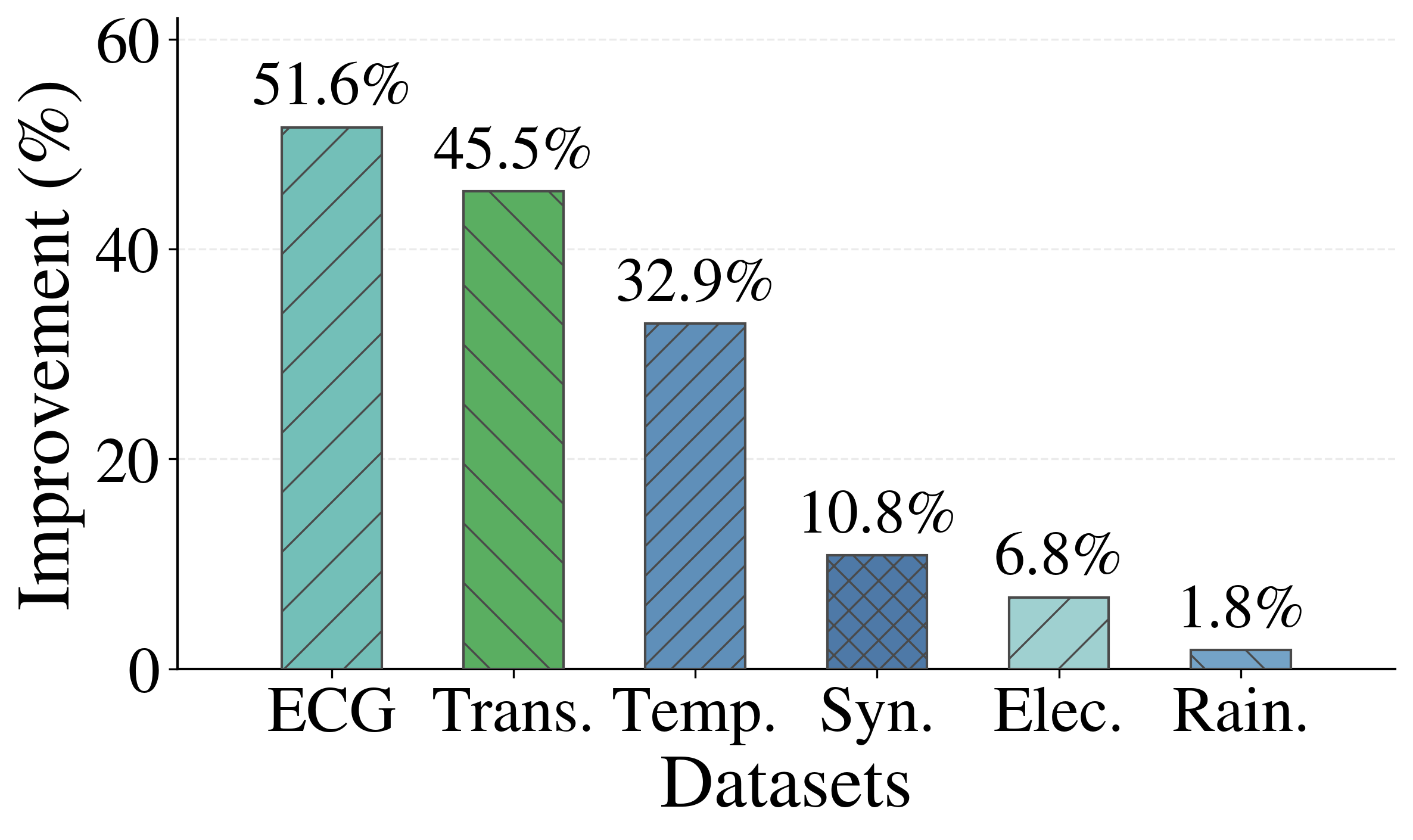}
        \caption{Context dependency}
        \label{fig:context_dependency}
    \end{subfigure}
    \caption{Intrinsic patterns in extreme events.}
    \label{fig:intrinsic_pattern}
\end{figure*}

Fig.~\ref{fig:intrinsic_pattern} visualizes the results for predictability and context dependency across six datasets. For predictability, we examine whether extreme events can be inferred from their preceding history by comparing forecasting performance using the original history with that using shuffled history. As shown in Fig.~\ref{fig:intrinsic_pattern}(a), all six datasets show positive improvements, indicating that the temporal patterns before extreme events provide useful information for predicting their future evolution. The gains are particularly clear on the Synthetic, ECG, and Temperature datasets, reaching 52.9\%, 50.6\%, and 49.4\%, respectively.

For context dependency, we test whether an extreme event is better aligned with its original surrounding context than with shuffled contexts from other samples. As shown in Fig.~\ref{fig:intrinsic_pattern}(b), all six datasets again show positive improvements, with especially strong gains on ECG, Transportation, and Temperature. This consistent trend indicates that extreme events are closely tied to their temporal context. Overall, these two analyses show that extreme events exhibit both temporal predictability and contextual compatibility, providing further evidence that they are structured and non-random.

For cross-sample stability, we refer readers to Fig.~\ref{fig:stability} and the detailed discussion in Section~\ref{sec:analysis}.

\section{Additional Details on Backbone Denoiser Design}
\label{Appendix:Backbone Denoiser}

\subsection{One-step Denoising Transition in the Backbone Denoiser}

Given an intermediate noisy state $\mathbf{x}(t)$, the Backbone Denoiser first predicts the corresponding clean-sample estimate $\hat{\mathbf{x}}_{0}(t)$. This estimate parameterizes the reverse transition from diffusion step $t$ to $t-1$, which constitutes the basic sampling operation that progressively transforms noisy states into clean time series. Under the $\mathbf{x}$-pred diffusion formulation, the one-step denoising tranition is defined as:
\begin{equation}\label{eq:x_update}
p_\theta(\mathbf{x(t-1)}\mid\mathbf{x(t)})
= \frac{1}{1-\bar{\alpha}_t}
\Big(
  \sqrt{\bar{\alpha}_{t-1}}\,\beta_t\,\hat{x}_{0}(t)
  + \sqrt{\alpha_t}\,(1-\bar{\alpha}_{t-1})\,\mathbf{x(t)}
\Big)
+ \tilde{\sigma}_t\,\mathbf{z}_t,
\end{equation}
where $\bar{\alpha}_t=\prod_{s=1}^{t}\alpha_s$, $\alpha_t=1-\beta_t$, and $\mathbf{z}_t\sim\mathcal{N}(\mathbf{0},\mathbf{I})$. Here, $\beta_t$ denotes the noise schedule, and $\tilde{\sigma}_t$ controls the stochasticity of the reverse update. In this way, the Backbone Denoiser provides the clean-sample estimate that guides each denoising step, while the stochastic term preserves the sampling variability of the diffusion process.

\subsection{Trend-Seasonality Design in the Backbone Denoiser}

This section provides additional details on the trend-seasonality design introduced in Section~\ref{sec:backbone denoiser}. This design parameterizes the $\mathbf{x}$-prediction network $\hat{\mathbf{x}}^\theta_{0}(\cdot)$ in the Backbone Denoiser, allowing it to model the global temporal backbone of the sequence through trend and seasonality components.

\textbf{Trend Function.}
The trend function is designed to capture slow, long-term variations through low-frequency information. We avoid directly using the Fourier transform for trend modeling, since long-range trends may be incompletely or inaccurately represented by periodic bases. Instead, inspired by~\cite{desai2021timevae,oreshkin2020nbeats}, we adopt a polynomial regressor and define the trend component as:
\begin{equation}
   \mathrm{trend}(\mathbf{x}(t),\theta_{tr})
= \phi(\tau)^\top \big( W_{tr}\, h(\mathbf{x}(t);a_{tr}) + b_{tr} \big), 
\end{equation}
where $\mathbf{x}(t)$ denotes the noisy time-series state at diffusion step $t$, $\tau$ is the time index, and $\phi(\tau) = [1,\, \tau,\, \tau^2, \ldots, \tau^p]^\top$ is a polynomial basis of degree $p$. The function $h(\mathbf{x}(t);a_{tr})$ is a neural feature extractor parameterized by $a_{tr}$, $W_{tr}$ is a learnable projection matrix that maps the extracted features to polynomial coefficients, and $b_{tr}$ is a bias vector. Together, $W_{tr}$, $a_{tr}$, and $b_{tr}$ form the trend parameters $\theta_{tr}$. In this formulation, the network term $W_{tr}\,h(\mathbf{x}(t);a_{tr}) + b_{tr}$ produces the polynomial coefficients, while the basis vector $\phi(\tau)$ encodes the temporal structure. Their inner product gives the trend value at time index $\tau$. Since the trend mainly captures low-frequency information, $p$ is typically set to a small value, such as $3$.

\textbf{Seasonality Function.}
The seasonality function captures dominant periodic patterns in the frequency domain. Specifically, we apply the discrete Fourier transform (DFT) to $\mathbf{x}(t)$, select the Top-$K$ frequency components with the largest magnitudes, and reconstruct the seasonal component through the inverse discrete Fourier transform (IDFT). The formulation is given by:
\begin{equation}
\mathrm{seasonality}\left(\mathbf{x}(t),\theta_{se}\right)
= \mathcal{IDFT}\!\Big( \mathcal{M}_K\!\big(\mathcal{DFT}(\mathbf{x}(t)),\theta_{se}\big) \Big),
\end{equation}
where $\mathcal{DFT}$ and $\mathcal{IDFT}$ denote the discrete Fourier transform and its inverse, respectively. $\mathcal{M}_K(\cdot,\theta_{se})$ is a masking and reweighting operator that preserves the Top-$K$ frequency components with the largest magnitudes, together with their conjugate-symmetric counterparts. Here, $\theta_{se}$ denotes the learnable parameters of a Transformer network applied to the selected frequency components, which models their interactions and refines their contributions to better reconstruct the seasonal component.

\section{Additional Details for Extreme-Event Semantic Representation}
\label{Appendix:Additional E-Semantic}

\subsection{Handling Threshold Fluctuations in Extreme-Event Definition}\label{Appendix:fluctuate_extreme_event_definition}
We define an extreme event as a localized temporal segment in which consecutive observations exceed a predefined upper threshold (or fall below a predefined lower threshold). We emphasize that an extreme event corresponds to a continuous time-series interval rather than an isolated extreme value. Here, we further discuss a common case: \textit{when extreme values fluctuate over a period of time and briefly fall below the threshold, how should our definition handle such cases?}

\begin{figure*}[htbp]
\vspace{-10pt}
    \centering
    \begin{subfigure}[t]{0.32\textwidth}
        \centering
        \includegraphics[width=\linewidth]{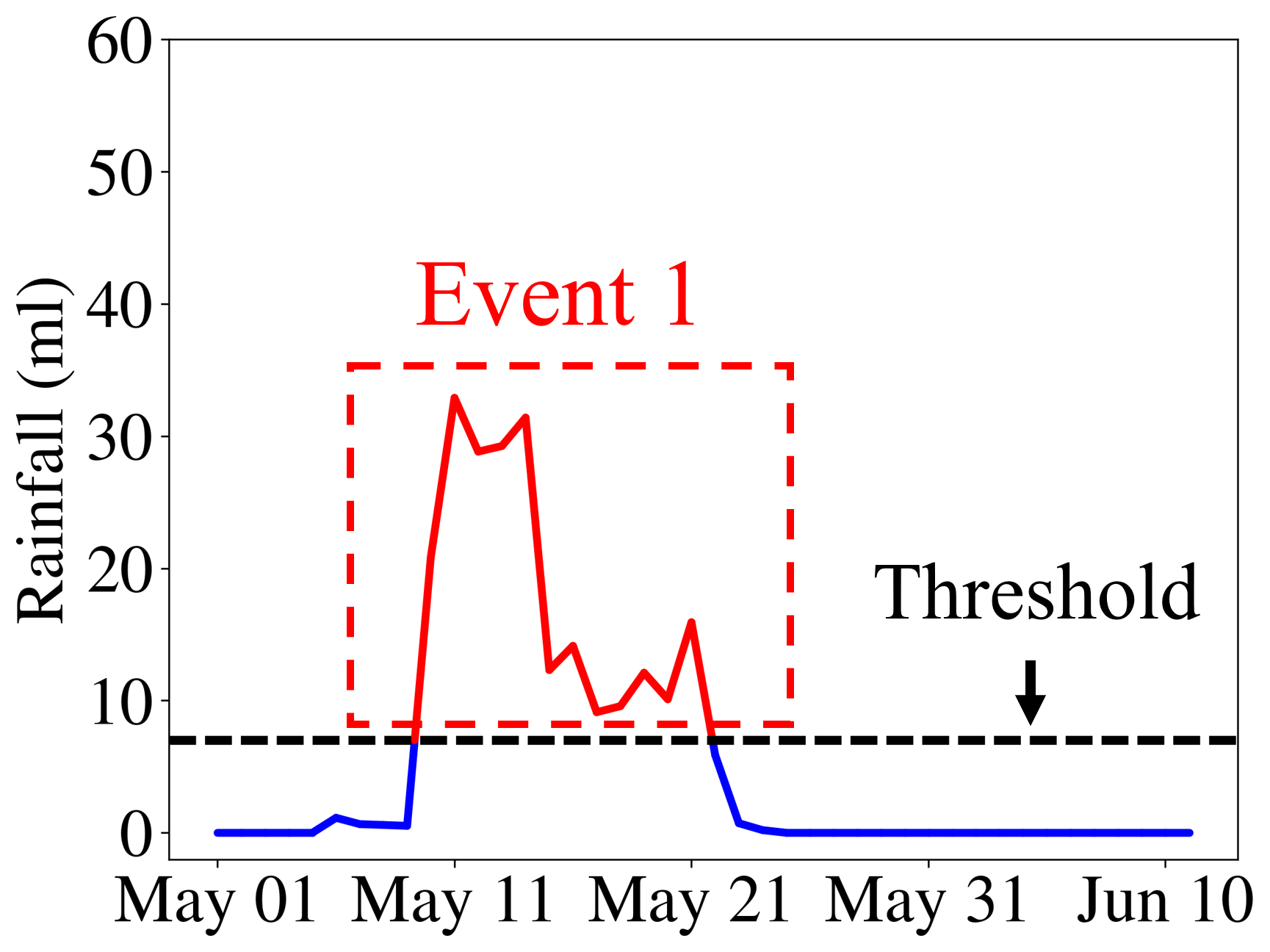}
        \caption{Fully Above-threshold Event}
        \label{fig:ecg_interval}
    \end{subfigure}
    \hfill
    \begin{subfigure}[t]{0.32\textwidth}
        \centering
        \includegraphics[width=\linewidth]{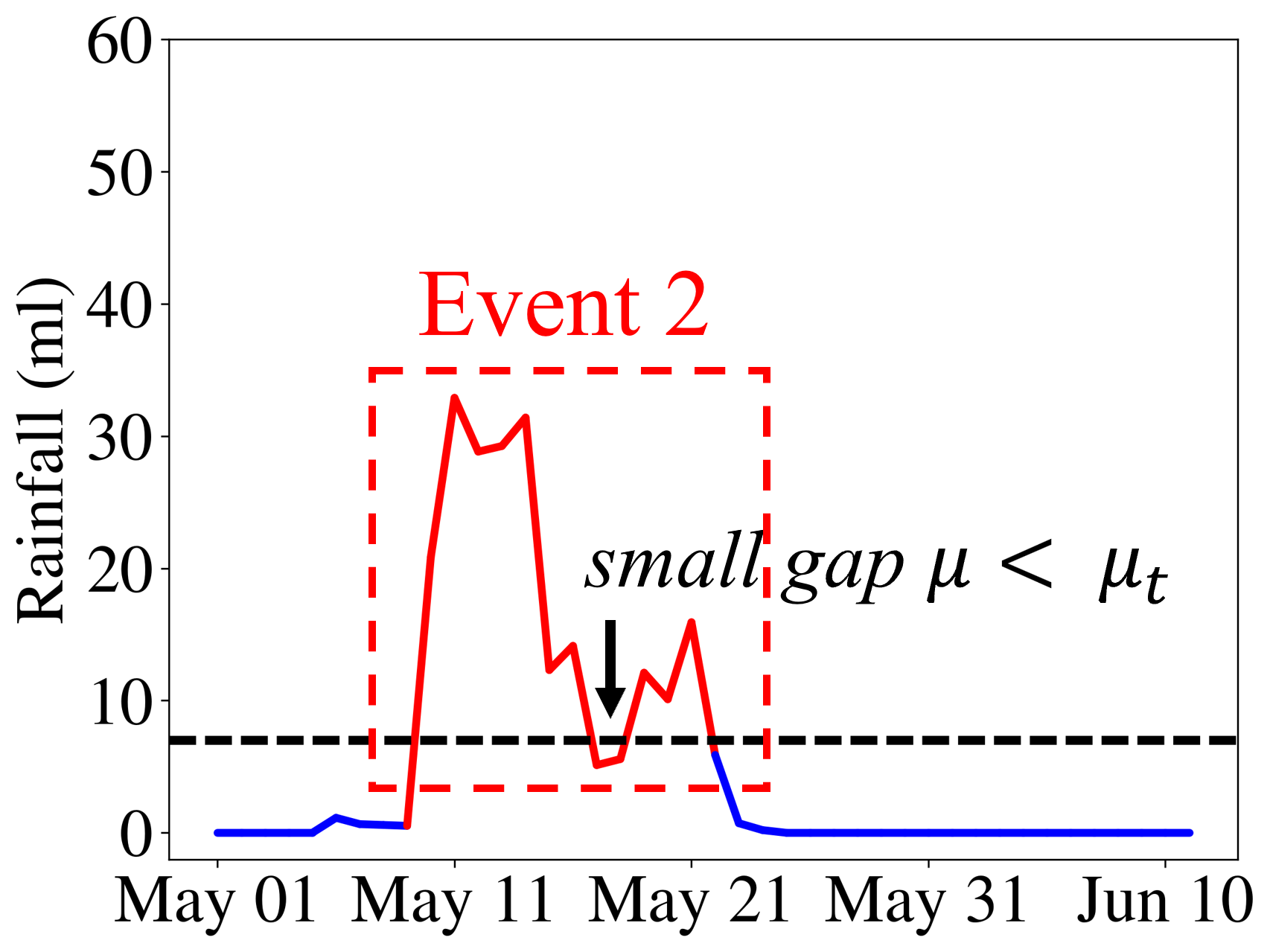}
        \caption{short-gap Interrupted Event}
        \label{fig:electricity_interval}
    \end{subfigure}
    \hfill
    \begin{subfigure}[t]{0.32\textwidth}
        \centering
        \includegraphics[width=\linewidth]{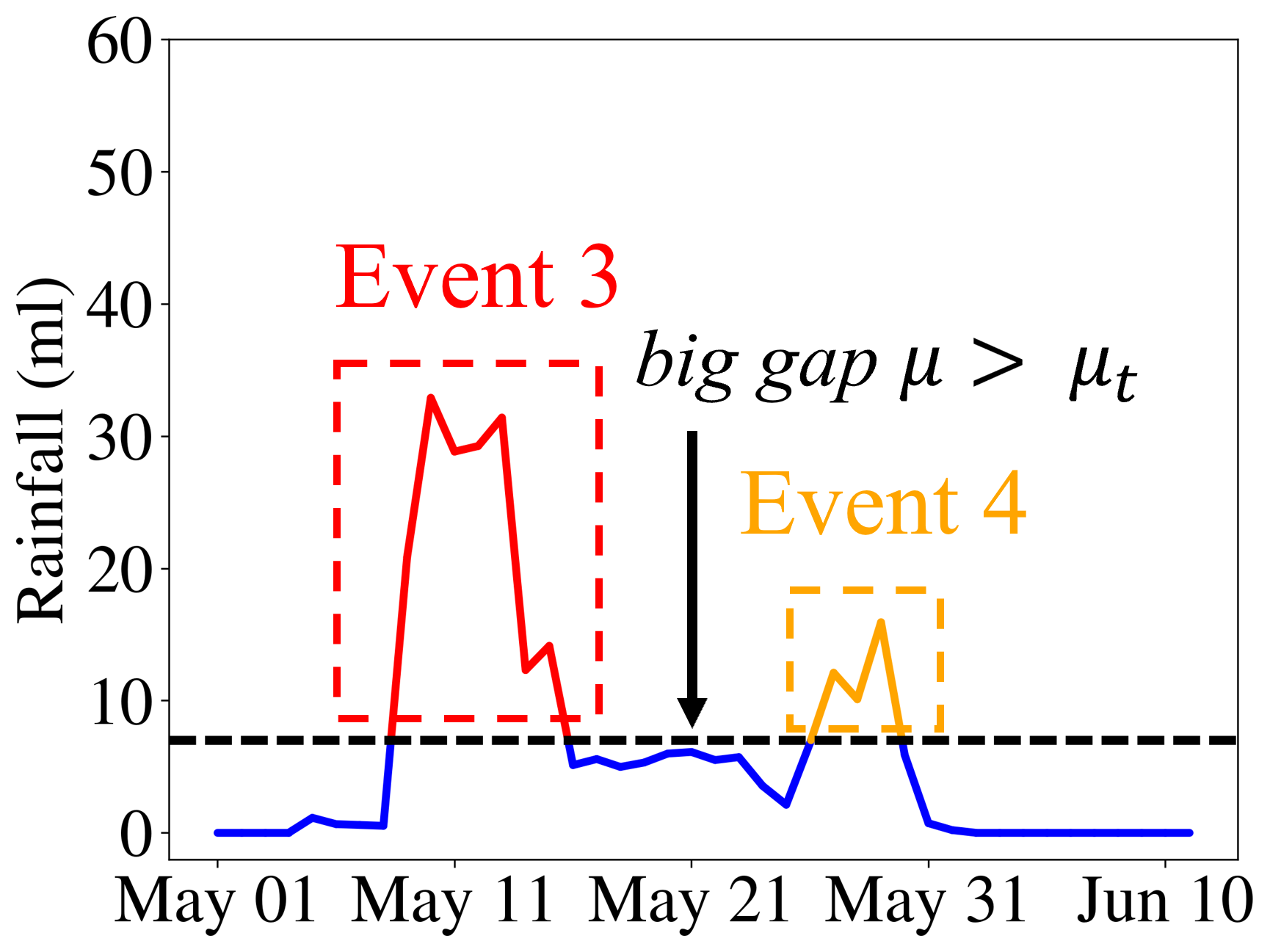}
        \caption{Long-gap Separated Event}
        \label{fig:transportation_interval}
    \end{subfigure}
    \caption{Illustration of Threshold Fluctuations in Extreme-event Definition.}
    \label{fig:threshold flucations}
\end{figure*}

As shown in Figure~\ref{fig:threshold flucations}, we introduce three cases. For a fully above-threshold segment, the entire interval is treated as one extreme event. When a short below-threshold gap appears within an extreme period, we merge the adjacent above-threshold intervals if the gap length is smaller than a tolerance $\mu_t$, thereby preserving the continuity of the same event. In contrast, when the below-threshold gap is larger than $\mu_t$, the intervals are treated as separate events. This rule prevents over-fragmentation caused by short-term fluctuations while separating truly separate extreme events.

\section{Additional Details for E-Activator} \label{Appendix:E-Activation}
In Section~\ref{sec:E-Activate}, we introduce two metrics for E-Activator: Backbone Drift (BD) and Outlier Emergence Score (OES), defined as follows. Both are evaluated on $\hat{\mathbf{x}}_{0}(t)$, the predicted final sample at step $t$.
\begin{equation}
\mathrm{BD}(t)=\mathbb{E}\left[\frac{1}{k}\sum_{j=0}^{k-1}
\frac{\sqrt{\frac{1}{LC}\sum_{\tau=1}^{L}\sum_{\nu=1}^{C}\left(bb_{t-j,\tau,\nu}-bb_{t-j-1,\tau,\nu}\right)^2}}
{\sqrt{\frac{1}{LC}\sum_{\tau=1}^{L}\sum_{\nu=1}^{C}bb_{t-j-1,\tau,\nu}^2}}\right]
\end{equation}
\begin{equation}
\mathrm{OES}(t)=\mathbb{E}\left[\max\left(\left|\frac{\hat{x}_{0,\tau,\nu}(t)-\mathrm{med}_{\tau}(\hat{x}_{0,\tau,\nu}(t))}{1.4826\,\mathrm{mad}_{\tau}(\hat{x}_{0,\tau,\nu}(t))+\varepsilon}\right|-\kappa,\,0\right)\right]
\end{equation}
\noindent where $\mathbb{E}[\cdot]$ denotes empirical averaging. For BD, the expectation is taken over samples in a batch. For OES, the expectation is taken over batch samples, time indices, and feature dimensions. $L$ and $C$ denote the sequence length and the number of feature dimensions, respectively. The parameter $k$ is the temporal smoothing window used to average backbone changes over the previous $k$ sampling steps. The denominator in BD normalizes the backbone change by the energy of the previous low-frequency backbone, making the metric comparable across samples and diffusion steps.

Here, $bb_{t,\tau,\nu}$ denotes the low-frequency backbone of $\hat{\mathbf{x}}_{0}(t)$ at time index $\tau$ and feature dimension $\nu$. It is obtained by first removing the per-series mean from $\hat{\mathbf{x}}_{0}(t)$, then applying rFFT along the temporal axis, retaining the first $M$ non-DC frequency bins, and finally applying inverse rFFT to reconstruct the low-frequency component~\cite{cooley1965algorithm}. This low-pass component approximates the global trend and seasonality structure of the predicted sample.

The term $\hat{x}_{0,\tau,\nu}(t)$ denotes the $(\tau,\nu)$-th entry of $\hat{\mathbf{x}}_{0}(t)$, where $\tau\in[L]$ is the time index and $\nu\in[C]$ is the feature index. For OES, $\mathrm{med}_{\tau}(\cdot)$ and $\mathrm{mad}_{\tau}(\cdot)$ denote the median and median absolute deviation computed along the temporal axis for each sample and feature dimension. Following the standard robust-statistics convention, the constant $1.4826$ rescales $mad_{\tau}$ to be comparable to the standard deviation under a Gaussian distribution. $\kappa$ is the robust outlier threshold, which controls when a normalized deviation is counted as an emerging extreme structure, and $\varepsilon$ is a small constant used for numerical stability.

We further visualize the evolution of BD and OES on several additional datasets in Fig.~\ref{fig:control_activation_window_more_dataset}. In most cases, BD exhibits stable dynamics and gradually decreases in the later sampling stages. By contrast, OES may fluctuate on highly variable datasets, such as rainfall and temperature. For such datasets, a wider control activation window $\mathcal{I}_{\mathrm{AC}}$ may be needed, followed by more fine-grained tuning within the selected range.

\begin{figure*}[htbp]
    \centering
    \begin{subfigure}[t]{0.32\textwidth}
        \centering
        \includegraphics[width=\linewidth]{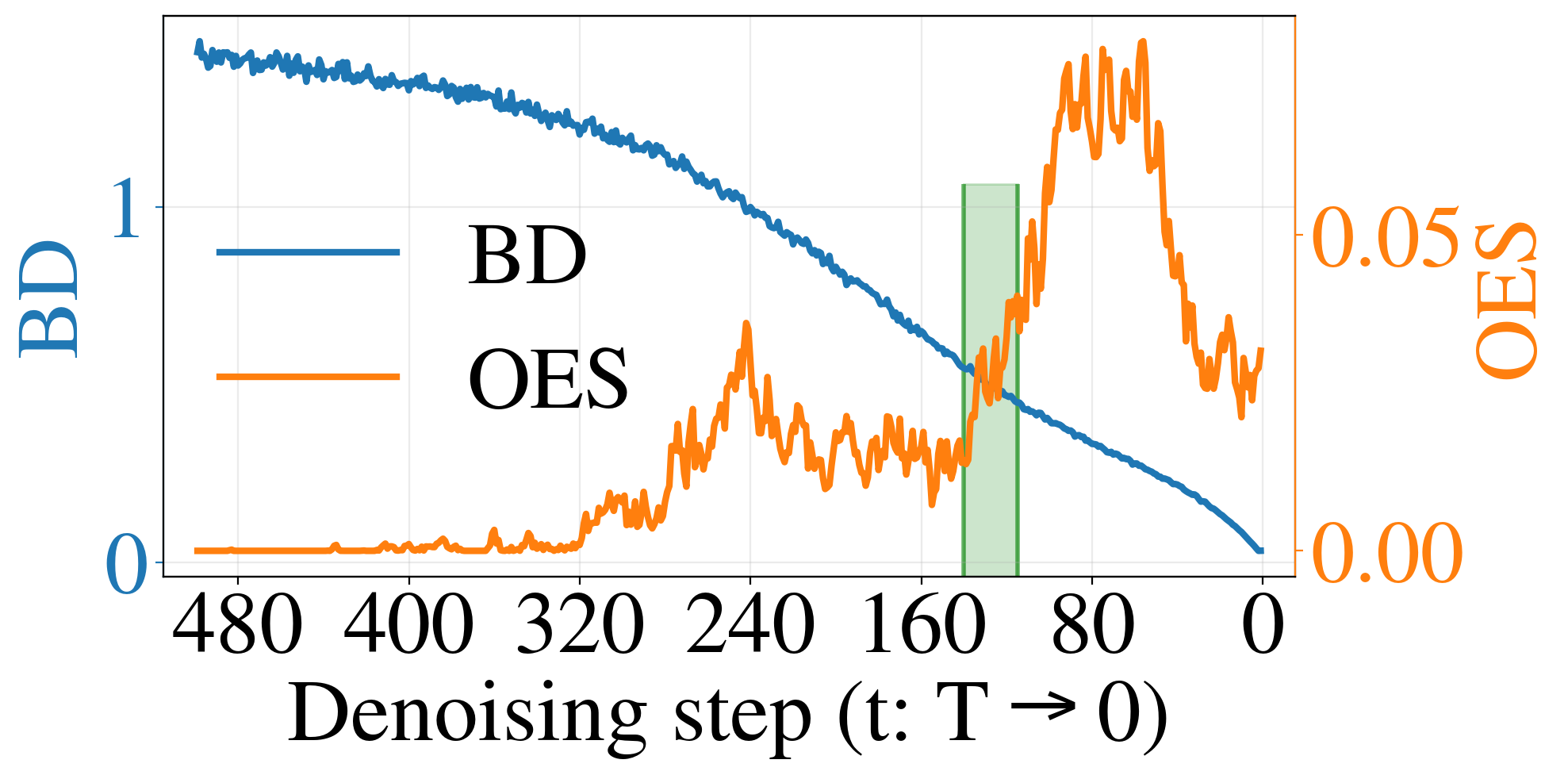}
        \caption{ECG Dataset}
        \label{fig:ecg_interval}
    \end{subfigure}
    \hfill
    \begin{subfigure}[t]{0.32\textwidth}
        \centering
        \includegraphics[width=\linewidth]{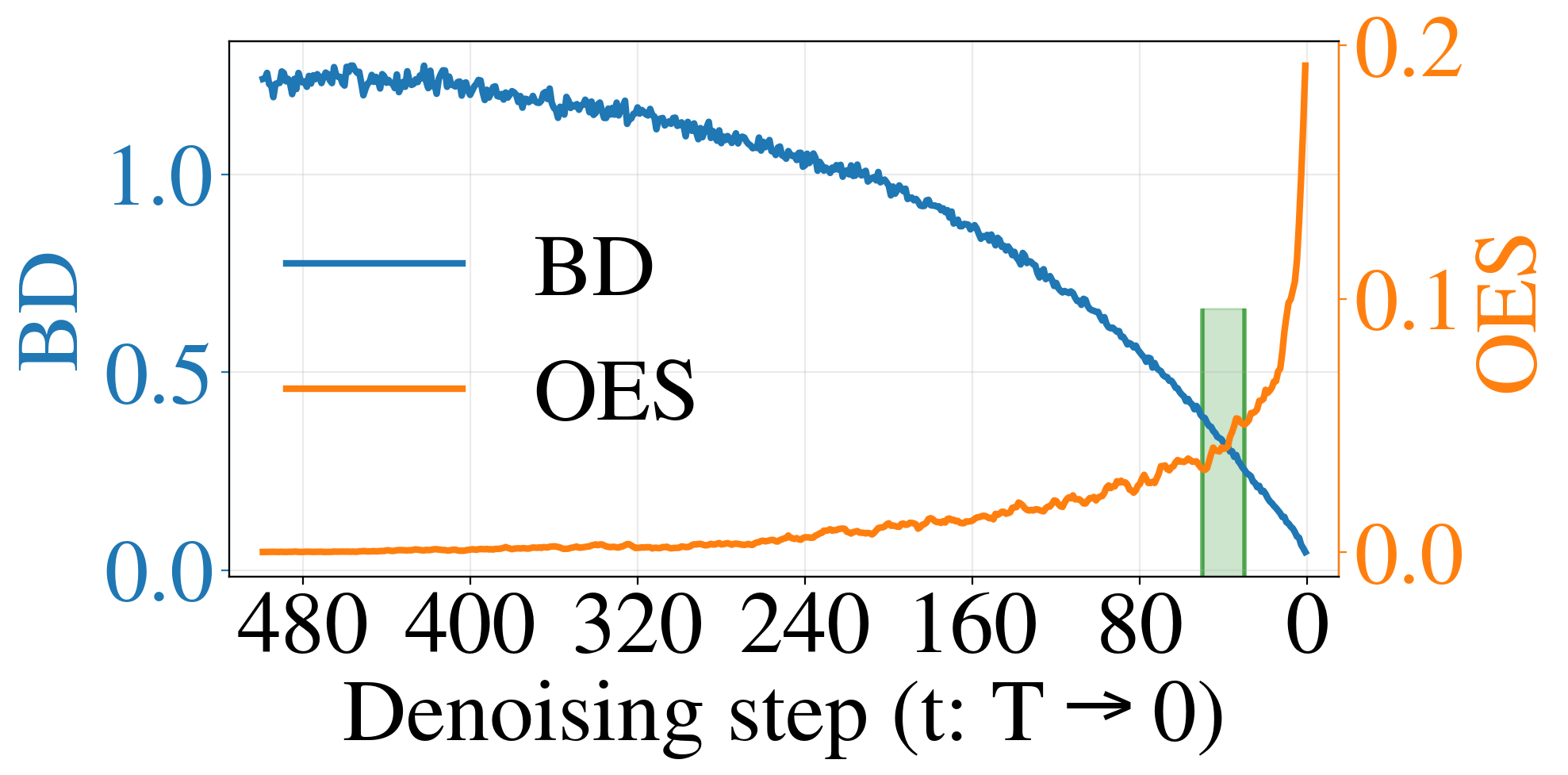}
        \caption{Electricity Dataset}
        \label{fig:electricity_interval}
    \end{subfigure}
    \hfill
    \begin{subfigure}[t]{0.32\textwidth}
        \centering
        \includegraphics[width=\linewidth]{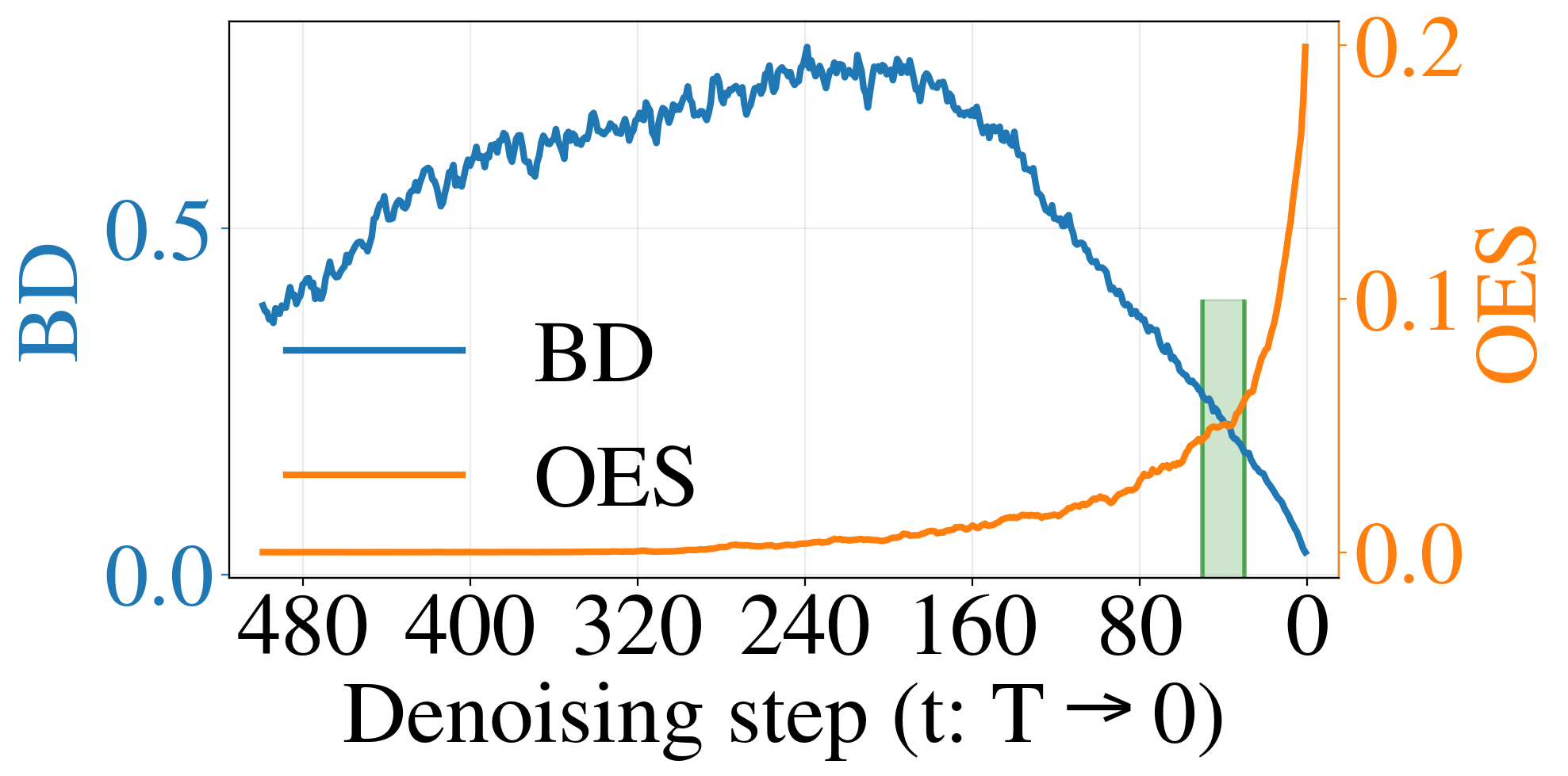}
        \caption{Transportation Dataset}
        \label{fig:transportation_interval}
    \end{subfigure}
    \caption{Control Activation Window via BD and OES in More Datasets}
    \label{fig:control_activation_window_more_dataset}
\end{figure*}

\section{Additional Details for Semantic Predictor $f_\phi$}
\label{Appendix:E-Predictor design}
In this section, we provide the detailed design of the Semantic Predictor $f_\phi$ in E-Predictor introduced in Section~\ref{sec:E-Predictor}.
Given the intermediate denoising state $\mathbf{x}_n(t_{\mathrm{CA}})$, 
$f_\phi$ predicts the extreme-event semantics  
$\smash{\hat{E}_{\mathrm{CA}}}$ as the semantic-based control signal. It consists of three modules: \textit{Transformer-based token encoding}, 
\textit{multi-head semantic prediction}, and \textit{event-level semantic decoding}. 
 
\textbf{Transformer-based Token Encoding.}
The first module encodes the intermediate denoising state 
$\mathbf{x}_n(t_{\mathrm{CA}})$ into contextualized token representations. 
Specifically, before token encoding, we feed $\mathbf{x}_n(t_{\mathrm{CA}})$ into the 
$x$-prediction function in Eq.~\ref{eq:x0-prediction} to predict the final clean sample 
$\hat{\mathbf{x}}_{0}(t_{\mathrm{CA}})
=
\hat{\mathbf{x}}^\theta_{0}\big(\mathbf{x}_n(t_{\mathrm{CA}})\big)$, 
which is then used as the input to the Semantic Predictor.Each time step in $\hat{\mathbf{x}}_{0}(t_{\mathrm{CA}})$ is treated as a temporal token 
and projected into a 128-dimensional hidden embedding. A learnable positional embedding 
is added to each token to preserve temporal order. The token sequence is then fed into 
a Transformer encoder composed of four encoder layers. Each layer contains a 
multi-head self-attention module with four attention heads, followed by a position-wise 
feed-forward network whose hidden width is four times the token dimension. GELU 
activation and dropout with rate 0.1 are used in the feed-forward blocks. Through 
self-attention, the encoder allows each time step to aggregate information from the 
entire sequence, enabling the predictor to capture both local event-related variations 
and long-range temporal dependencies. The resulting contextualized token 
representations serve as the shared feature basis for subsequent semantic prediction.

\textbf{Multi-head Semantic Prediction.}
Based on the contextualized token representations, the second module predicts 
token-wise semantic cues through three lightweight heads. Specifically, given the encoded token sequence, two classification heads estimate an event-region confidence map $\hat{\mathbf{c}}^{\mathrm{ev}}\in[0,1]^L$ and a keypoint confidence map $\hat{\mathbf{c}}^{\mathrm{kp}}\in[0,1]^L$, respectively, while one regression head predicts an auxiliary value sequence $\hat{\mathbf{v}}\in\mathbb{R}^{L}$. The event-region confidence map assigns each time step a probability of belonging to an extreme-event interval, which provides the basis for decoding the event count and event spans in the location semantic $\mathcal{L}$. The keypoint confidence map scores structurally important positions within potential event regions, such as endpoints, extrema, and turning points, which supports the construction of the temporal shape semantic $\mathcal{S}$. The auxiliary value sequence estimates the clean signal value at each time step, and is used to obtain the amplitudes of decoded keypoints as well as intensity-related quantities in $\mathcal{I}$. 
In this way, the multi-head design decomposes extreme-event semantic prediction into complementary token-wise cues, which can be reliably decoded into event-level semantics instead of directly regressing the complete event representation.

\textbf{Event-level Semantic Decoding.}
The third module converts the token-wise predictions into event-level semantics 
$\smash{\hat{E}_{\mathrm{CA}}}=\{\mathcal{L},\mathcal{I},\mathcal{S}\}$. 
We decode these semantics in three steps. First, we compute the location semantic $\mathcal{L}$, including the event count and event spans, from the event-region confidence map $\hat{\mathbf{c}}^{\mathrm{ev}}$. The confidence threshold is selected on the validation set and then fixed during decoding. Time steps whose confidence scores exceed this threshold are regarded as extreme-event positions. Consecutive selected positions are grouped into event intervals, while nearby intervals and very short segments are post-processed to reduce noisy detections.
Second, we compute the intensity semantic $\mathcal{I}$, including peak-related 
quantities such as event peak, prominence, and energy, from the auxiliary value 
sequence $\hat{\mathbf{v}}$. For each decoded event interval, we read the values of $\hat{\mathbf{v}}$ within this interval. Since $\hat{\mathbf{v}}$ estimates the signal value at each time step, the values inside the decoded interval indicate the event strength and are used to compute the intensity attributes.
Third, we compute the temporal shape semantic $\mathcal{S}$, represented by a set of event-level keypoints, using the keypoint confidence map $\hat{\mathbf{c}}^{\mathrm{kp}}$ and the auxiliary value sequence $\hat{\mathbf{v}}$. Within each decoded event interval, we select representative keypoints, including the event endpoints, extrema, and high-confidence structural points. The selected time indices and their corresponding 
values from $\hat{\mathbf{v}}$ form the keypoint-based shape representation.
This validation-calibrated decoding process converts the token-wise outputs into the event-level semantics $\smash{\hat{E}_{\mathrm{CA}}}$.

\section{Theoretical Proof for DCT-NIS}\label{Appendix:Proof}

\setcounter{lemma}{0}

\begin{lemma}[Semantic Consistency]\label{lem:semantic_consistency}
Let $\mathbf{x}_d(t_0)\sim p_{\mathrm{data}}$ and let 
$\mathbf{x}_d(t_{\mathrm{AS}})\sim q(\mathbf{x}_d(t_{\mathrm{AS}})\mid \mathbf{x}_d(t_0))$, 
where $q$ denotes the forward diffusion process. 
Suppose that the denoising trajectory satisfies bounded reconstruction drift and that the event-level semantics is locally stable. 
If the reverse span from the alignment start step $t_{\mathrm{AS}}$ to the control activation step $t_{\mathrm{CA}}$ is sufficiently short, 
then the anchored semantics $EES(\mathbf{x}_d(t_0))$ remains a valid supervision signal for the predictor at 
$\mathbf{x}_d(t_{\mathrm{CA}})$.
\end{lemma}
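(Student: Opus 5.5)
To make the informal statement of Lemma~\ref{lem:semantic_consistency} precise, I would first formalize its two hypotheses and then chain them with a triangle inequality. \emph{Bounded reconstruction drift} will mean that for every $t\in[t_{\mathrm{CA}},t_{\mathrm{AS}}]$ along the data-conditioned reverse chain of Eq.~\ref{eq:x_update},
\[
\mathbb{E}\big\|\hat{\mathbf{x}}^\theta_0(\mathbf{x}_d(t))-\mathbf{x}_d(t_0)\big\|\le \delta(t),
\]
where $\delta$ is nondecreasing in $t$ and $\delta(t)\to 0$ as $t\to t_0$. It will also mean that one reverse step changes the prediction by at most $\eta_t$, that is, $\mathbb{E}\|\hat{\mathbf{x}}_0(t-1)-\hat{\mathbf{x}}_0(t)\|\le \eta_t$. \emph{Local semantic stability} will mean that $EES(\cdot)$ is locally Lipschitz (constant $L_E$) in a suitable semantic metric $d_E$ on $\{\mathcal{L},\mathcal{I},\mathcal{S}\}$, within a radius $r$ of the data manifold. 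Here $r$ is essentially the margin by which event values clear the threshold $\theta$, adjusted by the gap tolerance $\mu_t$, so that event counts and spans in $\mathcal{L}$ do not flip. \emph{Valid supervision signal} will mean that the predictor input $\hat{\mathbf{x}}^\theta_0(\mathbf{x}_d(t_{\mathrm{CA}}))$ and the label $EES(\mathbf{x}_d(t_0))$ satisfy $\mathbb{E}\, d_E\big(EES(\hat{\mathbf{x}}^\theta_0(\mathbf{x}_d(t_{\mathrm{CA}}))),EES(\mathbf{x}_d(t_0))\big)\le\epsilon$. Equivalently, the label is, up to $\epsilon$, a deterministic function of the input, so the regression target of $f_\phi$ is well posed.

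The argument proceeds in three steps. \textbf{Step 1.} Start the chain at $t_{\mathrm{AS}}$ with $\mathbf{x}_d(t_{\mathrm{AS}})=\sqrt{\bar\alpha_{t_{\mathrm{AS}}}}\mathbf{x}_d(t_0)+\sqrt{1-\bar\alpha_{t_{\mathrm{AS}}}}\boldsymbol\epsilon$. Assuming the denoiser is near-optimal at the anchor, the $\mathbf{x}$-prediction satisfies $\mathbb{E}\|\hat{\mathbf{x}}_0(t_{\mathrm{AS}})-\mathbf{x}_d(t_0)\|\le\delta(t_{\mathrm{AS}})$. \textbf{Step 2.} Telescope over the $t_{\mathrm{AS}}-t_{\mathrm{CA}}$ reverse steps:
\[
\mathbb{E}\|\hat{\mathbf{x}}_0(t_{\mathrm{CA}})-\mathbf{x}_d(t_0)\|\le \delta(t_{\mathrm{AS}})+\sum_{t=t_{\mathrm{CA}}+1}^{t_{\mathrm{AS}}}\eta_t .
\]
To control each $\eta_t$, I would use the explicit affine form of Eq.~\ref{eq:x_update}. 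There the mean is a convex-type combination of $\hat{\mathbf{x}}_0(t)$ and $\mathbf{x}(t)$, and the injected noise has variance $\tilde\sigma_t^2=O(\beta_t)$. Together with a Lipschitz constant $K$ for $\hat{\mathbf{x}}^\theta_0$ in its input, this gives $\eta_t=O\big(K\sqrt{\beta_t}\big)$. \textbf{Step 3.} Apply Markov's inequality to show that, with probability at least $1-\big(\delta(t_{\mathrm{AS}})+\sum\eta_t\big)/r$, the prediction $\hat{\mathbf{x}}_0(t_{\mathrm{CA}})$ lies in the stability ball. On that event, the Lipschitz bound for $EES$ gives semantic error at most $L_E(\delta+\sum\eta_t)$. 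Off that event, the semantic error is bounded by the bounded diameter of $d_E$ (event counts are capped by $L$). Summing the two contributions yields $\epsilon$. This $\epsilon$ vanishes as $t_{\mathrm{AS}}-t_{\mathrm{CA}}\to0$ and as $t_{\mathrm{AS}}$ moves toward $t_0$, which is exactly the ``sufficiently short span'' condition. $t_{\mathrm{AS}}$ must also lie late enough in the chain that $\delta(t_{\mathrm{AS}})<r$; this is consistent with the E-Activator choice, in which OES is still low but BD is stable.

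I expect the main obstacle to be the discontinuity of $EES$. The location component $\mathcal{L}$ is defined by threshold exceedance, so an arbitrarily small perturbation near $\theta$ can merge, split, create or delete events. A global Lipschitz bound therefore cannot hold. My first attempt would be to impose the margin condition as part of ``local stability'': data samples with positive probability mass within $r$ of $\theta$ are treated as a small exceptional set whose mass $\rho(r)$ is added to $\epsilon$. I would also choose $d_E$ as a matching-based distance, for example an optimal-assignment cost over events plus a count penalty, so that $\mathcal{I}$ and $\mathcal{S}$ inherit Lipschitz continuity from $\max$, sums and point evaluations on a fixed interval. A secondary difficulty is justifying the per-step Lipschitz constant $K$ of the learned network; I would simply take it as an assumption bundled into bounded reconstruction drift.
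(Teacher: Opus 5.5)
Your argument has the same skeleton as the paper's proof. It uses an anchor reconstruction error at $t_{\mathrm{AS}}$ (the paper's $\epsilon_{\mathrm{AS}}$, your $\delta(t_{\mathrm{AS}})$). It telescopes per-step drift bounds $\eta_t$ by the triangle inequality to control $\mathbb{E}\|\hat{\mathbf{x}}^\theta_0(\mathbf{x}_d(t_{\mathrm{CA}}))-\mathbf{x}_d(t_0)\|$. It then uses a local Lipschitz constant $L_E$ for $EES$ to pass to the semantic discrepancy.

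Where you differ, you improve on the paper. The paper plugs $\hat{\mathbf{x}}^\theta_0(\mathbf{x}_d(t_{\mathrm{CA}}))$ into a Lipschitz bound that holds only in a neighborhood of $\mathbf{x}_d(t_0)$. Yet it only knows that this distance is small on average, not that the point lies in the neighborhood. Your Markov split closes exactly that gap, using the diameter $D$ of $d_E$ off the good event and the margin-violating mass $\rho(r)$. The price is the extra terms $D\big[(\delta(t_{\mathrm{AS}})+\sum_t\eta_t)/r+\rho(r)\big]$ and the explicit requirement $\delta(t_{\mathrm{AS}})+\sum_t\eta_t\ll r$. The $D\rho(r)$ floor does not shrink with the span, so your conclusion is a bounded-error statement, as in the paper, rather than one that vanishes. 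Your $O(K\sqrt{\beta_t})$ estimate for $\eta_t$ is optional, since the paper simply posits $\eta_t$; it would also need Lipschitz control of the network in its time argument.

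There are three things to repair.

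\textbf{The drift hypothesis is too strong.} Do not assume the bound $\delta(t)$ at every $t$ along the data-conditioned reverse chain. At $t=t_{\mathrm{CA}}$ that is already the conclusion, which makes Step 2 vacuous. Assume it only at the forward-noised anchor $\mathbf{x}_d(t_{\mathrm{AS}})\sim q$, where it is a genuine property of the trained $\mathbf{x}$-predictor; this is what the paper does with $\epsilon_{\mathrm{AS}}$.

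\textbf{$\mathcal{S}$ is not Lipschitz for free.} It does not inherit Lipschitz continuity from point evaluations. Its keypoint indices and count $p^m_{\mathrm{len}}$ come from turning points and local extrema, which are argmax-type selections that can jump, appear, or vanish under small perturbations. You can either fold keypoint stability into your margin condition, choose $d_E$ on shapes to be insensitive to re-indexing, or posit local Lipschitz continuity outright as the paper does.

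\textbf{The OES remark is unsupported.} The claim that $\delta(t_{\mathrm{AS}})<r$ is consistent with a low OES does not hold, and if anything the evidence points the other way. A low OES at $t_{\mathrm{CA}}$ means extreme structure has not yet emerged in the clean estimate. That works against $\hat{\mathbf{x}}^\theta_0$ lying within the threshold margin of $\mathbf{x}_d(t_0)$. State $\delta(t_{\mathrm{AS}})+\sum_t\eta_t<r$ as an additional hypothesis, not as something E-Activator delivers.
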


\begin{proof}[Proof of Lemma~\ref{lem:semantic_consistency}]
Let $\hat{\mathbf{x}}_0^\theta(\mathbf{x}_d(t))$ denote the clean-sample estimate predicted from the intermediate state $\mathbf{x}_d(t)$. 
The goal is to show that the clean estimate associated with $\mathbf{x}_d(t_{\mathrm{CA}})$ remains close to the anchored clean sample $\mathbf{x}_d(t_0)$ when the reverse span from $t_{\mathrm{AS}}$ to $t_{\mathrm{CA}}$ is short.

We first characterize the reconstruction error at the alignment start step. Since 
$\mathbf{x}_d(t_{\mathrm{AS}})$ is obtained by forward noising the anchored clean sample 
$\mathbf{x}_d(t_0)$, the $x$-prediction model provides a clean estimate 
$\hat{\mathbf{x}}_0^\theta(\mathbf{x}_d(t_{\mathrm{AS}}))$. Let $\epsilon_{\mathrm{AS}}$ denote an upper bound on the expected reconstruction error at $t_{\mathrm{AS}}$:
\begin{equation}\label{eq:anchor_recon_error}
\mathbb{E}\Big[
\big\|
\hat{\mathbf{x}}_0^\theta(\mathbf{x}_d(t_{\mathrm{AS}}))
-
\mathbf{x}_d(t_0)
\big\|
\Big]
\le \epsilon_{\mathrm{AS}} .
\end{equation}
Here, $\epsilon_{\mathrm{AS}}$ measures the reconstruction error at the alignment start step. For a well-trained $x$-prediction diffusion model, this error remains bounded when $t_{\mathrm{AS}}$ is sufficiently close to the clean data endpoint.

Next, we characterize the drift caused by the short learned denoising segment from $t_{\mathrm{AS}}$ to $t_{\mathrm{CA}}$. For each 
$t\in\{t_{\mathrm{CA}}+1,\ldots,t_{\mathrm{AS}}\}$, let $\eta_t$ denote an upper bound on the expected one-step change of the clean estimate in the reconstruction space:
\begin{equation}\label{eq:step_recon_drift_bound}
\mathbb{E}\Big[
\big\|
\hat{\mathbf{x}}_0^\theta(\mathbf{x}_d(t-1))
-
\hat{\mathbf{x}}_0^\theta(\mathbf{x}_d(t))
\big\|
\Big]
\le \eta_t .
\end{equation}
Here, $\eta_t$ measures the reconstruction drift introduced by one denoising update from step $t$ to step $t-1$. We use this quantity to characterize the local stability of the denoising trajectory in the reconstruction space; smaller $\eta_t$ indicates that one denoising update causes only a mild change in the predicted clean-sample estimate.

By the triangle inequality, the cumulative reconstruction drift along the reverse span is bounded by
\begin{equation}\label{eq:cum_recon_drift}
\mathbb{E}\Big[
\big\|
\hat{\mathbf{x}}_0^\theta(\mathbf{x}_d(t_{\mathrm{CA}}))
-
\hat{\mathbf{x}}_0^\theta(\mathbf{x}_d(t_{\mathrm{AS}}))
\big\|
\Big]
\le
\sum_{t=t_{\mathrm{CA}}+1}^{t_{\mathrm{AS}}}\eta_t .
\end{equation}
Combining Eqs.~\eqref{eq:anchor_recon_error} and~\eqref{eq:cum_recon_drift}, we obtain
\begin{equation}\label{eq:recon_to_anchor_bound}
\mathbb{E}\Big[
\big\|
\hat{\mathbf{x}}_0^\theta(\mathbf{x}_d(t_{\mathrm{CA}}))
-
\mathbf{x}_d(t_0)
\big\|
\Big]
\le
\epsilon_{\mathrm{AS}}
+
\sum_{t=t_{\mathrm{CA}}+1}^{t_{\mathrm{AS}}}\eta_t .
\end{equation}
This bound shows that the clean estimate at $t_{\mathrm{CA}}$ remains close to the anchored clean sample when the reconstruction error at $t_{\mathrm{AS}}$ is controlled and the reverse span is short.

We then connect reconstruction closeness to semantic consistency. 
Although event-level semantics are not globally invariant to arbitrary perturbations, they are locally stable around clean samples whose extreme-event regions have sufficient margins from the decision threshold. 
We formalize this by assuming that there exists a constant $L_E>0$ such that, for samples within a local neighborhood of $\mathbf{x}_d(t_0)$,
\begin{equation}\label{eq:semantic_lipschitz}
d_E\!\left(EES(\mathbf{x}),EES(\mathbf{x}')\right)
\le
L_E\|\mathbf{x}-\mathbf{x}'\|,
\end{equation}
where $d_E(\cdot,\cdot)$ denotes a suitable discrepancy between event-level semantic representations. 
This condition means that small reconstruction perturbations around a stable clean sample only induce bounded changes in the extracted event-level semantics.

Applying Eq.~\eqref{eq:semantic_lipschitz} to 
$\hat{\mathbf{x}}_0^\theta(\mathbf{x}_d(t_{\mathrm{CA}}))$ and $\mathbf{x}_d(t_0)$ gives
\begin{equation}\label{eq:semantic_error_bound}
\mathbb{E}\Big[
d_E\!\left(
EES\!\left(\hat{\mathbf{x}}_0^\theta(\mathbf{x}_d(t_{\mathrm{CA}}))\right),
EES\!\left(\mathbf{x}_d(t_0)\right)
\right)
\Big]
\le
L_E
\left(
\epsilon_{\mathrm{AS}}
+
\sum_{t=t_{\mathrm{CA}}+1}^{t_{\mathrm{AS}}}\eta_t
\right).
\end{equation}
Therefore, when $t_{\mathrm{AS}}$ is chosen close to $t_{\mathrm{CA}}$, the cumulative drift 
$\sum_{t=t_{\mathrm{CA}}+1}^{t_{\mathrm{AS}}}\eta_t$ is small. As established above, $\epsilon_{\mathrm{AS}}$ remains bounded. Therefore, the semantic discrepancy between the clean estimate at $t_{\mathrm{CA}}$ and the anchored clean sample remains bounded. Hence, 
$EES(\mathbf{x}_d(t_0))$ remains a valid supervision signal for training the semantic predictor at 
$\mathbf{x}_d(t_{\mathrm{CA}})$ in a bounded-error sense.
\end{proof}

\begin{lemma}[Marginal Alignment]\label{lem:marginal_alignment}
The marginal distribution of the data-conditioned state $\mathbf{x}_d(t_{\mathrm{CA}})$ is close to that of the noise-initiated state $\mathbf{x}_n(t_{\mathrm{CA}})$.
\end{lemma}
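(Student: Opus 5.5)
The plan is to compare the two marginals by inserting the ideal forward marginal $q_{t_{\mathrm{AS}}}(\mathbf{x})=\int q(\mathbf{x}(t_{\mathrm{AS}})\mid \mathbf{x}_0)\,p_{\mathrm{data}}(\mathbf{x}_0)\,d\mathbf{x}_0$ at the alignment start step. Both $\mathbf{x}_d(t_{\mathrm{CA}})$ and $\mathbf{x}_n(t_{\mathrm{CA}})$ are then images of an initial distribution at $t_{\mathrm{AS}}$ under the same learned reverse kernel.

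Write $P^\theta_{t_{\mathrm{AS}}\to t_{\mathrm{CA}}}$ for the composition of the one-step transitions $p_\theta(\mathbf{x}(t-1)\mid\mathbf{x}(t))$ in Eq.~\eqref{eq:x_update}. The data-conditioned marginal is $p^d_{t_{\mathrm{CA}}}=q_{t_{\mathrm{AS}}}P^\theta_{t_{\mathrm{AS}}\to t_{\mathrm{CA}}}$, since $\mathbf{x}_d(t_{\mathrm{AS}})$ is drawn exactly from the forward process. The noise-initiated marginal is $p^n_{t_{\mathrm{CA}}}=p^n_{t_{\mathrm{AS}}}P^\theta_{t_{\mathrm{AS}}\to t_{\mathrm{CA}}}$, where $p^n_{t_{\mathrm{AS}}}$ is obtained by running the same sampler from $\mathcal{N}(\mathbf{0},\mathbf{I})$ at $t_T$ down to $t_{\mathrm{AS}}$.

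The argument proceeds in four steps.

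\textbf{Step 1 (data processing).} Applying the data-processing inequality for KL (or total variation) to the shared Markov kernel gives
\begin{equation*}
\mathrm{KL}\big(p^d_{t_{\mathrm{CA}}}\,\|\,p^n_{t_{\mathrm{CA}}}\big)\le \mathrm{KL}\big(q_{t_{\mathrm{AS}}}\,\|\,p^n_{t_{\mathrm{AS}}}\big).
\end{equation*}
The short reverse span $t_{\mathrm{AS}}\to t_{\mathrm{CA}}$ therefore cannot increase the discrepancy. This is exactly why DCT-NIS denoises with the learned kernel rather than forward-noising directly to $t_{\mathrm{CA}}$.

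\textbf{Step 2 (bound at $t_{\mathrm{AS}}$).} I bound $\mathrm{KL}(q_{t_{\mathrm{AS}}}\|p^n_{t_{\mathrm{AS}}})$ by the standard DDPM chain-rule decomposition. By the chain rule on path measures, the divergence between the forward path marginal and the learned reverse path marginal from $t_T$ to $t_{\mathrm{AS}}$ splits into two parts:
\begin{equation*}
\mathrm{KL}\big(q_{t_T}\,\|\,\mathcal{N}(\mathbf{0},\mathbf{I})\big)+\sum_{t=t_{\mathrm{AS}}+1}^{t_T}\mathbb{E}_{q}\,\mathrm{KL}\big(q(\mathbf{x}(t-1)\mid\mathbf{x}(t))\,\|\,p_\theta(\mathbf{x}(t-1)\mid\mathbf{x}(t))\big).
\end{equation*}

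\textbf{Step 3 (reduce to reconstruction error).} Under the $\mathbf{x}$-prediction parameterization, both kernels are Gaussian with the same variance $\tilde\sigma_t^2$. Their means differ only through $\hat{\mathbf{x}}_0(t)$ versus the true posterior mean of $\mathbf{x}_0$. Each term is therefore
\begin{equation*}
\frac{\bar\alpha_{t-1}\beta_t^2}{2\tilde\sigma_t^2(1-\bar\alpha_t)^2}\,\mathbb{E}\|\hat{\mathbf{x}}_0^\theta(\mathbf{x}(t))-\mathbb{E}[\mathbf{x}_0\mid\mathbf{x}(t)]\|^2,
\end{equation*}
which is precisely the training loss at step $t$. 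The prior term is at most $\tfrac12\big(\bar\alpha_{t_T}\mathbb{E}\|\mathbf{x}_0\|^2 + LC(\bar\alpha_{t_T}-\log(1-\bar\alpha_{t_T})-\ldots)\big)$, which is $O(\bar\alpha_{t_T})$ and vanishes under the DDPM schedule.

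\textbf{Step 4 (conclude).} Combining Steps 1–3 and applying Pinsker's inequality bounds the total variation between $p^d_{t_{\mathrm{CA}}}$ and $p^n_{t_{\mathrm{CA}}}$ by $\sqrt{\tfrac12(\varepsilon_{\mathrm{prior}}+\sum_t w_t\varepsilon_t^2)}$, where $\varepsilon_t$ is the $\mathbf{x}$-prediction error and $w_t$ the weight from Step 3.

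The main obstacle is Step 3. The bound is only as good as the learned denoiser's accuracy on the forward marginals at high-noise steps. I will state this as an assumption of a well-trained model, i.e.\ a bounded weighted score/$\mathbf{x}_0$ error, rather than prove it. I also need the weights $w_t$ to be summable, which requires $\tilde\sigma_t$ not too small; I would use the posterior variance choice $\tilde\sigma_t^2=\frac{1-\bar\alpha_{t-1}}{1-\bar\alpha_t}\beta_t$ to keep them controlled. If the paper's sampler uses a different $\tilde\sigma_t$, I would fall back to a Wasserstein bound via Lipschitzness of $\hat{\mathbf{x}}_0^\theta$, propagated through the linear update in Eq.~\eqref{eq:x_update}.
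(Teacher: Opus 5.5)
\textbf{Gap in Step 3.} Steps 1, 2 and 4 are sound, but Step 3 fails. The chain rule in Step 2 produces the \emph{true} reverse kernels
$q(\mathbf{x}(t-1)\mid\mathbf{x}(t))=\int q(\mathbf{x}(t-1)\mid\mathbf{x}(t),\mathbf{x}_0)\,q(\mathbf{x}_0\mid\mathbf{x}(t))\,d\mathbf{x}_0$.
These are Gaussian mixtures, not Gaussians; only the posterior $q(\mathbf{x}(t-1)\mid\mathbf{x}(t),\mathbf{x}_0)$ is Gaussian with variance $\tilde\sigma_t^2$. Since $p_\theta(\cdot\mid\mathbf{x}(t))=\mathcal{N}(m_\theta,\tilde\sigma_t^2\mathbf{I})$, the exact identity is
\begin{equation*}
\mathrm{KL}\big(q(\cdot\mid\mathbf{x}(t))\,\|\,p_\theta(\cdot\mid\mathbf{x}(t))\big)=\mathrm{KL}\big(q(\cdot\mid\mathbf{x}(t))\,\|\,\mathcal{N}(\mu_t^\star,\tilde\sigma_t^2\mathbf{I})\big)+\frac{\|\mu_t^\star-m_\theta\|^2}{2\tilde\sigma_t^2}.
\end{equation*}
Here $\mu_t^\star=\mathbb{E}[\mathbf{x}(t-1)\mid\mathbf{x}(t)]$ is the mean of the update in Eq.~\eqref{eq:x_update} with $\hat{\mathbf{x}}_0$ replaced by $\mathbb{E}[\mathbf{x}_0\mid\mathbf{x}(t)]$, so the second term is exactly your $w_t\|\hat{\mathbf{x}}_0^\theta-\mathbb{E}[\mathbf{x}_0\mid\mathbf{x}(t)]\|^2$. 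You dropped the first term. It is a discretization (non-Gaussianity) error that survives a perfect denoiser.

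Concretely, take two-point data $\mathbf{x}_0\in\{\pm1\}$ and a single reverse step from $t_T$ ($\bar\alpha_{t_T}\approx 0$) to $t_{\mathrm{AS}}$ ($\bar\alpha_{t_{\mathrm{AS}}}\approx 1$). The exact denoiser makes every term of your bound vanish. Yet the sampler outputs a narrow Gaussian at $0$, while $q_{t_{\mathrm{AS}}}$ has narrow modes near $\pm1$, so the TV distance is close to $1$.

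Reading the kernel as the Gaussian posterior instead (your ``precisely the training loss'') does not rescue it. By convexity, $\mathbb{E}\,\mathrm{KL}(q(\cdot\mid\mathbf{x}(t),\mathbf{x}_0)\,\|\,p_\theta)=w_t\,\mathbb{E}\|\hat{\mathbf{x}}_0^\theta-\mathbf{x}_0\|^2$ does upper-bound the true term. But it exceeds it by exactly $I(\mathbf{x}_0;\mathbf{x}(t-1)\mid\mathbf{x}(t))$, and these excesses telescope to $I(\mathbf{x}_0;\mathbf{x}(t_{\mathrm{AS}}))-I(\mathbf{x}_0;\mathbf{x}(t_T))$. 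That is large at a low-noise $t_{\mathrm{AS}}$, so the bound becomes vacuous. Relatedly, $\mathbb{E}\|\hat{\mathbf{x}}_0^\theta-\mathbb{E}[\mathbf{x}_0\mid\mathbf{x}(t)]\|^2$ is the excess risk over the MMSE, not the training loss.

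\textbf{Repair.} Keep the decomposition above and carry an explicit term $\varepsilon_{\mathrm{disc}}=\sum_t\mathbb{E}_{q_t}\mathrm{KL}\big(q(\cdot\mid\mathbf{x}(t))\,\|\,\mathcal{N}(\mu_t^\star,\tilde\sigma_t^2\mathbf{I})\big)$. Treat it either as an assumption (it is small for fine DDPM schedules) or import a bound from a DDPM convergence analysis. Pinsker then gives $\mathrm{TV}\le\sqrt{\tfrac12(\varepsilon_{\mathrm{prior}}+\varepsilon_{\mathrm{disc}}+\sum_t w_t\varepsilon_t^2)}$. Your summability worry is moot: with the posterior variance, $w_t=\tfrac12(\mathrm{SNR}_{t-1}-\mathrm{SNR}_t)$ where $\mathrm{SNR}_t=\bar\alpha_t/(1-\bar\alpha_t)$, and this telescopes to a finite sum because the chain stops at $t_{\mathrm{AS}}$.

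\textbf{Comparison with the paper.} Once repaired, your route is genuinely different from the paper's, and sharper. The paper inserts the ideal marginal $q_{t_{\mathrm{CA}}}$ and applies the TV triangle inequality. It then simply defines $\varepsilon_{\mathrm{dct}}=\mathrm{TV}(P_d^{\mathrm{CA}},q_{t_{\mathrm{CA}}})$ and $\varepsilon_{\mathrm{nis}}=\mathrm{TV}(P_n^{\mathrm{CA}},q_{t_{\mathrm{CA}}})$ and asserts both are small, the former because the span is short. Your data-processing Step 1 removes $\varepsilon_{\mathrm{dct}}$ altogether, because the shared learned kernel cannot increase the discrepancy. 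So the shortness of $t_{\mathrm{AS}}\to t_{\mathrm{CA}}$ is not needed for this lemma. The corrected Steps 2--3 then express the single remaining assumption through interpretable quantities instead of positing it. If you only want the paper's level of rigor, stop after Step 1 and assume $\mathrm{TV}(q_{t_{\mathrm{AS}}},p^n_{t_{\mathrm{AS}}})$ is small.
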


\begin{proof}[Proof of Lemma~\ref{lem:marginal_alignment}]
Let 
$P_d^{\mathrm{CA}}=\mathcal{L}(\mathbf{x}_d(t_{\mathrm{CA}}))$
denote the marginal distribution of the data-conditioned state, and let
$P_n^{\mathrm{CA}}=\mathcal{L}(\mathbf{x}_n(t_{\mathrm{CA}}))$
denote the marginal distribution of the noise-initiated state at $t_{\mathrm{CA}}$.
Let $q_t$ denote the forward-diffused marginal at step $t$ induced by
$p_{\mathrm{data}}$:
\begin{equation}\label{eq:qt_def}
q_t(\mathbf{x})
\triangleq
\int q(\mathbf{x}(t)\mid \mathbf{x}(t_0))
p_{\mathrm{data}}(\mathbf{x}(t_0))\,d\mathbf{x}(t_0).
\end{equation}
Thus, $q_{t_{\mathrm{CA}}}$ is the reference marginal at the control activation step
under the ideal diffusion dynamics.

We compare $P_d^{\mathrm{CA}}$ and $P_n^{\mathrm{CA}}$ through this common reference
marginal. By the triangle inequality of total variation distance,
\begin{equation}\label{eq:tri_align_tv}
\mathrm{TV}\!\left(P_d^{\mathrm{CA}},P_n^{\mathrm{CA}}\right)
\le
\mathrm{TV}\!\left(P_d^{\mathrm{CA}},q_{t_{\mathrm{CA}}}\right)
+
\mathrm{TV}\!\left(P_n^{\mathrm{CA}},q_{t_{\mathrm{CA}}}\right).
\end{equation}

We first consider the noise-initiated state. During noise-initiated sampling, the
process starts from the prior distribution and applies the learned denoising
transitions until step $t_{\mathrm{CA}}$. Under the ideal reverse diffusion dynamics,
the induced marginal at $t_{\mathrm{CA}}$ is exactly the forward reference marginal
$q_{t_{\mathrm{CA}}}$. For the learned denoising process, we quantify its deviation
from this ideal reference marginal by
\begin{equation}\label{eq:nis_mismatch}
\varepsilon_{\mathrm{nis}}
\triangleq
\mathrm{TV}\!\left(P_n^{\mathrm{CA}},q_{t_{\mathrm{CA}}}\right).
\end{equation}
This quantity measures the residual marginal mismatch of the practical
noise-initiated sampling distribution at the control activation step. Under the
bounded reverse-approximation error assumption for the Backbone Denoiser,
$\varepsilon_{\mathrm{nis}}$ remains bounded.

We next consider the data-conditioned state. In data-conditioned training, we sample
$\mathbf{x}_d(t_0)\sim p_{\mathrm{data}}$ and forward diffuse it to
$\mathbf{x}_d(t_{\mathrm{AS}})\sim q(\mathbf{x}(t_{\mathrm{AS}})\mid\mathbf{x}(t_0))$.
Therefore,
\begin{equation}\label{eq:dct_start_marginal}
\mathcal{L}(\mathbf{x}_d(t_{\mathrm{AS}}))=q_{t_{\mathrm{AS}}}.
\end{equation}
Starting from this forward marginal, the data-conditioned process applies the same
learned denoising transitions over the span from $t_{\mathrm{AS}}$ to
$t_{\mathrm{CA}}$. Under the ideal reverse diffusion dynamics, a chain initialized
from $q_{t_{\mathrm{AS}}}$ has marginal $q_{t_{\mathrm{CA}}}$ at step
$t_{\mathrm{CA}}$. For the learned denoising process, we quantify its deviation from
this ideal reference marginal by
\begin{equation}\label{eq:dct_mismatch}
\varepsilon_{\mathrm{dct}}
\triangleq
\mathrm{TV}\!\left(P_d^{\mathrm{CA}},q_{t_{\mathrm{CA}}}\right).
\end{equation}
This quantity measures the residual marginal mismatch accumulated along the
data-conditioned denoising segment. This mismatch is controlled by the approximation
accuracy of the learned denoising transitions and by the length of the reverse span.
In particular, following Lemma~\ref{lem:semantic_consistency}, the denoising
trajectory over the span from $t_{\mathrm{AS}}$ to $t_{\mathrm{CA}}$ has bounded
reconstruction drift, and this span is chosen to be sufficiently short. Therefore,
only a limited number of approximate denoising updates contribute to the accumulated
marginal mismatch, making $\varepsilon_{\mathrm{dct}}$ bounded and to be
small.

Substituting Eqs.~\eqref{eq:nis_mismatch} and~\eqref{eq:dct_mismatch} into
Eq.~\eqref{eq:tri_align_tv}, we obtain
\begin{equation}\label{eq:marginal_alignment_bound}
\mathrm{TV}\!\left(P_d^{\mathrm{CA}},P_n^{\mathrm{CA}}\right)
\le
\varepsilon_{\mathrm{dct}}+\varepsilon_{\mathrm{nis}} .
\end{equation}
Therefore, when the residual marginal mismatches $\varepsilon_{\mathrm{dct}}$ and
$\varepsilon_{\mathrm{nis}}$ are small, the marginal distribution of
$\mathbf{x}_d(t_{\mathrm{CA}})$ is close to that of
$\mathbf{x}_n(t_{\mathrm{CA}})$. This completes the proof.
\end{proof}

\section{Additional Details for E-Control}
\label{Appendix:E-Control design}

This appendix details the implementation of E-Control in Section~\ref{sec:E-control}, which injects the predicted control signal $\hat{E}_{\mathrm{CA}}$ into the Frozen Backbone Denoiser through a trainable Extreme Control Network. Inspired by the control-based diffusion paradigm~\cite{zhang2023adding}, E-Control is designed for event-level semantic control in time-series diffusion. The Extreme Control Network consists of three components: Semantic Encoder, Control Branch, and Control Gate.

\textbf{Semantic Encoder.}
The Semantic Encoder takes the predicted control signal $\hat{E}_{\mathrm{CA}}$ as input and maps it into semantic features for the Control Branch. As defined in Section~\ref{sec:extreme definition}, $\hat{E}_{\mathrm{CA}}$ follows the event-level semantic representation $EES=\{\mathcal{L},\mathcal{I},\mathcal{S}\}$, where each event is described by its location, intensity, and temporal shape. For each event, we embed its semantic tuple $(\mathcal{L}^{m},\mathcal{I}^{m},\mathcal{S}^{m})$ into an event-aware token $\mathbf{e}_m$ using a shared projection module:
\begin{equation}\label{eq:SE_event_emb}
\mathbf{e}_{m}
=
\mathrm{Emb}(\mathcal{L}^{m},\mathcal{I}^{m},\mathcal{S}^{m}),
\quad m=1,\ldots,n_{\mathrm{ev}}.
\end{equation}
We then aggregate all event tokens with a lightweight set encoder to obtain the semantic control representation:
\begin{equation}\label{eq:SE_highlevel}
\mathbf{C}_{\mathrm{sem}}(t_{\mathrm{CA}})
=
\mathrm{SetEnc}(\{\mathbf{e}_{m}\}_{m=1}^{n_{\mathrm{ev}}}).
\end{equation}
Here, $\mathrm{Emb}(\cdot)$ denotes a shared embedding function for event-level semantics, and $\mathrm{SetEnc}(\cdot)$ denotes a set-attention encoder. This design produces a compact, permutation-invariant representation that captures both per-event attributes and relations among events. The resulting $\mathbf{C}_{\mathrm{sem}}(t_{\mathrm{CA}})$ serves as the event-level conditioning representation for layer-aligned control.

\textbf{Control Branch.}
The Control Branch transforms the semantic features 
$\mathbf{C}_{\mathrm{sem}}(t_{\mathrm{CA}})$ into layer-aligned control signals 
$\{\mathbf{c}_\ell(t_{\mathrm{CA}})\}_{\ell=1}^{L_c}$ for the frozen Backbone Denoiser. 
To match the injection sites, it adopts a lightweight trainable structure aligned with the core blocks of the Backbone Denoiser, so that its intermediate features correspond to the hidden representations where control is injected. Starting from the current denoising state $\mathbf{x}_n(t_{\mathrm{CA}})$, each control block incorporates the time-step embedding $\mathrm{TE}(t_{\mathrm{CA}})$ and attends to the semantic features $\mathbf{C}_{\mathrm{sem}}(t_{\mathrm{CA}})$, producing the control signal for the corresponding injection layer:
\begin{equation}\label{eq:cb_block}
\mathbf{c}_\ell(t_{\mathrm{CA}})
=
\mathrm{CB}_\ell\big(
\mathbf{c}_{\ell-1}(t_{\mathrm{CA}}),
\mathrm{TE}(t_{\mathrm{CA}}),
\mathbf{C}_{\mathrm{sem}}(t_{\mathrm{CA}})
\big),
\quad \ell=1,\ldots,L_c .
\end{equation}
Here, $\mathrm{CB}_\ell(\cdot)$ denotes the $\ell$-th control block, 
$\mathrm{TE}(\cdot)$ denotes the time-step embedding function, and 
$\mathbf{c}_\ell(t_{\mathrm{CA}})$ is the control signal aligned with the $\ell$-th injection layer. In practice, semantic conditioning is implemented through cross-attention, where intermediate control features serve as queries and semantic features serve as keys and values.

\textbf{Control Gate.}
Given the layer-aligned control signals 
$\{\mathbf{c}_\ell(t_{\mathrm{CA}})\}_{\ell=1}^{L_c}$ from the Control Branch, 
the Control Gate converts them into gated residual offsets for the frozen Backbone Denoiser. 
At each predefined injection site, the gate produces an offset that matches the shape of the corresponding hidden representation or output prediction, and adaptively regulates its magnitude:
\begin{equation}\label{eq:control_gate}
\Delta_\ell(t_{\mathrm{CA}})
=
\mathrm{CG}_\ell(\mathbf{c}_\ell(t_{\mathrm{CA}}))
=
\sigma\!\big(\mathbf{W}^{(g)}_\ell \mathbf{c}_\ell(t_{\mathrm{CA}})\big)
\odot
\big(\mathbf{W}^{(\Delta)}_\ell \mathbf{c}_\ell(t_{\mathrm{CA}})\big),
\quad \ell=1,\ldots,L_c .
\end{equation}
Here, $\mathrm{CG}_\ell(\cdot)$ denotes the $\ell$-th gating module, 
$\sigma(\cdot)$ is the sigmoid function, $\odot$ denotes element-wise product, 
and $\mathbf{W}^{(g)}_\ell$ and $\mathbf{W}^{(\Delta)}_\ell$ project the control feature to the gate value and residual offset, respectively. 
The resulting offsets are injected as residual updates at the corresponding sites of the Backbone Denoiser. 
For example, when injecting into a hidden representation or the output prediction, we have
\begin{equation}\label{eq:delta_h_delta_0}
\tilde{\mathbf{x}}_h(t_{\mathrm{CA}})
=
\mathbf{x}_h(t_{\mathrm{CA}})+\Delta_h(t_{\mathrm{CA}}),
\quad
\tilde{\hat{\mathbf{x}}}_0(t_{\mathrm{CA}})
=
\hat{\mathbf{x}}_0(t_{\mathrm{CA}})+\Delta_0(t_{\mathrm{CA}}),
\end{equation}
where $\Delta_h(t_{\mathrm{CA}})$ and $\Delta_0(t_{\mathrm{CA}})$ are selected from 
$\{\Delta_\ell(t_{\mathrm{CA}})\}_{\ell=1}^{L_c}$ according to the predefined injection layers. 
This gated residual design adaptively strengthens extreme-event formation while preserving background temporal patterns.

\section{IMPLEMENTATION DETAILS}

\subsection{Dataset Description} \label{Appendix:Dataset}
In this section, we provide a detailed description of the six datasets used in this work. Our evaluation is conducted on one synthetic dataset and five real-world datasets spanning diverse domains, including Temperature, Precipitation, ECG, Electricity, and Transportation. An overview of these datasets is presented in Table~\ref{tab:dataset_statistics}.

\begin{table*}[htbp]
\centering
\caption{Dataset statistics.}
\label{tab:dataset_statistics}
\renewcommand{\arraystretch}{1.1}
\resizebox{\textwidth}{!}{
\begin{tabular}{lccccc}
\toprule
Datasets & \# of Samples & Time Series Length & Categories & Extreme Events & Link \\
\midrule
Syn-Data          & 1000 & 200 & Synthetic      & Synthetic Extreme Event Sequence & \href{https://anonymous.4open.science/r/E4GEN}{E4GEN/Data} \\
Wea-Temp          & 1520 & 168 & Climate        & Cold-Temperature Event & \href{https://docs.deweydata.io/docs/custom-weather-weather-data}{Dewey/CustomWeather} \\
Wea-Prec          & 1056 & 90  & Climate        & Heavy-Rainfall Event & \href{https://docs.deweydata.io/docs/custom-weather-weather-data}{Dewey/CustomWeather} \\
LTST-ECG              & 1000 & 250 & Healthcare     & ST-Segment Elevation Event & \href{https://www.physionet.org/physiobank/database/ltstdb/}{PhysioNet LTSTDB} \\
HH-Power  & 1096 & 144 & Energy         & High Electricity Consumption Event & \href{https://archive.ics.uci.edu/dataset/235/individual+household+electric+power+consumption}{UCI Household Power} \\
PEMS-SF               & 1320 & 144 & Transportation & Traffic Congestion Event & \href{https://archive.ics.uci.edu/ml/datasets/pems-sf}{UCI PEMS-SF} \\
\bottomrule
\end{tabular}
}
\end{table*}

\begin{itemize}
    \item \textbf{Syn-Data}: For the synthetic dataset, we design a toy example to explicitly illustrate the generation mechanism of extreme events in time series. Here, an extreme event is defined as a contiguous temporal segment whose values remain above a predefined threshold. Each sequence consists of four components: a trend component, a seasonal component, a noise component, and an extreme-event component. To facilitate separate analysis, we further introduce a threshold so that the extreme-event component can be clearly identified and isolated. The full sequence is defined as
    \begin{equation}
    x_i(t)=\text{trend}_i(t)+\text{season}_i(t)+\text{noise}_i(t)+\text{extreme}_i(t), \quad t=0,\dots,L-1.
    \end{equation}
    
    The trend component is modeled as
    \begin{equation}
    \text{trend}_i(t)=b_i+a_i\frac{t}{L-1},
    \end{equation}
    where \(a_i\sim U(0.05,0.1)\) and \(b_i\sim U(-0.02,0.02)\). The seasonal component is defined as the sum of three sinusoidal waves with fixed frequencies \(k\in\{5,8,11\}\):
    \begin{equation}
    \text{season}_i(t)=\sum_{k\in\{5,8,11\}} A_{i,k}\sin\left(2\pi k\frac{t}{L}+\phi_{i,k}\right).
    \end{equation}
    Here, \(A_{i,5}\in\{0.04,0.06,0.08,0.10\}\), \(A_{i,8}\in\{0.03,0.05,0.07,0.09\}\), \(A_{i,11}\in\{0.02,0.04,0.06,0.08\}\), and \(\phi_{i,k}\in\left\{0,\frac{\pi}{2},\pi,\frac{3\pi}{2}\right\}\). The noise component is sampled from a Gaussian distribution:
    \begin{equation}
    \text{noise}_i(t)\sim\mathcal{N}(0,0.01^2).
    \end{equation}

    For the extreme-event component, we first define the background signal as the sum of the trend, seasonal, and noise components:
    \begin{equation}
    \text{base}_i(t)=\text{trend}_i(t)+\text{season}_i(t)+\text{noise}_i(t).
    \end{equation}
       We further introduce the threshold $\theta =0.3711$ to distinguish extreme events from normal background fluctuation, and the signal with the extreme-event component satisfies: \(\text{base}_i(t)+\text{extreme}_i(t) > u\). As a result, the threshold can cleanly separate normal background behavior from extreme events.
    
    To make extreme events follow a predefined intrinsic pattern, we align their occurrence locations with specific phase combinations of \(\text{base}_i(t)\). In this way, the generated extremes are structurally coupled with the underlying temporal pattern rather than inserted at arbitrary positions. Each extreme event is modeled as a smooth Gaussian-shaped spike, whose amplitude is adaptively determined by the local background level. The final sequence is therefore constructed as
    \begin{equation}
    x_i(t)=\text{base}_i(t)+\sum_{j=1}^{n_i}\text{spike}_{i,j}(t).
    \end{equation}

    For our experiments, we set the sequence length to \(200\) and generate \(1000\) samples, resulting in a final dataset of shape \((1000,200,1)\). This toy dataset provides a controlled testbed for studying the event-level generation mechanism of extremes. Notably, this design induces a visible concentration of values near the threshold region in the tail of the overall distribution, producing a local bump rather than a purely conventional long-tailed shape. As a result, the synthetic distribution exhibits a structured tail pattern, making it particularly suitable for evaluating whether generation models can capture tail-specific enhancement beyond simple heavy-tail behavior. The estimated probability density function (PDF) and sample visualizations are shown in Fig.~\ref{fig:synthetic_data_pdf} and Fig.~\ref{fig:synthetic_data_sample}, respectively. In particular, three representative samples are displayed in yellow, blue, and green. The red dashed line denotes the threshold, while the red dots indicate the extreme-value points. All subsequent figures across all datasets follow the same sample setting.
    
\begin{figure*}[htbp]
    \centering
    \begin{subfigure}[htbp]{0.40\textwidth}
        \centering
        \includegraphics[width=\linewidth]{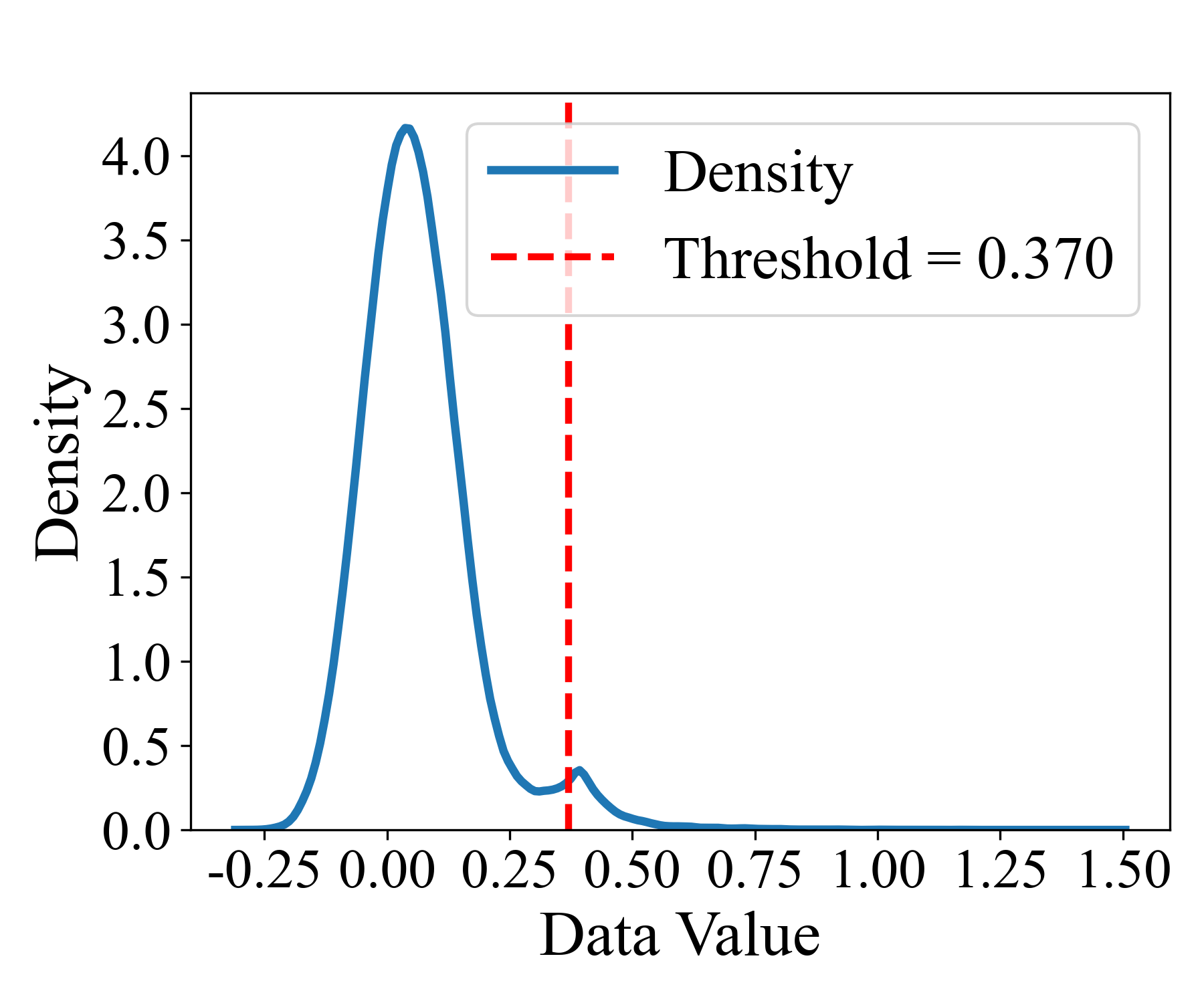}
        \caption{PDF of Syn-Data}
        \label{fig:synthetic_data_pdf}
    \end{subfigure}
    \hspace{0.05\textwidth}
    \begin{subfigure}[htbp]{0.40\textwidth}
        \centering
        \includegraphics[width=\linewidth]{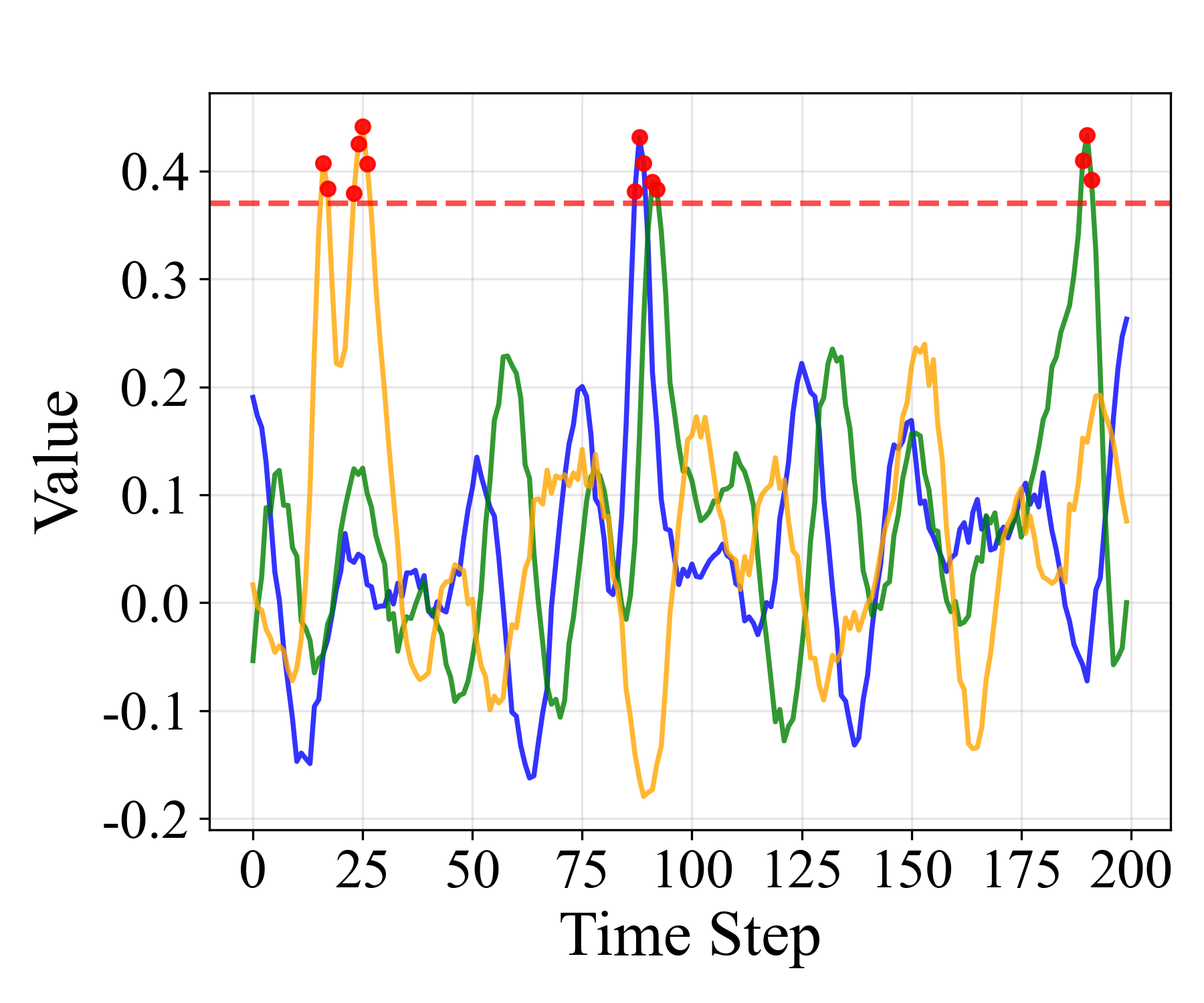}
        \caption{Data Samples in Syn-Data}
        \label{fig:synthetic_data_sample}
    \end{subfigure}
    \caption{Visualizations for Syn-Data Dataset}
    \label{fig:two_panel}
\end{figure*}

    \item \textbf{Wea-Temp}: We use the \emph{Hourly Weather Data} provided by Dewey~\cite{customweather2022hourly}, which contains hourly climate observations across the United States since 2018. Here, an extreme event refers to a low-temperature event, represented by a consecutive period with temperature below a predefined threshold. We select five nearby stations around Jacksonville in northeastern Florida, a region frequently affected by extreme weather such as heavy rainfall and hurricanes. The temperature records are divided into weekly samples, each containing 168 hourly observations \((24 \times 7)\). Using data from these five stations over the period from January 1, 2021 to December 31, 2025, we obtain 1,520 samples, resulting in a dataset of shape \((1520,168,1)\). The extreme threshold is defined as the lower 5\% quantile of the temperature distribution, corresponding to \(8.3^\circ\mathrm{C}\). Temperatures below this value are treated as anomalously low-temperature extremes. Unlike the other datasets, this benchmark focuses on lower-tail extremes, which further demonstrates the flexibility of our model across different extreme-event types. This dataset captures both long-term temporal variation and local spatial variation across nearby stations. We visualize the temperature series of the five stations over one month in Fig.~\ref{fig:monthly_temperature}. The PDF and sample visualizations are shown in Fig.~\ref{fig:temperature_data_pdf} and Fig.~\ref{fig:temperature_data_sample}, respectively.

    \begin{figure*}[h]
        \centering
        \includegraphics[width=\linewidth]{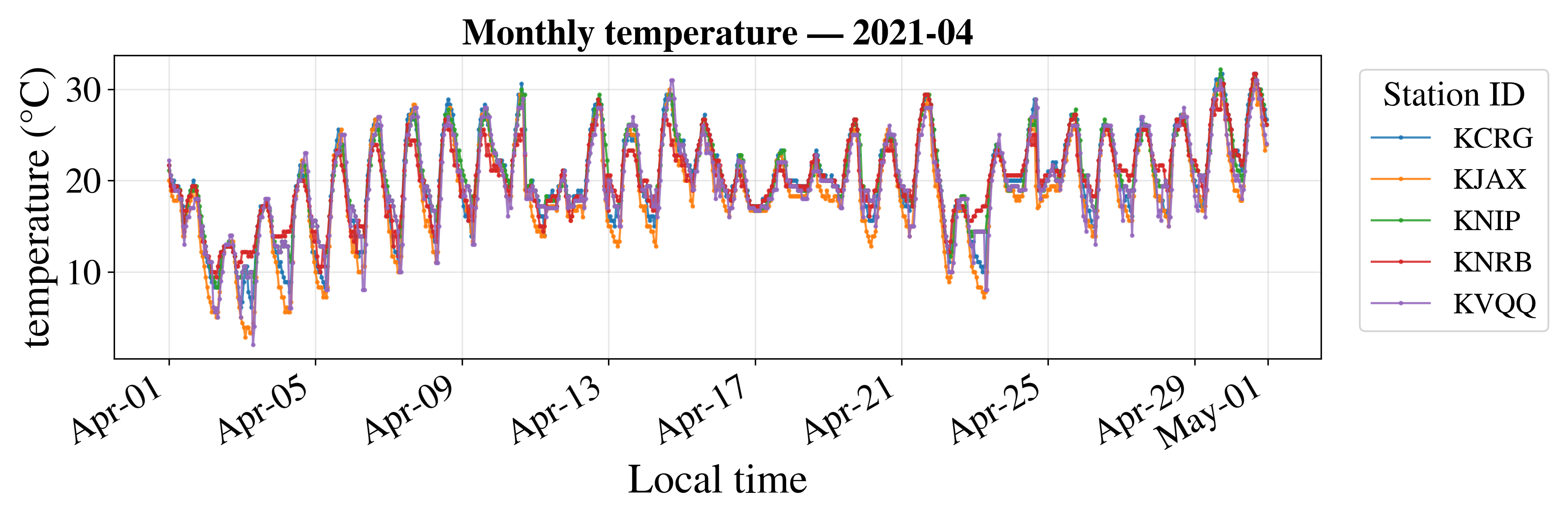}
        \caption{Temperature series for five stations in April 2021.}
        \label{fig:monthly_temperature}
    \end{figure*}

\begin{figure*}[htbp]
    \centering
    \begin{subfigure}[htbp]{0.40\textwidth}
        \centering
        \includegraphics[width=\linewidth]{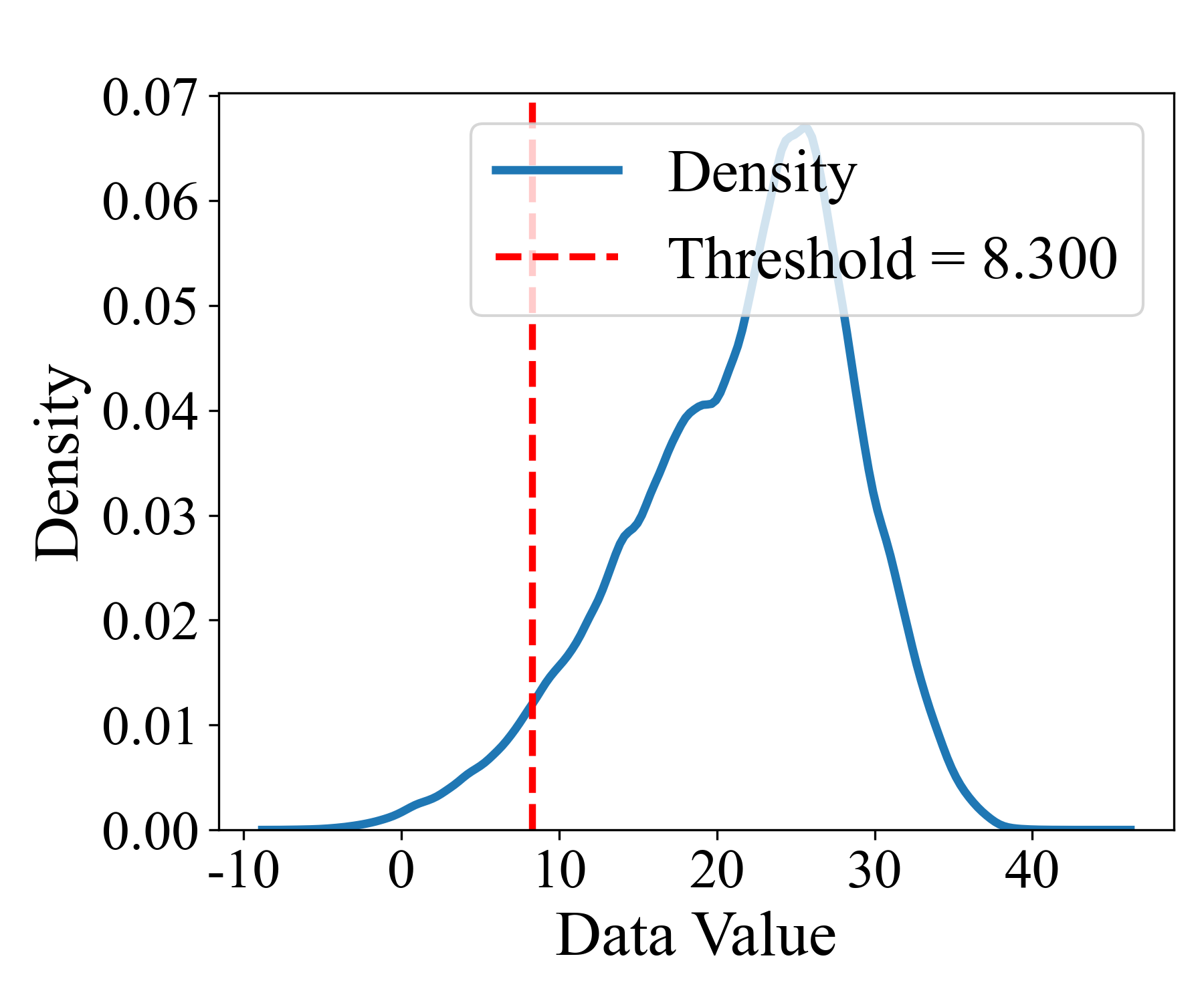}
        \caption{PDF of Wea-Temp Dataset}
        \label{fig:temperature_data_pdf}
    \end{subfigure}
    \hspace{0.05\textwidth}
    \begin{subfigure}[htbp]{0.40\textwidth}
        \centering
        \includegraphics[width=\linewidth]{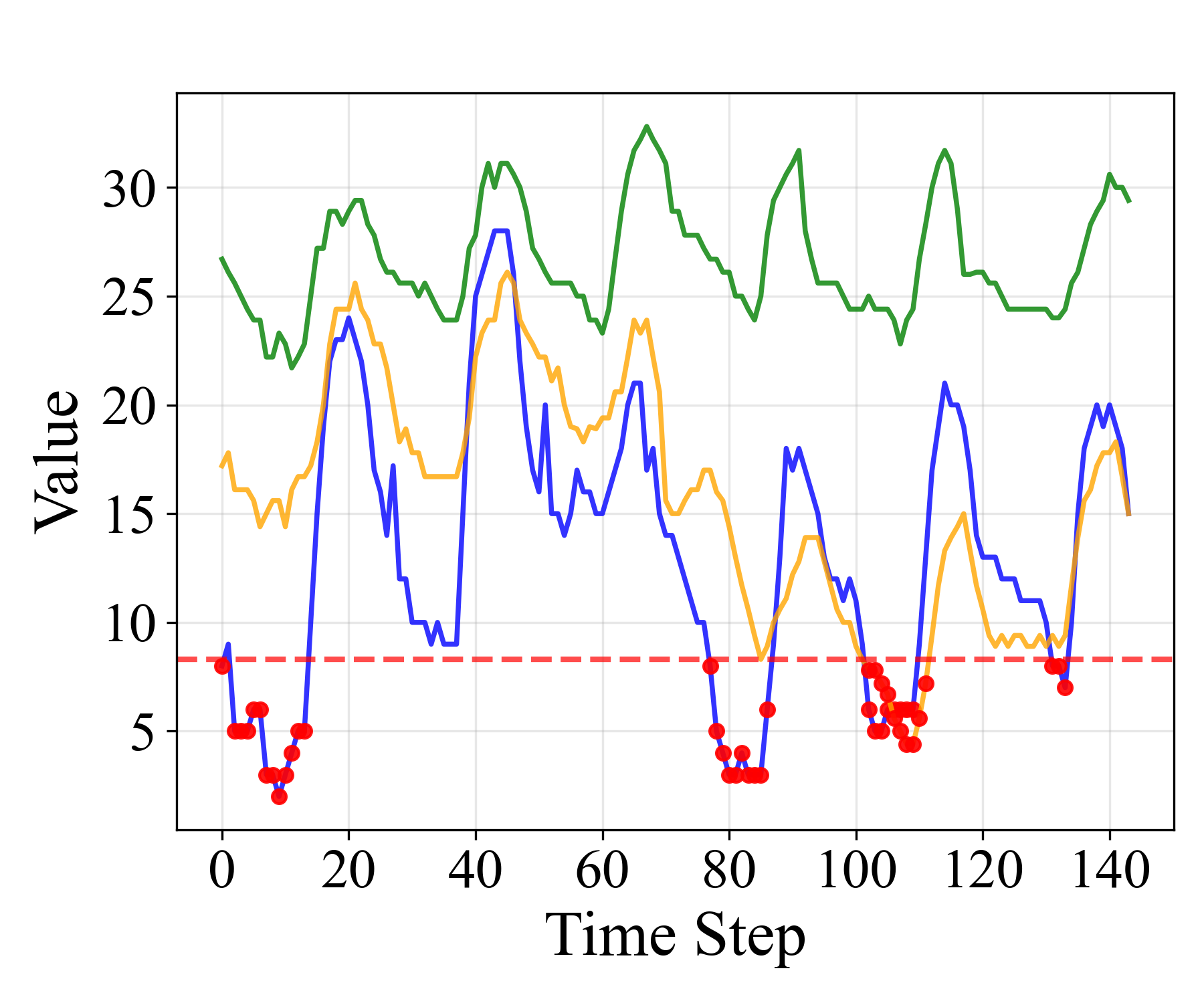}
        \caption{Data Samples in Wea-Temp Dataset}
        \label{fig:temperature_data_sample}
    \end{subfigure}
    \caption{Visualizations for Wea-Temp Dataset}
    \label{fig:two_panel}
\end{figure*}

    \item \textbf{Wea-Prec}: We use the \emph{Daily Weather Data} provided by Dewey~\cite{customweather2022daily}. In this dataset, an extreme event refers to an extreme rainfall event, characterized by a consecutive period with daily precipitation above a predefined threshold. We select 88 observation stations across Florida and record the daily total precipitation at each station. Daily precipitation is used instead of hourly precipitation because the hourly records contain substantial missing values. The data are partitioned into 90-day samples. Using records from January 1, 2023 to December 31, 2025, we construct a dataset of shape \((1056,90,1)\). The extreme threshold is set to \(1015\), corresponding to the top 10\% of the data. The PDF and sample visualizations are shown in Fig.~\ref{fig:precipitation_data_pdf} and Fig.~\ref{fig:precipitation_data_sample}, respectively.

    \begin{figure*}[htbp]
    \centering
    \begin{subfigure}[htbp]{0.40\textwidth}
        \centering
        \includegraphics[width=\linewidth]{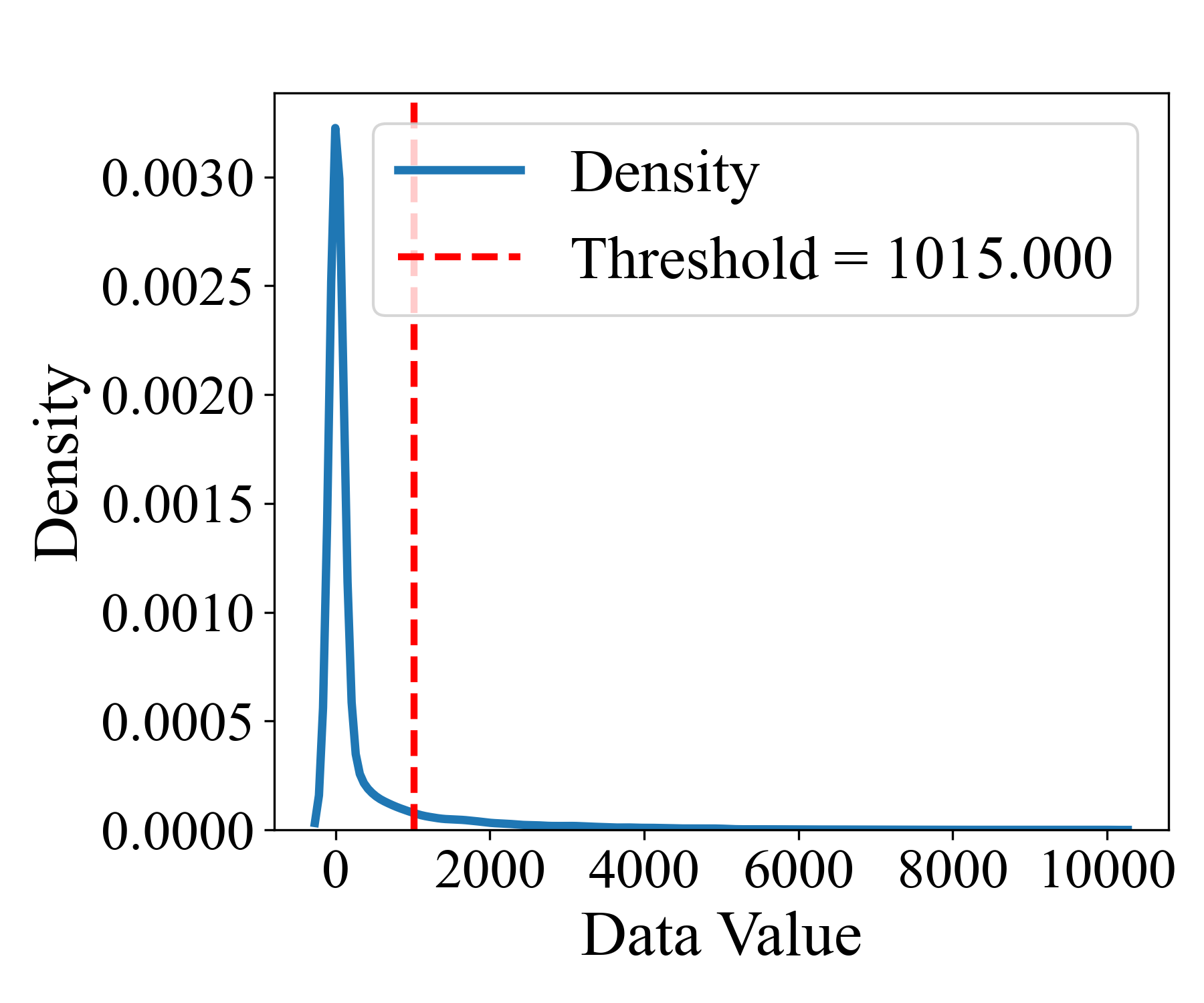}
        \caption{PDF of Wea-Prec Dataset}
        \label{fig:precipitation_data_pdf}
    \end{subfigure}
    \hspace{0.05\textwidth}
    \begin{subfigure}[htbp]{0.40\textwidth}
        \centering
        \includegraphics[width=\linewidth]{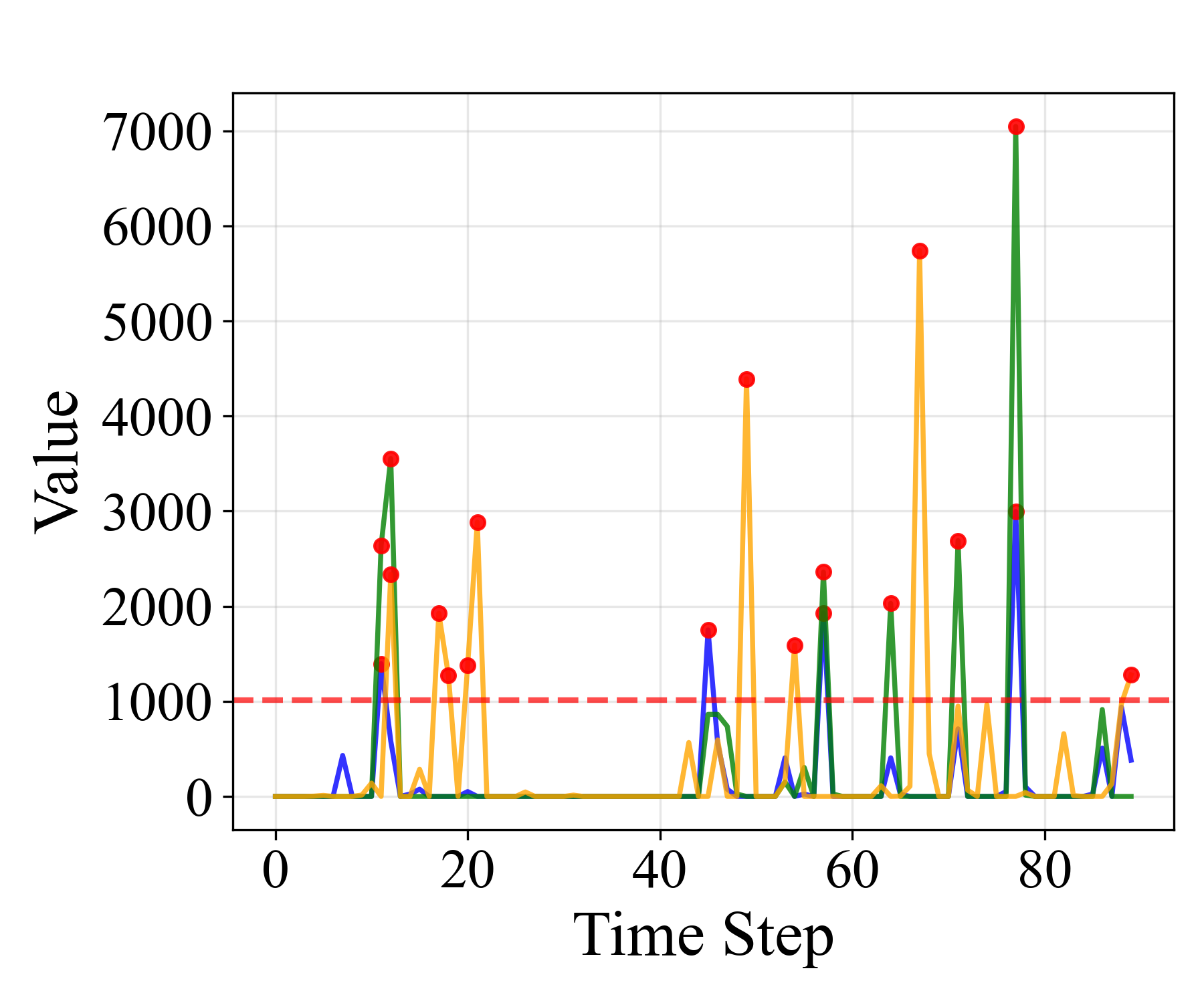}
        \caption{Data Samples in Wea-Prec Dataset}
        \label{fig:precipitation_data_sample}
    \end{subfigure}
    \caption{Visualizations for Wea-Prec Dataset}
    \label{fig:two_panel}
\end{figure*}
    
    \item \textbf{LTST-ECG}: We use the \emph{Long-Term ST Database} (LTST) from PhysioNet \cite{jager2003ltstdb,goldberger2000physionet}, which contains long-duration ambulatory ECG recordings with expert-provided ST-related annotations. In this dataset, we first identify abnormal ST-related intervals based on the provided annotations, and then define extreme events within these intervals as consecutive periods during which the signal values remain above a predefined threshold. During preprocessing, we load the raw ECG signals and annotation files, and extract the first ECG lead as a one-dimensional time series. The annotations are used to identify abnormal ST-related intervals, which provide clinically meaningful references for locating informative regions. Based on these regions, we segment the signal into fixed-length windows of 250 time steps, corresponding to one second at 250 Hz, and generate candidate samples using a sliding-window strategy. To obtain representative windows with stronger signal intensity, we rank candidates by the mean absolute signal amplitude within each window and randomly select 1,000 samples from the high-score set, yielding a final dataset of shape \((1000,250,1)\). The extreme threshold is set to \(0.3205\), corresponding to the top 10\% of the data. The PDF and sample visualizations are shown in Fig.~\ref{fig:ECG_data_pdf} and Fig.~\ref{fig:ECG_data_sample}, respectively.

    \begin{figure*}[htbp]
    \centering
    \begin{subfigure}[htbp]{0.40\textwidth}
        \centering
        \includegraphics[width=\linewidth]{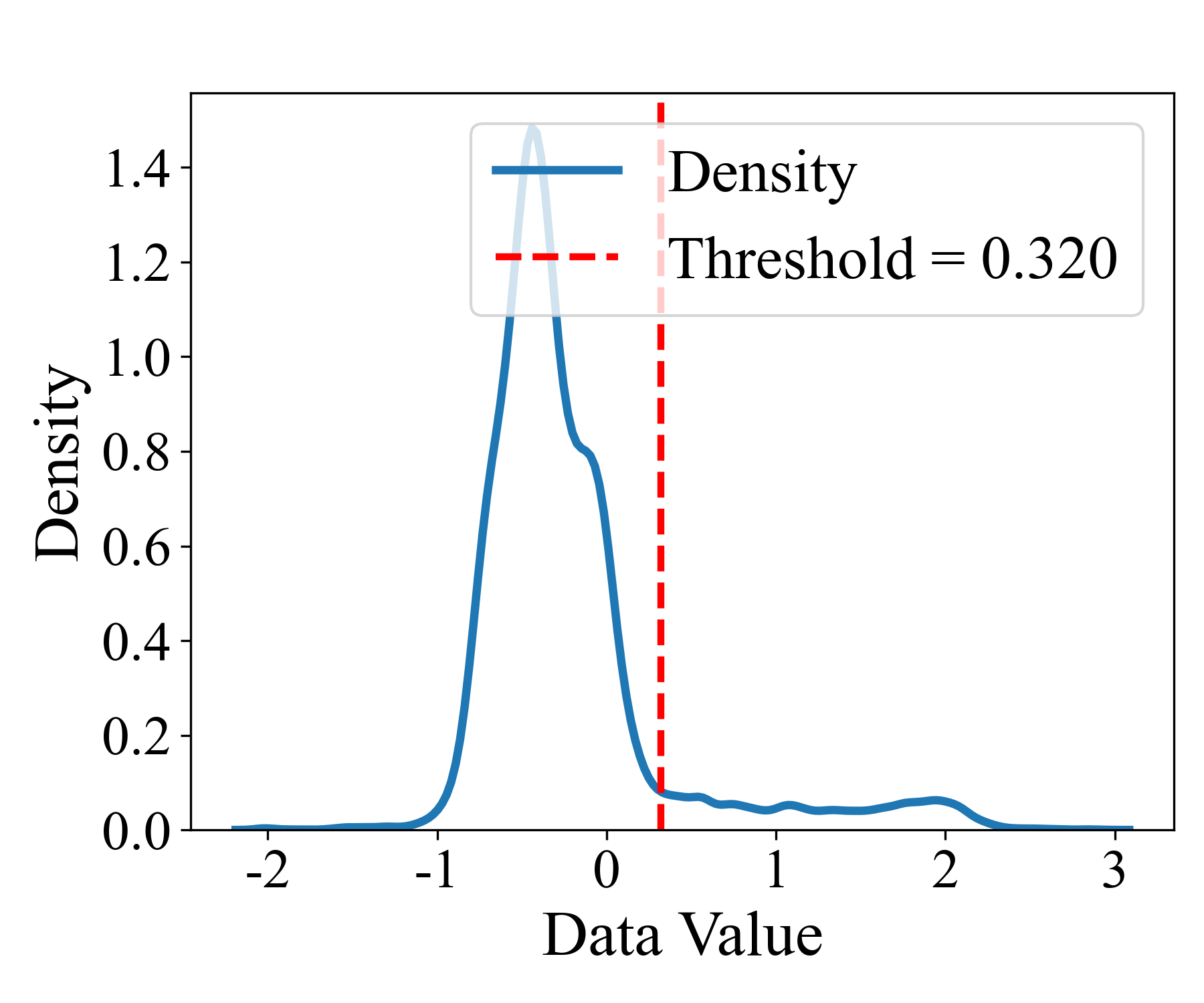}
        \caption{PDF of LTST-ECG Dataset}
        \label{fig:ECG_data_pdf}
    \end{subfigure}
    \hspace{0.05\textwidth}
    \begin{subfigure}[htbp]{0.40\textwidth}
        \centering
        \includegraphics[width=\linewidth]{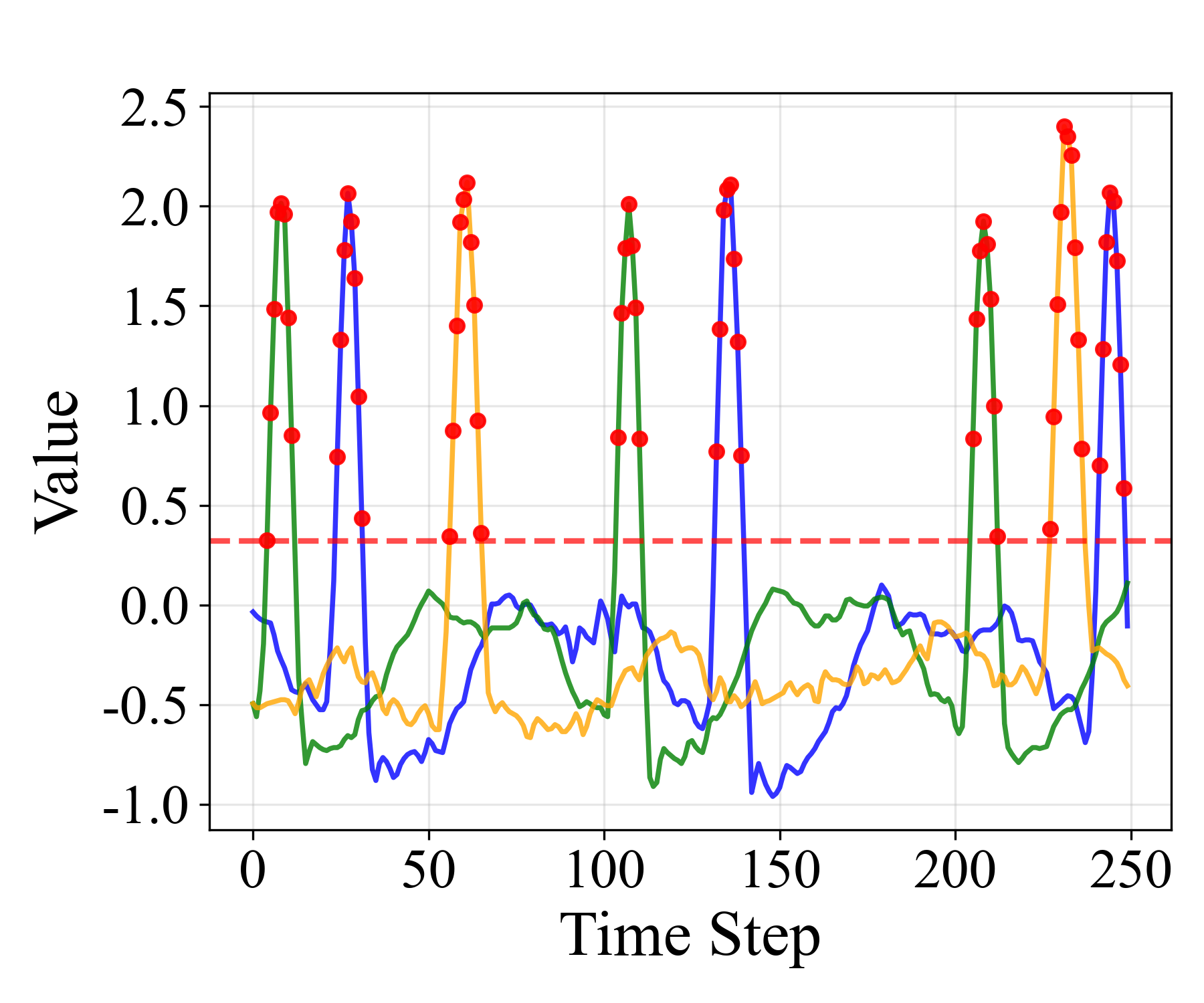}
        \caption{Data Samples in LTST-ECG Dataset}
        \label{fig:ECG_data_sample}
    \end{subfigure}
    \caption{Visualizations for LTST-ECG Dataset}
    \label{fig:two_panel}
\end{figure*}
    \item \textbf{HH-Power }: We use the \emph{UCI Individual Household Electric Power Consumption Dataset} \cite{hebrail2006individual}, which records minute-level household electricity usage over an extended period. In this dataset, an extreme event refers to a high-consumption electricity event, characterized by a consecutive period during which the power values remain above a predefined threshold. We focus on the \texttt{Global\_active\_power} variable and select a continuous three-year period from January 1, 2007 to December 31, 2009. After temporal alignment and missing-value imputation, the data are resampled at 10-minute intervals, so that each day contains 144 observations. We then group the data into daily samples and retain only complete sequences, resulting in a dataset of shape \((1096,144,1)\). The extreme threshold is set to \(2.4905\,\mathrm{kW}\), corresponding to the top 10\% of the data. This dataset provides a realistic benchmark for modeling daily electricity consumption patterns and their extreme behaviors in a single-household setting. The PDF and sample visualizations are shown in Fig.~\ref{fig:Electricity_data_pdf} and Fig.~\ref{fig:Electricity_data_sample}, respectively.
    
    \item \textbf{PEMS-SF}: We use the \emph{PEMS-SF} dataset derived from the California Performance Measurement System (PeMS)~\cite{cuturi2011pems,caltranspems}, which records freeway traffic occupancy rates from monitoring stations in the San Francisco Bay Area. In this dataset, an extreme event refers to a high-occupancy traffic event, characterized by a consecutive period during which the occupancy values remain above a predefined threshold. The occupancy value ranges from 0 to 1 and indicates the proportion of time that a sensor is occupied by vehicles within a given interval. The data are sampled every 10 minutes, so each day is represented by a sequence of length 144. Following the provided train-test split, we combine the \texttt{PEMS\_train} and \texttt{PEMS\_test} sets, obtaining 440 daily sequences in total. To build our benchmark, we focus on three nearby monitoring stations and extract their daily occupancy sequences separately, treating each station-day pair as one sample. This results in a dataset of shape \((1320,144,1)\). The extreme threshold is set to \(0.097\), corresponding to the top 10\% of the occupancy values. This construction captures realistic daily traffic dynamics while also reflecting local spatial variation across nearby sensor locations. The PDF and sample visualizations are shown in Fig.~\ref{fig:Transportation_data_pdf} and Fig.~\ref{fig:Transportation_data_sample}, respectively.
\end{itemize}

\begin{figure*}[htbp]
    \centering
    \begin{subfigure}[htbp]{0.40\textwidth}
        \centering
        \includegraphics[width=\linewidth]{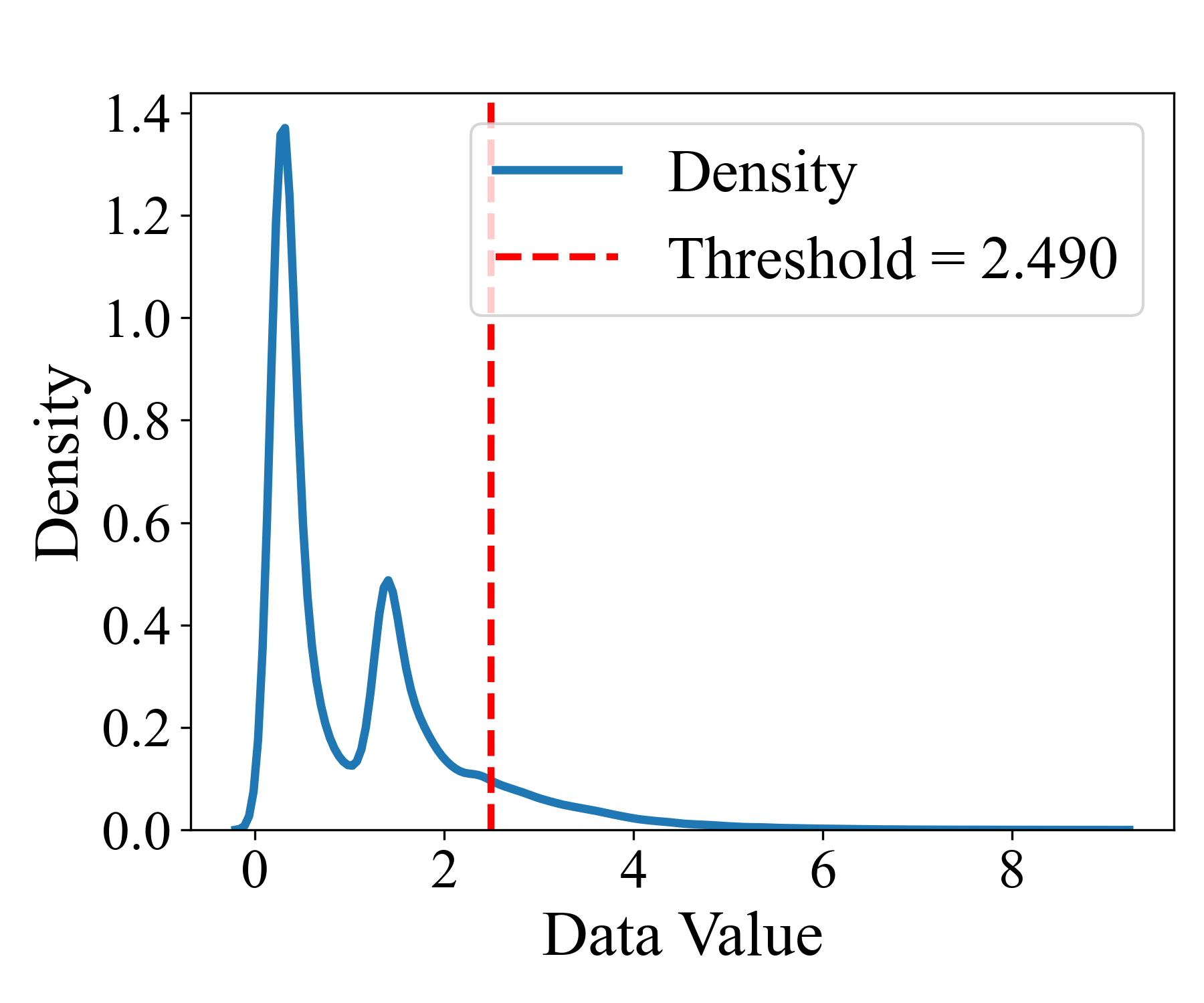}
        \caption{PDF of HH-Power Dataset}
        \label{fig:Electricity_data_pdf}
    \end{subfigure}
    \hspace{0.05\textwidth}
    \begin{subfigure}[htbp]{0.40\textwidth}
        \centering
        \includegraphics[width=\linewidth]{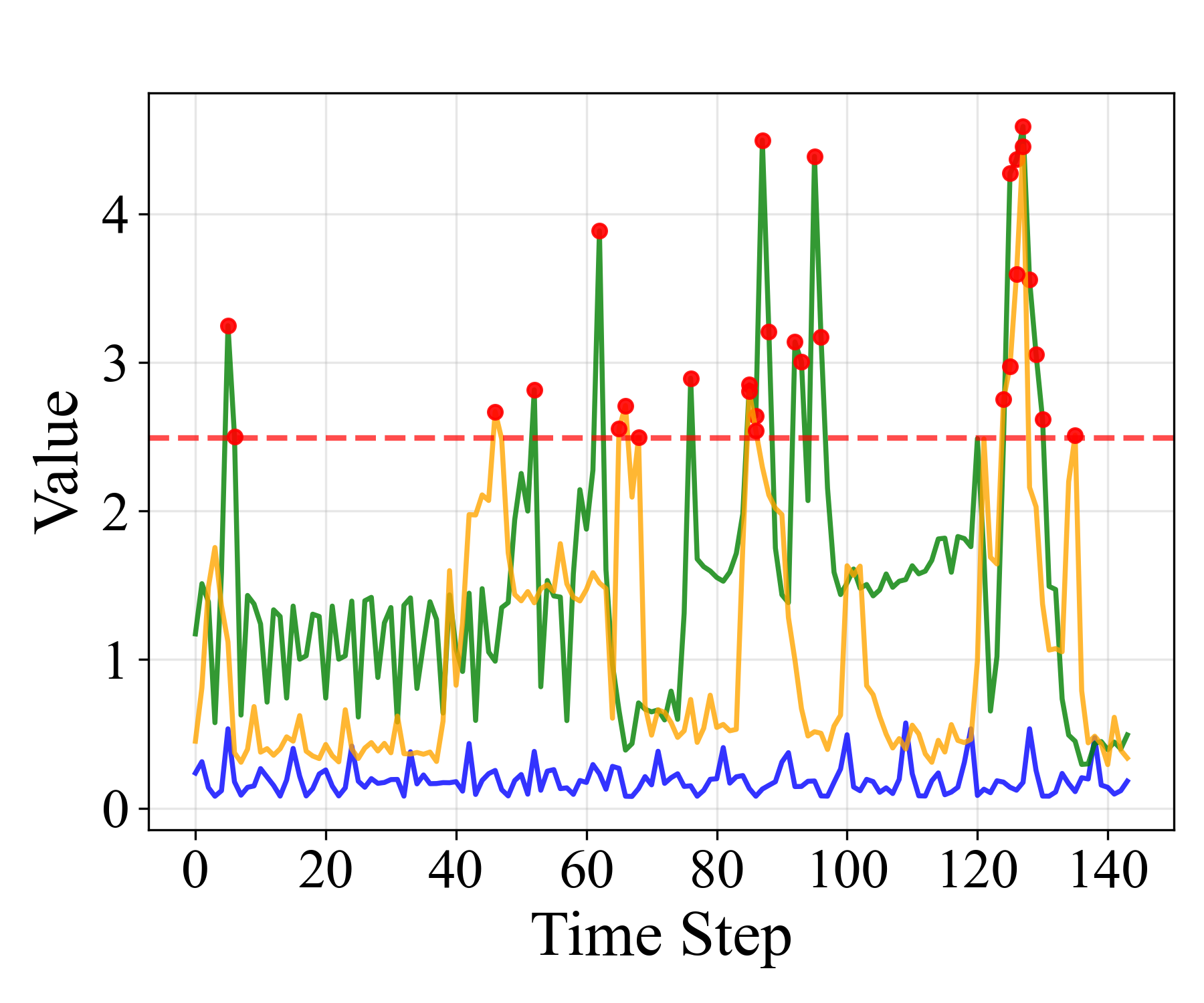}
        \caption{Data Samples in HH-Power Dataset}
        \label{fig:Electricity_data_sample}
    \end{subfigure}
    \caption{Visualizations for HH-Power Dataset}
    \label{fig:two_panel}
\end{figure*}

\begin{figure*}[htbp]
    \centering
    \begin{subfigure}[htbp]{0.40\textwidth}
        \centering
        \includegraphics[width=\linewidth]{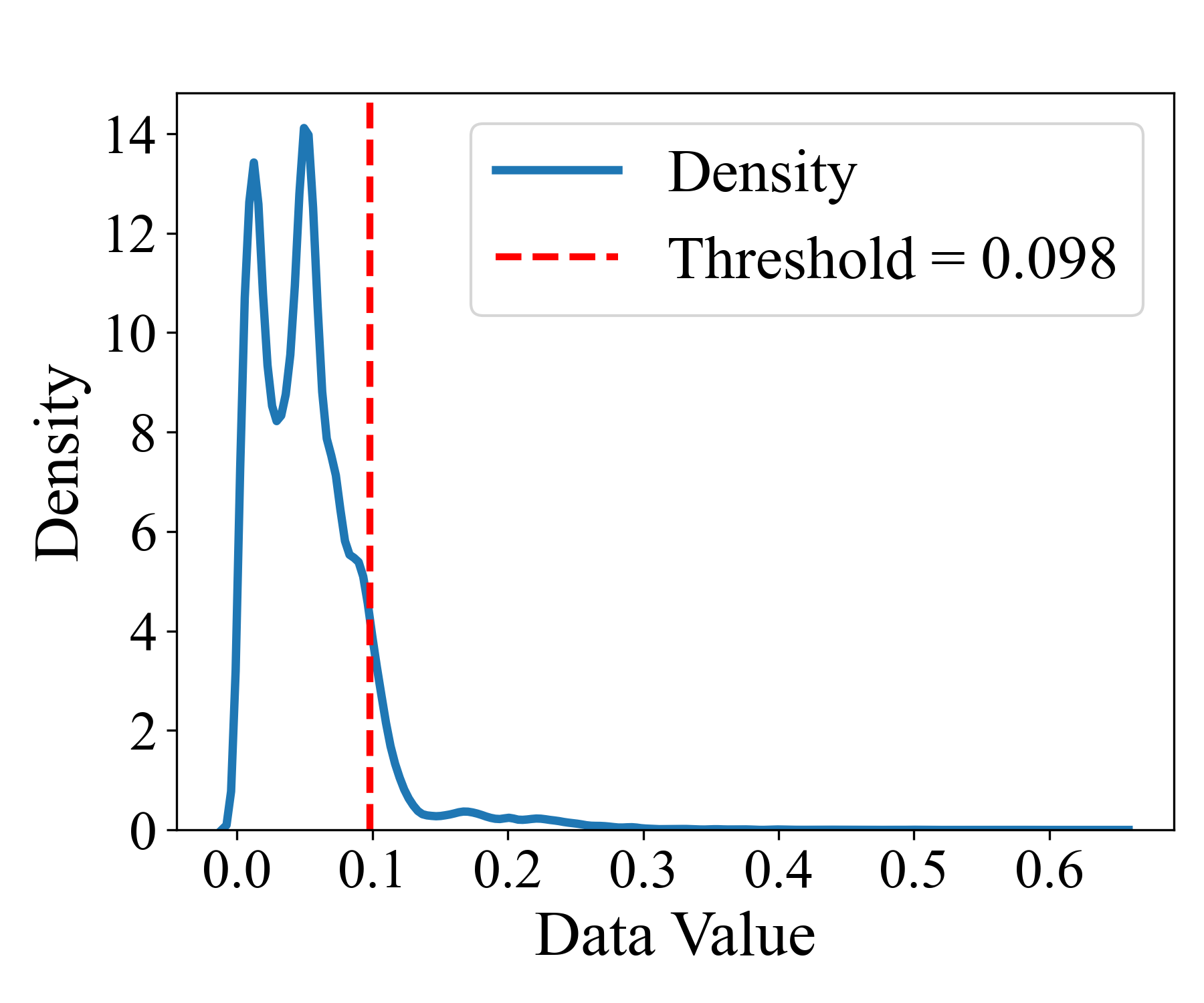}
        \caption{PDF of PEMS-SF Dataset}
        \label{fig:Transportation_data_pdf}
    \end{subfigure}
    \hspace{0.05\textwidth}
    \begin{subfigure}[htbp]{0.40\textwidth}
        \centering
        \includegraphics[width=\linewidth]{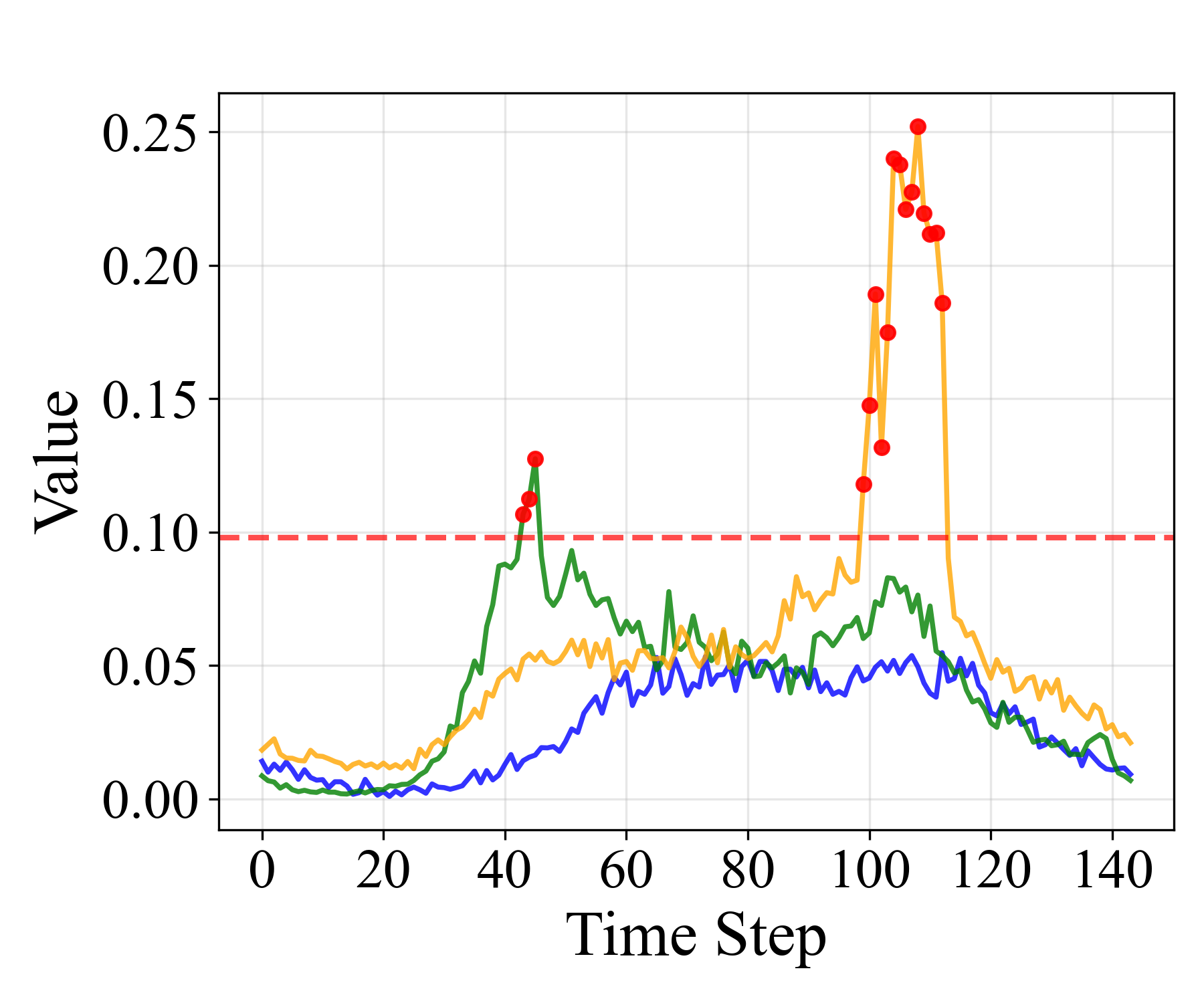}
        \caption{Data Samples in PEMS-SF Dataset}
        \label{fig:Transportation_data_sample}
    \end{subfigure}
    \caption{Visualizations for PEMS-SF Dataset}
    \label{fig:two_panel}
\end{figure*}

\FloatBarrier
\subsection{Descriptions of Metrics} \label{Appendix:Metric}

To comprehensively evaluate the performance of our generation model, we utilize seventeen distinct metrics assessed from two primary perspectives: overall generation quality and extreme-event generation quality. Below are the detailed definitions and implementations of each metric.

For \textbf{overall generation quality}, we employ eight metrics to assess the general characteristics of the generated time series, which are further divided into fidelity metrics and downstream utility metrics. Following TSGBench~\cite{ang2023tsgbench}, we use five fidelity metrics to measure the distributional and temporal similarity between the generated and real data:
\begin{itemize}
    \item \textbf{Wasserstein Distance (Wass.)~\cite{villani2009optimal}:} Measures the distance between two probability distributions by quantifying the minimum cost to transform $P$ into $Q$. We use the 1-Wasserstein distance:
    \begin{equation}
    W_1(P, Q) =
    \inf_{\gamma \in \Pi(P, Q)}
    \mathbb{E}_{(\mathbf{x}, \mathbf{y}) \sim \gamma}
    \left[\|\mathbf{x} - \mathbf{y}\|\right],
    \end{equation}
    where $\Pi(P, Q)$ denotes the set of all joint distributions with marginals $P$ and $Q$.
    \item \textbf{Kolmogorov-Smirnov Test (KS)~\cite{smirnov1939estimation}:} Computes the maximum absolute distance between the empirical cumulative distribution functions (CDFs) of the real dataset ($F_{\text{real}}$) and the generated dataset ($F_{\text{gen}}$):
    \begin{equation}
    D_{\mathrm{KS}} =
    \sup_{x}
    \left|F_{\text{real}}(x) - F_{\text{gen}}(x)\right|.
    \end{equation}
    \item \textbf{Jensen-Shannon Divergence (JS)~\cite{lin2002divergence}:} A symmetric and smoothed version of the Kullback-Leibler (KL) divergence, defined as:
    \begin{equation}
    JS(P \parallel Q) =
    \frac{1}{2} D_{\mathrm{KL}}(P \parallel M)
    +
    \frac{1}{2} D_{\mathrm{KL}}(Q \parallel M),
    \end{equation}
    where $M = \frac{1}{2}(P+Q)$.
\item \textbf{Maximum Mean Discrepancy (MMD)~\cite{gretton2012kernel}:} Evaluates the distance between distributions mapped into a Reproducing Kernel Hilbert Space (RKHS) using a kernel $k(\cdot, \cdot)$:
    \begin{equation}
    \begin{aligned}
    \mathrm{MMD}^2(P, Q)
    =&\ \mathbb{E}_{\mathbf{x}, \mathbf{x}' \sim P}
    \left[k(\mathbf{x}, \mathbf{x}')\right]
    +
    \mathbb{E}_{\mathbf{y}, \mathbf{y}' \sim Q}
    \left[k(\mathbf{y}, \mathbf{y}')\right] \\
    &\ - 2\mathbb{E}_{\mathbf{x} \sim P, \mathbf{y} \sim Q}
    \left[k(\mathbf{x}, \mathbf{y})\right].
    \end{aligned}
    \end{equation}
    \item \textbf{Autocorrelation Discrepancy (ACD)~\cite{mirylenka2017data}:} Assesses the preservation of temporal dynamics by computing the absolute difference between the autocorrelation functions:
    \begin{equation}
    \mathrm{ACD} =
    \frac{1}{L}
    \sum_{l=1}^{L}
    \left|\rho_{\text{real}}(l) - \rho_{\text{gen}}(l)\right|,
    \end{equation}
    where $\rho(l)$ is the autocorrelation at lag $l$.
\end{itemize}

Furthermore, we evaluate the general downstream utility of the generated data through three Train-on-Synthetic, Test-on-Real (TSTR) tasks:
\begin{itemize}
    \item \textbf{Context-FID (\cite{jeha2022psa}):} Measures the Fréchet distance between the contextual representations of real and generated time series. We first train a TS2Vec~\cite{yue2022ts2vec} encoder on the real data and then encode each real and generated sequence into a full-series representation. Let $\mu_r, \Sigma_r$ and $\mu_g, \Sigma_g$ denote the mean vectors and covariance matrices of the real and generated representations, respectively:
    \begin{equation}
    \mathrm{Context\text{-}FID} =
    \|\mu_r - \mu_g\|_2^2
    +
    \mathrm{Tr}
    \left(
    \Sigma_r + \Sigma_g
    - 2(\Sigma_r \Sigma_g)^{1/2}
    \right).
    \end{equation}
    \item \textbf{Predictive Score (Pred~\cite{yoon2019time}):} 
    Evaluates the prediction utility of generated time series. A forecasting model $f_\theta$ is trained only on generated data and tested on real data. For each channel $d$ and prediction horizon $h$, the model predicts the future segment from a fixed-length lookback window:
    \begin{equation}
    \hat{\mathbf{x}}_{t:t+h,d}
    =
    f_\theta(\mathbf{x}_{t-L_b:t,d}).
    \end{equation}
    The score is computed as the average prediction error on real data:
    \begin{equation}
    \mathrm{Pred}
    =
    \frac{1}{|\mathcal{D}|}
    \sum_{(d,h)}
    \frac{1}{m_{d,h}h}
    \sum_{i=1}^{m_{d,h}}
    \sum_{j=1}^{h}
    \left|
    \hat{x}_{i,j}^{(d,h)} - x_{i,j}^{(d,h)}
    \right|,
    \end{equation}
    where $m_{d,h}$ is the number of valid test windows for channel $d$ and horizon $h$. Lower values indicate that the generated data better preserves predictive temporal dynamics.
    \item \textbf{Reconstructive Score (Recon~\cite{malhotra2016lstm}):} 
    Evaluates reconstruction-based utility. An autoencoder, consisting of an encoder $E_\phi$ and a decoder $D_\psi$, is trained on generated data and tested on real data. For each real sequence $\mathbf{x}^{(i)}\in\mathbb{R}^{T\times D}$, the reconstruction is obtained as
    \begin{equation}
    \hat{\mathbf{x}}^{(i)}
    =
    D_\psi\left(E_\phi(\mathbf{x}^{(i)})\right).
    \end{equation}
    The score is the mean squared reconstruction error over all test sequences, time steps, and channels:
    \begin{equation}
    \mathrm{Recon}
    =
    \frac{1}{NTD}
    \sum_{i=1}^{N}
    \sum_{t=1}^{T}
    \sum_{d=1}^{D}
    \left(
    \hat{x}^{(i)}_{t,d}
    -
    x^{(i)}_{t,d}
    \right)^2.
    \end{equation}
\end{itemize}

For \textbf{extreme-event generation quality}, we specifically evaluate the model's ability to capture and synthesize extreme phenomena. Let $\mathcal{E}_{\text{real}}$ and $\mathcal{E}_{\text{gen}}$ denote the sets of extracted extreme events from the real and generated time series, respectively. Here, $P(\cdot)$ denotes the corresponding distribution. We propose seven event-level fidelity metrics:
\begin{itemize}
    \item \textbf{EMean-W1:} The Wasserstein-1 distance between the distributions of event-level mean values from real and generated extreme events:
    \begin{equation}
    \mathrm{EMean\text{-}W1}
    =
    W_1\left(
    P(\mathrm{Mean}(\mathcal{E}_{\text{real}})),
    P(\mathrm{Mean}(\mathcal{E}_{\text{gen}}))
    \right),
    \end{equation}
    where $\mathrm{Mean}(\mathcal{E})$ denotes the collection of mean values computed over the temporal span of each event in $\mathcal{E}$.
    \item \textbf{ECount-Diff:} The absolute difference between the numbers of extracted extreme events in real and generated time series:
    \begin{equation}
    \mathrm{ECount\text{-}Diff}
    =
    \left|
    |\mathcal{E}_{\text{real}}|
    -
    |\mathcal{E}_{\text{gen}}|
    \right|.
    \end{equation}
    where $|\mathcal{E}|$ denotes the number of extracted extreme events in the event set $\mathcal{E}$.
    \item \textbf{EDur-W1:} The Wasserstein-1 distance between the duration distributions of real and generated extreme events:
    \begin{equation}
    \mathrm{EDur\text{-}W1}
    =
    W_1\left(
    P(\mathrm{Dur}(\mathcal{E}_{\text{real}})),
    P(\mathrm{Dur}(\mathcal{E}_{\text{gen}}))
    \right),
    \end{equation}
    where $\mathrm{Dur}(\mathcal{E})$ denotes the collection of durations of all events in $\mathcal{E}$.
    \item \textbf{EPeak-W1:} The Wasserstein-1 distance between the peak-value distributions of real and generated extreme events:
    \begin{equation}
    \mathrm{EPeak\text{-}W1}
    =
    W_1\left(
    P(\mathrm{Peak}(\mathcal{E}_{\text{real}})),
    P(\mathrm{Peak}(\mathcal{E}_{\text{gen}}))
    \right),
    \end{equation}
    where $\mathrm{Peak}(\mathcal{E})$ denotes the collection of peak values of all events in $\mathcal{E}$.
    \item \textbf{EWass (\cite{villani2009optimal}):} 
    The Wasserstein-1 distance between the distributions of pooled extreme values from real and generated extreme events:
    \begin{equation}
    \mathrm{EWass}
    =
    W_1\left(
    P(\mathcal{V}(\mathcal{E}_{\text{real}})),
    P(\mathcal{V}(\mathcal{E}_{\text{gen}}))
    \right),
    \end{equation}
    where $\mathcal{V}(\mathcal{E})$ denotes the collection of all values at the extreme time points within the event set $\mathcal{E}$.
    \item \textbf{EJS (\cite{lin2002divergence}):} 
    The Jensen-Shannon distance between the distributions of pooled extreme values from real and generated extreme events:
    \begin{equation}
    \mathrm{EJS}
    =
    \sqrt{
    JS\left(
    P(\mathcal{V}(\mathcal{E}_{\text{real}}))
    \parallel
    P(\mathcal{V}(\mathcal{E}_{\text{gen}}))
    \right)
    }.
    \end{equation}
    \item \textbf{EMMD (\cite{gretton2012kernel}):} 
    The RBF-kernel Maximum Mean Discrepancy between real and generated extreme-event snippet distributions. Each snippet is a fixed-length segment centered at the peak of an identified extreme event. The final score further includes a count penalty for the mismatch in the number of extracted snippets:
    \begin{equation}
    \mathrm{EMMD}
    =
    \mathrm{MMD}_{\mathrm{RBF}}
    \left(
    P(\mathcal{S}_{\text{real}}),
    P(\mathcal{S}_{\text{gen}})
    \right)
    +
    \lambda_{\mathrm{cnt}}\Delta_{\mathrm{cnt}}.
    \end{equation}
\end{itemize}

Finally, we assess the extreme-oriented downstream utility to evaluate the practical value of the generated extreme events for critical downstream tasks:
\begin{itemize}
\item \textbf{EPred:} 
Evaluates whether generated data can support prediction under extreme-event contexts. A forecasting model is trained on generated windows whose prediction target corresponds to an extreme time point, and is then tested on real windows selected by the same rule. Specifically, the model predicts future values from 1 to 5 steps ahead using a fixed-length lookback window:
\begin{equation}
\hat{x}_{t+h,d}
=
f_\theta^{(h)}(\mathbf{x}_{t-L_b:t,d}),
\quad h=1,\ldots,5,
\end{equation}
where $d$ denotes the target channel, $L_b$ is the lookback length, and $h$ denotes the prediction horizon. The score is computed as the average prediction error over real extreme-target windows and prediction horizons:
\begin{equation}
\mathrm{EPred}
=
\frac{1}{5|\mathcal{W}_{\mathrm{ext}}|}
\sum_{h=1}^{5}
\sum_{(t,d)\in \mathcal{W}_{\mathrm{ext}}}
\ell\left(
\hat{x}_{t+h,d},
x_{t+h,d}
\right),
\end{equation}
where $\mathcal{W}_{\mathrm{ext}}$ denotes the set of real test windows whose prediction targets are extreme time points, and $\ell(\cdot,\cdot)$ is the prediction loss, such as MAE or MSE. Lower values indicate that the generated data better preserves local predictive dynamics around extreme events.
\item \textbf{ERecon:} 
Evaluates whether generated data preserves reconstructive patterns around extreme-event regions. An autoencoder, consisting of an encoder $E_\phi$ and a decoder $D_\psi$, is trained on generated windows that contain at least one extreme time point. It is then tested on real sequences, with the reconstruction error evaluated only around real extreme-event regions. For each real sequence, the reconstruction is obtained as:
\begin{equation}
\hat{\mathbf{x}}^{(i)}
=
D_\psi\left(E_\phi(\mathbf{x}^{(i)})\right).
\end{equation}
The score is computed as the average reconstruction error over real extreme-event regions:
\begin{equation}
\mathrm{ERecon}
=
\frac{1}{|\mathcal{E}_{\text{real}}|}
\sum_{e \in \mathcal{E}_{\text{real}}}
\mathrm{ReconErr}(e),
\end{equation}
where $\mathrm{ReconErr}(e)$ denotes the mean squared reconstruction error over the temporal region of extreme event $e$. Lower values indicate better reconstruction of extreme-event patterns.
\end{itemize}

\subsection{Baseline Description} \label{Appendix:Baseline}
In E4GEN, we consider nine baseline methods from six different categories. An overview of these baselines is provided in Table~\ref{tab:baseline_statistics}. Their implementations with detailed parameter settings are included in the \texttt{Baseline} directory of our codebase. The details are summarized as follows:

\begin{table*}[htbp]
\centering
\caption{Overview of baseline methods compared in this paper.}
\label{tab:baseline_statistics}
\renewcommand{\arraystretch}{1.12}
\footnotesize
\begin{tabular}{llll}
\toprule
Baseline & Category & Venue & Link \\
\midrule
TimeGAN~\cite{yoon2019time} & GAN & NeurIPS'19 &
\href{https://github.com/AlexanderVNikitin/tsgm/blob/main/tsgm/models/timeGAN.py}{Official Implementation} \\

\arrayrulecolor{gray!60}\cmidrule(lr){1-4}\arrayrulecolor{black}

TimeVAE~\cite{desai2021timevae} & VAE & ArXiv'21 & 
\href{https://github.com/wangyz1999/timeVAE-pytorch}{Official Implementation} \\
koVAE~\cite{naimangenerative} & VAE & ICLR'24 & 
\href{https://github.com/azencot-group/KoVAE}{Official Implementation} \\

\arrayrulecolor{gray!60}\cmidrule(lr){1-4}\arrayrulecolor{black}

F-Flow~\cite{alaa2021generative} & Flow & ICLR'21 & 
\href{https://github.com/ahmedmalaa/Fourier-flows}{Official Implementation} \\

\arrayrulecolor{gray!60}\cmidrule(lr){1-4}\arrayrulecolor{black}

DiffWave~\cite{kongdiffwave} & Diffusion & ICLR'21 & 
\href{https://github.com/lmnt-com/diffwave}{Official Implementation} \\
Diffusion-TS~\cite{yuan2024diffusionts} & Diffusion & ICLR'24 & 
\href{https://github.com/Y-debug-sys/Diffusion-TS}{Official Implementation} \\

\arrayrulecolor{gray!60}\cmidrule(lr){1-4}\arrayrulecolor{black}

SDForger~\cite{rousseau2025forging} & LLM & NeurIPS'25 & 
\href{https://github.com/IBM/fms-dgt/tree/main/fms_dgt/public/databuilders/time_series}{Official Implementation} \\

\arrayrulecolor{gray!60}\cmidrule(lr){1-4}\arrayrulecolor{black}

FIDE~\cite{galib2024fide} & Extreme-Aware & NeurIPS'24 & 
\href{https://github.com/galib19/FIDE}{Official Implementation} \\
HeavyDiff~\cite{pandey2025heavy} & Extreme-Aware & ICLR'25 & 
\href{https://github.com/Y-debug-sys/Diffusion-TS}{Official Implementation} \\

\bottomrule
\end{tabular}
\renewcommand{\arraystretch}{1.0}
\end{table*}

\begin{itemize}
    \item \textbf{TimeGAN}~\cite{yoon2019time}: For the TimeGAN baseline, we adopt the implementation provided by the TSGM library~\cite{nikitin2024tsgm}, a unified framework for synthetic time-series generation and evaluation. This choice is mainly motivated by reproducibility and engineering compatibility, since the original TimeGAN codebase is built in an early TensorFlow style and imposes strict constraints on software versions, dependencies, and training pipeline organization.
    
    In our implementation, the TimeGAN model is built using the TSGM library through \texttt{TimeGAN(...)}. The sequence length $T$ and feature dimension $C$ are automatically inferred from the shape of the input data. The model adopts a GRU-based architecture with hidden dimension 24, 3 recurrent layers, batch size 128, and $\gamma=1.0$. After initialization, the model is configured through the \texttt{compile} interface, where separate Adam optimizers with learning rate $10^{-3}$ are assigned to the discriminator, generator, embedding module, supervised component, and autoencoder component. The loss design follows the standard TimeGAN formulation, using mean squared error for embedding and reconstruction objectives and binary cross-entropy for adversarial discrimination.
    
    TSGM encapsulates the multi-stage optimization strategy of TimeGAN, and the actual training process proceeds through embedding network training, supervised training, and joint adversarial training for 100 epochs. Once training is completed, the generator weights are stored as the reusable component for later synthesis. During generation, random noise sampled from a uniform distribution is fed into the generator batch by batch, which supports arbitrary output sample sizes without requiring exact batch divisibility. For the sample-only setting, the generator graph is first built through a zero-epoch initialization step before loading pretrained weights, after which synthetic sequences are produced directly. The generated outputs are finally mapped back to the original scale through inverse min-max normalization and saved in the same shape as the input series.

    \item \textbf{TimeVAE}~\cite{desai2021timevae}: For the TimeVAE baseline, we use a PyTorch reimplementation of the original TimeVAE model proposed in~\cite{desai2021timevae}. The corresponding code link is provided in Table~\ref{tab:baseline_statistics}. Similar to TimeGAN, we do not use the original TensorFlow implementation and instead adopt a more recent PyTorch version in our experiments.
    
    In our implementation, TimeVAE is used as a variational autoencoder for time-series generation. The encoder maps each input sequence into a latent representation through stacked 1D convolutional layers, while the decoder reconstructs the sequence through an additive decomposition including a global level term, a polynomial trend term, and a residual branch for non-structured variations. The model is trained with a weighted VAE objective consisting of the reconstruction loss and the standard KL regularization. During sampling, latent variables are drawn from a standard Gaussian prior and decoded into synthetic sequences, which are then transformed back to the original data scale.
    
    For the parameter settings, each dataset is organized as sliding-window samples of shape \((N,T,C)\) in \texttt{.npz} format. We use a 90\%/10\% train-validation split and fit MinMax normalization only on the training set. The latent dimension is set to 8, the hidden dimensions are \([50,100,200]\), the reconstruction weight is 3.0, and the batch size is 16. We enable the residual connection and use a second-order polynomial trend term, while disabling the seasonal component. The model is trained for 200 epochs with Adam and a fixed random seed of 123, and the number of generated samples is matched to the size of the training set.

    \item \textbf{koVAE}~\cite{naimangenerative}: For the koVAE baseline, we reproduce the model using the authors' Official Implementation. KoVAE is a variational autoencoder for time-series generation that replaces the conventional static latent prior with a Koopman-inspired dynamical prior. Specifically, it models the latent prior evolution through a linear transition map, so that temporal dynamics in latent space follow a structured linear system. This design enables koVAE to better capture temporal regularities in time series and supports both regular and irregularly sampled data. By combining the VAE framework with Koopman-inspired latent dynamics, koVAE provides a principled approach to realistic time-series generation.
    
    For the model configuration, following the Official Implementation, all channels are independently normalized to \([0,1]\) using Min--Max scaling before training, and the generated samples are mapped back to the original scale afterward. We adopt the regular setting of koVAE, in which both the encoder and decoder are implemented as bidirectional GRUs. The encoder uses 3 layers with hidden dimension 20 and maps inputs to a latent space of dimension 16, while the decoder uses a linear output head with sigmoid activation to match the normalized target space. Batch normalization is enabled by default. The training objective consists of a reconstruction term, a KL regularization term, and a Koopman prior prediction term, with weights \(w_{\mathrm{rec}}=1.0\), \(w_{\mathrm{kl}}=0.007\), and \(w_{\mathrm{pred\_prior}}=0.005\), respectively. The prior prediction loss is computed with 1-step rollout, and the Koopman operator is estimated using the default QR-based solver. For optimization, we train the model with Adam for 100 epochs using a learning rate of \(7\times10^{-4}\), zero weight decay, a batch size of 64, and random seed 10. During generation, the trained model samples latent trajectories and decodes them into synthetic sequences with the same sample size as the training set.

    \item \textbf{F-Flow}~\cite{alaa2021generative}: For the F-Flow baseline, we reproduce the model using the authors' Official Implementation. F-Flow is a normalizing-flow-based model for time-series generation that operates in the frequency domain instead of directly modeling raw sequences in the time domain. Specifically, it first transforms each time series into its Fourier representation, and then learns an explicit likelihood over the resulting spectral coefficients using invertible flows. By modeling data in the spectral domain, F-Flow is better suited to capturing global temporal patterns and periodic structures in time series. This design provides a principled explicit-density framework for realistic time-series generation.
    
    For the model configuration, following the Official Implementation, all channels are independently normalized to \([0,1]\) using Min--Max scaling before training. Each time series is then flattened into a one-dimensional vector and transformed into the frequency domain, where F-Flow performs density estimation over the spectral representation. To satisfy the model requirement for the Fourier transform size, an additional zero is prepended when the flattened sequence length is even. We adopt the default Fourier Flow setting, with 5 flow layers and hidden dimension 200. During training, the spectral coefficients are standardized using dataset-level mean and standard deviation statistics. For optimization, we train the model with Adam for 1000 epochs using a learning rate of \(10^{-3}\), together with an exponential learning rate decay. During generation, samples are first drawn from a standard Gaussian distribution, then mapped through the inverse flow and inverse Fourier transform back to the time domain, and finally reshaped to the original sequence format. We generate the same number of synthetic samples as in the training set for downstream evaluation.

    \item \textbf{DiffWave}~\cite{kongdiffwave}: For the DiffWave baseline, we adapt the authors' Official Implementation from raw speech waveform synthesis to unconditional univariate time-series generation. DiffWave is a diffusion-based generative model that progressively transforms Gaussian noise into realistic sequences through a predefined reverse diffusion process, using a one-dimensional convolutional denoising network as its backbone. Since the original implementation is designed for speech data, we replace the audio-based pipeline with direct \texttt{.npy} loading and use samples of shape \((N,T,1)\) as both input and output. With these changes, DiffWave serves as a diffusion-based baseline for unconditional time-series generation in our benchmark.
    
    For the model configuration, the training data are stored in a single \texttt{.npy} file and normalized using dataset-level mean and standard deviation statistics computed over all values. During training, a random subsequence of length \(T\) is extracted from each series, while shorter sequences are right-padded with zeros before cropping. We retain the original DiffWave backbone, including 30 residual layers, 64 residual channels, a dilation cycle length of 10, and the default diffusion noise schedule with 50 training steps. The model is trained in the unconditional setting with the spectrogram input fixed to \texttt{None}. For optimization, we use Adam with a learning rate of \(2\times10^{-4}\) and an \(L_1\) loss for noise prediction. During generation, samples are initialized from standard Gaussian noise, refined through reverse diffusion, and then transformed back to the original scale using the stored normalization statistics. For downstream evaluation, we generate the same number of synthetic samples as in the training set.

    \item \textbf{Diffusion-TS}~\cite{yuan2024diffusionts}: For the Diffusion-TS baseline, we adopt the official open-source implementation and retain its original diffusion training and sampling procedure. In our codebase, the model is instantiated directly from the YAML configuration and trained on sequences normalized to $[-1,1]$. The denoising network follows the interpretable decomposition design of Diffusion-TS, where the model reconstructs the clean target $x_0$ from noisy inputs and explicitly decomposes the prediction into trend and seasonality components. During training, a timestep is randomly sampled for each batch, noise is injected according to the forward diffusion process, and the network is optimized to recover the target sequence under the corresponding denoising objective. For generation, we use the EMA model to perform the reverse diffusion process and synthesize time series samples.
    
    For the model configuration, the sequence length and feature dimension are specified according to the input data. The Transformer backbone is built with 2 encoder layers and 2 decoder layers, a hidden dimension of 64, and 4 attention heads. The diffusion process uses 1000 timesteps with a cosine noise schedule, and the same number of steps is used during sampling. We adopt the $L_1$ loss as the reconstruction objective. For optimization, the model is trained with Adam using a base learning rate of $10^{-5}$, a batch size of 64, and gradient accumulation every 2 steps. To further improve training stability, we employ EMA with decay 0.995 and update interval 10, together with a warmup-based ReduceLROnPlateau scheduler, where the warmup learning rate is set to $8\times10^{-4}$ for the first 500 steps. The total training budget is 10{,}000 optimization steps.

    \item \textbf{SDForger}~\cite{rousseau2025forging}: For the SDForger baseline, we follow its original framework for synthetic time-series generation with large language models. Its central idea is to map raw time series into compact functional embeddings, cast sequence generation as structured text generation, and then decode the generated embeddings back into time-series samples. By operating in a lower-dimensional representation space, SDForger avoids directly generating long raw sequences and instead leverages the generative capability of language models for time-series synthesis.
    
    In our implementation, the input sequences are processed under the multisample setting and embedded using FastICA, where the embedding dimension is determined automatically under a variance retention target of 0.7. These embeddings are then used to fine-tune an autoregressive language model, and the generated embeddings are mapped back to the original time-series space through inverse embedding and inverse normalization. The number of generated samples is set to match the size of the training set, and \texttt{norms\_diversity\_threshold} is set to 0.

    \item \textbf{FIDE}~\cite{galib2024fide}: For the FIDE baseline, we follow its original extreme-aware diffusion framework for time-series generation. Its core idea is to improve tail preservation through frequency inflation in the frequency domain and further enhance extreme-event modeling by conditioning generation on block maxima. Since the original FIDE codebase is not directly runnable on some datasets, we make a lightweight revision to adapt it to our benchmark.
    
    In our implementation, FIDE is used as a conditional diffusion model, where each training window is paired with its block maximum and a GEV model is fitted on the observed maxima to characterize the extreme-value distribution. During training, the model takes the noisy sequence together with the corresponding block-maximum condition as input, and is optimized by the DDPM loss with an additional GEV-based regularization term. During sampling, new block-maxima conditions are drawn from the fitted GEV model to guide reverse diffusion, so that the generated sequences better preserve realistic peak behavior.
    
    For the parameter settings, we train the model for 400 epochs with batch size 2000, using hidden dimension 64 on univariate sequences. The diffusion process uses 100 steps with a linear noise schedule from $\beta_{\mathrm{start}}=10^{-4}$ to $\beta_{\mathrm{end}}=0.2$, together with correlated Gaussian-process noise with $\sigma=0.05$. We also retain the original frequency-enhancement strategy, where the top 20\% high-frequency components are amplified by a factor of 1.1 before training and inversely transformed after generation.
    
    \item \textbf{HeavyDiff}~\cite{pandey2025heavy}: As no Official Implementation of HeavyDiff is publicly available, and the original method is not specifically designed for time-series generation, we adapt its core idea within Diffusion-TS, a state-of-the-art framework for general time-series generation. Specifically, following the key idea of HeavyDiff, we replace the commonly used light-tailed Gaussian distribution in diffusion models with a heavy-tailed Student-\(t\) distribution in Diffusion-TS, thereby obtaining our HeavyDiff baseline.
    
    In our implementation, the heavy-tailed Student-\(t\) noise is applied consistently in both the forward diffusion and reverse sampling processes. Instead of directly sampling Gaussian noise, we construct the injected noise through a Gaussian scale mixture:
    \[
    \epsilon = \frac{z}{\sqrt{\kappa / \nu}}.
    \]
    Here, \(\epsilon\) denotes the sampled heavy-tailed noise, \(z \sim \mathcal{N}(0,I)\) is a standard Gaussian random variable, \(\kappa \sim \chi^2_{\nu}\) is a chi-square random variable, and \(\nu\) is the degree of freedom controlling the tail heaviness of the Student-\(t\) distribution, where a smaller \(\nu\) produces heavier tails. To reduce the variance shift introduced by heavy-tailed noise, we further apply an optional variance correction:
    \[
    \epsilon \leftarrow \epsilon \cdot \sqrt{\frac{\nu - 2}{\nu}}, \qquad \nu > 2.
    \]
    This correction rescales the sampled noise so that its variance is numerically closer to that of the standard Gaussian setting, which improves training stability and comparability. The same Student-\(t\)-based noise mechanism is used during reverse stochastic sampling to maintain consistency between training and inference.
    
    For the parameter settings, we enable heavy-tailed diffusion in the main experiments by setting the degree of freedom of the Student-\(t\) distribution to \(\nu = 2.5\), and apply variance correction by default.
        
\end{itemize}

\subsection{Interpretable Generation Dynamics}
\label{Appendix:Interpretable_Generation_Dynamics}
In this section, we provide an interpretable analysis of how extreme events emerge and are controlled during the denoising process, which answers RQ4 in Section~\ref{sec:evaluation}. To provide an intuitive illustration of the effect of the control signal in E4GEN, we show an example and trace the generation trajectory. Specifically, we examine a denoising process in E4GEN in which the control signal is activated at \textbf{step 100}, and visualize how the overall distribution and extreme-value statistics evolve across representative denoising steps. As shown in Fig.~\ref{fig:interpretable_generation_dynamics}, the trajectory covers both the stages before control activation and the stages after the control signal is introduced, including step 101 immediately before activation and step 100 as the activation step. The results show that the generated distribution is progressively transformed from the initial Gaussian distribution toward the target data distribution, while the extreme values are enhanced after the control signal is activated.

We further analyze how extreme values evolve during the denoising process. For intuitive visualization, we report the proportion of extreme-value points along the denoising trajectory. Although this statistic does not fully capture the formation of complete extreme events, it provides a clear view of how the extreme-event control signal guides generation in E4GEN. As shown in Fig.~\ref{fig:E4GEN_extreme_proportion_P}, after the control signal is activated at $t=100$, the proportion of extreme-value points increases rapidly and moves toward the target extreme-value level, indicating that E4GEN can effectively amplify extreme regions during the refinement stage. In contrast, Fig.~\ref{fig:Diffusion_extreme_proportion} shows that, without the extreme-event control signal, the extreme-value proportion increases only gradually and remains consistently lower than the target distribution. This comparison demonstrates that the proposed control signal provides explicit and reliable guidance for extreme-aware generation, enabling E4GEN to better follow the target extreme-event distribution.

\begin{figure}[htbp]
    \centering
    \begin{subfigure}{0.9\linewidth}
        \centering
        \includegraphics[width=\linewidth]{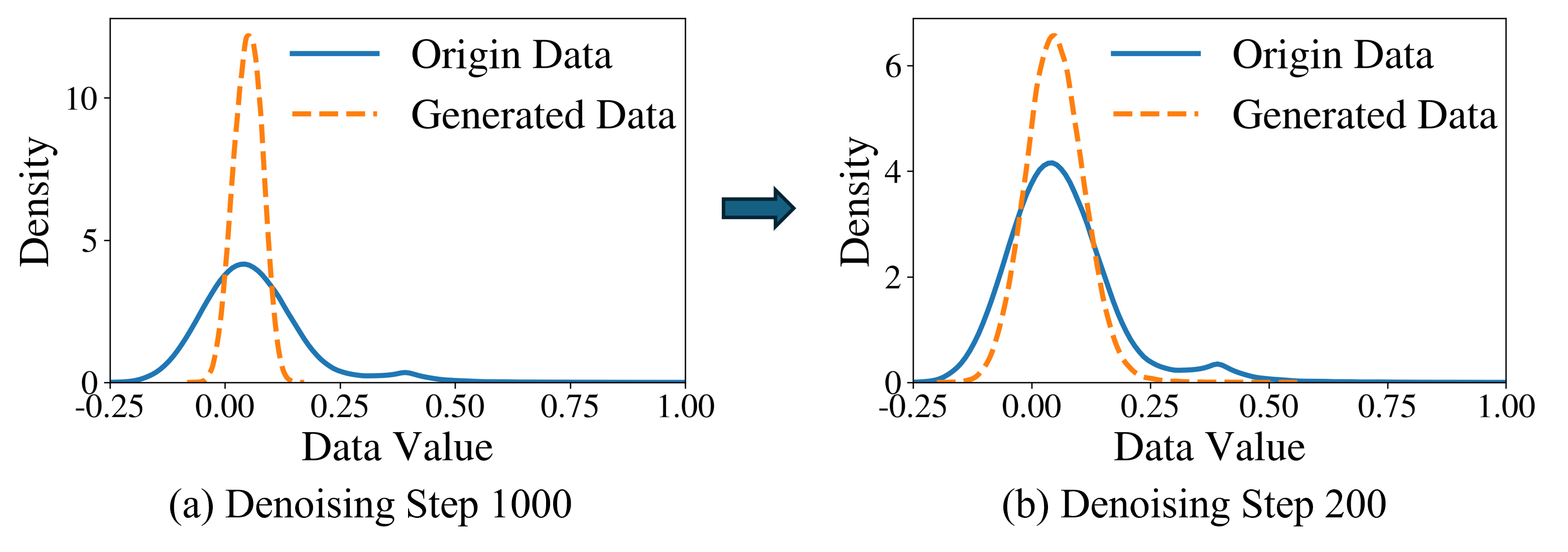}
    \end{subfigure}
    
    \vspace{4pt}
    
    \begin{subfigure}{0.9\linewidth}
        \centering
        \includegraphics[width=\linewidth]{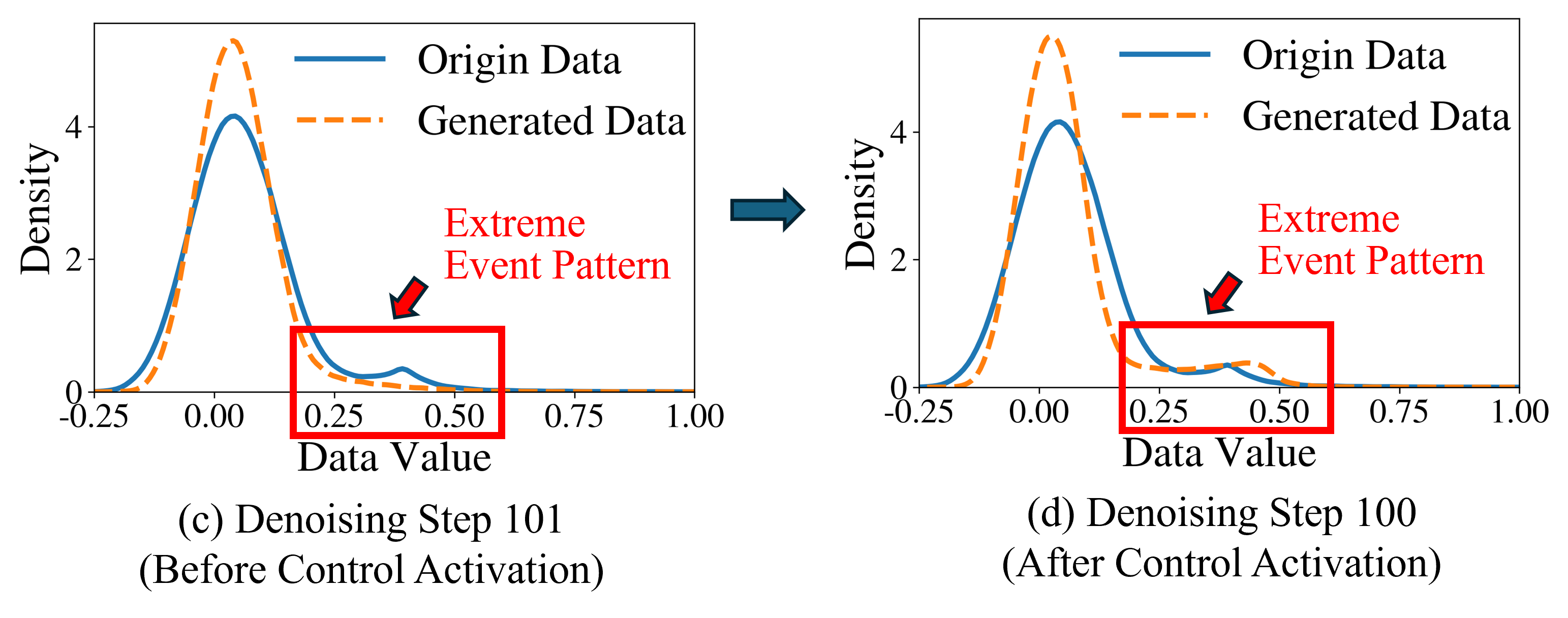}
    \end{subfigure}
    
    \vspace{4pt}
    
    \begin{subfigure}{0.9\linewidth}
        \centering
        \includegraphics[width=\linewidth]{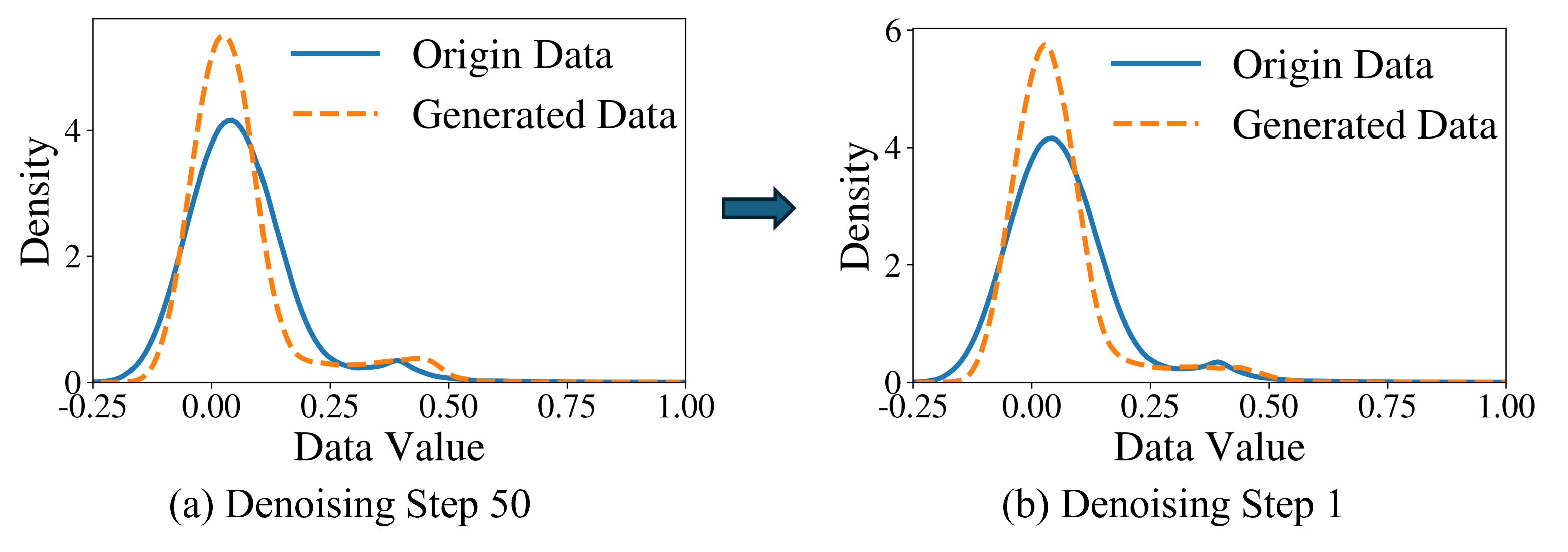}
    \end{subfigure}
    
    \caption{Example for interpretable generation dynamics. The six figures visualize the evolution of the overall distribution and extreme-value statistics along the denoising trajectory at steps 1000, 200, 101, 100, 50, and 1. The control signal is activated at step 100, where step 101 denotes the stage immediately before control activation and step 100 denotes the activation step. Overall, the trajectory shows how the generated distribution is gradually transformed from the initial Gaussian distribution toward the target data distribution, while extreme values are enhanced after control activation.}
    \label{fig:interpretable_generation_dynamics}
\end{figure}

\begin{figure}[htbp]
    \centering
    \begin{subfigure}{0.45\linewidth}
        \centering
        \includegraphics[width=\linewidth]{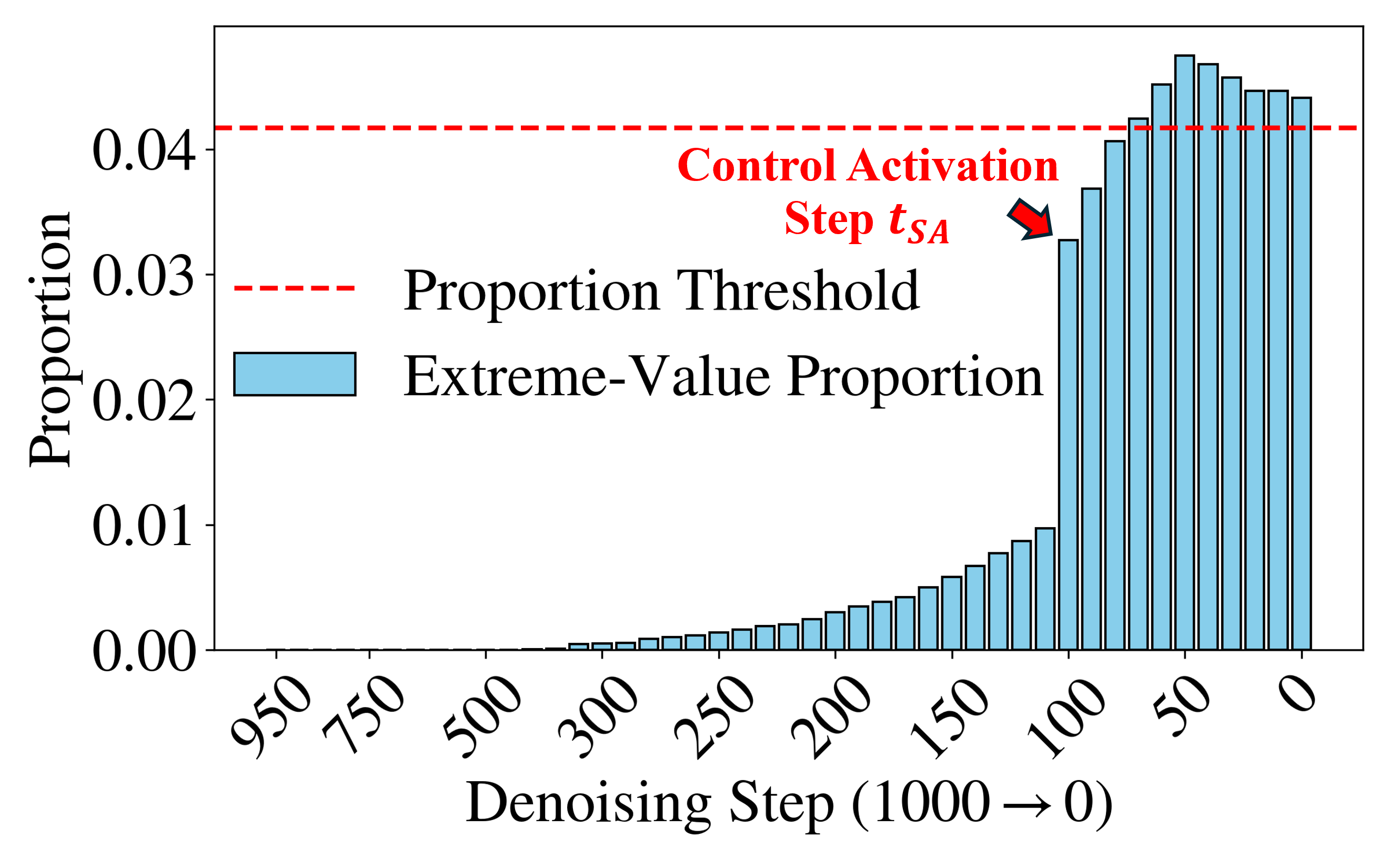}
        \caption{Evolution of the extreme-value point proportion with the extreme-event control signal. The curve shows that E4GEN progressively increases the proportion of extreme-value points during denoising, illustrating the guidance effect of the proposed control signal. After the control signal is activated at $t=100$, the extreme-value proportion rises rapidly, demonstrating the effectiveness and reliability of E4GEN in guiding extreme-event generation.}
        \label{fig:E4GEN_extreme_proportion_P}
    \end{subfigure}
    \hspace{10pt}
    \begin{subfigure}{0.45\linewidth}
        \centering
        \includegraphics[width=\linewidth]{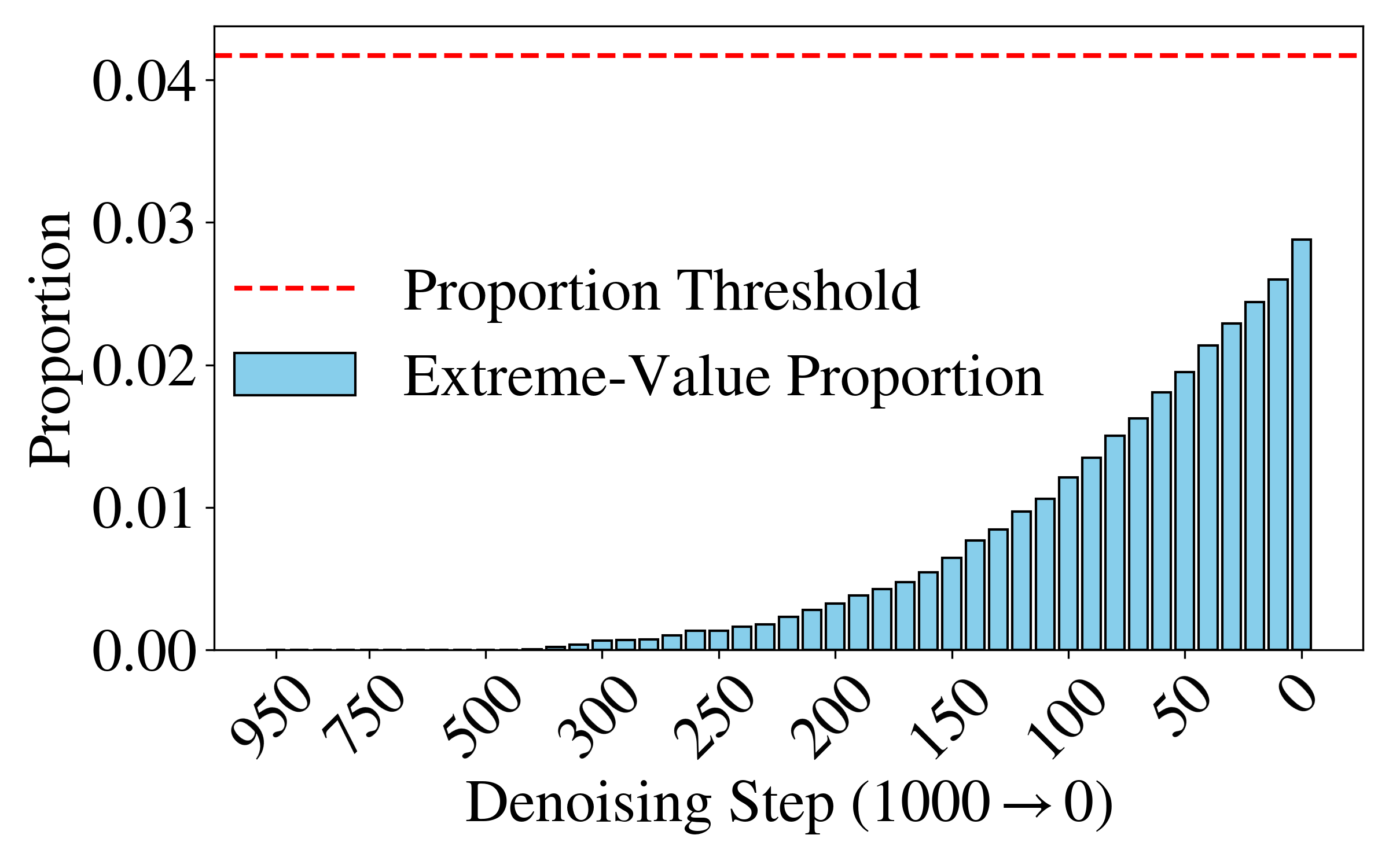}
        \caption{Evolution of the extreme-value point proportion without the extreme-event control signal. The curve shows that the proportion of extreme-value points increases gradually during denoising, reflecting the model's inherent tendency to recover extreme values. However, without external extreme-event control, the extreme-value proportion remains consistently suppressed, indicating that existing methods struggle to follow the target extreme-event distribution.}
        \label{fig:Diffusion_extreme_proportion}
    \end{subfigure}
    
    \caption{Comparison of the evolution of extreme-value point proportions during the denoising process, with and without the extreme-event control signal.}
    \label{fig:extreme_proportion_comparison}
\end{figure}

\subsection{E-Predictor Intermediate Output Analysis}
\label{Appendix:E_Predictor_Intermediate_Output_Analysis}

To answer RQ5, this part analyzes the accuracy of E-Predictor in predicting extreme-event semantics during generation. E-Predictor bridges semantic characterization and control injection by converting intermediate denoising states into explicit extreme-event semantics, which are then used by E-Control as control signals. Therefore, we examine whether E-Predictor produces accurate and semantically meaningful intermediate outputs, thereby validating the reliability of the predicted control signals for extreme-event generation.

Figure~\ref{fig:EPredictor_Visualization} provides qualitative examples of E-Predictor outputs on three representative datasets, namely Syn-Data, LTST-ECG, and HH-Power, with one time-series sample visualized for each dataset. In each example, the black curve denotes the ground-truth signal, red points indicate the ground-truth extreme-event regions, blue shaded areas represent the predicted extreme-event regions, and blue triangles mark the predicted peak positions. Overall, the results show that E-Predictor can effectively capture the presence of extreme events across different datasets. When an extreme event is successfully detected, the predicted event span and peak position are generally well aligned with the ground truth, indicating accurate recovery of the location and intensity semantics in EES. The predicted regions also preserve the main temporal extent of the event, reflecting meaningful temporal-shape semantics for subsequent control injection. Meanwhile, a small number of false negatives and false positives can still be observed, suggesting that E-Predictor is not error-free. Nevertheless, the overall alignment between predicted and ground-truth event semantics supports the reliability of the predicted EES as control signals for E-Control.

\begin{figure}[htbp]
    \centering
    
    \begin{subfigure}{0.8\linewidth}
        \centering
        \includegraphics[width=0.9\linewidth]{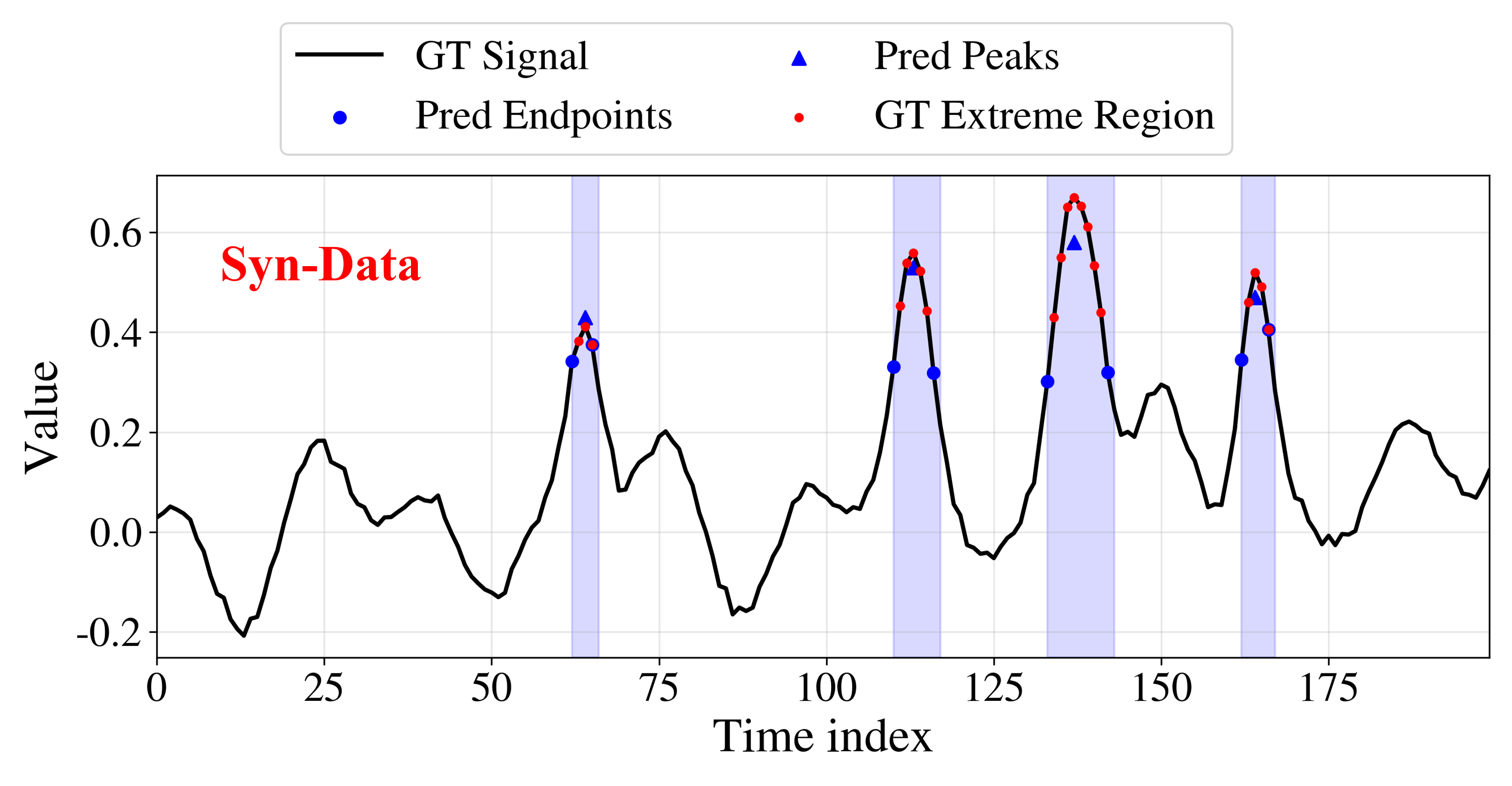}
        \label{fig:EPredictor_Syn}
    \end{subfigure}
    
    \vspace{4pt}
    
    \begin{subfigure}{0.8\linewidth}
        \centering
        \includegraphics[width=0.9\linewidth]{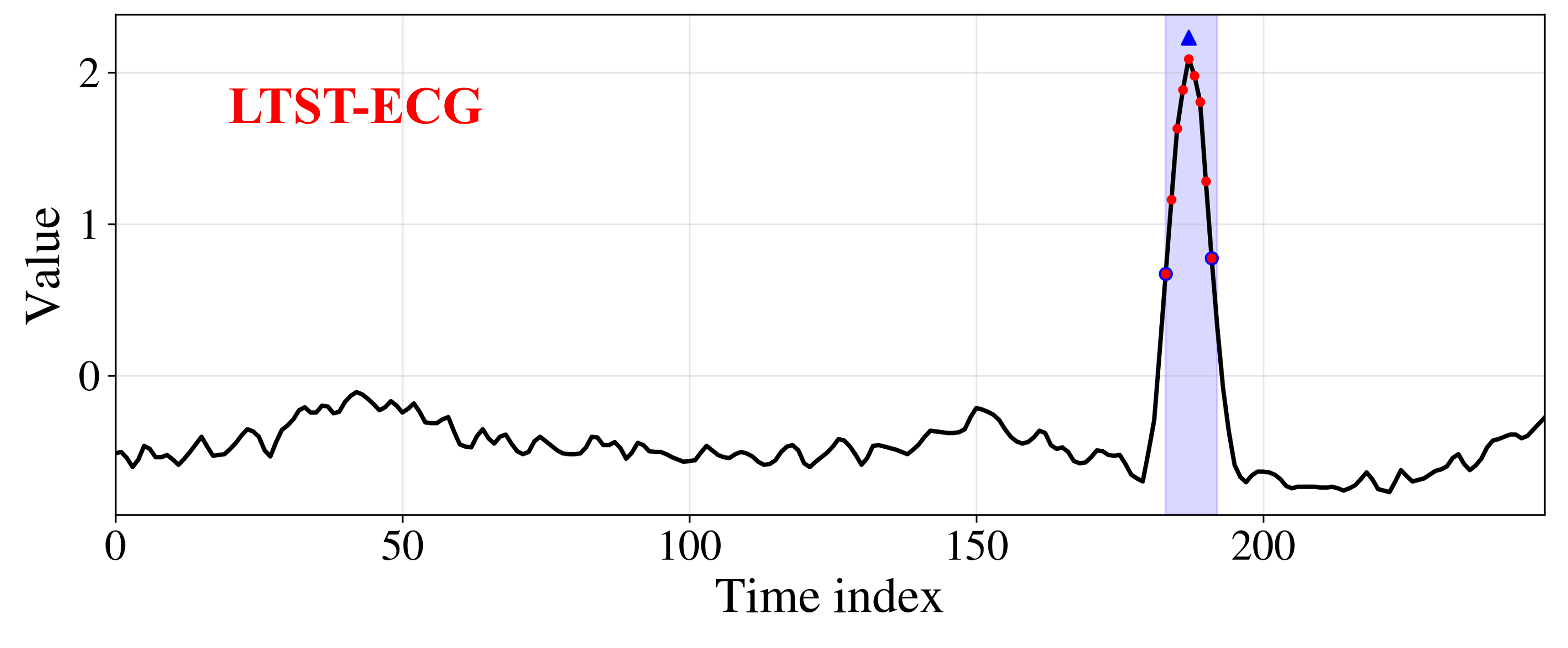}
        \label{fig:EPredictor_ECG}
    \end{subfigure}
    
    \vspace{4pt}
    
    \begin{subfigure}{0.8\linewidth}
        \centering
        \includegraphics[width=0.9\linewidth]{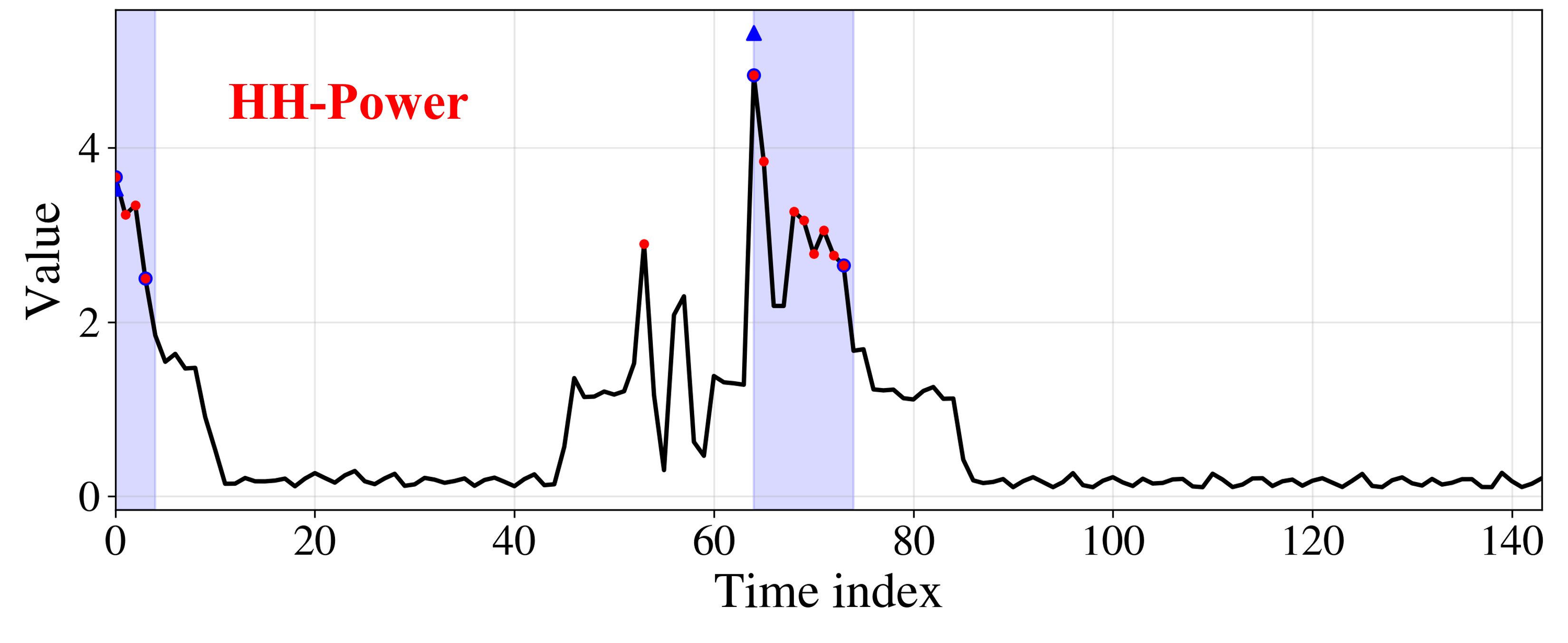}
        \label{fig:EPredictor_HHPower}
    \end{subfigure}
    
    \caption{Visualization of E-Predictor predictions for extreme-event semantics on Syn-Data, LTST-ECG, and HH-Power. For each dataset, one time-series sample is shown. The black curve denotes the ground-truth signal, red points indicate the ground-truth extreme-event regions, blue shaded areas represent the predicted extreme-event regions, and blue triangles mark the predicted peak positions. Overall, once E-Predictor identifies the presence of an extreme event, it can accurately localize the event region and estimate the peak position, although a small number of false negatives and false positives may still occur.}
    \label{fig:EPredictor_Visualization}
\end{figure}

Table~\ref{tab:epredictor_results} further quantifies the intermediate prediction quality of E-Predictor across all six datasets. The Event-F1 scores range from 0.713 to 0.872, showing that E-Predictor can reliably identify extreme events in most cases. The Span-IoU values range from 0.572 to 0.774, indicating that the predicted event regions substantially overlap with the ground-truth event spans, although exact boundary recovery remains challenging for some datasets. For intensity semantics, Peak-MAE and Prom-MAE remain within moderate ranges of 0.096--0.248 and 0.121--0.296, respectively, suggesting that E-Predictor can reasonably estimate the peak strength and prominence of extreme events. For temporal-shape semantics, Shape-DTW ranges from 0.214 to 0.487, showing that the predicted semantics preserve the main temporal evolution of extreme events, while more irregular events still lead to larger shape deviations. Overall, these results indicate that E-Predictor provides sufficiently accurate and semantically meaningful EES predictions to serve as control signals for E-Control, which is also consistent with the qualitative examples in Figure~\ref{fig:EPredictor_Visualization}.

\begin{table}[htbp]
\centering
\caption{Intermediate prediction quality of E-Predictor on extreme-event semantics.}
\label{tab:epredictor_results}
\footnotesize
\renewcommand{\arraystretch}{1.1}
\setlength{\tabcolsep}{6pt}
\begin{tabular}{lccccc}
\toprule
\textbf{Dataset} 
& \textbf{Event-F1} $\uparrow$
& \textbf{Span-IoU} $\uparrow$
& \textbf{Peak-MAE} $\downarrow$
& \textbf{Prom-MAE} $\downarrow$
& \textbf{Shape-DTW} $\downarrow$ \\
\midrule
\textbf{Syn-Data}  & 0.872 & 0.774 & 0.096 & 0.136 & 0.214 \\
\textbf{Wea-Temp}  & 0.801 & 0.684 & 0.157 & 0.188 & 0.331 \\
\textbf{Wea-Prec}  & 0.713 & 0.572 & 0.248 & 0.296 & 0.487 \\
\textbf{LTST-ECG}  & 0.834 & 0.728 & 0.132 & 0.121 & 0.276 \\
\textbf{HH-Power}  & 0.768 & 0.641 & 0.181 & 0.221 & 0.379 \\
\textbf{PEMS-SF}   & 0.742 & 0.618 & 0.204 & 0.247 & 0.423 \\
\bottomrule
\end{tabular}
\end{table}

\subsection{Ablation Study}
\label{Appendix:ablation}

To answer RQ6, we evaluate the contribution of each key design (extreme-event semantic representation, E-Activator, and E-Predictor) to the final generation performance. 
The three ablation variants are defined as follows:
\begin{itemize}
    \item \textbf{E4GEN$_{\setminus\mathrm{Sem}}$}: removes the extreme-event semantic representation. Instead of using the structured EES representation, this ablation variant uses the raw point sequence to represent each extreme event. When the event segment is longer than the fixed representation size, uniform sampling is applied to shorten it.
    \item \textbf{E4GEN$_{\setminus\mathrm{Act}}$}: removes E-Activator. Instead of dynamically selecting the dataset-specific Control Activation Step, this ablation variant disables step selection and keeps extreme-event enhancement active throughout the entire denoising process.
    \item \textbf{E4GEN$_{\setminus\mathrm{Pred}}$}: removes E-Predictor. Instead of predicting extreme-event semantics at the Control Activation Step, this ablation variant samples semantics directly from the empirical extreme-event semantic distribution of the training dataset.
\end{itemize}

Table~\ref{tab:overall_syn_ablation} and Table~\ref{tab:extreme_syn_ablation} report the ablation results on Syn-Data as a representative example. The full E4GEN achieves the best performance across all overall, downstream, and extreme-event metrics, showing that the three designs are jointly beneficial for extreme-aware generation. Among the ablation variants, E4GEN$_{\setminus\mathrm{Act}}$ suffers the largest degradation, indicating the importance of selecting an appropriate Control Activation Step rather than applying control throughout the entire denoising process. E4GEN$_{\setminus\mathrm{Sem}}$ and E4GEN$_{\setminus\mathrm{Pred}}$ perform better but still lag behind the full model, confirming that both structured EES representation and state-dependent semantic prediction are necessary for reliable control. Overall, the results validate the contribution of extreme-event semantic representation, E-Activator, and E-Predictor to the final generation quality.

\begin{table*}[htbp]
\centering
\footnotesize
\renewcommand{\arraystretch}{1.12}
\setlength{\tabcolsep}{2pt}

\vspace{-10pt}
\caption{Overall Generation Fidelity and General Downstream Utility on Syn-Data}
\label{tab:overall_syn_ablation}
\begin{tabular}{cclcccccccc}
\toprule
\textbf{Dataset} & \textbf{Type} & \textbf{Method} & \textbf{Wass.} & \textbf{KS} & \textbf{JS} & \textbf{MMD} & \textbf{ACD} & \textbf{Context\_FID} & \textbf{Pred} & \textbf{Recon} \\
\midrule
\multirow{4}{*}{\makecell{\textbf{Syn-}\\\textbf{Data}}}
& \multirow{3}{*}{\textbf{Ablation}}
& E4GEN$_{\setminus\mathrm{Sem}}$
& 0.0264 & 0.1097 & \underline{0.1698} & 0.1428 & \underline{0.0784} & 1.9462 & 0.1694 & \underline{0.0548} \\
&
& E4GEN$_{\setminus\mathrm{Act}}$
& 0.0418 & 0.1586 & 0.2564 & 0.2247 & 0.1369 & 4.1265 & 0.2137 & 0.0864 \\

&
& E4GEN$_{\setminus\mathrm{Pred}}$
& \underline{0.0241} & \underline{0.1016} & 0.1745 & \underline{0.1247} & 0.0809 & \underline{1.6428} & \underline{0.1648} & 0.0569 \\

\arrayrulecolor{gray!60}\cmidrule(lr){2-11}\arrayrulecolor{black}

& \textbf{Ours}
& \textbf{E4GEN}
& \textbf{0.0144} & \textbf{0.0568} & \textbf{0.1106} & \textbf{0.0447} & \textbf{0.0505} & \textbf{0.4501} & \textbf{0.1518} & \textbf{0.0368} \\
\bottomrule
\end{tabular}

\vspace{2.0em}

\caption{Extreme-Event Generation Fidelity and Extreme-Oriented Downstream Utility on Syn-Data}
\label{tab:extreme_syn_ablation}
\begin{tabular}{cclccccccccc}
\toprule
\textbf{Dataset} & \textbf{Type} & \textbf{Method} & \textbf{EM-W1} & \textbf{EC-Diff} & \textbf{ED-W1} & \textbf{EP-W1} & \textbf{EWass.} & \textbf{EJS} & \textbf{EMMD} & \textbf{EPred} & \textbf{ERecon} \\
\midrule
\multirow{4}{*}{\makecell{\textbf{Syn-}\\\textbf{Data}}}
& \multirow{3}{*}{\textbf{Ablation}}

& E4GEN$_{\setminus\mathrm{Sem}}$
& 0.1284 & 3348 & \underline{2.3567} & 0.1642 & 0.9147 & \underline{0.4628} & 0.7042 & 0.4527 & \underline{0.1038} \\
&
& E4GEN$_{\setminus\mathrm{Act}}$
& 0.2487 & 5126 & 3.2864 & 0.2765 & 1.6842 & 0.6938 & 0.9725 & 0.6247 & 0.2148 \\

&
& E4GEN$_{\setminus\mathrm{Pred}}$
& \underline{0.1096} & \underline{2976} & 2.4389 & \underline{0.1425} & \underline{0.7248} & 0.4875 & \underline{0.6236} & \underline{0.3864} & 0.1147 \\

\arrayrulecolor{gray!60}\cmidrule(lr){2-12}\arrayrulecolor{black}

& \textbf{Ours}
& \textbf{E4GEN}
& \textbf{0.0245} & \textbf{1232} & \textbf{1.5383} & \textbf{0.0360} & \textbf{0.3451} & \textbf{0.2689} & \textbf{0.3615} & \textbf{0.3129} & \textbf{0.0871} \\
\bottomrule
\end{tabular}
\renewcommand{\arraystretch}{1.0}
\normalsize
\end{table*}

\subsection{Sensitivity Analysis of Control-Related Hyperparameters}
\label{Appendix:Sensitivity Analysis}

To answer RQ7, this part provides a sensitivity analysis of the key control-related hyperparameters in E4GEN. Specifically, we examine the effects of the control activation step $t_{\mathrm{CA}}$ and the alignment start step $t_{\mathrm{AS}}$ introduced in Section~\ref{sec:DCT-NIS}. 

For $t_{\mathrm{CA}}$, 
Since E-Activator defines a valid search region $\mathcal{R}{\mathrm{CA}}$ for the control activation step $t{\mathrm{CA}}$, we focus on how to select $t_{\mathrm{CA}}$ within this region. In E4GEN, we selects $t_{\mathrm{CA}}$ through a lightweight validation search within $\mathcal{R}_{\mathrm{CA}}$. To examine whether E4GEN is sensitive to this selection, we compare the default strategy with two alternative strategies for choosing $t_{\mathrm{CA}}$ within $\mathcal{R}_{\mathrm{CA}}$:
\begin{itemize}
\item \textbf{E4GEN$_{\mathrm{Grid}}$}: uses fixed candidate steps that are uniformly spaced within $\mathcal{R}_{\mathrm{CA}}$, and reports the average performance over these candidates. This variant evaluates whether different fixed positions inside the control activation window lead to stable generation performance.

\item \textbf{E4GEN$_{\mathrm{Rand}}$}: randomly samples $t_{\mathrm{CA}}$ from $\mathcal{R}_{\mathrm{CA}}$ during each sampling process, and reports the average performance over multiple random trials. This variant evaluates whether stochastic control activation within the identified window affects generation stability.
\end{itemize}

\begin{figure}[b]
  \centering
  \includegraphics[width=0.65\columnwidth]{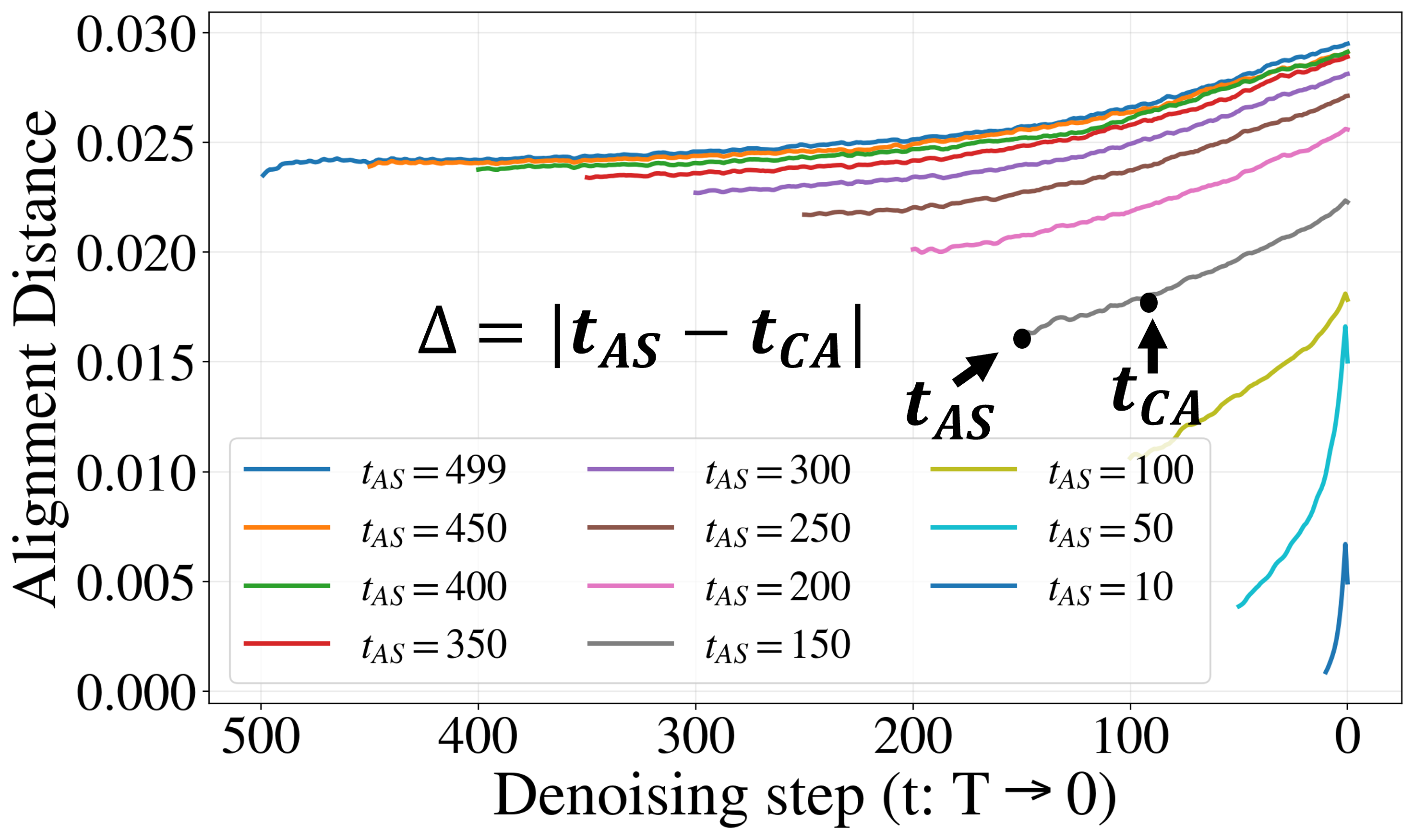}
\caption{Sensitivity analysis of E4GEN with respect to the alignment start step $t_{\mathrm{AS}}$. 
For each candidate value of $t_{\mathrm{AS}}$, we progressively vary $t_{\mathrm{CA}}$ within the control activation window $\mathcal{W}_{\mathrm{CA}}$ to increase the temporal offset $\Delta = |t_{\mathrm{AS}} - t_{\mathrm{CA}}|$. 
We measure the resulting alignment distance between the data-conditioned state distribution $\mathbf{x}_d(t_{\mathrm{CA}})$ and the corresponding noise-initiated sampling state distribution $\mathbf{x}_n(t_{\mathrm{CA}})$, where a smaller distance indicates better train--sample state alignment.}
  \label{fig:Alignment Distance}
\end{figure}

Tables~\ref{tab:tca_sensitivity_overall_syn} and~\ref{tab:tca_sensitivity_extreme_syn} report the sensitivity analysis of $t_{\mathrm{CA}}$ based on different selection strategy. Both E4GEN$_{\mathrm{Grid}}$ and E4GEN$_{\mathrm{Rand}}$ maintain competitive performance, suggesting that E4GEN is not highly sensitive to the exact activation step once a valid control activation window is identified. E4GEN$_{\mathrm{Grid}}$ consistently outperforms E4GEN$_{\mathrm{Rand}}$, indicating that fixed activation steps provide more stable control than stochastic step selection. The default E4GEN achieves the best performance across all metrics, demonstrating the advantage of selecting a better fixed $t_{\mathrm{CA}}$ through lightweight validation search.

\begin{table*}[t]
\centering
\footnotesize
\renewcommand{\arraystretch}{1.12}
\setlength{\tabcolsep}{2pt}

\caption{Overall Generation Fidelity and General Downstream Utility under Different $t_{\mathrm{CA}}$ Selection Strategies on Syn-Data}
\label{tab:tca_sensitivity_overall_syn}
\begin{tabular}{cclcccccccc}
\toprule
\textbf{Dataset} & \textbf{Type} & \textbf{Method} & \textbf{Wass.} & \textbf{KS} & \textbf{JS} & \textbf{MMD} & \textbf{ACD} & \textbf{Context\_FID} & \textbf{Pred} & \textbf{Recon} \\
\midrule
\multirow{3}{*}{\makecell{\textbf{Syn-}\\\textbf{Data}}}
& \multirow{2}{*}{\textbf{Sensitivity}}
& E4GEN$_{\mathrm{Grid}}$
& \underline{0.0162} & \underline{0.0619} & \underline{0.1187} & \underline{0.0508} & \underline{0.0534} & \underline{0.4936} & \underline{0.1576} & \underline{0.0395} \\
& 
& E4GEN$_{\mathrm{Rand}}$
& 0.0187 & 0.0684 & 0.1269 & 0.0583 & 0.0578 & 0.5724 & 0.1643 & 0.0431 \\
\cmidrule(lr){2-11}
& \textbf{Ours}
& E4GEN
& \textbf{0.0144} & \textbf{0.0568} & \textbf{0.1106} & \textbf{0.0447} & \textbf{0.0505} & \textbf{0.4501} & \textbf{0.1518} & \textbf{0.0368} \\
\bottomrule
\end{tabular}
\end{table*}

\begin{table*}[t]
\centering
\footnotesize
\renewcommand{\arraystretch}{1.12}
\setlength{\tabcolsep}{2pt}

\caption{Extreme-Event Generation Fidelity and Extreme-Oriented Downstream Utility under Different $t_{\mathrm{CA}}$ Selection Strategies on Syn-Data}
\label{tab:tca_sensitivity_extreme_syn}
\begin{tabular}{cclccccccccc}
\toprule
\textbf{Dataset} & \textbf{Type} & \textbf{Method} & \textbf{EM-W1} & \textbf{EC-Diff} & \textbf{ED-W1} & \textbf{EP-W1} & \textbf{EWass.} & \textbf{EJS} & \textbf{EMMD} & \textbf{EPred} & \textbf{ERecon} \\
\midrule
\multirow{3}{*}{\makecell{\textbf{Syn-}\\\textbf{Data}}}
& \multirow{2}{*}{\textbf{Sensitivity}}
& E4GEN$_{\mathrm{Grid}}$
& \underline{0.0318} & \underline{1396} & \underline{1.6824} & \underline{0.0437} & \underline{0.3895} & \underline{0.2916} & \underline{0.3982} & \underline{0.3265} & \underline{0.0948} \\
& 
& E4GEN$_{\mathrm{Rand}}$
& 0.0396 & 1584 & 1.8467 & 0.0521 & 0.4368 & 0.3194 & 0.4375 & 0.3417 & 0.1029 \\
\cmidrule(lr){2-12}
& \textbf{Ours}
& E4GEN
& \textbf{0.0245} & \textbf{1232} & \textbf{1.5383} & \textbf{0.0360} & \textbf{0.3451} & \textbf{0.2689} & \textbf{0.3615} & \textbf{0.3129} & \textbf{0.0871} \\
\bottomrule
\end{tabular}
\end{table*}

We further analyze the sensitivity of E4GEN to the alignment start step $t_{\mathrm{AS}}$. Since $t_{\mathrm{AS}}$ determines the starting point of the short reverse trajectory used to construct data-conditioned training states, we focus on its distance from $t_{\mathrm{CA}}$, denoted as $\Delta = |t_{\mathrm{AS}} - t_{\mathrm{CA}}|$. Specifically, we select ten candidate values of $t_{\mathrm{AS}}$ and progressively slide $t_{\mathrm{CA}}$ within the control activation window $\mathcal{W}_{\mathrm{CA}}$, thereby increasing the distance $\Delta = |t_{\mathrm{AS}} - t_{\mathrm{CA}}|$. For each pair of $(t_{\mathrm{AS}}, t_{\mathrm{CA}})$, we measure the distributional discrepancy between the resulting data-conditioned state $\mathbf{x}_d(t_{\mathrm{CA}})$ and the corresponding noise-initiated sampling state $\mathbf{x}_n(t_{\mathrm{CA}})$. This analysis evaluates how $t_{\mathrm{AS}}$ affects training-sampling alignment and whether the alignment remains stable across different values of $\Delta$.

As shown in Figure~\ref{fig:Alignment Distance}, smaller values of $t_{\mathrm{AS}}$, which are closer to the sampling endpoint, generally lead to lower alignment distance between $\mathbf{x}_d(t_{\mathrm{CA}})$ and $\mathbf{x}_n(t_{\mathrm{CA}})$. However, the alignment distance also increases more rapidly with the offset $\Delta = |t_{\mathrm{AS}} - t_{\mathrm{CA}}|$ under these settings. This indicates that later alignment start step $t_{\mathrm{AS}}$ reduce the initial train--sample discrepancy, but are more sensitive to mismatches between $\mathbf{x}_d(t_{\mathrm{CA}})$ and $\mathbf{x}_n(t_{\mathrm{CA}})$. Since $t_{\mathrm{CA}}$ is usually located at a relatively small diffusion step within the control activation window, this sensitivity analysis further supports the E4GEN design choice of selecting $t_{\mathrm{AS}}$ close to $t_{\mathrm{CA}}$, which helps reduce train--sample state discrepancy and preserve reliable state alignment.

\subsection{Efficiency Analysis}
\label{Appendix:efficiency}
This section analyzes the computational efficiency of E4GEN in terms of training time and per-sample sampling time. As shown in Table~\ref{tab:efficiency}, E4GEN is computationally efficient across all six datasets. The training time ranges from 0.09 to 0.12 seconds per epoch, while the sampling time ranges from 0.09 to 0.24 seconds per sample. For most datasets, E4GEN requires only about 0.09--0.10 seconds per training epoch and 0.13 seconds or less to generate one sample. The efficient training and sampling speeds demonstrate that the proposed control-based design is both practical and scalable, making E4GEN an effective and efficient framework for extreme-aware time-series generation.


\begin{table*}[htbp]
\centering
\footnotesize
\renewcommand{\arraystretch}{1.12}
\setlength{\tabcolsep}{2pt}

\vspace{-6pt}
\caption{Training Time and Per-Sample Sampling Time Across Datasets of E4GEN.}
\label{tab:efficiency}
\begin{tabular}{lcc}
\toprule
\textbf{Dataset} & \textbf{Training Time} (s / epoch) & \textbf{Sampling Time} (s / sample)  \\
\midrule

Syn-Data   & 0.10 & 0.19 \\
Wea-Temp   & 0.09 & 0.09 \\
Wea-Prec   & 0.10 & 0.13 \\
LTST-ECG   & 0.12 & 0.24 \\
HH-Power   & 0.09 & 0.13 \\
PEMS-SF    & 0.09 & 0.13 \\

\bottomrule
\end{tabular}

\vspace{-6pt}
\end{table*}

\subsection{Controllable Extreme-Event Generation with User-Specified Semantics}
\label{Appendix:Controllable Extreme-Event Generation}

For a time series of length 200, we manually define three user-specified extreme-event configurations as external control signals:
\begin{itemize}
    \item \textbf{Configuration 1:} two extreme events, where Event 1 spans time indices $66$--$73$ with a target peak intensity of $0.47$, and Event 2 spans time indices $120$--$180$ with a target peak intensity of $0.60$.
    \item \textbf{Configuration 2:} two extreme events, where Event 1 spans time indices $10$--$30$ with a target peak intensity of $0.47$, and Event 2 spans time indices $150$--$180$ with a target peak intensity of $0.50$.
    \item \textbf{Configuration 3:} three extreme events, where Event 1 spans time indices $52$--$58$ with a target peak intensity of $0.43$, Event 2 spans time indices $92$--$101$ with a target peak intensity of $0.52$, and Event 3 spans time indices $160$--$163$ with a target peak intensity of $0.41$.
\end{itemize}

As shown in Figure~\ref{fig:condition_generation_three}, E4GEN successfully generates time series that follow the three user-specified extreme-event configurations. Across all configurations, the generated samples exhibit clustered extreme values around the prescribed event locations, and the aggregated event regions are generally well aligned with the specified temporal spans. The generated peak intensities also remain close to the target intensity levels, indicating that E4GEN can effectively translate external semantic control signals into localized extreme-event patterns. These results demonstrate that E4GEN supports controllable extreme-event generation beyond self-predicted semantics, enabling users to guide both the occurrence and intensity of extreme events.

\begin{figure}[htbp]
    \centering

    \begin{subfigure}{0.32\linewidth}
        \centering
        \includegraphics[width=\linewidth]{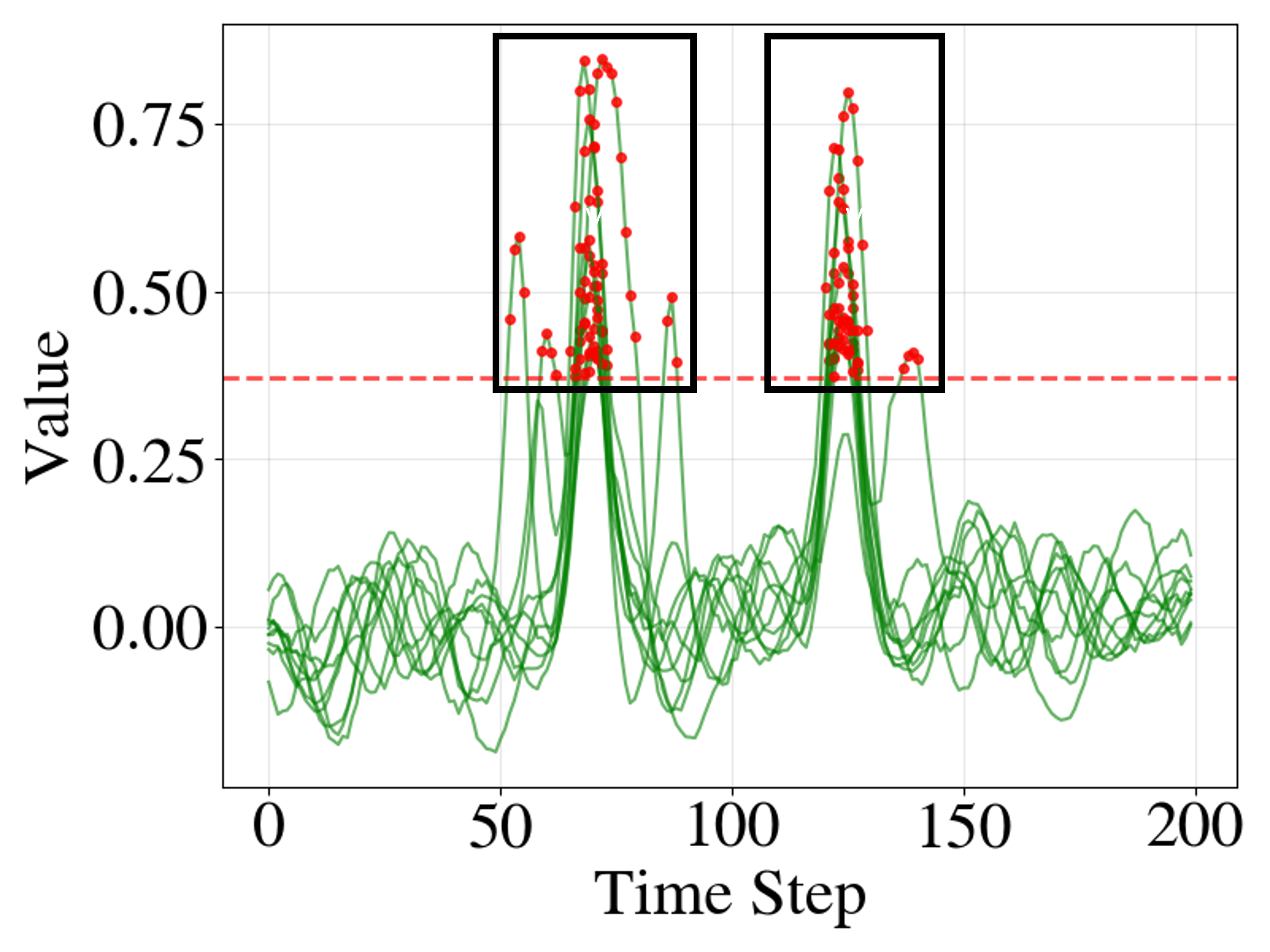}
        \label{fig:condition_generation_1}
    \end{subfigure}
    \hfill
    \begin{subfigure}{0.32\linewidth}
        \centering
        \includegraphics[width=\linewidth]{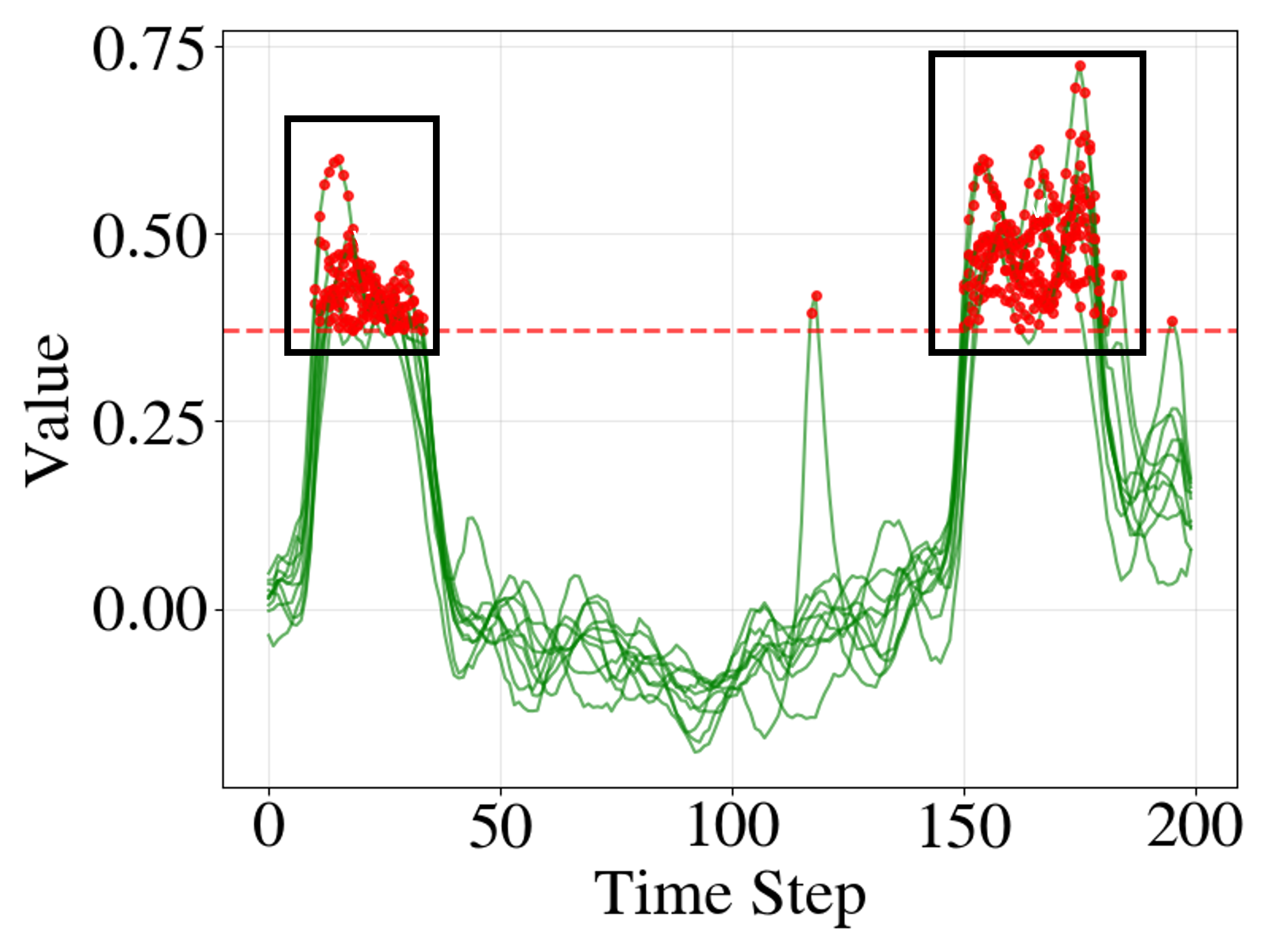}
        \label{fig:condition_generation_2}
    \end{subfigure}
    \hfill
    \begin{subfigure}{0.32\linewidth}
        \centering
        \includegraphics[width=\linewidth]{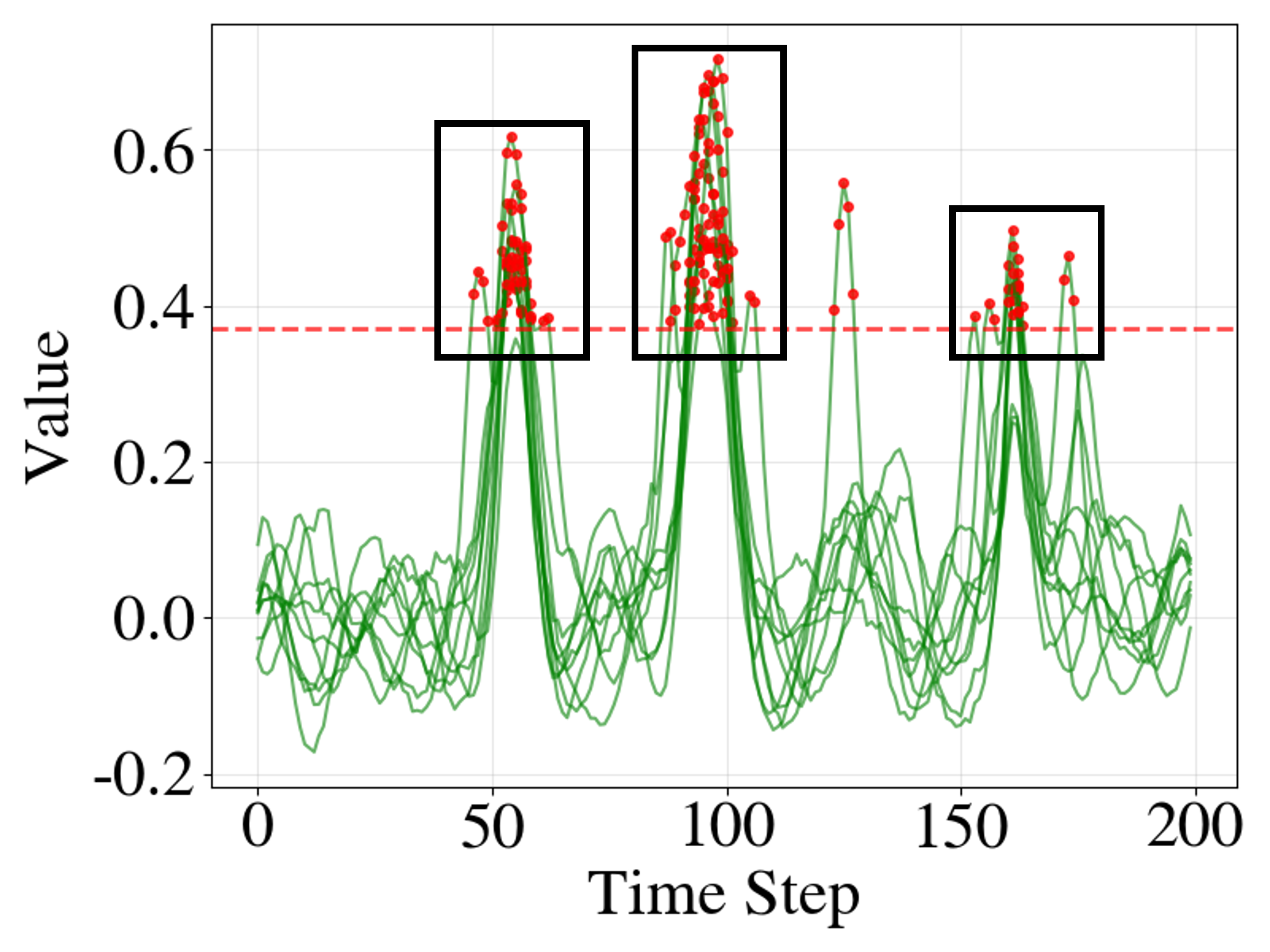}
        \label{fig:condition_generation_3}
    \end{subfigure}

\caption{Controllable extreme-event generation results under three user-specified semantic configurations on Syn-Data. 
For each configuration, 20 generated time-series samples are visualized to assess whether E4GEN follows the specified event-level control signals. 
The red dashed line denotes the extreme threshold, red points indicate extreme values, and black squares highlight the aggregated extreme-event regions.}
    \label{fig:condition_generation_three}
\end{figure}

\FloatBarrier
\section{Additional Experimental Results on Generation Fidelity and Utility}\label{Appendix:more_experiment_results}
In this part, we present additional experimental results on the remaining five datasets, including \textsc{Syn-Data}, \textsc{Wea-Prec}, \textsc{LTST-ECG}, \textsc{HH-Power}, and \textsc{PEMS-SF}. We also evaluate each dataset from three perspectives: overall generation fidelity, extreme-event generation fidelity, and downstream utility. In these tables, $\infty$ indicates that the corresponding metric cannot be computed due to an insufficient number of generated extreme events.

\begin{table*}[htbp]
\centering
\footnotesize
\renewcommand{\arraystretch}{1.12}
\setlength{\tabcolsep}{2pt}

\caption{Overall Generation Fidelity and General Downstream Utility on Syn-Data}
\label{tab:overall_syn}
\begin{tabular}{cclcccccccc}
\toprule
\textbf{Dataset} & \textbf{Type} & \textbf{Method} & \textbf{Wass.} & \textbf{KS} & \textbf{JS} & \textbf{MMD} & \textbf{ACD} & \textbf{Context\_FID} & \textbf{Pred} & \textbf{Recon} \\
\midrule
\multirow{11}{*}{\makecell{\textbf{Syn-}\\\textbf{Data}}}
& \textbf{GAN}
& TimeGAN (NeurIPS'19)~\cite{yoon2019time}
& 0.0267 & 0.1038 & 0.1995 & 0.1018 & 0.1592 & 2.0077 & 0.2258 & 0.1547 \\

\arrayrulecolor{gray!60}\cmidrule(lr){2-11}\arrayrulecolor{black}

& \multirow{2}{*}{\textbf{VAE}}
& TimeVAE (ArXiv'21)~\cite{desai2021timevae}
& 0.0650 & 0.3359 & 0.4424 & 0.2340 & 0.3998 & 16.284 & 0.2456 & 0.0764 \\

&
& koVAE (ICLR'24)~\cite{naimangenerative}
& 0.0425 & 0.1425 & 0.2669 & 0.2086 & 0.0779 & 6.7766 & 0.1746 & \underline{0.0422} \\

\arrayrulecolor{gray!60}\cmidrule(lr){2-11}\arrayrulecolor{black}

& \textbf{Flow}
& F-Flow (ICLR'21)~\cite{alaa2021generative}
& 0.1395 & 0.6424 & 0.5579 & 0.6539 & 0.0819 & 17.130 & 0.1833 & 0.0514 \\

\arrayrulecolor{gray!60}\cmidrule(lr){2-11}\arrayrulecolor{black}

& \multirow{2}{*}{\textbf{Diffusion}}
& DiffWave (ICLR'21)~\cite{kongdiffwave}
& 0.0557 & 0.2348 & 0.3767 & 0.2191 & 0.1630 & 10.798 & 0.2264 & 0.0706 \\

&
& Diffusion-TS (ICLR'24)~\cite{yuan2024diffusionts}
& 0.0236 & 0.1032 & 0.1719 & 0.3052 & 0.0920 & 1.9910 & 0.1641 & 0.0557 \\

\arrayrulecolor{gray!60}\cmidrule(lr){2-11}\arrayrulecolor{black}

& \textbf{LLM}
& SDForger (NeurIPS'25)~\cite{rousseau2025forging}
& \underline{0.0203} & \underline{0.0628} & 0.1369 & \underline{0.0708} & \textbf{0.0427} & \underline{1.2159} & \underline{0.1548} & 0.0525 \\

\arrayrulecolor{gray!60}\cmidrule(lr){2-11}\arrayrulecolor{black}

& \multirow{2}{*}{\makecell{\textbf{Extreme-}\\\textbf{Aware}}}
& FIDE (NeurIPS'24)~\cite{galib2024fide}
& 0.0309 & 0.1279 & \textbf{0.1075} & 0.1558 & 0.1614 & 4.4028 & 0.6670 & 0.8808 \\

&
& HeavyDiff (ICLR'25)~\cite{pandey2025heavy}
& 0.0231 & 0.0971 & 0.1718 & 0.2470 & 0.0784 & 1.5271 & 0.1609 & 0.0675 \\

\arrayrulecolor{gray!60}\cmidrule(lr){2-11}\arrayrulecolor{black}

& \textbf{Ours}
& \textbf{E4GEN (Ours)}
& \textbf{0.0144} & \textbf{0.0568} & \underline{0.1106} & \textbf{0.0447} & \underline{0.0505} & \textbf{0.4501} & \textbf{0.1518} & \textbf{0.0368} \\
\bottomrule
\end{tabular}

\vspace{6.0em}

\caption{Extreme-Event Generation Fidelity and Extreme-Oriented Downstream Utility on Syn-Data}
\label{tab:extreme_syn}
\begin{tabular}{cclccccccccc}
\toprule
\textbf{Dataset} & \textbf{Type} & \textbf{Method} & \textbf{EM-W1} & \textbf{EC-Diff} & \textbf{ED-W1} & \textbf{EP-W1} & \textbf{EWass.} & \textbf{EJS} & \textbf{EMMD} & \textbf{EPred} & \textbf{ERecon} \\
\midrule
\multirow{11}{*}{\makecell{\textbf{Syn-}\\\textbf{Data}}}
& \textbf{GAN}
& TimeGAN ~\cite{yoon2019time}
& 0.4755 & 6218 & 3.1726 & 0.4779 & $\infty$ & $\infty$ & $\infty$ & $\infty$ & $\infty$ \\

\arrayrulecolor{gray!60}\cmidrule(lr){2-12}\arrayrulecolor{black}

& \multirow{2}{*}{\textbf{VAE}}
& TimeVAE ~\cite{desai2021timevae}
& 0.4378 & 5820 & 3.2847 & 0.4619 & 2.1274 & 0.8318 & 1.2573 & 0.5816 & 2.3121 \\

&
& koVAE ~\cite{naimangenerative}
& 0.4334 & 6439 & 3.2525 & 0.4570 & 2.1200 & 0.8256 & 0.7437 & 0.5727 & 0.1014 \\

\arrayrulecolor{gray!60}\cmidrule(lr){2-12}\arrayrulecolor{black}

& \textbf{Flow}
& F-Flow ~\cite{alaa2021generative}
& 0.3511 & 5513 & 2.5745 & 0.3731 & 1.8070 & 0.7059 & 1.1253 & 0.5055 & 0.1012 \\

\arrayrulecolor{gray!60}\cmidrule(lr){2-12}\arrayrulecolor{black}

& \multirow{2}{*}{\textbf{Diffusion}}
& DiffWave ~\cite{kongdiffwave}
& 0.4381 & 6463 & 3.2765 & 0.4618 & $\infty$ & $\infty$ & $\infty$ & $\infty$ & $\infty$ \\

&
& Diffusion-TS ~\cite{yuan2024diffusionts}
& 0.1139 & 3061 & 3.0921 & 0.1696 & 0.6257 & 0.4534 & 0.5606 & \underline{0.3017} & \underline{0.0942} \\

\arrayrulecolor{gray!60}\cmidrule(lr){2-12}\arrayrulecolor{black}

& \textbf{LLM}
& SDForger ~\cite{rousseau2025forging}
& 0.3661 & 5212 & \underline{2.5214} & 0.3872 & 1.7097 & 0.6607 & 0.9180 & 0.4600 & 0.1224 \\

\arrayrulecolor{gray!60}\cmidrule(lr){2-12}\arrayrulecolor{black}

& \multirow{2}{*}{\makecell{\textbf{Extreme-}\\\textbf{Aware}}}
& FIDE ~\cite{galib2024fide}
& 0.1499 & 3864 & 2.5898 & 0.1678 & 9.7145 & 0.5724 & 0.7376 & 0.4647 & 10.997 \\

&
& HeavyDiff ~\cite{pandey2025heavy}
& \underline{0.0784} & \underline{2200} & 2.9818 & \underline{0.1216} & \underline{0.5113} & \underline{0.4026} & \underline{0.5528} & \textbf{0.2709} & 0.0978 \\

\arrayrulecolor{gray!60}\cmidrule(lr){2-12}\arrayrulecolor{black}

& \textbf{Ours}
& \textbf{E4GEN (Ours)}
& \textbf{0.0245} & \textbf{1232} & \textbf{1.5383} & \textbf{0.0360} & \textbf{0.3451} & \textbf{0.2689} & \textbf{0.3615} & 0.3129 & \textbf{0.0871} \\
\bottomrule
\end{tabular}

\renewcommand{\arraystretch}{1.0}
\normalsize
\end{table*}

\newpage

\begin{table*}[!b]

\centering

\footnotesize

\renewcommand{\arraystretch}{1.12}

\setlength{\tabcolsep}{2pt}


\caption{Overall Generation Fidelity and General Downstream Utility on WEA-Prec Dataset}

\label{tab:overall_prec}

\begin{tabular}{cclcccccccc}

\toprule

\textbf{Dataset} & \textbf{Type} & \textbf{Method} & \textbf{Wass.} & \textbf{KS} & \textbf{JS} & \textbf{MMD} & \textbf{ACD} & \textbf{Context\_FID} & \textbf{Pred} & \textbf{Recon} \\

\midrule

\multirow{11}{*}{\makecell{\textbf{WEA-}\\\textbf{Prec}}}

& \textbf{GAN}

& TimeGAN (NeurIPS'19)~\cite{yoon2019time}

& 484.09 & 0.7788 & 0.7501 & 0.2408 & 242.06 & 219.36 & 0.5706 & 1.0104 \\

\arrayrulecolor{gray!60}\cmidrule(lr){2-11}\arrayrulecolor{black}

& \multirow{2}{*}{\textbf{VAE}}

& TimeVAE (ArXiv'21)~\cite{desai2021timevae}

& 423.34 & 0.7357 & 0.6436 & 0.2279 & 211.75 & 182.57 & 0.6858 & 0.9595 \\

&

& koVAE (ICLR'24)~\cite{naimangenerative}

& 463.50 & 0.8015 & 0.7459 & 0.2420 & 231.83 & 210.14 & 0.9813 & 2.1157 \\

\arrayrulecolor{gray!60}\cmidrule(lr){2-11}\arrayrulecolor{black}

& \textbf{Flow}

& F-Flow (ICLR'21)~\cite{alaa2021generative}

& 327.02 & 0.4024 & 0.2930 & 0.3219 & 163.52 & 251.63 & 0.4624 & 1.1018 \\

\arrayrulecolor{gray!60}\cmidrule(lr){2-11}\arrayrulecolor{black}

& \multirow{2}{*}{\textbf{Diffusion}}

& DiffWave (ICLR'21)~\cite{kongdiffwave}

& 271.15 & 0.5122 & 0.2098 & 0.2824 & 135.58 & \underline{174.39} & 0.4119 & 0.9321 \\

&

& Diffusion-TS (ICLR'24)~\cite{yuan2024diffusionts}

& 172.16 & 0.5574 & \underline{0.1558} & \textbf{0.0876} & 86.088 & 301.04 & \underline{0.4053} & 1.4584 \\

\arrayrulecolor{gray!60}\cmidrule(lr){2-11}\arrayrulecolor{black}

& \textbf{LLM}

& SDForger (NeurIPS'25)~\cite{rousseau2025forging}

& 611.65 & \underline{0.3527} & 0.4921 & 0.1592 & 305.84 & 545.03 & 0.5214 & \underline{0.8450} \\

\arrayrulecolor{gray!60}\cmidrule(lr){2-11}\arrayrulecolor{black}

& \multirow{2}{*}{\makecell{\textbf{Extreme-}\\\textbf{Aware}}}

& FIDE (NeurIPS'24)~\cite{galib2024fide}

& \underline{137.61} & 0.4410 & 0.2329 & 0.0898 & \underline{68.818} & 311.95 & 0.4466 & 0.8916 \\

&

& HeavyDiff (ICLR'25)~\cite{pandey2025heavy}

& 158.32 & 0.5673 & 0.2240 & 0.1306 & 79.172 & 317.88 & 0.4117 & 0.8670 \\

\arrayrulecolor{gray!60}\cmidrule(lr){2-11}\arrayrulecolor{black}

& \textbf{Ours}

& \textbf{E4GEN (Ours)}

& \textbf{133.20} & \textbf{0.2114} & \textbf{0.1425} & \underline{0.0886} & \textbf{66.607} & \textbf{130.68} & \textbf{0.4033} & \textbf{0.7913} \\

\bottomrule

\end{tabular}

\vspace{6.0em}

\caption{Extreme-Event Generation Fidelity and Extreme-Oriented Downstream Utility on WEA-Prec Dataset}

\label{tab:extreme_prec}

\begin{tabular}{cclccccccccc}

\toprule

\textbf{Dataset} & \textbf{Type} & \textbf{Method} & \textbf{EM-W1} & \textbf{EC-Diff} & \textbf{ED-W1} & \textbf{EP-W1} & \textbf{EWass.} & \textbf{EJS} & \textbf{EMMD} & \textbf{EPred} & \textbf{ERecon} \\

\midrule

\multirow{11}{*}{\makecell{\textbf{WEA-}\\\textbf{Prec}}}

& \textbf{GAN}

& TimeGAN ~\cite{yoon2019time}

& 2.5847 & 9661 & 1.3385 & 2.7133 & $\infty$ & $\infty$ & $\infty$ & $\infty$ & $\infty$ \\

\arrayrulecolor{gray!60}\cmidrule(lr){2-12}\arrayrulecolor{black}

& \multirow{2}{*}{\textbf{VAE}}

& TimeVAE ~\cite{desai2021timevae}

& 2.5234 & 8532 & 1.3185 & 2.7128 & 16.450 & 0.8323 & 5.4374 & 1.6613 & \underline{2.2534} \\

&

& koVAE ~\cite{naimangenerative}

& 2.5088 & 9427 & 1.3157 & 2.6977 & 16.443 & 0.8311 & 5.4416 & 1.7204 & 5.1512 \\

\arrayrulecolor{gray!60}\cmidrule(lr){2-12}\arrayrulecolor{black}

& \textbf{Flow}

& F-Flow ~\cite{alaa2021generative}

& 2.5163 & 9434 & 1.3223 & 2.7052 & $\infty$ & $\infty$ & $\infty$ & $\infty$ & $\infty$ \\

\arrayrulecolor{gray!60}\cmidrule(lr){2-12}\arrayrulecolor{black}

& \multirow{2}{*}{\textbf{Diffusion}}

& DiffWave ~\cite{kongdiffwave}

& 1.7680 & 7834 & 0.5415 & 1.9528 & 13.436 & 0.7506 & 4.3148 & 1.5536 & 2.3741 \\

&

& Diffusion-TS ~\cite{yuan2024diffusionts}

& 1.2734 & 4331 & 0.1525 & 1.3156 & 8.1159 & 0.4793 & 2.0414 & 2.3124 & 3.0427 \\

\arrayrulecolor{gray!60}\cmidrule(lr){2-12}\arrayrulecolor{black}

& \textbf{LLM}

& SDForger ~\cite{rousseau2025forging}

& \underline{0.7438} & 13462 & 0.4953 & \underline{0.7883} & 13.625 & 0.5380 & 2.1673 & 1.3387 & 3.9173 \\

\arrayrulecolor{gray!60}\cmidrule(lr){2-12}\arrayrulecolor{black}

& \multirow{2}{*}{\makecell{\textbf{Extreme-}\\\textbf{Aware}}}

& FIDE ~\cite{galib2024fide}

& 1.1417 & \textbf{3130} & 0.1693 & 1.2455 & 48.397 & \textbf{0.2672} & \underline{1.4808} & \textbf{1.2212} & 17.326 \\

&

& HeavyDiff ~\cite{pandey2025heavy}

& 1.3451 & 3990 & \underline{0.1436} & 1.3779 & \underline{6.9423} & 0.4110 & \textbf{1.1951} & 1.4888 & 3.1175 \\

\arrayrulecolor{gray!60}\cmidrule(lr){2-12}\arrayrulecolor{black}

& \textbf{Ours}

& \textbf{E4GEN (Ours)}

& \textbf{0.4286} & \underline{3619} & \textbf{0.1004} & \textbf{0.4341} & \textbf{6.4997} & \underline{0.4056} & 1.7371 & \underline{1.3368} & \textbf{2.2081} \\

\bottomrule

\end{tabular}

\renewcommand{\arraystretch}{1.0}

\normalsize

\end{table*}

\newpage
\begin{table*}[!b]
\centering
\footnotesize
\renewcommand{\arraystretch}{1.12}
\setlength{\tabcolsep}{2pt}

\vspace{-10pt}
\caption{Overall Generation Fidelity and General Downstream Utility on HH-Power Dataset}
\label{tab:overall_elect}
\begin{tabular}{cclcccccccc}
\toprule
\textbf{Dataset} & \textbf{Type} & \textbf{Method} & \textbf{Wass.} & \textbf{KS} & \textbf{JS} & \textbf{MMD} & \textbf{ACD} & \textbf{Context\_FID} & \textbf{Pred} & \textbf{Recon} \\
\midrule
\multirow{11}{*}{\makecell{\textbf{HH-}\\\textbf{Power}}}
& \textbf{GAN}
& TimeGAN (NeurIPS'19)~\cite{yoon2019time}
& 0.1904 & 0.1721 & 0.2317 & 0.2122 & 0.2285 & 1.6236 & 0.3206 & 0.7460 \\

\arrayrulecolor{gray!60}\cmidrule(lr){2-11}\arrayrulecolor{black}

& \multirow{2}{*}{\textbf{VAE}}
& TimeVAE (ArXiv'21)~\cite{desai2021timevae}
& 0.3103 & 0.2509 & 0.2976 & 0.1898 & 0.2678 & 4.3785 & 0.3498 & 0.2720 \\

&
& koVAE (ICLR'24)~\cite{naimangenerative}
& 0.3206 & 0.1882 & 0.3334 & 0.2714 & 0.2572 & 2.8463 & 0.3294 & 0.2671 \\

\arrayrulecolor{gray!60}\cmidrule(lr){2-11}\arrayrulecolor{black}

& \textbf{Flow}
& F-Flow (ICLR'21)~\cite{alaa2021generative}
& 0.9752 & 0.7554 & 0.6198 & 0.6760 & 0.4951 & 22.1413 & 0.5614 & 0.4091 \\

\arrayrulecolor{gray!60}\cmidrule(lr){2-11}\arrayrulecolor{black}

& \multirow{2}{*}{\textbf{Diffusion}}
& DiffWave (ICLR'21)~\cite{kongdiffwave}
& 0.4087 & 0.2659 & 0.3705 & 0.3140 & 0.2245 & 4.6475 & 0.3439 & 0.2495 \\

&
& Diffusion-TS (ICLR'24)~\cite{yuan2024diffusionts}
& \underline{0.1737} & 0.1584 & \textbf{0.1350} & 0.2473 & \underline{0.1273} & 1.0810 & \underline{0.3023} & 0.2535 \\

\arrayrulecolor{gray!60}\cmidrule(lr){2-11}\arrayrulecolor{black}

& \textbf{LLM}
& SDForger (NeurIPS'25)~\cite{rousseau2025forging}
& 0.2182 & 0.1636 & 0.3021 & 0.1727 & 0.1463 & 136.4964 & 0.3385 & 0.3391 \\

\arrayrulecolor{gray!60}\cmidrule(lr){2-11}\arrayrulecolor{black}

& \multirow{2}{*}{\makecell{\textbf{Extreme-}\\\textbf{Aware}}}
& FIDE (NeurIPS'24)~\cite{galib2024fide}
& 0.1769 & 0.1540 & 0.2170 & \underline{0.1414} & 0.1944 & 1.8600 & 0.5878 & 0.4356 \\

&
& HeavyDiff (ICLR'25)~\cite{pandey2025heavy}
& 0.1856 & \underline{0.1420} & 0.1872 & 0.2218 & 0.1286 & \textbf{0.9149} & 0.3123 & \underline{0.2328} \\

\arrayrulecolor{gray!60}\cmidrule(lr){2-11}\arrayrulecolor{black}

& \textbf{Ours}
& \textbf{E4GEN (Ours)}
& \textbf{0.1698} & \textbf{0.1315} & \underline{0.1741} & \textbf{0.1375} & \textbf{0.1246} & \underline{1.0035} & \textbf{0.2999} & \textbf{0.2278} \\
\bottomrule
\end{tabular}

\vspace{6.0em}

\caption{Extreme-Event Generation Fidelity and Extreme-Oriented Downstream Utility on HH-Power Dataset}
\label{tab:extreme_elect}
\begin{tabular}{cclccccccccc}
\toprule
\textbf{Dataset} & \textbf{Type} & \textbf{Method} & \textbf{EM-W1} & \textbf{EC-Diff} & \textbf{ED-W1} & \textbf{EP-W1} & \textbf{EWass.} & \textbf{EJS} & \textbf{EMMD} & \textbf{EPred} & \textbf{ERecon} \\
\midrule
\multirow{11}{*}{\makecell{\textbf{HH-}\\\textbf{Power}}}
& \textbf{GAN}
& TimeGAN ~\cite{yoon2019time}
& 1.0383 & 3817 & 10.9922 & 1.2187 & 3.0033 & 0.5337 & 5.3360 & 0.8896 & 2.1582 \\

\arrayrulecolor{gray!60}\cmidrule(lr){2-12}\arrayrulecolor{black}

& \multirow{2}{*}{\textbf{VAE}}
& TimeVAE ~\cite{desai2021timevae}
& 2.2889 & 10623 & 2.8694 & 2.5049 & 8.6922 & 0.6534 & 6.0586 & 0.9083 & 0.8072 \\

&
& koVAE ~\cite{naimangenerative}
& 0.8455 & 13305 & 1.7526 & 1.0439 & 10.0234 & 0.6979 & 4.2831 & 0.6331 & 1.0922 \\

\arrayrulecolor{gray!60}\cmidrule(lr){2-12}\arrayrulecolor{black}

& \textbf{Flow}
& F-Flow ~\cite{alaa2021generative}
& 2.7518 & 15812 & 2.6638 & 3.9851 & $\infty$ & $\infty$ & $\infty$ & $\infty$ & $\infty$ \\

\arrayrulecolor{gray!60}\cmidrule(lr){2-12}\arrayrulecolor{black}

& \multirow{2}{*}{\textbf{Diffusion}}
& DiffWave ~\cite{kongdiffwave}
& 2.9345 & 15783 & 2.7963 & 3.1761 & $\infty$ & $\infty$ & $\infty$ & $\infty$ & $\infty$ \\

&
& Diffusion-TS ~\cite{yuan2024diffusionts}
& 0.8144 & 4505 & 1.8059 & 1.0288 & 2.2833 & 0.3284 & \textbf{0.2816} & 0.5794 & 0.9291 \\

\arrayrulecolor{gray!60}\cmidrule(lr){2-12}\arrayrulecolor{black}

& \textbf{LLM}
& SDForger ~\cite{rousseau2025forging}
& \underline{0.5677} & \underline{1383} & 4.1994 & 0.6450 & \underline{1.4516} & \underline{0.2082} & 4.1405 & 0.6264 & 1.0743 \\

\arrayrulecolor{gray!60}\cmidrule(lr){2-12}\arrayrulecolor{black}

& \multirow{2}{*}{\makecell{\textbf{Extreme-}\\\textbf{Aware}}}
& FIDE ~\cite{galib2024fide}
& 1.2496 & 3902 & 1.7325 & \underline{0.4221} & 3.0098 & 0.3105 & 2.8839 & 0.6496 & 11.6615 \\

&
& HeavyDiff ~\cite{pandey2025heavy}
& 0.8059 & 3943 & \underline{1.1517} & 0.9038 & 2.0354 & 0.3142 & \underline{0.5353} & \underline{0.5600} & \underline{0.7368} \\

\arrayrulecolor{gray!60}\cmidrule(lr){2-12}\arrayrulecolor{black}

& \textbf{Ours}
& \textbf{E4GEN (Ours)}
& \textbf{0.0862} & \textbf{886} & \textbf{0.7834} & \textbf{0.1606} & \textbf{0.6672} & \textbf{0.1458} & 1.4301 & \textbf{0.5239} & \textbf{0.7284} \\
\bottomrule
\end{tabular}

\renewcommand{\arraystretch}{1.0}
\normalsize
\end{table*}

\newpage
\begin{table*}[!b]
\centering
\footnotesize
\renewcommand{\arraystretch}{1.12}
\setlength{\tabcolsep}{2pt}

\vspace{-10pt}
\caption{Overall Generation Fidelity and General Downstream Utility on LTST-ECG Dataset}
\label{tab:overall_ecg}
\begin{tabular}{cclcccccccc}
\toprule
\textbf{Dataset} & \textbf{Type} & \textbf{Method} & \textbf{Wass.} & \textbf{KS} & \textbf{JS} & \textbf{MMD} & \textbf{ACD} & \textbf{Context\_FID} & \textbf{Pred} & \textbf{Recon} \\
\midrule
\multirow{11}{*}{\makecell{\textbf{LTST-}\\\textbf{ECG}}}
& \textbf{GAN}
& TimeGAN (NeurIPS'19)~\cite{yoon2019time}
& 0.2190 & 0.2115 & 0.2985 & 0.1540 & 0.1988 & 3.5454 & 0.1660 & 0.0521 \\

\arrayrulecolor{gray!60}\cmidrule(lr){2-11}\arrayrulecolor{black}

& \multirow{2}{*}{\textbf{VAE}}
& TimeVAE (ArXiv'21)~\cite{desai2021timevae}
& 0.2526 & 0.2971 & 0.3580 & 0.2177 & 0.3487 & 16.310 & 0.1885 & 0.0580 \\

&
& koVAE (ICLR'24)~\cite{naimangenerative}
& 0.2531 & 0.2790 & 0.3423 & 0.2952 & 0.3276 & 10.033 & 0.1126 & 0.0217 \\

\arrayrulecolor{gray!60}\cmidrule(lr){2-11}\arrayrulecolor{black}

& \textbf{Flow}
& F-Flow (ICLR'21)~\cite{alaa2021generative}
& 0.7239 & 0.8665 & 0.7669 & 0.6604 & 0.3803 & 31.992 & 0.3033 & 0.0501 \\

\arrayrulecolor{gray!60}\cmidrule(lr){2-11}\arrayrulecolor{black}

& \multirow{2}{*}{\textbf{Diffusion}}
& DiffWave (ICLR'21)~\cite{kongdiffwave}
& 0.2864 & 0.3803 & 0.4826 & 0.2535 & 0.4048 & 17.473 & 0.1877 & 0.0601 \\

&
& Diffusion-TS (ICLR'24)~\cite{yuan2024diffusionts}
& 0.0538 & 0.0934 & \underline{0.1064} & 0.2368 & \underline{0.0557} & \underline{1.1740} & 0.0945 & 0.0171 \\

\arrayrulecolor{gray!60}\cmidrule(lr){2-11}\arrayrulecolor{black}

& \textbf{LLM}
& SDForger (NeurIPS'25)~\cite{rousseau2025forging}
& 0.2390 & 0.1830 & 0.2973 & \underline{0.1137} & 0.1794 & 3.5274 & 0.1008 & 0.0192 \\

\arrayrulecolor{gray!60}\cmidrule(lr){2-11}\arrayrulecolor{black}

& \multirow{2}{*}{\makecell{\textbf{Extreme-}\\\textbf{Aware}}}
& FIDE (NeurIPS'24)~\cite{galib2024fide}
& 0.2624 & 0.2200 & 0.1659 & 0.2371 & 0.2434 & 69.932 & 0.5078 & 0.6218 \\

&
& HeavyDiff (ICLR'25)~\cite{pandey2025heavy}
& \underline{0.0443} & \underline{0.0700} & 0.1144 & 0.2054 & 0.0599 & 1.1824 & \underline{0.0708} & \textbf{0.0159} \\

\arrayrulecolor{gray!60}\cmidrule(lr){2-11}\arrayrulecolor{black}

& \textbf{Ours}
& \textbf{E4GEN (Ours)}
& \textbf{0.0352} & \textbf{0.0383} & \textbf{0.0866} & \textbf{0.1067} & \textbf{0.0357} & \textbf{0.4920} & \textbf{0.0703} & \underline{0.0171} \\
\bottomrule
\end{tabular}

\vspace{6.0em}

\caption{Extreme-Event Generation Fidelity and Extreme-Oriented Downstream Utility on LTST-ECG Dataset}
\label{tab:extreme_ecg}
\begin{tabular}{cclccccccccc}
\toprule
\textbf{Dataset} & \textbf{Type} & \textbf{Method} & \textbf{EM-W1} & \textbf{EC-Diff} & \textbf{ED-W1} & \textbf{EP-W1} & \textbf{EWass.} & \textbf{EJS} & \textbf{EMMD} & \textbf{EPred} & \textbf{ERecon} \\
\midrule
\multirow{11}{*}{\makecell{\textbf{LTST-}\\\textbf{ECG}}}
& \textbf{GAN}
& TimeGAN ~\cite{yoon2019time}
& 0.6881 & 33741 & 7.4728 & 1.2060 & 3.0622 & 0.6407 & 2.2042 & 0.7011 & 0.1413 \\

\arrayrulecolor{gray!60}\cmidrule(lr){2-12}\arrayrulecolor{black}

& \multirow{2}{*}{\textbf{VAE}}
& TimeVAE ~\cite{desai2021timevae}
& 1.3423 & 20290 & 10.199 & 1.9551 & 3.7131 & 0.7479 & 2.8147 & 0.7452 & 0.5056 \\

&
& koVAE ~\cite{naimangenerative}
& 1.0624 & 22875 & 9.5807 & 1.5670 & 3.9207 & 0.7441 & 2.0463 & 0.4008 & 0.0604 \\

\arrayrulecolor{gray!60}\cmidrule(lr){2-12}\arrayrulecolor{black}

& \textbf{Flow}
& F-Flow ~\cite{alaa2021generative}
& 0.9259 & 24703 & 9.9093 & 1.4716 & 4.5078 & 0.7858 & 2.3123 & 0.9146 & 0.2565 \\

\arrayrulecolor{gray!60}\cmidrule(lr){2-12}\arrayrulecolor{black}

& \multirow{2}{*}{\textbf{Diffusion}}
& DiffWave ~\cite{kongdiffwave}
& 1.3480 & 24996 & 11.475 & 1.9541 & 4.3085 & 0.8322 & 2.7932 & $\infty$ & 0.5797 \\

&
& Diffusion-TS ~\cite{yuan2024diffusionts}
& 0.0936 & 4960 & 3.1839 & \textbf{0.0444} & 0.9857 & 0.3071 & 0.3888 & 0.1893 & 0.0693 \\

\arrayrulecolor{gray!60}\cmidrule(lr){2-12}\arrayrulecolor{black}

& \textbf{LLM}
& SDForger ~\cite{rousseau2025forging}
& 0.6669 & 22481 & 3.9868 & 1.1220 & 3.4591 & 0.5351 & 1.4759 & 0.2774 & \textbf{0.0460} \\

\arrayrulecolor{gray!60}\cmidrule(lr){2-12}\arrayrulecolor{black}

& \multirow{2}{*}{\makecell{\textbf{Extreme-}\\\textbf{Aware}}}
& FIDE ~\cite{galib2024fide}
& 0.6834 & 16922 & 10.406 & 1.0252 & 75.5862 & 0.6080 & 1.9742 & 0.9475 & 3.4025 \\

&
& HeavyDiff ~\cite{pandey2025heavy}
& \underline{0.0331} & \underline{4754} & \underline{3.1284} & 0.0988 & \underline{0.9545} & \textbf{0.2783} & \underline{0.3306} & \underline{0.1885} & 0.0703 \\

\arrayrulecolor{gray!60}\cmidrule(lr){2-12}\arrayrulecolor{black}

& \textbf{Ours}
& \textbf{E4GEN (Ours)}
& \textbf{0.0328} & \textbf{4633} & \textbf{2.4458} & \underline{0.0561} & \textbf{0.8911} & \underline{0.2823} & \textbf{0.2650} & \textbf{0.1674} & \underline{0.0558} \\
\bottomrule
\end{tabular}

\renewcommand{\arraystretch}{1.0}
\normalsize
\end{table*}

\newpage
\begin{table*}[!b]
\centering
\footnotesize
\renewcommand{\arraystretch}{1.12}
\setlength{\tabcolsep}{2pt}

\vspace{-10pt}
\caption{Overall Generation Fidelity and General Downstream Utility on PEMS-SF Dataset}
\label{tab:overall_traffic}
\begin{tabular}{cclcccccccc}
\toprule
\textbf{Dataset} & \textbf{Type} & \textbf{Method} & \textbf{Wass.} & \textbf{KS} & \textbf{JS} & \textbf{MMD} & \textbf{ACD} & \textbf{Context\_FID} & \textbf{Pred} & \textbf{Recon} \\
\midrule
\multirow{11}{*}{\makecell{\textbf{PEMS-}\\\textbf{SF}}}
& \textbf{GAN}
& TimeGAN (NeurIPS'19)~\cite{yoon2019time}
& 0.0097 & 0.1139 & 0.1910 & 0.3917 & 0.0793 & 0.9527 & 0.2490 & 0.2349 \\

\arrayrulecolor{gray!60}\cmidrule(lr){2-11}\arrayrulecolor{black}

& \multirow{2}{*}{\textbf{VAE}}
& TimeVAE (ArXiv'21)~\cite{desai2021timevae}
& 0.0061 & 0.0810 & 0.1405 & 0.2063 & 0.0355 & 0.5406 & 0.2255 & 0.1386 \\

&
& koVAE (ICLR'24)~\cite{naimangenerative}
& 0.0106 & 0.1227 & 0.2120 & 0.3148 & 0.0901 & 0.6285 & 0.2159 & 0.1437 \\

\arrayrulecolor{gray!60}\cmidrule(lr){2-11}\arrayrulecolor{black}

& \textbf{Flow}
& F-Flow (ICLR'21)~\cite{alaa2021generative}
& 0.0272 & 0.2830 & 0.2540 & 0.3814 & 0.0254 & 0.4721 & 0.1985 & 0.1212 \\

\arrayrulecolor{gray!60}\cmidrule(lr){2-11}\arrayrulecolor{black}

& \multirow{2}{*}{\textbf{Diffusion}}
& DiffWave (ICLR'21)~\cite{kongdiffwave}
& 0.0194 & 0.2237 & 0.3908 & 0.4820 & 0.0585 & 2.2068 & 0.2550 & 0.1681 \\

&
& Diffusion-TS (ICLR'24)~\cite{yuan2024diffusionts}
& \underline{0.0032} & \underline{0.0749} & 0.0858 & 0.1138 & \underline{0.0237} & \underline{0.0758} & 0.1879 & 0.1218 \\

\arrayrulecolor{gray!60}\cmidrule(lr){2-11}\arrayrulecolor{black}

& \textbf{LLM}
& SDForger (NeurIPS'25)~\cite{rousseau2025forging}
& 0.0049 & 0.0744 & 0.1334 & \underline{0.1045} & 0.0285 & 0.2002 & \textbf{0.1848} & 0.1252 \\

\arrayrulecolor{gray!60}\cmidrule(lr){2-11}\arrayrulecolor{black}

& \multirow{2}{*}{\makecell{\textbf{Extreme-}\\\textbf{Aware}}}
& FIDE (NeurIPS'24)~\cite{galib2024fide}
& 0.0203 & 0.2319 & 0.1561 & 0.3026 & 0.0612 & 0.7426 & 0.2444 & 0.1729 \\

&
& HeavyDiff (ICLR'25)~\cite{pandey2025heavy}
& 0.0051 & 0.0940 & \underline{0.0785} & 0.1328 & \textbf{0.0189} & 0.0842 & 0.1942 & \underline{0.1202} \\

\arrayrulecolor{gray!60}\cmidrule(lr){2-11}\arrayrulecolor{black}

& \textbf{Ours}
& \textbf{E4GEN (Ours)}
& \textbf{0.0031} & \textbf{0.0293} & \textbf{0.0554} & \textbf{0.0969} & 0.0382 & \textbf{0.0755} & \underline{0.1862} & \textbf{0.1167} \\
\bottomrule
\end{tabular}

\vspace{6.0em}

\caption{Extreme-Event Generation Fidelity and Extreme-Oriented Downstream Utility on PEMS-SF Dataset}
\label{tab:extreme_traffic}
\begin{tabular}{cclccccccccc}
\toprule
\textbf{Dataset} & \textbf{Type} & \textbf{Method} & \textbf{EM-W1} & \textbf{EC-Diff} & \textbf{ED-W1} & \textbf{EP-W1} & \textbf{EWass.} & \textbf{EJS} & \textbf{EMMD} & \textbf{EPred} & \textbf{ERecon} \\
\midrule
\multirow{11}{*}{\makecell{\textbf{PEMS-}\\\textbf{SF}}}
& \textbf{GAN}
& TimeGAN ~\cite{yoon2019time}
& 0.0649 & 11176 & 3.3566 & 0.0732 & 0.6028 & 0.5959 & 0.3982 & 0.8045 & 0.9457 \\

\arrayrulecolor{gray!60}\cmidrule(lr){2-12}\arrayrulecolor{black}

& \multirow{2}{*}{\textbf{VAE}}
& TimeVAE ~\cite{desai2021timevae}
& 0.0321 & 5517 & \underline{1.6787} & 0.0400 & 0.3642 & 0.3446 & 0.2809 & 0.6911 & 0.9729 \\

&
& koVAE ~\cite{naimangenerative}
& 0.0710 & 16200 & 2.4943 & 0.0785 & 0.8908 & 0.7015 & 0.4640 & 0.6735 & 0.8006 \\

\arrayrulecolor{gray!60}\cmidrule(lr){2-12}\arrayrulecolor{black}

& \textbf{Flow}
& F-Flow ~\cite{alaa2021generative}
& 0.0323 & 42717 & 9.5380 & 0.0378 & 1.1185 & 0.5896 & \underline{0.1772} & 0.4918 & 0.6230 \\

\arrayrulecolor{gray!60}\cmidrule(lr){2-12}\arrayrulecolor{black}

& \multirow{2}{*}{\textbf{Diffusion}}
& DiffWave ~\cite{kongdiffwave}
& 0.1012 & 19121 & 4.1817 & 0.1104 & $\infty$ & $\infty$ & $\infty$ & $\infty$ & $\infty$ \\

&
& Diffusion-TS ~\cite{yuan2024diffusionts}
& \underline{0.0185} & 2100 & 2.5681 & \underline{0.0200} & 0.3262 & 0.4283 & 0.2523 & \underline{0.4730} & 0.6254 \\

\arrayrulecolor{gray!60}\cmidrule(lr){2-12}\arrayrulecolor{black}

& \textbf{LLM}
& SDForger ~\cite{rousseau2025forging}
& 0.0290 & 5365 & 5.9978 & 0.0287 & 0.3428 & \textbf{0.3364} & 0.2174 & 0.4759 & 0.6466 \\

\arrayrulecolor{gray!60}\cmidrule(lr){2-12}\arrayrulecolor{black}

& \multirow{2}{*}{\makecell{\textbf{Extreme-}\\\textbf{Aware}}}
& FIDE ~\cite{galib2024fide}
& 0.0242 & 25025 & 2.0519 & 0.0279 & 4.1184 & 0.5097 & 0.2424 & 0.5760 & 0.8926 \\

&
& HeavyDiff ~\cite{pandey2025heavy}
& 0.0217 & \underline{1767} & 2.7466 & 0.0373 & \textbf{0.2817} & 0.3951 & 0.2133 & 0.4791 & \underline{0.6213} \\

\arrayrulecolor{gray!60}\cmidrule(lr){2-12}\arrayrulecolor{black}

& \textbf{Ours}
& \textbf{E4GEN (Ours)}
& \textbf{0.0138} & \textbf{1473} & \textbf{0.7813} & \textbf{0.0154} & \underline{0.2980} & \underline{0.3400} & \textbf{0.1505} & \textbf{0.4667} & \textbf{0.6046} \\
\bottomrule
\end{tabular}

\renewcommand{\arraystretch}{1.0}
\normalsize
\end{table*}


\end{document}